\documentclass[twoside,11pt]{article}

\usepackage{blindtext}

%

%
%
%
\usepackage[abbrvbib, preprint]{jmlr2e} 

\usepackage{jmlr2e}
\usepackage{mathtools}

\usepackage{enumitem}
\newtheorem{assumption}{Assumption}
\usepackage{amsmath}
\usepackage{amssymb}
\usepackage{mathrsfs}
\usepackage{bm}
\usepackage{booktabs}
\usepackage{multirow}
\usepackage{bbding}
\usepackage{verbatim}
\usepackage{float}

\usepackage{microtype}
\usepackage{graphicx}
\usepackage{subfigure}
\usepackage{booktabs} 

\usepackage{color}  




\ShortHeadings{Optimal Rates of Kernel Ridge Regression under Source Condition in Large Dimensions}{Zhang, Li, Lu and Lin}
\firstpageno{1}

\begin{document}
\raggedbottom  
\title{Optimal Rates of Kernel Ridge Regression under Source Condition in Large Dimensions}

\author{\name Haobo Zhang \email zhang-hb21@mails.tsinghua.edu.cn \\ 
\AND
\name Yicheng Li  \email liyc22@mails.tsinghua.edu.cn\\ 
\AND
\name Weihao Lu  \email luwh19@mails.tsinghua.edu.cn\\ 
\AND
\name Qian Lin \thanks{Corresponding author} \email qianlin@tsinghua.edu.cn \\ 
      \addr Center for Statistical Science, Department of Industrial Engineering\\
      Tsinghua University
      }
      
\editor{My editor}

\maketitle

\begin{abstract}

Motivated by the studies of neural networks (e.g.,the neural tangent kernel theory), we perform a study on the large-dimensional behavior of kernel ridge regression (KRR)  where the sample size $n \asymp d^{\gamma}$ for some $\gamma > 0$. 
Given an RKHS $\mathcal{H}$ associated with an inner product kernel defined on the sphere $\mathbb{S}^{d}$, we suppose that the true function $f_{\rho}^{*} \in [\mathcal{H}]^{s}$, the interpolation space of $\mathcal{H}$ with source condition $s>0$. We first determined the exact order (both upper and lower bound) of the generalization error of kernel ridge regression for the optimally chosen regularization parameter $\lambda$. We then further showed that when $0<s\le1$, KRR is minimax optimal; and when $s>1$, KRR is not 
minimax optimal (a.k.a. {\it the saturation effect}). 
Our results illustrate that the curves of rate varying along $\gamma$ exhibit the \textit{periodic plateau behavior} and the \textit{multiple descent behavior} and show how the curves evolve with $s>0$.
Interestingly, our work provides a unified viewpoint of several recent works on kernel regression in the large-dimensional setting, which correspond to $s=0$ and $s=1$ respectively.

\end{abstract}

\begin{keywords}
 kernel methods, high-dimensional statistics, reproducing kernel Hilbert space, minimax optimality, saturation effect
\end{keywords}

\section{Introduction}
The recent studies of neural network theory have brought the renaissance of kernel methods, since the neural tangent kernel \citep{jacot2018_NeuralTangent} provides a natural surrogate to understand the wide neural network \citep{Arora_on_2019,lee2019wide,jianfa2022generalization}.
When the dimension of data is fixed, there has been extensive literature studying the generalization behavior of kernel ridge regression (KRR), one of the most popular kernel methods, e.g., \cite{Caponnetto2007OptimalRF,fischer2020_SobolevNorm,Cui2021GeneralizationER}, etc.. 
Researchers usually use two crucial factors to characterize KRR's generalization behavior: \textit{capacity condition} and \textit{source condition}. 
Supposing eigenvalues associated with the RKHS $\mathcal{H}$ are  $ \{\lambda_{i}\}_{i=1}^{\infty}$, the capacity condition (also known as effective dimension) assumes $ \mathcal{N}_{1}(\lambda) := \sum_{i=1}^{\infty} \lambda_{i} / (\lambda_{i} + \lambda) \asymp \lambda^{-\frac{1}{\beta}} $ for some $\beta > 1$, where $\lambda$ will represent the regularization parameter in KRR. 
The capacity condition characterizes the size of $\mathcal{H}$ and is frequently stated as an equivalent eigenvalue decay condition: $ \lambda_{i} \asymp i^{-\beta}, \beta > 1$. 
The source condition assumes that the true function $f_{\rho}^{*}$ falls into $ [\mathcal{H}]^{s}$, an interpolation space of $ \mathcal{H}$ for some $s>0$. It characterizes the relative smoothness of $f_{\rho}^{*}$ with respect to $\mathcal{H}$: the larger $ s $ is, the ``smoother" $f_{\rho}^{*}$ is and the easier it can be estimated.
Under this framework, many interesting topics about KRR's generalization behavior were studied. 
For instance, the  minimax optimality of KRR \citep{fischer2020_SobolevNorm,zhang2023optimality_2} when $0<s\leq 2$, the saturation effect of KRR \citep{bauer2007_RegularizationAlgorithms,li2023_SaturationEffect} when $s>2$, the generalization ability of kernel interpolation \citep{beaglehole2023inconsistency,Li2023KernelIG} and the learning curve of KRR \citep{Cui2021GeneralizationER,li2023asymptotic}, etc. We refer to Section \ref{subsection related work} for more related work about these topics. These results help us clarify several puzzle points. 
For example, \cite{li2023_SaturationEffect} implies that when the true function is smooth enough (e.g., $s>2$), the early stopping kernel gradient flow is often better than the kernel ridge regression and \cite{Li2023KernelIG} asserts that if a wide neural network overfits the data, it generalizes poorly.

Since neural networks often perform well on data with large dimensionality where $n\asymp d^{\gamma}$ for some $\gamma>0$, we expect that the studies of kernel regression in large-dimensional data can provide us more guidance about the generalization behavior of neural network for large-dimensional data. 
However, in contrast to the rich theoretical results about kernel regression in the fixed-dimensional setting, much less is known about the aforementioned topics in the large-dimensional setting. 
Since \cite{Karoui_spectrum_2010}  provided an approximation of the kernel random matrix when $n\asymp d$, few works have been done for kernel regression in large-dimensional or high-dimensional settings until recently.
The first obstacle is that if  $d$ is large, the eigenvalues of the RKHS usually depend on $d$ in an unpleasant way. Therefore, the polynomial eigenvalue decay rate assumption $ \lambda_{i} \asymp i^{-\beta}$ must not be true (e.g., the inner product kernel on the sphere in Section \ref{section app inner}). 
Second, we find that $ \mathcal{N}_{1}(\lambda) $ is not enough to characterize a tight upper bound of the convergence rate, which is a significant difference from the fixed-dimensional setting. 
We will see that more information about the eigenvalues (RKHS) is needed, for instance, $ \mathcal{N}_{2}(\lambda):= \sum_{i=1}^{\infty} \left( \lambda_{i} / (\lambda_{i} + \lambda) \right)^{2} $ which will be introduced in \eqref{n1 n2 m1 m2}.

\begin{figure}
\centering
\subfigure[]{
\includegraphics[width=0.3\columnwidth]{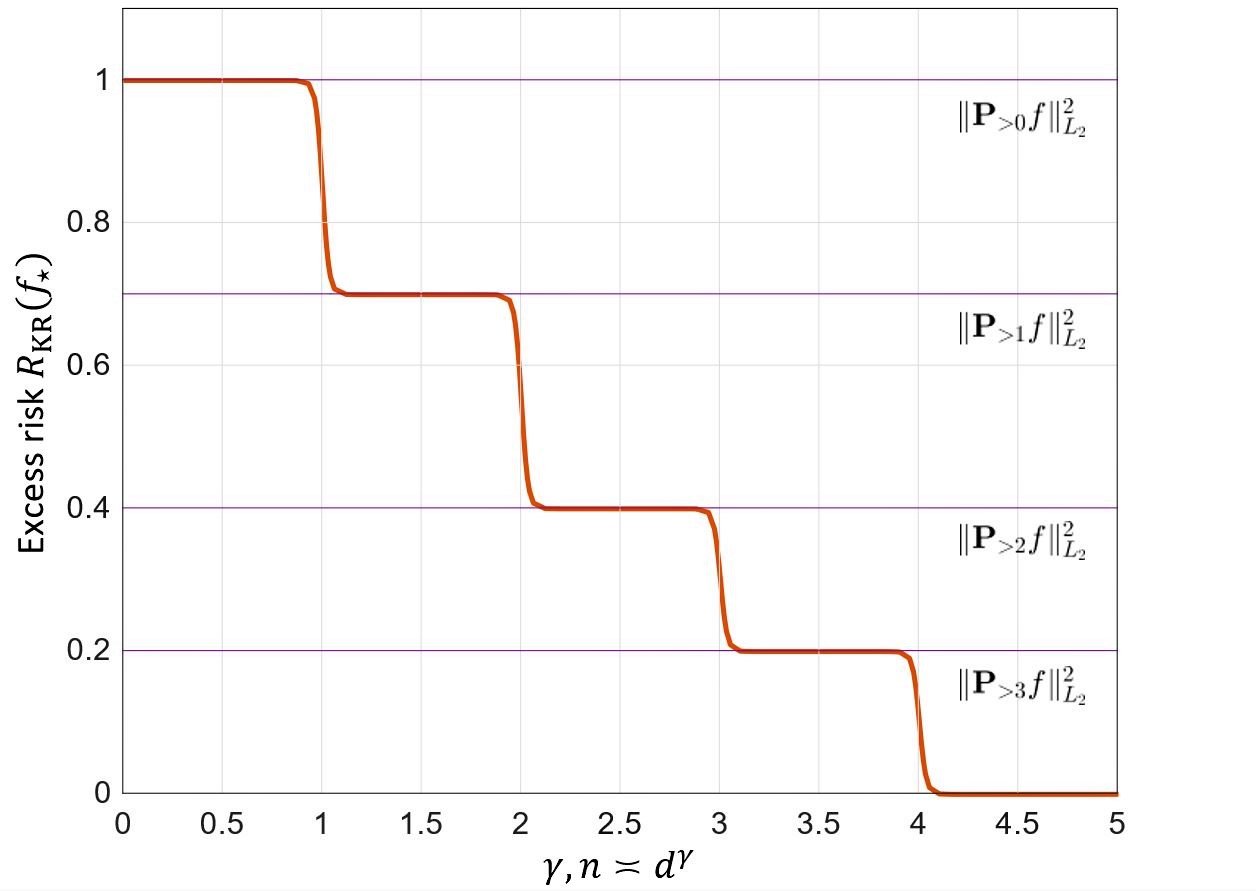}
\label{subfig:montanari}    
    }
\subfigure[]{
\includegraphics[width=0.3\columnwidth]{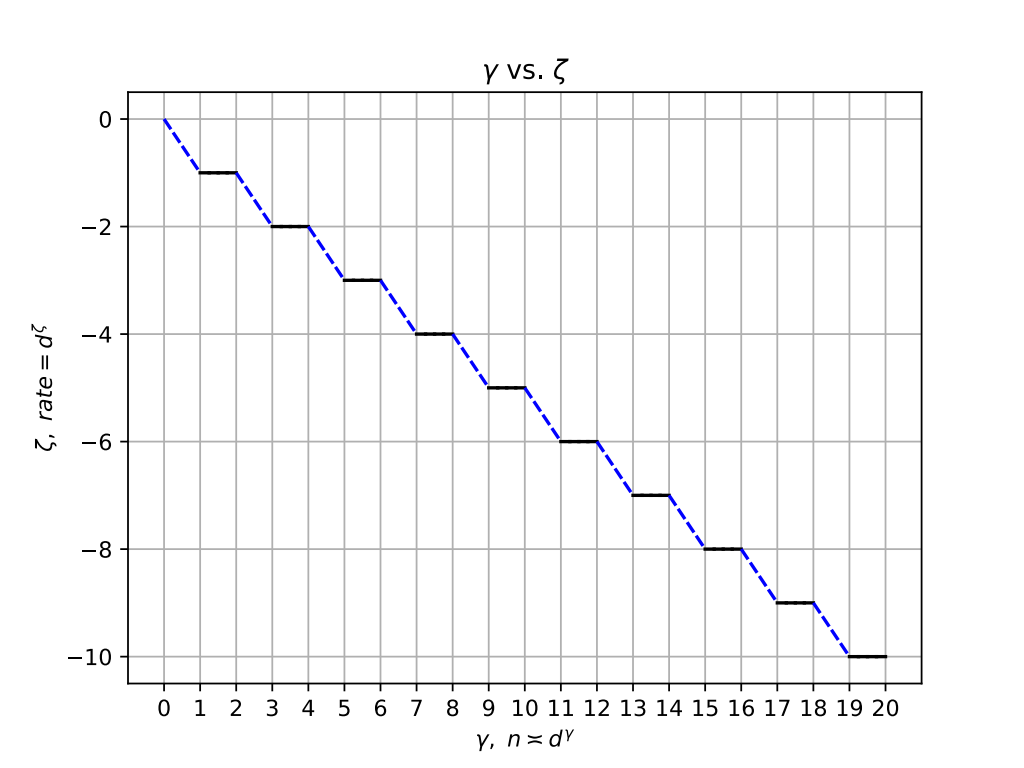}
\label{subfig:Lu}
}
 \caption{    Left: The curve of generalization error for estimating $f_{\rho}^{*} \in L^{2}$ (Figure 5 in \cite{Ghorbani2019LinearizedTN}).  Right: The curve of the minimax rates for estimating $f_{\rho}^{*} \in \mathcal{H}$ (Figure 2(b) in \cite{lu2023optimal}). } 
 \label{figure comparison}
\end{figure}

There are several recent works investigating the generalization error of kernel regression in the large-dimensional setting where $ n \asymp d^{\gamma}, \gamma > 0$. \cite{Ghorbani2019LinearizedTN} considers the square-integrable function space on the sphere $\mathbb{S}^{d}$ and proves that when $\gamma$ is a non-integer, KRR is consistent if and only if the true function is a polynomial with a fixed degree $\le \gamma$. They also qualitatively reveal that the excess risk exhibits a periodic plateau behavior (Figure \ref{subfig:montanari}).  \cite{Liu_kernel_2021} considers the setting $n \asymp d$ and assumes source condition to be $s \in (0,2]$. They give an upper bound of the generalization error with respect to bias and variance. Using this upper bound, they demonstrate that there could be multiple shapes of the generalization curve as the sample size increases. 
A more recent work \cite{lu2023optimal}  studies early stopping kernel gradient flow in the large-dimensional setting $ n \asymp d^{\gamma}$. Assuming $ f_{\rho}^{*} \in \mathcal{H}$  and considering the inner product kernel on the sphere $ \mathbb{S}^{d}$, they prove an upper bound of the convergence rate and show the minimax optimality of early stopping kernel gradient flow. Interestingly, their results indicate that the minimax rate of the kernel regression for $ f_{\rho}^{*} \in \mathcal H$ exhibits the similar periodic plateau phenomenon (Figure \ref{subfig:Lu}). This raises a natural and interesting question: is there a unified way to explain the periodic plateau behavior appeared in \cite{lu2023optimal} and \cite{Ghorbani2019LinearizedTN}?

Suppose that $\mathcal H$ is an RKHS associated with an inner product kernel defined on $\mathbb S^{d}$. The main focus of this paper aims to derive the matching upper and lower bounds of the generalization error and discuss the minimax optimality of KRR for general source condition $s > 0$, i.e., the true regression function $f_{\rho}^{*} \in [\mathcal H]^{s}$. 
Allowing $s$ to vary is not only a more reasonable assumption for the true function, but also provides 
 us a natural framework to clarify the relation between the results in \cite{lu2023optimal} and \cite{Ghorbani_When_2021}.
In fact, a little bit interpolation space theory suggests that both the results in \cite{Ghorbani2019LinearizedTN} and \cite{lu2023optimal}
 are two special cases of our results corresponding to $s=0$ and $s=1$ respectively.
This paper has the following contributions:



\begin{itemize}[leftmargin = 18pt]
    \item We consider a more general framework than the traditional capacity-source condition. We introduce $ \mathcal{N}_{1}(\lambda), \mathcal{N}_{2}(\lambda), \mathcal{M}_{1}(\lambda)$ and $ \mathcal{M}_{2}(\lambda)$ in \eqref{n1 n2 m1 m2}, which are key quantities depending on the RKHS, the true function and the regularization parameter $\lambda$. Under mild assumptions, we use these key quantities to express the matching upper and lower bounds of the generalization error as long as the regularization parameter satisfies some approximation conditions (Theorem \ref{main theorem}). This framework makes few assumptions on the eigenvalues of the RKHS and the true function, thus enabling us to handle the large-dimensional setting and general source condition later. In the fixed-dimensional setting, our results in Theorem \ref{main theorem} also recovers the state-of-the-art theoretical results about the exact convergence rates of KRR in \cite{li2023asymptotic}.

    

    \item We then add source condition into our new framework and consider the inner product kernel on the sphere $\mathbb{S}^{d}$. When $ n \asymp d^{\gamma},$ we derive exact convergence rates (both upper and lower bounds) of the generalization error under the best choice of regularization parameter for any source condition $s > 0$ and almost all $ \gamma > 0$ (Theorem \ref{theorem inner s ge 1} for $s \ge 1$ and Theorem \ref{theorem inner s le 1} for $ 0<s<1$). We will see that the curves of rate varying along $\gamma$ show similar \textit{periodic plateau} and \textit{multiple descent} behavior as in \cite{lu2023optimal}. Moreover, we will see that the shapes of curves vary with $s$ and are totally different when $0<s<1 $ and $ s \ge 1$, with even more intriguing results in the limiting case $s \to 0 $ and $ s \to 2$. 


    \item For the inner product kernel on the sphere $\mathbb{S}^{d}$, we further derive the corresponding minimax lower bound for all $ s > 0$ and $\gamma > 0$. When $ 0< s <1$, the exact rates in Theorem \ref{theorem inner s le 1} match the minimax lower bound, and thus we prove the minimax optimality of KRR. 
    When $ s > 1$, the KRR is not minimax optimal, i.e., we discover a new version of the saturation effect of KRR. In the fixed-dimensional setting, the saturation effect of KRR only happens when $s > 2$. In the large-dimensional setting, we prove that a similar phenomenon also happens for $ 1 < s \le 2$. Specifically, for any $s > 1$, there will be corresponding ranges of $\gamma$ such that the convergence rates of KRR can not achieve the minimax lower bound even under the best choice of regularization parameter. 
    

\end{itemize}

\subsection{Related work} \label{subsection related work}

In the introduction, we have mentioned several interesting topics about the KRR's generalization behavior. These topics have been well-studied in the fixed-dimensional setting. The first essential question is the minimax optimality of KRR. Under the framework of capacity condition and source condition ($s$), \cite{Caponnetto2007OptimalRF} proves the minimax optimality of KRR when $ 1 \le s \le 2$. Then, extensive literature (see, e.g., \citealt{steinwart2009_OptimalRates,lin2018_OptimalRates,fischer2020_SobolevNorm,zhang2023optimality,zhang2023optimality_2} and the reference therein) studies the mis-specified case ($0 < s < 1$), where \citet{zhang2023optimality} proves the minimax optimality for all $ 0< s \le 2 $ under further embedding index condition. The second question is the saturation effect of KRR when $s>2$, which is conjectured by \cite{bauer2007_RegularizationAlgorithms,gerfo2008_SpectralAlgorithms} and rigorously proved by \cite{li2023_SaturationEffect}. Thirdly, due to the fantastic performance of overparameterized neural networks, the generalization ability of kernel interpolation has also raised a lot of interest. The results in \cite{rakhlin2019consistency,buchholz2022_KernelInterpolation, beaglehole2023inconsistency,Li2023KernelIG} imply that kernel interpolation can not generalize in fixed-dimensional setting. Last but not least, \cite{Bordelon2020SpectrumDL,Cui2021GeneralizationER,li2023asymptotic} study the learning curve of KRR, i.e., the exact generalization error (or exact order) for any regularization parameter $\lambda > 0$.

In the large-dimensional setting, the answers to the above questions are not clear yet. Many researchers have studied these problems from different angles and settings. A line of work uses the tools of high-dimensional kernel random matrix approximation from \cite{Karoui_spectrum_2010} and studies the generalization ability of kernel interpolation \citep{Liang_Just_2019,liang2020multiple}. When $ n \asymp d$, \cite{Liang_Just_2019} gives an upper bound of the generalization error of kernel interpolation and claims that the upper bound tends to 0 when the data exhibits a low-dimensional structure. Further, when $ n \asymp d^{\gamma}, \gamma > 0$, \cite{liang2020multiple} gives an upper bound with a specific convergence rate, which implies that kernel interpolation can generalize if and only if $\gamma$ is not an integer. One closely related topic is the benign overfitting phenomenon, which we refer to \cite{bartlett2020benign,10.1214/21-AOS2133,9051968,tsigler2020benign}. 

Another line of work follows \cite{Ghorbani2019LinearizedTN}, which has been mentioned in the introduction. This line of work adopts the square-integrable assumption of the true function and aims to obtain the exact generalization error of kernel methods in various settings \citep{Ghorbani_When_2021,mei2022generalization,mei2022generalization2,Ghosh_three_2021,xiao2022precise,hu2022sharp,misiakiewicz_spectrum_2022,Donhauser_how_2021}. To our knowledge, \cite{lu2023optimal} is the only literature that provides the minimax optimality result for specific kernel methods. As discussed in the introduction, \cite{lu2023optimal} considers the $s=1$ case ($ f_{\rho}^{*} \in \mathcal{H}$) and kernel early stopping gradient flow. We will provide a detailed discussion on \cite{lu2023optimal} and \cite{Ghorbani2019LinearizedTN} in Section \ref{section discussion}. If we follow the line of research in the fixed-dimensional setting, considering general source condition $s > 0$ and studying the minimax optimality (saturation effect) of KRR are essential steps to understand KRR's generalization behavior in the large-dimensional setting.

\section{Preliminaries}\label{section pre}
Let a compact set $\mathcal{X} \subseteq \mathbb{R}^{d}$ be the input space and $ \mathcal{Y} \subseteq \mathbb{R}$ be the output space. Let $ \rho = \rho_{d} $ be an unknown probability distribution on $\mathcal{X} \times \mathcal{Y}$ satisfying $ \int_{\mathcal{X} \times \mathcal{Y}} y^{2} \mathrm{d}\rho(\boldsymbol{x},y) <\infty$ and denote the corresponding marginal distribution on $ \mathcal{X} $ as $\mu = \mu_{d}$. We use $L^{p}(\mathcal{X},\mu)$ (in short $L^{p}$) to represent the $L^{p}$-spaces. Throughout the paper, we make the following assumption:
\begin{assumption}\label{assumption kernel}
    Suppose that $ \mathcal{H} = \mathcal{H}_{d} $ is a separable RKHS on $\mathcal{X} \subset \mathbb{R}^{d}$ with respect to a continuous kernel function $ k = k_{d}$ satisfying 
    \begin{displaymath}
        \sup\limits_{\boldsymbol{x} \in \mathcal{X}} k_{d}(\boldsymbol{x},\boldsymbol{x}) \le \kappa^{2},
    \end{displaymath}
    where $ \kappa$ is an absolute constant.
\end{assumption}

Since we allow the dimension $d$ to diverge to infinity as $n \to \infty$, the RKHS may vary with $d$ and we suppose that Assumption \ref{assumption kernel} holds uniformly for all $d$. For brevity of notations, we frequently omit the index $d$ in the rest of this paper.\\

Suppose that the samples $\{ (\boldsymbol{x}_{i}, y_{i}) \}_{i=1}^{n}$ are i.i.d. sampled from $\rho$. Kernel ridge regression (KRR) constructs an estimator $\hat{f}_{\lambda}$ by solving the penalized least square problem
\begin{displaymath}
    \hat{f}_\lambda = \underset{f \in \mathcal{H}}{\arg \min } \left(\frac{1}{n} \sum_{i=1}^n\left(y_i-f\left( \boldsymbol{x}_{i} \right)\right)^2+\lambda\|f\|_{\mathcal{H}}^2\right),
\end{displaymath}
where $\lambda > 0$ is referred to as the regularization parameter. 

Denote the samples as $\boldsymbol{X}=\left( \boldsymbol{x}_{1},\cdots,\boldsymbol{x}_{n} \right)$ and $\textbf{y}=\left( y_{1},\cdots,y_{n} \right)^{\top}$. The representer theorem (see, e.g., \citealt{Steinwart2008SupportVM}) gives an explicit expression of the KRR estimator, i.e., 
\begin{equation}\label{krr estimator}
   \hat{f}_{\lambda}(\boldsymbol{x}) = \mathbb{K}(\boldsymbol{x}, \boldsymbol{X})(\mathbb{K}(\boldsymbol{X}, \boldsymbol{X})+n \lambda \mathbf{I})^{-1} \mathbf{y},
\end{equation}
where 
\begin{displaymath}
  \mathbb{K}(\boldsymbol{X}, \boldsymbol{X})=\left(k\left(\boldsymbol{x}_i, \boldsymbol{x}_j\right)\right)_{n \times n},~~ \mathbb{K}(\boldsymbol{x}, \boldsymbol{X})=\left(k\left(\boldsymbol{x}, \boldsymbol{x}_1\right), \cdots, k\left(\boldsymbol{x}, \boldsymbol{x}_n\right)\right).
\end{displaymath}
Denote the conditional mean:
\begin{displaymath}
  f_{\rho}^*(\boldsymbol{x}) = f_{\rho, d}^*(\boldsymbol{x}) \coloneqq \mathbb{E}_{\rho}[~y \;|\; \boldsymbol{x}~] = \int_{\mathcal{Y}} y ~ \mathrm{d} \rho(y|\boldsymbol{x}).
\end{displaymath}
We are interested in the convergence rates of the generalization error (excess risk) of $\hat{f}_{\lambda}$: 
\begin{displaymath}
    \mathbb{E}_{\boldsymbol{x} \sim \mu} \left[ \left( \hat{f}_{\lambda}(\boldsymbol{x}) - f_{\rho}^{*}(\boldsymbol{x}) \right)^{2} \right] = \left\|\hat{f}_\lambda-f_\rho^*\right\|_{L^2}^2.
\end{displaymath}

\textit{Notations.} We use asymptotic notations $O(\cdot),~o(\cdot),~\Omega(\cdot)$ and $\Theta(\cdot)$.
We also write $a_n \asymp b_n$ for $a_n = \Theta(b_n)$; $a_n \lesssim b_n$ for $a_n = O(b_n)$; $a_n \gtrsim b_n$ for $a_n = \Omega(b_n)$; $ a_n \ll b_n$ for $a_n = o(b_n)$. We will also use the probability versions of the asymptotic notations such as $O_{\mathbb{P}}(\cdot), o_{\mathbb{P}}(\cdot), \Omega_{\mathbb{P}}(\cdot), \Theta_{\mathbb{P}}(\cdot)$. For instance, we say the random variables $ X_{n}, Y_{n} $ satisfying $ X_{n} =  O_{\mathbb{P}}(Y_{n}) $ if and only if for any $\epsilon > 0$, there exist a constant $C_{\epsilon} $ and $ N_{\epsilon}$ such that $ P\left( |X_{n}| \ge C_{\epsilon} |Y_{n}| \right) \le \epsilon, \forall n > N_{\epsilon}$.



\subsection{Integral operator and interpolation space}
Denote the natural embedding inclusion operator as $S_{k}: \mathcal{H} \to L^{2}(\mathcal{X},\mu)$. Then its adjoint operator $S_{k}^{*}: L^{2}(\mathcal{X},\mu) \to \mathcal{H}  $ is an integral operator, i.e., for $f \in L^{2}(\mathcal{X},\mu)$ and $\boldsymbol{x} \in \mathcal{X}$, we have 
\begin{displaymath}
\left(S_{k}^{*} f\right)(\boldsymbol{x})=\int_\mathcal{X} k\left(\boldsymbol{x}, \boldsymbol{x}^{\prime}\right) f\left(\boldsymbol{x}^{\prime}\right) \mathrm{d} \mu\left(\boldsymbol{x}^{\prime}\right).
\end{displaymath}
Under Assumption \ref{assumption kernel}, $S_{k}$ and $S_{k}^{*}$ are Hilbert-Schmidt operators (thus compact) and the HS norms (denoted as $\left\| \cdot \right\|_{2}$) satisfy that
    \begin{displaymath}
        \left\| S_{k}^{*} \right\|_{2} = \left\| S_{k} \right\|_{2} = \|k\|_{L^{2}(\mathcal{X},\mu)}:=\left(\int_\mathcal{X} k(\boldsymbol{x}, \boldsymbol{x}) \mathrm{d} \mu(\boldsymbol{x})\right)^{1 / 2} \le \kappa.
    \end{displaymath}
Next, we can define two integral operators: 
\begin{equation}
    L_{k}:=S_{k} S_{k}^{*}: L^{2}(\mathcal{X},\mu) \rightarrow L^{2}(\mathcal{X},\mu), \quad  \quad T:=S_{k}^{*} S_{k}: \mathcal{H} \rightarrow \mathcal{H}.
\end{equation}
$L_{k}$ and $T$ are self-adjoint, positive-definite and trace class (thus Hilbert-Schmidt and compact) and the trace norms (denoted as $\left\| \cdot \right\|_{1}$) satisfy that
\begin{displaymath}
\left\|L_{k}\right\|_{1}=\left\|T\right\|_{1}=\left\|S_{k}\right\|_{2}^2=\left\|S_{k}^{*}\right\|_{2}^2.
\end{displaymath}
The spectral theorem for self-adjoint compact operators yields that there is an at most countable index set $N$, a non-increasing summable sequence $\{ \lambda_{i} \}_{i \in N} \subseteq (0,\infty)$ and a family $\{ e_{i} \}_{i \in N} \subseteq \mathcal{H} $, such that $\{ e_{i} \}_{i \in N}$ is an orthonormal basis (ONB) of $\overline{\operatorname{ran} S_{k}} \subseteq L^{2}(\mathcal{X},\mu)$ and $\{ \lambda_{i}^{1/2} e_{i} \}_{i \in N}$ is an ONB of $\mathcal{H}$. Further, the integral operators can be written as 
\begin{displaymath}
    L_{k}=\sum_{i \in N} \lambda_i\left\langle\cdot,e_{i} \right\rangle_{L^{2}}e_{i} \quad \text { and } \quad T=\sum_{i \in N} \lambda_i\left\langle\cdot, \lambda_i^{1/2} e_i\right\rangle_{\mathcal{H}} \lambda_i^{1/2} e_i.
\end{displaymath}
We refer to $\{ e_{i} \}_{i \in N}$ and $\{ \lambda_{i} \}_{i \in N}$ as the eigenfunctions and eigenvalues. The celebrated Mercer's theorem (see, e.g., \citealt[Theorem 4.49]{Steinwart2008SupportVM}) shows that 
\begin{displaymath}\label{Mercer decomposition}
    k\left(\boldsymbol{x}, \boldsymbol{x}^{\prime}\right)=\sum_{i \in N} \lambda_i e_i(\boldsymbol{x}) e_i\left(\boldsymbol{x}^{\prime}\right), \quad \boldsymbol{x}, \boldsymbol{x}^{\prime} \in \mathcal{X},
\end{displaymath}
where the convergence is absolute and uniform for $\boldsymbol{x},\boldsymbol{x}^{\prime}$. 
      
Since we are going to consider the source condition in this paper, we need to introduce the interpolation spaces (power spaces) of RKHS. For any $ s \ge 0$, the fractional power integral operator $L_{k}^{s}: L^{2}(\mathcal{X},\mu) \to L^{2}(\mathcal{X},\mu)$ is defined as 
\begin{displaymath}
  L_{k}^{s}(f)=\sum_{i \in N} \lambda_i^{s} \left\langle f, e_i\right\rangle_{L^2} e_i.
\end{displaymath}
Then the interpolation space (power space) $[\mathcal{H}]^s $ is defined as
\begin{equation}\label{def interpolation space}
  [\mathcal{H}]^s := \operatorname{Ran} L_{k}^{s/2} = \left\{\sum_{i \in N} a_i \lambda_i^{s / 2}e_{i}: \left(a_i\right)_{i \in N} \in \ell_2(N)\right\} \subseteq L^{2}(\mathcal{X},\mu),
\end{equation}
equipped with the inner product
\begin{displaymath}
    \langle f, g\rangle_{[\mathcal{H}]^s}=\left\langle L_{k}^{-\frac{s}{2}} f, L_{k}^{-\frac{s}{2}} g\right\rangle_{L^2} .
\end{displaymath}
It is easy to show that $[\mathcal{H}]^s $ is also a separable Hilbert space with orthogonal basis $ \{ \lambda_{i}^{s/2} e_{i}\}_{i \in N}$. Specially, we have $[\mathcal{H}]^0 \subseteq L^{2}(\mathcal{X},\mu) $ and $[\mathcal{H}]^1 = \mathcal{H}$. For $0 < s_{1} < s_{2}$, the embeddings $ [\mathcal{H}]^{s_{2}} \hookrightarrow[\mathcal{H}]^{s_{1}} \hookrightarrow[\mathcal{H}]^0 $ exist and are compact \citep{fischer2020_SobolevNorm}. For the functions in $[\mathcal{H}]^{s}$ with larger $s$, we say they have higher regularity (smoothness) with respect to the RKHS. 

In the following of this paper, we assume $|N| = \infty$. Also note that $\{ \lambda_{i} \}_{i=1}^{\infty
}$ and $ \{ e_{i} \}_{i=1}^{\infty
}$ are dependent on $ \mathcal{H}$, thus are dependent on $d$.

\section{Main results}\label{section main}
\subsection{KRR's generalization error in the general case}
In this subsection, we consider a general framework for studying the generalization error of KRR, where we make quite mild assumptions on the RKHS $\mathcal{H}$ and the true function $f_{\rho}^{*}$. 
Note that in this framework, we allow $d$ to diverge to infinity as sample size $n$ and allow $\mathcal{H}, f_{\rho}^{*}$ to change with $d$. Therefore, the results in this subsection are applicable in the large-dimensional setting $ n \asymp d^{\gamma}, \gamma >0$.

Given the RKHS $\mathcal{H}$ and denote the true function as $ f_{\rho}^{*} = \sum\limits_{i=1}^{\infty} f_{i} e_{i}(\boldsymbol{x}) \in L^{2}(\mathcal{X},\mu) $. We define the following important quantities:
\begin{equation}\label{n1 n2 m1 m2}
\begin{aligned}
\mathcal{N}_{1}(\lambda) &= \sum\limits_{i=1}^{\infty} \left( \frac{\lambda_{i}}{\lambda_{i} + \lambda} \right);~~ \mathcal{N}_{2}(\lambda) = \sum\limits_{i=1}^{\infty} \left( \frac{\lambda_{i}}{\lambda_{i} + \lambda}\right)^{2}; \\
   \mathcal{M}_{1}(\lambda) &= \operatorname*{ess~sup}_{\boldsymbol{x} \in \mathcal{X}} \left|\sum\limits_{i=1}^{\infty} \left( \frac{\lambda}{\lambda_{i} + \lambda} f_{i} e_{i}(\boldsymbol{x}) \right) \right| ;~~ \mathcal{M}_{2}(\lambda) = \sum\limits_{i=1}^{\infty} \left( \frac{\lambda}{\lambda_{i} + \lambda} f_{i}\right)^{2} .
\end{aligned}
\end{equation}

\begin{assumption}\label{assumption noise}
    Suppose that for some absolute constant $\sigma > 0$,
    \begin{displaymath}
    \mathbb{E}_{(\boldsymbol{x},y )\sim \rho} \left[ \left( y-f^{*}_{\rho}(\boldsymbol{x}) \right)^2 \;\Big|\; \boldsymbol{x} \right] = \sigma^2,
    \quad \mu\text{-a.e. } \boldsymbol{x} \in \mathcal{X}.
  \end{displaymath}
\end{assumption}

Assumption \ref{assumption noise} assumes that the noise is non-vanishing and it holds for common nonparametric regression model $y = f_{\rho}^{*}(\boldsymbol{x}) + \epsilon $ where $ \epsilon $ is an independent non-zero noise.

\begin{assumption}\label{assumption eigenfunction}
    Suppose that
    \begin{align}\label{assumption eigen - n2}
    \operatorname*{ess~sup}_{\boldsymbol{x} \in \mathcal{X}} \sum\limits_{i=1}^{\infty} \left( \frac{\lambda_{i}}{\lambda_{i} + \lambda} \right)^{2}e_{i}^{2}(\boldsymbol{x}) \le \mathcal{N}_{2}(\lambda),
  \end{align}
  and 
  \begin{align}\label{assumption eigen - n1}
    \operatorname*{ess~sup}_{\boldsymbol{x} \in \mathcal{X}} \sum\limits_{i=1}^{\infty} \frac{\lambda_{i}}{\lambda_{i} + \lambda} e_{i}^{2}(\boldsymbol{x}) \le \mathcal{N}_{1}(\lambda).
  \end{align}
\end{assumption}

In fact, it is allowed to multiply an absolute constant $C$ on the right sides of \eqref{assumption eigen - n2} and \eqref{assumption eigen - n1}. Without loss of generality, we consider the constant to be $C=1$. 
Assumption \ref{assumption eigenfunction} naturally holds for RKHSs with uniformly bounded eigenfunctions, i.e., $\sup_{i \ge 1} \sup_{\boldsymbol{x} \in \mathcal{X}} |e_{i}(\boldsymbol{x})| \le 1$. One can also show that RKHSs associated with inner product kernel on the sphere with uniform distribution satisfy Assumption \ref{assumption eigenfunction} (see Lemma \ref{lemma inner assumption holds}). \\


    

Now we begin to state the first important theorem in this paper.
\begin{theorem}\label{main theorem}
    Let $\mathcal{N}_{1}, \mathcal{N}_{2}, \mathcal{M}_{1}, \mathcal{M}_{2}$ be defined as \eqref{n1 n2 m1 m2}, and let $d=d(n)$ which is allowed to diverge with $n \to \infty$. Suppose that Assumption \ref{assumption kernel}, \ref{assumption noise} and \ref{assumption eigenfunction} hold. Let $\hat{f}_{\lambda}$ be the KRR estimator defined by \eqref{krr estimator}. If the following approximation conditions hold for some $\lambda = \lambda(d,n) \to 0$:
    \begin{align}\label{approximation conditions}
       \frac{\mathcal{N}_{1}(\lambda)}{n} \ln{n} &= o(1);~~ n^{-1} \mathcal{N}_{1}(\lambda)^{2} \ln{n} = o\left(\mathcal{N}_{2}(\lambda)\right); ~~ n^{-1} \mathcal{N}_{1}(\lambda)^{\frac{1}{2}} \mathcal{M}_{1}(\lambda) = o\left(\mathcal{M}_{2}(\lambda)^{\frac{1}{2}}\right),
    \end{align}
    then we have 
    \begin{equation}\label{learning curve results}
        \mathbb{E}\left[\left\|\hat{f}_\lambda-f_\rho^*\right\|_{L^2}^2 \;\Big|\; \boldsymbol{X} \right] = \Theta_{\mathbb{P}}\left( \frac{\sigma^{2} \mathcal{N}_{2}(\lambda)}{n} + \mathcal{M}_{2}(\lambda) \right).
    \end{equation}
    The notation $ \Theta_{\mathbb{P}} $ only involves absolute constants.
\end{theorem}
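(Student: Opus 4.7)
The plan is to carry out the usual bias--variance decomposition of the conditional risk $\mathbb E[\|\hat f_\lambda - f_\rho^*\|_{L^2}^2 \mid \boldsymbol X]$, rewrite both pieces in terms of the empirical covariance operator
\[ T_X := \frac{1}{n} \sum_{i=1}^n k(\cdot,\boldsymbol x_i) \otimes k(\cdot,\boldsymbol x_i), \]
and then use operator concentration of $T_X$ around $T$, together with a separate vector-valued Bernstein bound for the $f_\rho^*$-dependent piece, to substitute population quantities for sample quantities. The three approximation conditions in \eqref{approximation conditions} are exactly the Bernstein moment conditions that turn each substitution into a $1+o_{\mathbb P}(1)$ multiplicative error, which is what upgrades a one-sided estimate into the two-sided $\Theta_{\mathbb P}$ bound.

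Writing $\mathbf y = f_\rho^*(\boldsymbol X) + \boldsymbol\epsilon$ with conditionally centered noise of variance $\sigma^2$, the risk decomposes as $V(\lambda,\boldsymbol X) + B(\lambda,\boldsymbol X)$ where
\[ V(\lambda,\boldsymbol X) = \frac{\sigma^2}{n}\operatorname{tr}\bigl(T(T_X+\lambda)^{-1} T_X (T_X+\lambda)^{-1}\bigr) \]
and $B(\lambda,\boldsymbol X) = \|\mathbb E[\hat f_\lambda\mid\boldsymbol X] - f_\rho^*\|_{L^2}^2$. Their population targets are $\frac{\sigma^2}{n}\operatorname{tr}(T^2(T+\lambda)^{-2}) = \frac{\sigma^2}{n}\mathcal N_2(\lambda)$ and $\|\lambda(L_k+\lambda)^{-1}f_\rho^*\|_{L^2}^2 = \mathcal M_2(\lambda)$ respectively. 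The central technical step is the operator bound
\[ \bigl\|(T+\lambda)^{-1/2}(T_X - T)(T+\lambda)^{-1/2}\bigr\|_{\mathrm{op}} = o_{\mathbb P}(1), \]
which follows from a self-adjoint Bernstein inequality applied to the centered summands; each such summand is bounded in operator norm by (a scalar multiple of) $\mathcal N_1(\lambda)$ via Assumption \ref{assumption eigenfunction}, with variance proxy of the same order, so the first condition $\mathcal N_1(\lambda)\ln n = o(n)$ delivers the claim. This unlocks the replacement $T_X+\lambda \approx T+\lambda$ in the variance expression, and the second condition sharpens the trace fluctuation so that $V = \frac{\sigma^2}{n}\mathcal N_2(\lambda)(1+o_{\mathbb P}(1))$.

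For the bias, after the same substitution one is left with a Hilbert-space sum involving the residual $r(\boldsymbol x) := \bigl(\lambda(L_k+\lambda)^{-1}f_\rho^*\bigr)(\boldsymbol x)$, whose essential supremum is exactly $\mathcal M_1(\lambda)$ and whose $L^2$-norm squared is $\mathcal M_2(\lambda)$. A vector-valued Bernstein inequality applied to a sum of the form $\frac{1}{n}\sum_i (T+\lambda)^{-1/2} r(\boldsymbol x_i) k(\cdot,\boldsymbol x_i)$ has uniform bound of order $\mathcal N_1(\lambda)^{1/2}\mathcal M_1(\lambda)$ and variance proxy of order $\mathcal M_2(\lambda)$; the third condition is precisely the moment balance that makes this deviation $o_{\mathbb P}(\mathcal M_2(\lambda)^{1/2})$, giving $B = \mathcal M_2(\lambda)(1+o_{\mathbb P}(1))$. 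Summing $V$ and $B$ yields the claimed $\Theta_{\mathbb P}$; the matching lower bound holds term by term, and no cancellation is possible because $B$ is deterministic given $\boldsymbol X$ while $V$ is the (nonnegative) noise-integrated piece.

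The main obstacle is establishing the sharp two-sided form of the operator concentration with constants that are uniform in $d$, since the RKHS, the eigenvalues, and the eigenfunctions all vary with dimension. In the classical fixed-$d$ setting this follows from standard operator-Bernstein tools, but here Assumption \ref{assumption eigenfunction} is essential: it reduces the worst-case sup-norm constants of the Bernstein moments to the scalar quantities $\mathcal N_1(\lambda)$ and $\mathcal N_2(\lambda)$, keeping all constants absolute. A secondary subtlety is tracking that each substitution produces two-sided (not merely one-sided) errors, which is conceptually routine but must be done carefully to avoid losing constants in the final $\Theta_{\mathbb P}$ conclusion.
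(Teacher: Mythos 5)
Your overall architecture coincides with the paper's: condition on $\boldsymbol X$, split into bias and variance, use operator concentration of $T_\lambda^{-1/2}(T_{\boldsymbol X}-T)T_\lambda^{-1/2}$ under the first condition, and a vector-valued Bernstein bound for the bias with uniform bound $\mathcal N_1(\lambda)^{1/2}\mathcal M_1(\lambda)$; your bias argument matches Lemmas \ref{lemma bias main term}--\ref{lemma bias appr term} (one slip: the variance proxy of the summands $T_\lambda^{-1/2}r(\boldsymbol x_i)k_{\boldsymbol x_i}$ is of order $\mathcal N_1(\lambda)\mathcal M_2(\lambda)$, not $\mathcal M_2(\lambda)$, so the $\sigma/\sqrt n$ term needs the first condition in \eqref{approximation conditions}, not the third; this is harmless but should be stated).

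The genuine gap is in the variance. You correctly write $\mathbf{Var}(\lambda)=\frac{\sigma^2}{n}\operatorname{tr}\bigl(T\,T_{\boldsymbol X\lambda}^{-1}T_{\boldsymbol X}T_{\boldsymbol X\lambda}^{-1}\bigr)$, but the assertion that ``the second condition sharpens the trace fluctuation'' is not an argument, and the obvious way to make it one fails: after replacing the resolvents using $\|T_\lambda^{-1/2}(T_{\boldsymbol X}-T)T_\lambda^{-1/2}\|\lesssim\sqrt{\mathcal N_1(\lambda)\ln n/n}$, controlling the replacement error by trace duality ($|\operatorname{tr}(MN)|\le\|M\|_1\|N\|$ with $\|T_\lambda^{-1/2}TT_\lambda^{-1/2}\|_1=\mathcal N_1(\lambda)$) produces an error of order $\mathcal N_1(\lambda)^{3/2}\sqrt{\ln n/n}$, and the conditions \eqref{approximation conditions} do \emph{not} force this to be $o(\mathcal N_2(\lambda))$ — in the large-dimensional regime of Section \ref{section app inner} one typically has $\mathcal N_2(\lambda)\ll\mathcal N_1(\lambda)$, so this bound is useless exactly where the theorem is needed. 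The paper closes this in two steps you have not supplied. First (Lemma \ref{lemma approximation A}), the resolvent-replacement error is written as $\int|X_1-X_2|\,|X_1+X_2|\,d\mu$ with a \emph{pointwise} bound $|X_1-X_2|\lesssim n^{-1/2}\mathcal N_1(\lambda)(\ln n)^{1/2}$ paired against $\int X_2\,d\mu\lesssim\mathcal N_2(\lambda)^{1/2}$, giving error $n^{-1}\mathcal N_1^2\ln n+n^{-1/2}\mathcal N_1\mathcal N_2^{1/2}(\ln n)^{1/2}$, which is where the second condition enters and why it has exactly that form. Second (Lemma \ref{lemma approximation B}), the remaining linear-in-$T_{\boldsymbol X}$ trace is handled by a scalar Bernstein/empirical-norm bound for the function $\sum_i\bigl(\lambda_i/(\lambda_i+\lambda)\bigr)^2e_i^2(\cdot)$, whose sup-norm is controlled by $\mathcal N_2(\lambda)$ only because of \eqref{assumption eigen - n2} in Assumption \ref{assumption eigenfunction}; this yields a two-sided bound with error $\mathcal N_2(\lambda)\ln(1/\delta)/n=o(\mathcal N_2(\lambda))$. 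Without these two devices your sketch only yields an upper bound of the wrong order, not the matching $\Theta_{\mathbb P}(\sigma^2\mathcal N_2(\lambda)/n)$; also note the paper's conclusion is $\Theta_{\mathbb P}$ with absolute constants, not the multiplicative $1+o_{\mathbb P}(1)$ you claim, which your tools would not deliver either.
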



Note that the notation $o(\cdot)$ represents the limit as $n \to \infty$ and we allow $d=d(n)$ to diverge to infinity with $n$. Theorem \ref{main theorem} provides the matching upper and lower bounds \eqref{learning curve results} for all $\lambda$ satisfying the approximation conditions \eqref{approximation conditions}. Generally speaking, the conditions in \eqref{approximation conditions} are more likely to hold for larger $\lambda$. For instance, we will show in the proof of Theorem \ref{theorem inner s ge 1} that if the conditions in \eqref{approximation conditions} hold for $ \lambda_{0} = d^{-l_{0}} $ for some $ l_{0} >0$, then they hold for all $\lambda = d^{-l}, 0<l<l_{0}  $. 


Basically, in \eqref{learning curve results}, the term $\sigma^{2} \mathcal{N}_{2}(\lambda) / n $ corresponds to the variance and $\mathcal{M}_{2}(\lambda)$ corresponds to the bias. An obvious relation is $\mathcal{N}_{2}(\lambda) \le \mathcal{N}_{1}(\lambda)$, thus the first approximation condition in \eqref{approximation conditions} guarantees that $\lambda = \lambda(d,n)$ is not that small and the variance term tends to 0. In addition, if we take the traditional source condition assumption, i.e., $\left\| f_{\rho}^{*} \right\|_{[\mathcal{H}]^{s}} \le R$ for some constant $R$ and $s>0$, easy calculation shows that $ \mathcal{M}_{2}(\lambda) \le C \lambda^{\min\{s,2\}}$ for some constant $C$ only depending on $R$ and the constant $ \kappa $ in Assumption \ref{assumption kernel}. This implies that the bias term tends to 0 as $\lambda=\lambda(d,n) \to 0$.


Under the capacity-source condition framework in the fixed-dimensional setting (as discussed in the introduction), Theorem \ref{main theorem} also recovers the state-of-the-art results in \cite{li2023asymptotic}. We emphasize that proving such tight bounds of the generalization error in the large-dimensional setting is nontrivial. In addition, the original bounds of the key quantities in \eqref{n1 n2 m1 m2} under the capacity-source condition framework are no longer sufficient in the large-dimensional setting. Given the information about the kernel and the true function, detailed calculations of $\mathcal{N}_{1}(\lambda), \mathcal{N}_{2}(\lambda), \mathcal{M}_{1}(\lambda) $ and $  \mathcal{M}_{2}(\lambda) $ will be needed (see, e.g., Appendix \ref{section calcu of key} for inner product kernel on the sphere).

\subsection{Applications to inner product kernel on the sphere}\label{section app inner}

In this subsection, we consider the inner product kernel on the sphere with uniform distribution. In the large-dimensional setting $ n \asymp d^{\gamma}, \gamma >0 $ and under further source condition assumption, we apply Theorem \ref{main theorem} to prove the exact convergence rates of the generalization error of the KRR estimator. Then, we derive the corresponding minimax lower bound, which enables us to discuss the minimax optimality and the saturation effect of KRR. 


Suppose that $ \mathcal{X} = \mathbb{S}^{d}$ and $\mu$ is the uniform distribution on $ \mathbb{S}^{d} $. We consider the inner product kernel, i.e., there exists a function $ \Phi(t): [-1,1] \to \mathbb{R}$ such that $ k_{d}(\boldsymbol{x}, \boldsymbol{x}^{\prime}) = \Phi\left( \langle \boldsymbol{x}, \boldsymbol{x}^{\prime} \rangle\right), \forall \boldsymbol{x}, \boldsymbol{x}^{\prime} \in \mathbb{S}^{d}$. Then Mercer's decomposition for the inner product kernel is given in the basis of spherical harmonics:
\begin{displaymath}
  k_{d}(\boldsymbol{x},\boldsymbol{x^{\prime}}) = \sum_{k=0}^{\infty} \mu_k \sum_{l=1}^{N(d,k)} Y_{k,l}(\boldsymbol{x})Y_{k,l}(\boldsymbol{x^{\prime}}),
\end{displaymath}
where $\{Y_{k,l}\}_{l=1}^{N(d,k)}$ are spherical harmonic polynomials of degree $k$; $ \mu_{k}$ are the eigenvalues with multiplicity $N(d,0)=1$; $N(d, k) = \frac{2k+d-1}{k} \cdot \frac{(k+d-2)!}{(d-1)!(k-1)!}, k =1,2,\cdots$.

\begin{assumption}[Inner product kernel]\label{assumption inner product kernel}
    Suppose that $k = \{k_{d}\}_{d=1}^{\infty}$ satisfies 
    \begin{displaymath}
        k_{d}(\boldsymbol{x}, \boldsymbol{x}^{\prime}) = \Phi\left( \langle \boldsymbol{x}, \boldsymbol{x}^{\prime} \rangle\right), ~\forall \boldsymbol{x}, \boldsymbol{x}^{\prime} \in \mathbb{S}^{d},
    \end{displaymath}
    where $ \Phi(t) \in \mathcal{C}^{\infty} \left([-1,1]\right)$ is a fixed function independent of $d$ and
    \begin{displaymath}
        \Phi(t) = \sum_{j=0}^\infty a_j t^j, ~ a_{j} > 0, ~\forall j = 0, 1, 2,\cdots 
    \end{displaymath}
\end{assumption}

Assumption \ref{assumption inner product kernel} formally defines the kernel considered in this subsection. The purpose of assuming all the coefficients $a_{j}$ to be positive is to keep the main results and proofs clean. In fact, the proof is similar for other inner product kernels as long as we know which coefficients are positive, for instance, the neural tangent kernel in the following subsection. We assume $\Phi(t)$ (or $ \{a_{j}\}_{j=0}^{\infty}$) to be fixed and we will ignore the dependence of constants on it in the rest of our paper.

The inner product kernel has attracted a lot of research \citep[etc.]{liang2020multiple,Ghorbani2019LinearizedTN,misiakiewicz_spectrum_2022,xiao2022precise,lu2023optimal} and we have a concise characterization of $ \mu_{k}$ and $ N(d,k)$, which enables us to calculate the exact convergence rates of the key quantities in \eqref{n1 n2 m1 m2}. We refer to  Lemma \ref{lemma inner eigen}, \ref{lemma:monotone_of_eigenvalues_of_inner_product_kernels} and \ref{lemma Ndk} in Appendix \ref{section prelimi about inner} for details about $ \mu_{k}$ and $ N(d,k)$. The extension to general kernel can be extremely complicated and existing results also only consider the case where $\mathcal{X}$ is the sphere (as this paper) or discrete hypercube (see, e.g., \citealt{mei2022generalization,aerni2022strong}).  \\

 

In the next assumption, we formally introduce the source condition, which characterizes the relative smoothness of $f_{\rho}^{*}$ with respect to $\mathcal{H}$.
\begin{assumption}[Source condition]\label{assumption source condition}
$ $\\
\begin{itemize}
    \item[(a)] Suppose that $f_{\rho}^{*}(\boldsymbol{x}) = f_{\rho,d}^{*}(\boldsymbol{x}) = \sum\limits_{i=1}^{\infty} f_{i} e_{i}(\boldsymbol{x}) \in [\mathcal{H}]^{s}$ for some $s > 0$ and satisfies that,
    \begin{equation}\label{assumption source part 1}
        \left\| f_{\rho}^{*} \right\|_{[\mathcal{H}]^{s}} \le R_{\gamma}, 
    \end{equation}
    where $R_{\gamma}$ is a constant only depending on $\gamma$.
    \item[(b)] Denote $ q $ as the smallest integer such that $ q > \gamma$ and $\mu_{q} \neq 0$. Define $\mathcal{I}_{d,k}$ as the index set satisfying $\lambda_{i} \equiv \mu_{k}, i \in \mathcal{I}_{d,k}$. Further suppose that there exists an absolute constant $c_{0} > 0$ such that for any $ d $ and $ k \in \{0,1,\cdots,q\}$ with $\mu_{k} \neq 0$, we have
   \begin{equation}\label{ass of fi}
       \sum\limits_{i \in \mathcal{I}_{d,k}} \mu_{k}^{-s} f_{i}^{2} \ge c_{0}.
   \end{equation}
\end{itemize}
\end{assumption}


Assumption \ref{assumption source condition} (a) is usually used as the traditional source condition \citep[etc.]{caponnetto2006optimal,fischer2020_SobolevNorm}. In order to obtain a reasonable lower bound, we need Assumption \ref{assumption source condition} (b). It is equivalent to assume that the $ [\mathcal{H}]^{s}$ norm of the projection of $ f_{\rho}^{*}$ on the first $q$-th eigenspace is non-vanishing. Similar assumptions have been adopted when one interested in the lower bound of generalization error in the fixed-dimensional setting, e.g., Eq.(8) in \cite{Cui2021GeneralizationER} and Assumption 3 in \cite{li2023asymptotic}. In a word, recalling definition \ref{def interpolation space}, Assumption \ref{assumption source condition} implies that $ f_{\rho}^{*} \in [\mathcal{H}]^{s}$ and $ f_{\rho}^{*} \notin [\mathcal{H}]^{t}$ for any $t > s$. \\

Now we are ready to state two theorems about the exact convergence rates of the generalization error of KRR, which deal with two different ranges of source condition: $s \ge 1$ and $ 0<s<1$. 
\begin{theorem}[Exact convergence rates when $\mathbf{s \ge 1}$]\label{theorem inner s ge 1}
    Let $c_{1} d^{\gamma} \le n \le c_{2} d^{\gamma} $ for some fixed $ \gamma >0$ and absolute constants $ c_{1}, c_{2}$. Consider $\mathcal{X} = \mathbb{S}^{d}$ and the marginal distribution $\mu$ to be the uniform distribution. Let $k = \{ k_{d}\}_{d=1}^{\infty}$ be a sequence of inner product kernels on the sphere satisfying Assumption \ref{assumption kernel} and \ref{assumption inner product kernel}. Further suppose that Assumption \ref{assumption noise} holds and Assumption \ref{assumption source condition} holds for some $ s \ge 1$. Let $\hat{f}_{\lambda}$ be the KRR estimator defined by \eqref{krr estimator}. Define $ \tilde{s} = \min\{s,2\} $, then we have:
    \begin{itemize}
        \item[(i)] When $\gamma \in \left(p+p\tilde{s},~ p+p\tilde{s}+1 \right]$ for some $p \in \mathbb{N}$, by choosing $ \lambda = d^{-\frac{\gamma+p-p\tilde{s}}{2}} \cdot \mathbf{1}_{p>0} + d^{-\frac{\gamma}{2}} \ln{d} \cdot \mathbf{1}_{p=0} $, we have
        \begin{align}
             \mathbb{E}\left[\left\|\hat{f}_\lambda-f_\rho^*\right\|_{L^2}^2 \;\Big|\; \boldsymbol{X} \right]=\begin{cases}
            \Theta_{\mathbb{P}}\left( d^{-\gamma} \ln^{2}{d} \right) = \Theta_{\mathbb{P}}\left( n^{-1} \ln^{2}{n} \right),& p=0, \\[3pt]
            \Theta_{\mathbb{P}}\left( d^{-\gamma + p} \right) = \Theta_{\mathbb{P}}\left( n^{-1 + \frac{p}{\gamma}} \right), & p>0;
            \end{cases}
        \end{align}

        \item[(ii)] When $\gamma \in \left(p+p\tilde{s}+1,~ p+p\tilde{s}+2\tilde{s}-1 \right]$ for some $p \in \mathbb{N}$, by choosing $ \lambda = d^{-\frac{\gamma+3p-p\tilde{s}+1}{4}}$, we have
        \begin{equation}
            \mathbb{E}\left[\left\|\hat{f}_\lambda-f_\rho^*\right\|_{L^2}^2 \;\Big|\; \boldsymbol{X} \right] = \Theta_{\mathbb{P}}\left( d^{-\frac{\gamma-p+p\tilde{s}+1}{2}} \right) = \Theta_{\mathbb{P}}\left( n^{- \frac{\gamma-p+p\tilde{s}+1}{2 \gamma}} \right);
        \end{equation}

        \item[(iii)] When $\gamma \in \left(p+p\tilde{s}+2\tilde{s}-1,~ (p+1)+(p+1)\tilde{s} \right]$ for some $p \in \mathbb{N}$, by choosing $ \lambda = d^{-\frac{\gamma+(p+1)(1-\tilde{s})}{2}}$, we have
        \begin{equation}
             \mathbb{E}\left[\left\|\hat{f}_\lambda-f_\rho^*\right\|_{L^2}^2 \;\Big|\; \boldsymbol{X} \right] = \Theta_{\mathbb{P}}\left( d^{-(p+1)\tilde{s}} \right) = \Theta_{\mathbb{P}}\left( n^{- \frac{(p+1)\tilde{s}}{\gamma}} \right).
        \end{equation}
    \end{itemize}
    The notation $ \Theta_{\mathbb{P}} $ involves constants only depending on $ s, \sigma, \gamma, c_{0}, \kappa, c_{1}$ and $ c_{2} $. In addition, the convergence rates of the generalization error of KRR can not be faster than above for any choice of regularization parameter $ \lambda = \lambda(d,n) \to 0$.
    
\end{theorem}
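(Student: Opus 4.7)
The plan is to apply Theorem \ref{main theorem} with an explicitly chosen $\lambda = d^{-l}$, after computing the exact orders of $\mathcal{N}_1(\lambda), \mathcal{N}_2(\lambda), \mathcal{M}_1(\lambda), \mathcal{M}_2(\lambda)$ for the inner product kernel on $\mathbb{S}^d$ under Assumption \ref{assumption source condition}. As a preliminary step I would verify that Assumption \ref{assumption eigenfunction} holds in this setting; this reduces via the spherical-harmonic addition formula to a standard uniform bound on zonal polynomials and is packaged as the separate Lemma \ref{lemma inner assumption holds}. Once this is in place, Theorem \ref{main theorem} delivers matching upper and lower bounds of the form $\mathcal{N}_2(\lambda)/n + \mathcal{M}_2(\lambda)$ for every $\lambda$ satisfying \eqref{approximation conditions}, so the rest of the argument is essentially spectral bookkeeping plus an optimization.

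The key computation exploits the block structure from Mercer's decomposition: for inner product kernels on the sphere, eigenvalues form blocks with $\mu_k \asymp d^{-k}$ and multiplicity $N(d,k) \asymp d^k$. Writing $\lambda = d^{-l}$ with $p = \lfloor l \rfloor$, the ratio $\mu_k/(\mu_k+\lambda)$ is $\approx 1$ for $k \le p$ and negligible for $k \ge p+2$, so the four sums in \eqref{n1 n2 m1 m2} split into a low-frequency contribution, a high-frequency contribution, and a boundary correction from the block $k = p+1$. This yields, up to $\log$ factors at boundary values of $l$,
\begin{equation*}
    \mathcal{N}_1(\lambda) \asymp \mathcal{N}_2(\lambda) \asymp d^{p} + d^{\,2l-(p+1)}, \qquad \mathcal{M}_2(\lambda) \asymp \lambda^{2}\,\mu_p^{\,\tilde s-2} + \mu_{p+1}^{\tilde s},
\end{equation*}
where $\tilde s = \min(s,2)$ appears because $(\lambda/(\mu_k+\lambda))^{2}\mu_k^{s}$ saturates once the factor $\mu_k^{s-2}$ stops increasing in $k$. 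The lower bound in Assumption \ref{assumption source condition}(b), which asserts $\sum_{i\in\mathcal{I}_{d,k}}\mu_k^{-s}f_i^2 \ge c_0$ for all relevant blocks $k\le q$, is precisely what turns the upper bound on $\mathcal{M}_2$ into a matching lower bound. Care is required when $l$ is near an integer (giving the $\ln d$ factor in case (i) with $p=0$) and when $2l - (p+1)$ crosses $p$, which is exactly the transition between cases (i), (ii) and (iii).

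With these asymptotics, the bias--variance tradeoff $\mathcal{N}_2(\lambda)/n + \mathcal{M}_2(\lambda)$ is a piecewise-power function of $l$, and the envelope over $l>0$ produces the three regimes in the statement. In region (i), $\gamma \in (p+p\tilde s,\,p+p\tilde s+1]$, the bias has just saturated at level $p$ so variance $d^{p-\gamma}$ dominates and $\lambda$ is chosen so that $\mathcal{M}_2$ sits just below it; in region (ii), genuine bias--variance balance is available and yields the rate $d^{-(\gamma-p+p\tilde s+1)/2}$; in region (iii) we are stuck on a plateau where the next block $\mu_{p+1}$ is too small to estimate and the bias is pinned at $\mu_{p+1}^{\tilde s} \asymp d^{-(p+1)\tilde s}$, with $\lambda$ chosen to match variance to this floor. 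For each of the three chosen $\lambda$'s I would plug into the approximation conditions \eqref{approximation conditions}; monotonicity (the observation made in the text that if they hold at some $\lambda_0 = d^{-l_0}$ then they hold for all $l<l_0$) lets me check them at a single boundary point per case, after which Theorem \ref{main theorem} directly produces the stated $\Theta_{\mathbb P}$ rate.

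The main obstacle is the last sentence of the theorem: no $\lambda(d,n) \to 0$ can do better. For $\lambda$ meeting \eqref{approximation conditions} this follows immediately by comparing to the envelope of $\mathcal{N}_2(\lambda)/n + \mathcal{M}_2(\lambda)$. The delicate case is $\lambda$ that violates the approximation conditions, typically very small $\lambda$, where Theorem \ref{main theorem} does not apply. My plan is to treat this by a separate variance lower bound: for such $\lambda$ one can still establish, on a high-probability event, $\mathbb{E}[\|\hat f_\lambda - f_\rho^*\|_{L^2}^2 \mid \boldsymbol X] \gtrsim \sigma^2 \mathcal{N}_2(\lambda)/n$ via concentration of the empirical covariance operator; since $\mathcal{N}_2(\lambda)$ is monotone decreasing in $\lambda$, shrinking $\lambda$ past the optimum only inflates this variance term beyond the claimed rate. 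A symmetric argument, using the monotonicity of $\mathcal{M}_2$, rules out $\lambda$ that are too large, and combining the two yields optimality across the full range of $\lambda$.
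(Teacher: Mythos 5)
Your main line of attack — verifying Assumption \ref{assumption eigenfunction} via the addition formula, computing the key quantities blockwise, balancing $\mathcal{N}_2(\lambda)/n$ against $\mathcal{M}_2(\lambda)$, checking \eqref{approximation conditions} at the chosen $\lambda$ and invoking Theorem \ref{main theorem}, with the three regimes of $\gamma$ coming out of the piecewise-power envelope — is essentially the paper's Steps 1--2, and your expression for $\mathcal{M}_2$ agrees with Lemma \ref{lemma calculation m2}. One computational slip: $\mathcal{N}_1(\lambda)\asymp\mathcal{N}_2(\lambda)$ is false. For $\lambda=d^{-l}$ with $p<l<p+1$, Lemma \ref{lemma calculation n1 n2} gives $\mathcal{N}_1(\lambda)\asymp\lambda^{-1}=d^{\,l}$, which strictly dominates $\mathcal{N}_2(\lambda)\asymp d^{\,p}+d^{\,2l-(p+1)}$ except at the endpoints; since the second and third conditions in \eqref{approximation conditions} involve $\mathcal{N}_1$ and $\mathcal{N}_2$ separately, your condition checks as written would use the wrong exponents (the conditions do hold at the stated $\lambda$'s, but only the correct $\mathcal{N}_1\asymp\lambda^{-1}$ lets you verify them).

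The genuine gap is in the last claim, that no $\lambda(d,n)\to 0$ can beat the stated rates, specifically for $\lambda$ below the balancing value where \eqref{approximation conditions} fails. You propose to establish $\mathbb{E}\bigl[\|\hat f_\lambda-f_\rho^*\|_{L^2}^2\mid\boldsymbol X\bigr]\gtrsim \sigma^2\mathcal{N}_2(\lambda)/n$ for such $\lambda$ ``via concentration of the empirical covariance operator,'' but that concentration (smallness of $T_\lambda^{-1/2}(T-T_{\boldsymbol X})T_\lambda^{-1/2}$, i.e.\ comparability of empirical and population regularized resolvents) is available precisely when $\mathcal{N}_1(\lambda)\ln n/n=o(1)$, which is the regime you are trying to leave; and the asserted inequality is in fact false for very small $\lambda$: as $\lambda\to 0$ with $n$ and $\boldsymbol X$ fixed, $\mathcal{N}_2(\lambda)\to\infty$ while $\mathbf{Var}(\lambda)$ converges to the finite variance of the minimum-norm interpolant, so $\mathbf{Var}(\lambda)\gtrsim\sigma^2\mathcal{N}_2(\lambda)/n$ cannot hold uniformly down to small $\lambda$. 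The paper's Step 3 avoids this entirely with a deterministic argument: writing $\mathbf{Var}(\lambda)=\frac{\sigma^2}{n^2}\int_{\mathcal X}\mathbb K(\boldsymbol x,\boldsymbol X)(\boldsymbol K+\lambda)^{-2}\mathbb K(\boldsymbol X,\boldsymbol x)\,\mathrm d\mu(\boldsymbol x)$ and using $(\boldsymbol K+\lambda_1)^{-2}\succeq(\boldsymbol K+\lambda_2)^{-2}$ for $\lambda_1\le\lambda_2$, the variance is monotone non-increasing in $\lambda$ for every fixed sample; hence for $\lambda\lesssim\lambda_{\text{balance}}$ one has $\mathbb{E}[\cdot\mid\boldsymbol X]\ge\mathbf{Var}(\lambda)\ge\mathbf{Var}(\lambda_{\text{balance}})$, and since bias and variance are of the same order at $\lambda_{\text{balance}}$, this already equals the claimed rate. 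For $\lambda\gtrsim\lambda_{\text{balance}}$ your plan is fine and matches the paper: the conditions hold for all $l\le l_{\text{balance}}$ by the monotonicity noted in \eqref{mono of l}, so Theorem \ref{main theorem} applies and the error is $\Omega_{\mathbb P}(\mathcal{M}_2(\lambda))\gtrsim\mathcal{M}_2(\lambda_{\text{balance}})$. Replace your small-$\lambda$ concentration step by this monotonicity-of-variance argument and the proof closes.
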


\begin{theorem}[Exact convergence rates when $\mathbf{0 < s < 1}$]\label{theorem inner s le 1}
    Let $c_{1} d^{\gamma} \le n \le c_{2} d^{\gamma} $ for some fixed $ \gamma >0$ and absolute constants $ c_{1}, c_{2}$. Consider $\mathcal{X} = \mathbb{S}^{d}$ and the marginal distribution $\mu$ to be the uniform distribution. Let $k = \{ k_{d}\}_{d=1}^{\infty}$ be a sequence of inner product kernels on the sphere satisfying Assumption \ref{assumption kernel} and \ref{assumption inner product kernel}. Further suppose that Assumption \ref{assumption noise} holds and Assumption \ref{assumption source condition} holds for some $ 0 < s < 1$. Let $\hat{f}_{\lambda}$ be the KRR estimator defined by \eqref{krr estimator}. Then we have:
    \begin{itemize}[leftmargin = 18pt]
        \item If $ \frac{1}{2} < s < 1$:
        \begin{itemize}[leftmargin = 15pt]
        \item[(i)] When $\gamma \in \left(p+ps,~ p+ps+s \right]$ for some $p \in \mathbb{N}$, by choosing $ \lambda = d^{-\frac{\gamma+p-ps}{2}} \cdot \mathbf{1}_{p>0} + d^{-\frac{\gamma}{2}} \ln{d} \cdot \mathbf{1}_{p=0} $, we have
        \begin{align}
             \mathbb{E}\left[\left\|\hat{f}_\lambda-f_\rho^*\right\|_{L^2}^2 \;\Big|\; \boldsymbol{X} \right]=\begin{cases}
            \Theta_{\mathbb{P}}\left( d^{-\gamma} \ln^{2}{d} \right) = \Theta_{\mathbb{P}}\left( n^{-1} \ln^{2}{n} \right),& p=0, \\[3pt]
            \Theta_{\mathbb{P}}\left( d^{-\gamma + p} \right) = \Theta_{\mathbb{P}}\left( n^{-1 + \frac{p}{\gamma}} \right), & p>0;
            \end{cases}
        \end{align}

        \item[(ii)] When $\gamma \in \left(p+ps+s,~ (p+1)+(p+1)s \right]$ for some $p \in \mathbb{N}$, by choosing $ \lambda = d^{-\frac{2p+s}{2}}$, we have
        \begin{equation}
            \mathbb{E}\left[\left\|\hat{f}_\lambda-f_\rho^*\right\|_{L^2}^2 \;\Big|\; \boldsymbol{X} \right] = \Theta_{\mathbb{P}}\left( d^{-(p+1)s} \right) = \Theta_{\mathbb{P}}\left( n^{- \frac{(p+1)s}{\gamma}} \right);
        \end{equation}

        \end{itemize}
    The notation $ \Theta_{\mathbb{P}} $ involves constants only depending on $ s, \sigma, \gamma, c_{0}, \kappa, c_{1}$ and $ c_{2} $.

    \item If $ 0 < s \le \frac{1}{2}$:  we have the same convergence rates as the case $ s \in (\frac{1}{2},1)$ for those 
    \begin{displaymath}
         \gamma > \frac{3s}{2(s+1)}.
    \end{displaymath}
        
    \end{itemize}
\end{theorem}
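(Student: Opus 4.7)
The plan is to derive this theorem as a direct corollary of the general learning curve in Theorem~\ref{main theorem}: once the four quantities $\mathcal{N}_{1}(\lambda), \mathcal{N}_{2}(\lambda), \mathcal{M}_{1}(\lambda), \mathcal{M}_{2}(\lambda)$ are computed for the inner-product kernel on $\mathbb{S}^{d}$ under Assumption~\ref{assumption source condition}, the theorem follows by (a) checking the three approximation conditions~\eqref{approximation conditions} and (b) verifying that the prescribed $\lambda$ in each sub-case balances $\mathcal{N}_{2}(\lambda)/n$ against $\mathcal{M}_{2}(\lambda)$. For the variance side, I would reuse the calculation from Theorem~\ref{theorem inner s ge 1}: using the appendix lemmas that give $\mu_{k} \asymp d^{-k}$, $N(d,k) \asymp d^{k}$ and $\mu_{k} N(d,k) \asymp 1$ for the relevant levels, together with Assumption~\ref{assumption eigenfunction} which holds automatically on the sphere, the sum $\sum_{k} N(d,k)(\mu_{k}/(\mu_{k}+\lambda))^{2}$ splits into two geometric pieces according to whether $\mu_{k}$ exceeds $\lambda$; writing $\lambda \asymp d^{-\alpha}$, this yields $\mathcal{N}_{2}(\lambda) \asymp d^{\lfloor \alpha \rfloor} + d^{2\alpha-\lceil \alpha \rceil}$ up to logarithmic corrections.

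The new ingredient for $0<s<1$ is the sharp evaluation of the bias $\mathcal{M}_{2}(\lambda)$. Writing $(f^{(k)})^{2} := \sum_{i \in \mathcal{I}_{d,k}} f_{i}^{2}$, Assumption~\ref{assumption source condition}(a) yields the level-wise upper bound $(f^{(k)})^{2} \le R_{\gamma}^{2}\mu_{k}^{s}$ for all $k$, and part~(b) provides the matching lower bound $(f^{(k)})^{2} \ge c_{0}\mu_{k}^{s}$ for $k \le q$. The level-$k$ contribution to $\mathcal{M}_{2}$ is then $\Theta(\mu_{k}^{s}) = \Theta(d^{-ks})$ when $\mu_{k} \ll \lambda$ and $\Theta(\lambda^{2}\mu_{k}^{s-2}) = \Theta(d^{-2\alpha+k(2-s)})$ when $\mu_{k} \gg \lambda$; since $s<2$, both expressions are maximised as $k$ approaches $\alpha$, so $\mathcal{M}_{2}(\lambda)$ is dominated by the two boundary levels $k=\lfloor \alpha \rfloor$ and $k=\lceil \alpha \rceil$. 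Because $\alpha \le \gamma$ and $q > \gamma$ in every sub-case, both of these boundary levels satisfy $k \le q$, so the lower bound from~(b) survives and we obtain a genuine $\Theta$-expression for the bias. Balancing $\mathcal{N}_{2}(\lambda)/n$ with $\mathcal{M}_{2}(\lambda)$ then verifies that the prescribed $\lambda$ delivers the announced rate: in case~(i) the optimum sits on a variance-driven plateau $d^{p-\gamma}$, while in case~(ii) the bias at level $k=p+1$ dominates and yields $d^{-(p+1)s}$.

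The main obstacle, and the origin of the extra restriction $\gamma > 3s/(2(s+1))$ for $s \le 1/2$, is the verification of the third approximation condition $n^{-1}\mathcal{N}_{1}(\lambda)^{1/2}\mathcal{M}_{1}(\lambda) = o(\mathcal{M}_{2}(\lambda)^{1/2})$, since $\mathcal{M}_{1}$ is a uniform-norm quantity for which the source condition provides weaker control when $s$ is small. Invoking the spherical-harmonic addition formula $\sum_{l=1}^{N(d,k)} Y_{k,l}^{2}(\boldsymbol{x}) = N(d,k)$ and Cauchy--Schwarz level-by-level gives $\mathcal{M}_{1}(\lambda) \lesssim \sum_{k} \frac{\lambda}{\mu_{k}+\lambda}\sqrt{(f^{(k)})^{2} N(d,k)} \lesssim \sum_{k} \frac{\lambda}{\mu_{k}+\lambda}\, d^{k(1-s)/2}$, which in the tightest sub-case ($p=0$ with level $k=1$ active) is of order $d^{(1-s)/2}$. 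Plugging this into the third condition together with $\mathcal{N}_{1}(\lambda) \asymp d^{\alpha}$ and $\mathcal{M}_{2}(\lambda) \asymp d^{-\gamma+p}$ reduces the condition to a lower bound on $\gamma$; a direct computation in the tightest sub-case produces exactly $\gamma > 3s/(2(s+1))$, which is automatic for $s>1/2$ and a genuine restriction otherwise. Once the three approximation conditions are all verified, Theorem~\ref{main theorem} closes the argument, and the $\ln^{2} n$ factor in case~(i) with $p=0$ is inherited from the $\ln d$ correction in the prescribed $\lambda$.
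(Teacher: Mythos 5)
Your overall architecture (compute $\mathcal{N}_1,\mathcal{N}_2,\mathcal{M}_2$ for the inner-product kernel, check the approximation conditions, balance variance against bias) matches the paper's, and your treatment of $\mathcal{N}_1,\mathcal{N}_2$ and of the level-wise two-sided bounds for $\mathcal{M}_2$ is essentially Lemmas \ref{lemma calculation n1 n2}--\ref{lemma calculation m2}. The gap is in how you verify the third approximation condition in \eqref{approximation conditions}. For $0<s<1$ a function $f_\rho^*\in[\mathcal{H}]^s$ need not be bounded, so $\mathcal{M}_1(\lambda)=\operatorname{ess\,sup}_x|f_\rho^*(x)-f_\lambda(x)|$ can genuinely be infinite, and your proposed remedy does not repair this. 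The level-wise Cauchy--Schwarz bound gives $\mathcal{M}_1(\lambda)\le R_\gamma\sum_{k\ge0}\frac{\lambda}{\mu_k+\lambda}\sqrt{\mu_k^{s}N(d,k)}$, but the summands do \emph{not} peak at the boundary level: for $k$ beyond the threshold $\frac{\lambda}{\mu_k+\lambda}\asymp1$ and $\sqrt{\mu_k^{s}N(d,k)}=\sqrt{\mu_kN(d,k)}\,\mu_k^{(s-1)/2}$ grows without bound in $k$ (since $s<1$ and $\mu_k\to0$), so the series is not $O(d^{(1-s)/2})$ and may diverge -- consistent with $\mathcal{M}_1=\infty$. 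Moreover, even if one granted your bound $\mathcal{M}_1\lesssim d^{(1-s)/2}$, plugging it into $n^{-1}\mathcal{N}_1^{1/2}\mathcal{M}_1=o(\mathcal{M}_2^{1/2})$ with your own values of $\mathcal{N}_1,\mathcal{M}_2$ does not produce the threshold $\gamma>3s/(2(s+1))$ (e.g.\ in case (ii) with $p=0$ it gives $\gamma>s/4+1/2$, and in case (i) with $p=0$, $s\le1/2$ it is unsatisfiable), so the claimed "direct computation" does not close.

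What the paper actually does for $s<1$ is to bypass $\mathcal{M}_1$ altogether: it reproves the bias-approximation step as Lemma \ref{lemma bias appr term s le 1} by truncating $f_\rho^*$ at height $t=n^{(1-s)/2+\epsilon_t}$, splitting the sample term over $\Omega_1=\{|f_\rho^*|\le t\}$ and its complement, controlling the probability that any sample lands in $\Omega_2$ via the $L^q$-embedding $[\mathcal{H}]^s\hookrightarrow L^q$ for $q<2/(1-s)$ (Theorem \ref{integrability of Hs constants}) together with Markov's inequality, and using $\|f_\lambda\|_{L^\infty}$ (finite, computed in Lemma \ref{lemma calculation flambda s le 1}) in place of $\mathcal{M}_1$ on $\Omega_1$. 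This substitutes for the third condition the pair $n^{-1}\mathcal{N}_1^{1/2}\|f_\lambda\|_{L^\infty}=o(\mathcal{M}_2^{1/2})$ and $n^{-1}\mathcal{N}_1^{1/2}n^{(1-s)/2+\epsilon}=o(\mathcal{M}_2^{1/2})$; it is the latter, truncation-induced condition (with $n^{(1-s)/2}\asymp d^{\gamma(1-s)/2}$, not $d^{(1-s)/2}$) that yields exactly $\gamma>\frac{2p+3s+2ps}{2(s+1)}$, hence the restriction $\gamma>3s/(2(s+1))$ when $p=0$ and $s\le\tfrac12$. Without this truncation/embedding argument (or some substitute handling unbounded $f_\rho^*$), your proof of the bias matching bound -- and hence of the theorem -- does not go through.
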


\begin{remark}\label{remark s le 1}
    For technical reasons, when $0 < s \le 1/2$, we only prove the convergence rates for those $\gamma > 3s / 2(s+1)$. Note that we have $ 3s / 2(s+1) < 1/2 $ when $0 < s \le 1/2$; and $ 3s / 2(s+1) \to 0$ when $ s \to 0$. Therefore, we have actually proved for almost all $ \gamma > 0$.
\end{remark}

Note that Theorem \ref{theorem inner s ge 1} and Theorem \ref{theorem inner s le 1} show exact convergence rates (both upper and lower bounds) of KRR's generalization error, which is a much stronger result than only proving an upper bound. As we will see in Appendix \ref{section proof of inner le 1}, since $\|f_{\rho}^{*}\|_{L^{\infty}}$ could be infinite when $s < 1$ thus $\mathcal{M}_{1}(\lambda)$ could be infinite, the proof of Theorem \ref{theorem inner s le 1} requires a little more technique. In addition, we will prove in Theorem \ref{theorem lower bound} that the rates in Theorem \ref{theorem inner s le 1} ($ s \le 1$) achieve the minimax lower bound. Together with the statement at the end of Theorem \ref{theorem inner s ge 1}, we actually prove that the rates in Theorem \ref{theorem inner s ge 1} and Theorem \ref{theorem inner s le 1} are the fastest convergence rates that KRR can achieve.\\

Next, we will state the minimax lower bound in the same large-dimensional and source condition setting as Theorem \ref{theorem inner s ge 1} and Theorem \ref{theorem inner s le 1}. 

\begin{theorem}[Minimax lower bound]\label{theorem lower bound}
    Let $c_{1} d^{\gamma} \le n \le c_{2} d^{\gamma} $ for some fixed $ \gamma >0$ and absolute constants $ c_{1}, c_{2}$. Consider $\mathcal{X} = \mathbb{S}^{d}$ and the marginal distribution $\mu$ to be the uniform distribution. Let $k = \{ k_{d}\}_{d=1}^{\infty}$ be a sequence of inner product kernels on the sphere satisfying Assumption \ref{assumption kernel} and \ref{assumption inner product kernel}. Let $\mathcal{P}$ consist of all the distributions $\rho$ on $\mathcal{X} \times \mathcal{Y}$ such that Assumption \ref{assumption noise} holds and Assumption \ref{assumption source condition} holds for some $ s > 0$. Then we have:

        \begin{itemize}[leftmargin = 15pt]
            \item[(i)] When $\gamma \in \left(p+ps, p+ps+s \right]$ for some $p \in \mathbb{N}$, for any $\epsilon > 0$, there exist constants $\mathfrak{C}_1$ and $\mathfrak{C}$ only depending on $s, \epsilon, \gamma, \sigma, \kappa, c_{1}$ and $ c_{2} $ such that for any $d \geq \mathfrak{C}$, we have:
            \begin{equation}\label{minimax lower eq 1}
            \min _{\hat{f}} \max _{\rho \in \mathcal{P}} \mathbb{E}_{(\boldsymbol{X}, \mathbf{y}) \sim \rho^{\otimes n}}
            \left\|\hat{f} - f_{\rho}^{*}\right\|_{L^2}^2
            \geq \mathfrak{C}_1 d^{-\gamma + p - \epsilon};
            \end{equation}

            \item[(ii)] When $\gamma \in \left(p+ps+s, (p+1)+(p+1)s \right]$ for some $p \in \mathbb{N}$, there exist constants $\mathfrak{C}_1$ and $\mathfrak{C}$ only depending on $ s, \gamma, \sigma, \kappa, c_{1}$ and $ c_{2} $ such that for any $d \geq \mathfrak{C}$, we have:
            \begin{equation}\label{minimax lower eq 2}
            \min _{\hat{f}} \max _{\rho \in \mathcal{P}} \mathbb{E}_{(\boldsymbol{X}, \mathbf{y}) \sim \rho^{\otimes n}}
            \left\|\hat{f} - f_{\rho}^{*}\right\|_{L^2}^2
            \geq \mathfrak{C}_1 d^{-(p+1)s};
            \end{equation}

        \end{itemize}

\end{theorem}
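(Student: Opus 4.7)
The plan is to use the standard Fano-plus-Gilbert–Varshamov reduction, tuned so that for each case the packing lives on a single spherical harmonic eigenspace $k_*$ of the inner product kernel, chosen to match the claimed rate. Since we may choose any distribution $\rho$ satisfying Assumptions \ref{assumption noise} and \ref{assumption source condition}, I would work with the Gaussian regression model $y = f(\boldsymbol x) + \xi$, $\xi \sim \mathcal N(0,\sigma^2)$, for which the per-sample KL divergence between two models is $\tfrac{1}{2\sigma^2}\|f_1-f_2\|_{L^2}^2$, hence $D_{\mathrm{KL}} = \tfrac{n}{2\sigma^2}\|f_1-f_2\|_{L^2}^2$ over the full dataset. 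With $N := N(d,k_*)$, take candidate functions $f_\varepsilon = a\sum_{l=1}^{N} \varepsilon_l Y_{k_*,l}$, $\varepsilon \in \{-1,+1\}^{N}$. Gilbert–Varshamov delivers $M \ge 2^{N/8}$ sign vectors with pairwise Hamming distance $\ge N/8$, so $a^2 N/2 \le \|f_{\varepsilon^{(i)}}-f_{\varepsilon^{(j)}}\|_{L^2}^2 \le 4a^2 N$, while $\|f_\varepsilon\|_{[\mathcal H]^s}^2 = a^2 N \mu_{k_*}^{-s}$.

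In case (i) with $\gamma \in (p+ps,\,p+ps+s]$ I choose $k_* = p$, so $N \asymp d^p$ and $\mu_{k_*}^s \asymp d^{-ps}$. The source condition $a^2 N\mu_{k_*}^{-s} \le R_\gamma^2$ permits $a^2 \lesssim d^{-p(s+1)}$, while Fano's inequality requires $D_{\mathrm{KL}} \le \alpha\log M$ with $\alpha<1$, i.e.\ $a^2 \lesssim \sigma^2/n \asymp d^{-\gamma}$. Since $\gamma > p(s+1)$, the Fano constraint is binding, and to make it strict I set $a^2 = c\,\sigma^2 d^{-\varepsilon}/n$ for a small constant $c$ and any fixed $\varepsilon>0$. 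Then $D_{\mathrm{KL}}/\log M = O(d^{-\varepsilon}) \to 0$, so Fano's inequality yields the minimax lower bound $\gtrsim a^2 N \asymp d^{p-\gamma-\varepsilon}$; this is the origin of the $-\varepsilon$ in \eqref{minimax lower eq 1}.

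In case (ii) with $\gamma \in (p+ps+s,\,(p+1)(s+1)]$ I choose $k_* = p+1$, so now the source condition is the binding constraint: saturating it gives $a^2 \asymp R_\gamma^2\mu_{k_*}^s/N \asymp d^{-(p+1)(s+1)}$. Then $D_{\mathrm{KL}} \asymp na^2 N/\sigma^2 \asymp d^{\gamma-(p+1)s}$ and $\log M \asymp d^{p+1}$, so $D_{\mathrm{KL}}/\log M \asymp d^{\gamma-(p+1)(s+1)} \le 1$, with strict decay to $0$ in the interior of the range; the boundary point $\gamma = (p+1)(s+1)$ can be handled by the same small $d^{-\varepsilon}$ cushion. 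Fano then produces the lower bound $\gtrsim a^2 N \asymp d^{-(p+1)s}$, matching \eqref{minimax lower eq 2}.

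The main technical obstacle is enforcing Assumption \ref{assumption source condition}(b), which demands a non-vanishing $[\mathcal H]^s$-weighted projection on \emph{every} eigenspace $k\in\{0,\ldots,q\}$ with $\mu_k\ne 0$, including $k=k_*$. I would remedy this by adding a fixed baseline $g_0 = \sum_{k\le q,\,\mu_k\ne 0} g_{0,k}$ with $\|g_{0,k}\|_{L^2}^2 = c'\mu_k^s$ for a suitable constant $c'$, choosing the spherical harmonics supporting $g_{0,k_*}$ to be disjoint from the $N$ harmonics carrying the Varshamov–Gilbert signs (possible since $N(d,k_*)$ is polynomially large, so splitting the basis costs only a constant factor in $N$ and hence in $\log M$). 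With this orthogonal splitting, $g_0 + f_\varepsilon$ satisfies (a) and (b) simultaneously for every relevant $k$, while the pairwise $L^2$ distances—and hence the KL divergences—are unchanged. The remaining work is bookkeeping of constants and of the Fano–KL ratio at the corner cases, which is standard once the right $k_*$ and packing radius have been identified.
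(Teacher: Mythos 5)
Your proposal is correct in outline but follows a genuinely different route from the paper. The paper proves Theorem \ref{theorem lower bound} by the global-entropy (Yang--Barron) method: it invokes Lemma \ref{thm_lower_ultimate_tech} and estimates the $L^2$ packing/covering entropy of the $[\mathcal H]^{s}$-ball through Lemma \ref{lemma_entropy_of_RKHS}, choosing the two scales $\tilde\epsilon_1,\tilde\epsilon_2$ so that the entropy is carried by the degree-$p$ eigenspace in case (i) and the degree-$(p+1)$ eigenspace in case (ii), with $N(d,p)\asymp d^{p}$ driving the bound. Your local Fano plus Varshamov--Gilbert packing on the single eigenspace $k_{*}=p$ or $k_{*}=p+1$ targets exactly the same eigenspaces and rates; the KL identity for the Gaussian model with random design is correct, and your fixed-baseline trick to enforce Assumption \ref{assumption source condition}(b) is legitimate (the paper's own proof is in fact less careful here, asserting $\mathcal D\subset\mathcal P$ without addressing part (b)). A payoff of your route: in case (i) you do not actually need the $d^{-\varepsilon}$ cushion; taking $a^{2}=c_{F}\sigma^{2}/n$ with a small absolute constant $c_{F}$ already bounds the Fano ratio by a constant below one, because both the KL bound and $\log M$ scale like $N(d,p)$, so you would obtain $d^{\,p-\gamma}$ with no $\epsilon$ loss --- the $\epsilon$ in \eqref{minimax lower eq 1} is an artifact of the entropy-ratio comparison inside Lemma \ref{thm_lower_ultimate_tech}.

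Two points need repair before the argument is complete. First, in case (i) with $p=0$ you have $N(d,0)=1$, so Varshamov--Gilbert (which needs $N\ge 8$) and Fano (which needs $\log M$ to dominate $\log 2$) both degenerate as written; for $\gamma\in(0,s]$ you should switch to a two-point Le Cam argument, e.g.\ $f_{\pm}=g_{0}\pm a\,Y_{1,1}$ with $a^{2}\asymp\sigma^{2}/n$ placed on a harmonic orthogonal to the baseline, which yields the (even stronger) rate $d^{-\gamma}$. Second, at the right endpoint $\gamma=(p+1)(1+s)$ of case (ii), your proposed $d^{-\varepsilon}$ cushion would only give $d^{-(p+1)s-\varepsilon}$, which is strictly weaker than \eqref{minimax lower eq 2}, where no $\epsilon$ appears; what is needed (and suffices) is a constant cushion, $a^{2}\asymp\min\{\mu_{p+1}^{s}/N,\ c_{F}\sigma^{2}/n\}$ with $c_{F}$ a small absolute constant, since then the Fano ratio stays bounded below one while $a^{2}N\asymp d^{-(p+1)s}$ is preserved. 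With these two adjustments, plus the constant bookkeeping you acknowledge (shrinking $a$ and the baseline so the total $[\mathcal H]^{s}$ norm stays below $R_{\gamma}$ while each of the finitely many levels $k\le q$ keeps mass at least $c_{0}$), the proof goes through.
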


Theorem \ref{theorem lower bound} states that there is no estimator (or learning method) that can achieve faster convergence rates than \eqref{minimax lower eq 1} and \eqref{minimax lower eq 2}.\\


\begin{figure}[ht]
\vskip 0.05in
\centering
\subfigure[]{\includegraphics[width=0.32\columnwidth]{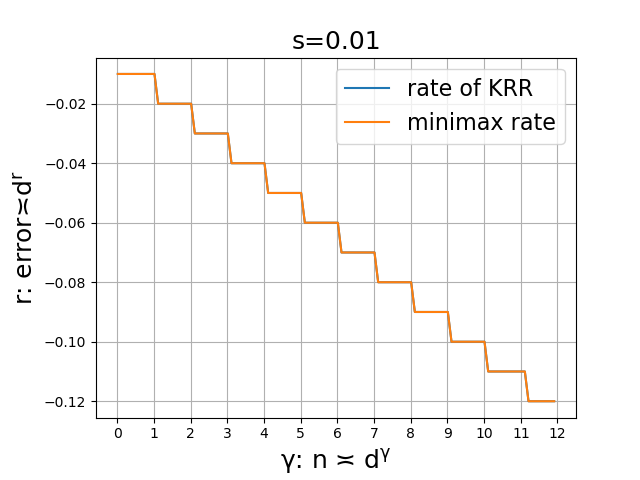}}
\subfigure[]{\includegraphics[width=0.32\columnwidth]{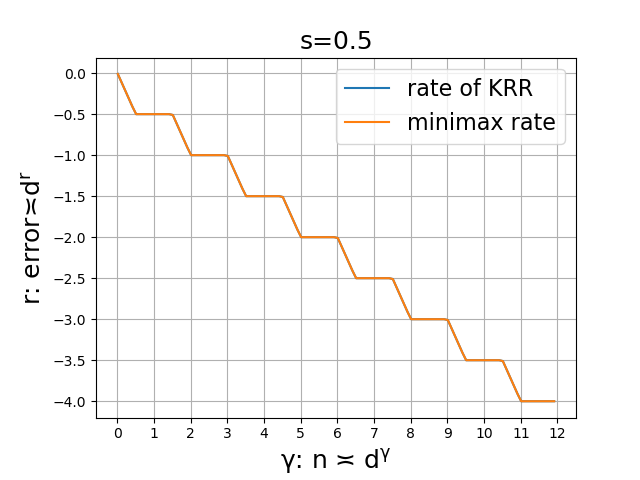}}
\subfigure[]{\includegraphics[width=0.32\columnwidth]{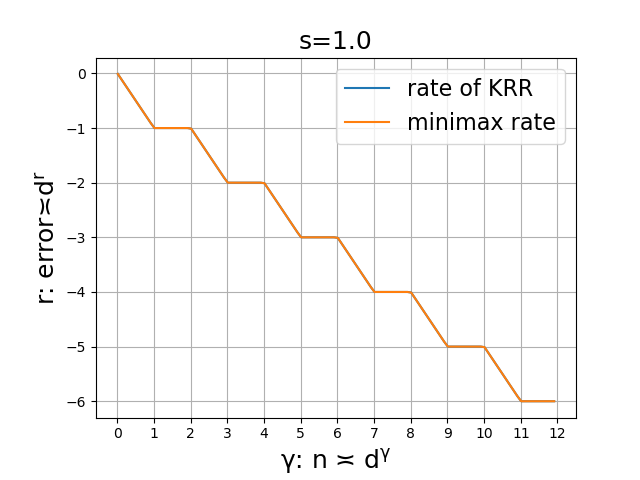}}

\subfigure[]{\includegraphics[width=0.32\columnwidth]{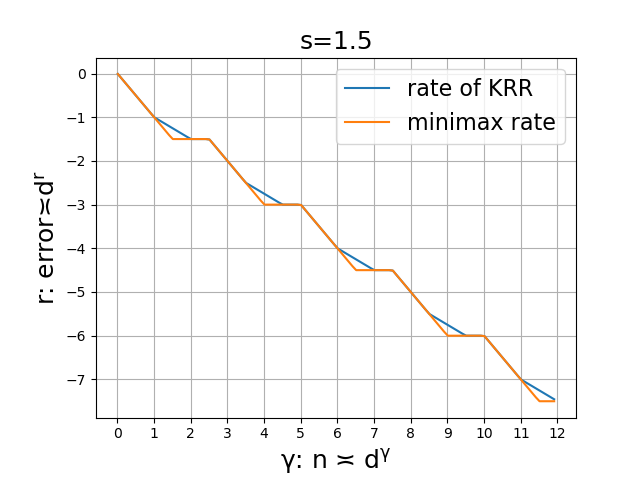}}
\subfigure[]{\includegraphics[width=0.32\columnwidth]{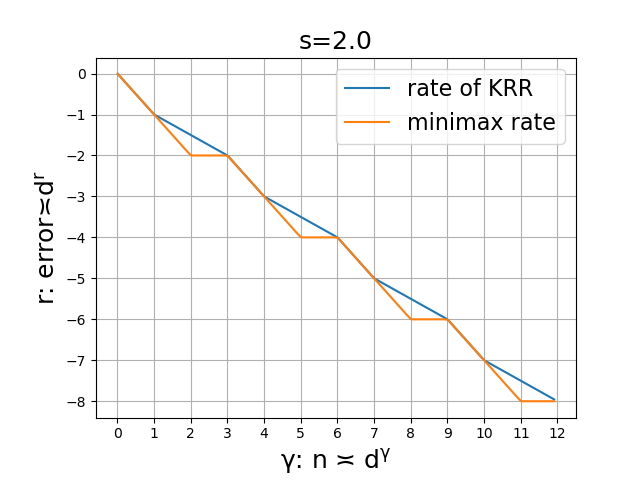}}
\subfigure[]{\includegraphics[width=0.32\columnwidth]{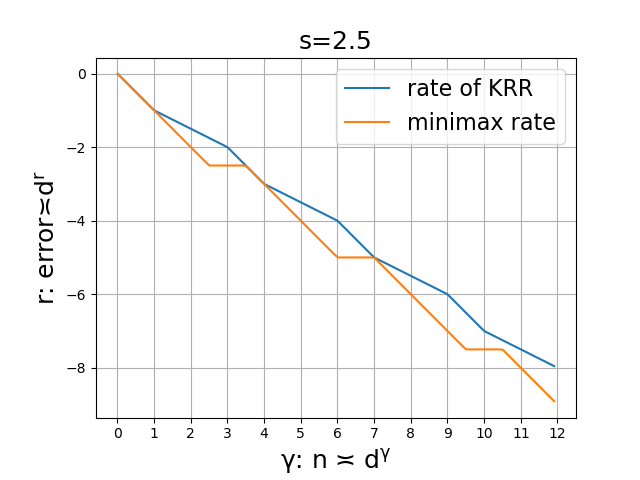}}

\caption{Convergence rates of KRR in Theorem \ref{theorem inner s ge 1}, Theorem \ref{theorem inner s le 1} and corresponding minimax lower rates in Theorem \ref{theorem lower bound} (ignoring a $\epsilon$-difference) \textit{with respect to dimension $d$}. We present 6 graphs corresponding to 6 kinds of source conditions: $ s = 0.01, 0.5, 1.0, 1.5, 2.0, 2.5$. The x-axis represents asymptotic scaling, $\gamma: n \asymp d^{\gamma}$; the y-axis represents the convergence rate of generalization error, $ r: \text{error} \asymp d^{r}$.              }
\label{figure 6 rates w.r.t d}
\vskip 0.05in
\end{figure}

Summarizing the results in Theorem \ref{theorem inner s ge 1}, Theorem \ref{theorem inner s le 1} and Theorem \ref{theorem lower bound}, Figure \ref{figure 6 rates w.r.t d} shows the convergence rates of KRR and corresponding minimax lower rates \textit{with respect to dimension $d$} for any $\gamma >0$. We can see that the rates decrease when the scaling $\gamma$ increases, indicating that the performance becomes better when the sample size $n$ grows. Moreover, we can observe several intriguing phenomena.\\

\textit{Curve's evolution with source condition.} Since we consider source condition $s>0$, we can compare the rate curves in Figure \ref{figure 6 rates w.r.t d} for different $s$ and see how they evolve with $s$. 

Let us first see the minimax lower rates. For any $s>0$, there are 2 periods with respect to the value of $\gamma$: The length of the first period, i.e., $ (p+ps, p+ps+s], p \in \mathbb{N}$, will decrease to 0 as $s$ getting close to 0; The length of the second period, i.e., $ (p+ps+s, (p+1)+(p+1)s]$, equals $1$ for all $s>0$.

Next, we see the convergence rates of KRR, which is more intriguing.
\begin{itemize}
    \item When $ 0 < s \le 1$, there are 2 periods with respect to the value of $\gamma$ and the curve is the same as the minimax lower rates. (In fact, Theorem \ref{theorem inner s le 1} only proves the results for $\gamma > 3s /2(s+1)$ when $s \le 1/2$, we write $\gamma>0$ with a little bit of notation abusement.)

    \item  When $ 1 < s < 2$, there are 3 periods with respect to the value of $\gamma$: The length of the first period, i.e., $ (p+ps, p+ps+1] $, equals 1 for all $ 1 < s < 2$; The length of the second period, i.e., $ (p+ps+1, p+ps+2s-1] $ is $ 2s-2$, thus this period will degenerate as $s$ getting close to 1; The length of the third period, i.e., $ (p+ps+2s-1, (p+1)+(p+1)s] $ is $ 2-s$, thus this period will degenerate as $s$ getting close to 2. 

    \item When $ s \ge 2$, the curve does not change with $s$ and there are 2 periods with respect to the value of $\gamma$: The length of the first period, i.e., $ (3p, 3p+1] $, equals 1 for all $ s \ge 2$; The length of the second period, i.e., $ (3p+1, 3p+3] $, equals 2 for all $ s \ge 2$.\\
\end{itemize}

\textit{Minimax optimality and new saturation effect of KRR.} As can be seen in Figure \ref{figure 6 rates w.r.t d} (a)(b)(c), the convergence rates of KRR match the minimax lower bound for all $\gamma > 0$, thus we prove the minimax optimality of KRR when $ 0 < s \le 1$. In contrast, when $s > 1$, Figure \ref{figure 6 rates w.r.t d} (d)(e)(f) illustrate that KRR can not achieve the minimax lower bound in Theorem \ref{theorem lower bound} for certain ranges of $\gamma$, which we refer to as the new saturation effect of KRR. We will discuss the implications of this new saturation effect in the following.

In the fixed-dimensional setting, kernel ridge regression has been studied as a special kind of \textit{spectral algorithm} \citep{gerfo2008_SpectralAlgorithms}. Each spectral algorithm is determined by a \textit{filter function} and the difference between spectral algorithms is characterized by an index called the ``qualification" ($\tau$) of the filter function (see, e.g., \citealt[Definition 1]{zhang2023optimality}). Since KRR has qualification $\tau = 1$ and gradient flow has qualification $\tau = \infty$ \citep[Example 1, 2]{zhang2023optimality}, the main difference between KRR and gradient flow is the \textit{saturation effect}~\citep{li2023_SaturationEffect}. It says when $ s > 2 \tau = 2$, no matter how carefully one tunes the KRR, the convergence rate can not be faster than $ n^{-\frac{2 \beta}{2 \beta + 1}}$, thus can not achieve the minimax lower bound $ n^{-\frac{s \beta}{s \beta + 1}} $.

In the large-dimensional setting and for inner product kernel on the sphere, our results show that the saturation effect of KRR happens in a new regime $ 1 < s \le 2 = 2 \tau$. In addition, we conjecture that there are other spectral algorithms (e.g., gradient flow) that can achieve the minimax lower bound in Theorem \ref{theorem lower bound} for all $s > 0$. This new saturation effect strongly suggests that qualification ($\tau$) itself is insufficient to characterize the filter function (or spectral algorithm) in the large-dimensional setting.\\

\textit{Periodic plateau behavior.} If $ 0 < s < 2$, Figure \ref{figure 6 rates w.r.t d} (a)(b)(c) show that when $\gamma$ varies within certain ranges, the value of vertical axis, r, does not change. We refer to such ranges of $\gamma$ as the plateau period. When $s$ exceeds 2, the plateau period of KRR's convergence rates degenerates and the plateau period of minimax lower rates still exists. Also note that the length of each plateau period varies with the values $s>0$.

For these plateau periods, if we fix a large dimension $d$ and increase $\gamma$ (or equivalently, increase the sample size $n$), the convergence rates of KRR or minimax lower rates stay invariant in certain ranges. Therefore, in order to improve the rate, one has to increase the sample size above a certain threshold. \\


Figure \ref{figure 3 rates w.r.t n} provides an alternative representation of our results, which shows the convergence rates of KRR and corresponding minimax lower rates \textit{with respect to sample size $n$}. We can observe the ``multiple descent behavior" ( for both the convergence rates of KRR and the minimax lower rates) from Figure \ref{figure 3 rates w.r.t n}.\\

\textit{Multiple descent behavior.} Let us first see the minimax lower rates. For any $s>0$, the curve achieves its peaks at $\gamma = p+ps, p \in \mathbb{N}^{+} $, and achieve its isolated valleys at $ \gamma = p+ps+s, p \in \mathbb{N}^{+}$.

For the convergence rates of KRR:
\begin{itemize}
    \item When $ 0 < s \le 1$, the curve is the same as the curve of minimax lower rates.

    \item When $ 1 < s < 2$, the curve achieves its peaks at $\gamma = p+p\tilde{s}, p \in \mathbb{N}^{+} $; achieve its isolated valleys at $ \gamma = p+p\tilde{s}+1, p \in \mathbb{N}^{+} $ and achieve its hillside at $ \gamma = p+p\tilde{s}+2\tilde{s}-1, p \in \mathbb{N}^{+} $.

    \item  When $s \ge 2$, the curve does not change with $s$, which achieves its peaks at $\gamma = 3p, p \in \mathbb{N}^{+} $, and achieve its isolated valleys at $ \gamma = 3p+1, p \in \mathbb{N}^{+}$.
\end{itemize}

\begin{figure}[ht]
\vskip 0.05in
\centering
\subfigure[]{\includegraphics[width=0.32\columnwidth]{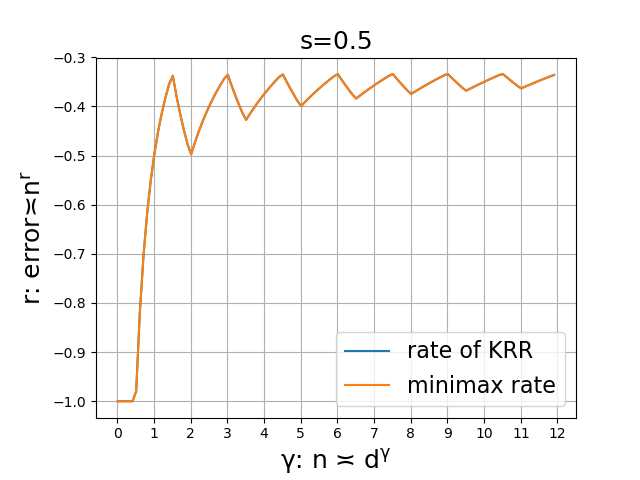}}
\subfigure[]{\includegraphics[width=0.32\columnwidth]{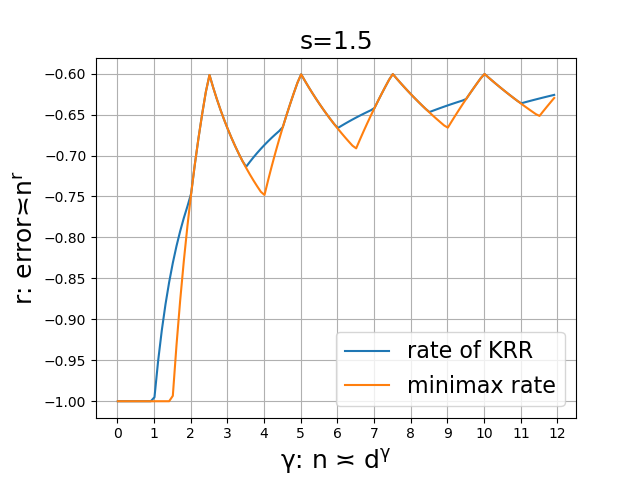}}
\subfigure[]{\includegraphics[width=0.32\columnwidth]{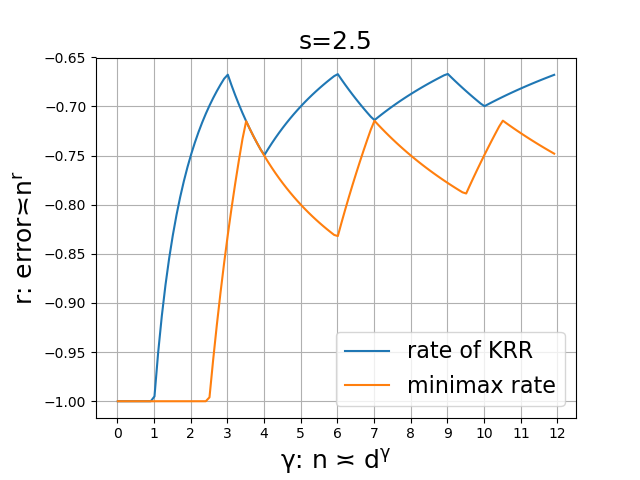}}

\caption{Convergence rates of KRR in Theorem \ref{theorem inner s ge 1}, Theorem \ref{theorem inner s le 1} and corresponding minimax lower rates in Theorem \ref{theorem lower bound} (ignoring a $\epsilon$-difference) \textit{with respect to sample size $n$}. We present 3 graphs corresponding to 3 kinds of source conditions: $ s =0.5, 1.5, 2.5$. The x-axis represents asymptotic scaling, $\gamma: n \asymp d^{\gamma}$; the y-axis represents the convergence rate of generalization error, $ r: \text{error} \asymp n^{r}$.          } 
\label{figure 3 rates w.r.t n}
\vskip 0.05in
\end{figure}

\subsection{Applications to neural tangent kernel}

In this subsection, we look at a specific example, i.e., the neural tangent kernel (NTK) of a two-layer fully connected ReLU neural network $k_{d} = k_{d}^{\mathrm{NT}}$. Still, we suppose that $\mathcal{X} = \mathbb{S}^{d}$ and $\mu$ is uniform distribution on $ \mathbb{S}^{d} $. It has been shown in \cite{Bietti2019OnTI,lu2023optimal} that NTK is an example of inner product kernel satisfying
\begin{displaymath}
        k_{d}^{\mathrm{NT}}(\boldsymbol{x}, \boldsymbol{x}^{\prime}) = \Phi\left( \langle \boldsymbol{x}, \boldsymbol{x}^{\prime} \rangle\right), ~\forall \boldsymbol{x}, \boldsymbol{x}^{\prime} \in \mathbb{S}^{d},
\end{displaymath}
where $ \Phi(t) = \sum_{j=0}^\infty a_j t^j \in \mathcal{C}^{\infty} \left([-1,1]\right)$ is a fixed function independent of $d$ and
\begin{displaymath}
    a_{0} > 0; ~~a_{1} >0; ~~ a_{j} > 0, ~\forall j = 2,4,6\cdots ;~~ a_{j} = 0, ~\forall j = 3,5,7,\cdots
\end{displaymath}
Lemma 5 in \cite{lu2023optimal} also showed that $ \sup\limits_{\boldsymbol{x} \in \mathcal{X}} k_{d}^{\mathrm{NT}}(\boldsymbol{x},\boldsymbol{x}) \le 1$.

For the neural tangent kernel $k_{d}^{\mathrm{NT}}$, the following theorems provide the exact convergence rates of the generalization error of KRR and the corresponding minimax lower bound. Since the proofs are similar to Theorem \ref{theorem inner s ge 1}, Theorem \ref{theorem inner s le 1} and Theorem \ref{theorem lower bound}, we omit the proofs of the following theorems.

\begin{theorem}[NTK: exact convergence rates when $\mathbf{s \ge 1}$]\label{theorem ntk s ge 1}
    Let $c_{1} d^{\gamma} \le n \le c_{2} d^{\gamma} $ for some fixed $ \gamma >0$ and absolute constants $ c_{1}, c_{2}$. Consider $\mathcal{X} = \mathbb{S}^{d}$ and the marginal distribution $\mu$ to be the uniform distribution. Let $k = \{ k_{d}^{\mathrm{NT}}\}_{d=1}^{\infty}$ be a sequence of neural tangent kernels of a two-layer fully connected ReLU neural network on the sphere. Further suppose that Assumption \ref{assumption noise} holds and Assumption \ref{assumption source condition} holds for some $ s \ge 1$. Let $\hat{f}_{\lambda}$ be the KRR estimator defined by \eqref{krr estimator}. For any $ p \in \mathcal{I}_{p}$, we define $p^{\prime} = p+2 $, if $ p \ge 2$; and define $p^{\prime} = p+1 $, if $ p \le 1$. Define $ \tilde{s} = \min\{s,2\} $, then we have:
    \begin{itemize}

    \item[(i)] When $\gamma \in \left(p+p\tilde{s}, p^{\prime}+p\tilde{s} \right]$ for some $p \in \mathcal{I}_{p}$, by choosing $ \lambda = d^{-\frac{\gamma+p-p\tilde{s}}{2}} \cdot \mathbf{1}_{p>0} + d^{-\frac{\gamma}{2}} \ln{d} \cdot \mathbf{1}_{p=0} $, we have
        \begin{align}
             \mathbb{E}\left[\left\|\hat{f}_\lambda-f_\rho^*\right\|_{L^2}^2 \;\Big|\; \boldsymbol{X} \right]=\begin{cases}
            \Theta_{\mathbb{P}}\left( d^{-\gamma} \ln^{2}{d} \right) = \Theta_{\mathbb{P}}\left( n^{-1} \ln^{2}{n} \right),& p=0, \\[3pt]
            \Theta_{\mathbb{P}}\left( d^{-\gamma + p} \right) = \Theta_{\mathbb{P}}\left( n^{-1 + \frac{p}{\gamma}} \right), & p>0;
            \end{cases}
        \end{align}

    \item[(ii)] When $ \gamma \in (p^{\prime}+p\tilde{s}, 2p^{\prime}\tilde{s} - p^{\prime} + 2p -p\tilde{s}]$ for some $ p \in \mathcal{I}_{p}$, by choosing 
    $ \lambda = d^{-\frac{\gamma+p^{\prime}+2p-p\tilde{s}}{4}} $, we have
    \begin{equation}
      \mathbb{E}\left[\left\|\hat{f}_\lambda-f_\rho^*\right\|_{L^2}^2 \;\Big|\; \boldsymbol{X} \right] = \Theta_{\mathbb{P}}\left(d^{-\frac{\gamma}{2}+\frac{p^{\prime}}{2}-\frac{p\tilde{s}}{2}-2}\right) = \Theta_{\mathbb{P}}\left( n^{-\frac{1}{2}+\frac{p^{\prime}}{2\gamma}-\frac{p\tilde{s}}{2\gamma}-\frac{2}{\gamma}} \right);
    \end{equation}

    \item[(iii)] When $\gamma \in (2p^{\prime}\tilde{s} - p^{\prime} + 2p -p\tilde{s}, p^{\prime}+p^{\prime}\tilde{s}]$ for some $ p \in \mathcal{I}_{p}$, by choosing 
    $ \lambda = d^{-\frac{\gamma+p^{\prime}(1-\tilde{s})}{2}} $, we have
    \begin{equation}
      \mathbb{E}\left[\left\|\hat{f}_\lambda-f_\rho^*\right\|_{L^2}^2 \;\Big|\; \boldsymbol{X} \right] = \Theta_{\mathbb{P}}\left(d^{-p^{\prime}\tilde{s}}\right) = \Theta_{\mathbb{P}}\left(n^{-\frac{p^{\prime}\tilde{s}}{\gamma}} \right).
    \end{equation}
    
\end{itemize}
    The notation $ \Theta_{\mathbb{P}} $ involves constants only depending on $ s, \sigma, \gamma, c_{0}, c_{1}$ and $ c_{2} $.
\end{theorem}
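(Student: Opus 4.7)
The plan is to mirror the proof of Theorem \ref{theorem inner s ge 1}: compute the key quantities $\mathcal{N}_{1}(\lambda), \mathcal{N}_{2}(\lambda), \mathcal{M}_{1}(\lambda), \mathcal{M}_{2}(\lambda)$ from \eqref{n1 n2 m1 m2}, verify the approximation conditions \eqref{approximation conditions} for the prescribed $\lambda$, and then invoke Theorem \ref{main theorem} to read off matching upper and lower bounds of order $\sigma^{2}\mathcal{N}_{2}(\lambda)/n+\mathcal{M}_{2}(\lambda)$. The only essentially new ingredient is the NTK eigenstructure. Because $a_{j}=0$ for every odd $j\ge 3$, the Mercer formula forces $\mu_{k}=0$ for every odd $k\ge 3$, so the set of ``live'' degrees is precisely $\mathcal{I}_{p}=\{0,1\}\cup\{2,4,6,\dots\}$, and the next-element map $p\mapsto p'$ appearing in the theorem is exactly the successor in $\mathcal{I}_{p}$; this is what changes the boundaries of the three $\gamma$-regimes relative to Theorem \ref{theorem inner s ge 1}.

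First I would establish, analogously to Lemma \ref{lemma inner eigen}, that for $k\in\mathcal{I}_{p}$ one has $\mu_{k}\asymp d^{-k}$ with constants depending only on $\Phi$, together with multiplicity $N(d,k)\asymp d^{k}$, while $\mu_{k}=0$ for the missing odd $k\ge 3$. Assumption \ref{assumption eigenfunction} transfers verbatim because the eigenfunctions are still spherical harmonics under the uniform distribution, so Lemma \ref{lemma inner assumption holds} applies unchanged. With these eigenvalues, for $\lambda=d^{-l}$ I would compute $\mathcal{N}_{q}(\lambda)$ by summing block-by-block over $\mathcal{I}_{p}$: each live degree $k\le l$ contributes a saturated block of size $N(d,k)\asymp d^{k}$, while degrees $k>l$ contribute $\asymp d^{k-q(k-l)}$. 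For $\mathcal{M}_{2}(\lambda)$, Assumption \ref{assumption source condition}(a) gives the standard upper bound $\lesssim \lambda^{\tilde{s}}$ plus sharper degree-wise contributions, and Assumption \ref{assumption source condition}(b) provides the matching lower bound on $\mathcal{I}_{p}$; when $s\ge 1$ the bound $\mathcal{M}_{1}(\lambda)\lesssim \|f_{\rho}^{*}\|_{L^{\infty}}<\infty$ follows from $[\mathcal{H}]^{s}\subseteq\mathcal{H}\hookrightarrow L^{\infty}$ under Assumption \ref{assumption kernel}, which is the reason this case is cleaner than $0<s<1$.

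For each of the three regimes, the prescribed $\lambda$ is precisely the balance point between variance $\sigma^{2}\mathcal{N}_{2}(\lambda)/n$ and bias $\mathcal{M}_{2}(\lambda)$: in case (i), $l$ is small enough that $\mathcal{N}_{2}$ is frozen at the degree-$p$ plateau $\asymp d^{p}$, so variance dominates and yields $d^{-\gamma+p}$; in case (ii), the exponent $(\gamma+p'+2p-p\tilde{s})/4$ is the unique value that equates the two terms over the active range of degrees; in case (iii), the bias is pinned at scale $d^{-p'\tilde{s}}$ by the first undetermined block. After checking that \eqref{approximation conditions} reduces to a polynomial inequality in $(l,\gamma)$ that holds strictly in the interior of each regime, Theorem \ref{main theorem} delivers the two-sided rates. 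The principal obstacle, and the reason $p'-p$ jumps from $1$ to $2$ once $p\ge 2$, is tracking which block dominates when $\lambda$ sits inside a spectral gap such as $l\in(2,4)$: the variance is then controlled by the complete degree-$2$ block and the bias by the degree-$4$ block, widening the first plateau relative to the inner-product case. The required bookkeeping — together with confirming that Assumption \ref{assumption source condition}(b), which only constrains coordinates on $\mathcal{I}_{p}$, still produces a matching lower bound on $\mathcal{M}_{2}(\lambda)$ — is the only place where care beyond the proof of Theorem \ref{theorem inner s ge 1} is genuinely required.
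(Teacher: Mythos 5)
Your proposal is correct and is essentially the intended argument: the paper omits the proof of Theorem \ref{theorem ntk s ge 1} precisely because it is the proof of Theorem \ref{theorem inner s ge 1} rerun with the NTK spectrum, i.e.\ $\mu_{k}\asymp d^{-k}$ with multiplicity $N(d,k)\asymp d^{k}$ for $k\in\mathcal{I}_{p}$ and $\mu_{k}=0$ for odd $k\ge 3$, so that the ``next live degree'' $p'$ replaces $p+1$ in the computations of $\mathcal{N}_{1},\mathcal{N}_{2},\mathcal{M}_{1},\mathcal{M}_{2}$, in the balance of $\sigma^{2}\mathcal{N}_{2}(\lambda)/n$ against $\mathcal{M}_{2}(\lambda)$, and in the verification of the conditions \eqref{approximation conditions} before invoking Theorem \ref{main theorem}. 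Your identification of the spectral-gap bookkeeping (variance governed by the saturated degree-$p$ block, bias by the degree-$p'$ block) and of Assumption \ref{assumption source condition}(b) restricted to the nonzero eigenvalues as the only points needing extra care matches the paper's route, so no genuinely different method is involved.
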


\begin{theorem}[NTK: exact convergence rates when $\mathbf{0 < s < 1}$]\label{theorem ntk s le 1}
    Let $c_{1} d^{\gamma} \le n \le c_{2} d^{\gamma} $ for some fixed $ \gamma >0$ and absolute constants $ c_{1}, c_{2}$. Consider $\mathcal{X} = \mathbb{S}^{d}$ and the marginal distribution $\mu$ to be the uniform distribution. Let $k = \{ k_{d}^{\mathrm{NT}}\}_{d=1}^{\infty}$ be a sequence of neural tangent kernels of a two-layer fully connected ReLU neural network on the sphere. Further suppose that Assumption \ref{assumption noise} holds and Assumption \ref{assumption source condition} holds for some $ 0 < s < 1$. Let $\hat{f}_{\lambda}$ be the KRR estimator defined by \eqref{krr estimator}. Denote $ \mathcal{I}_{p} = \{0, 1\} \cup \{2,4,6,\cdots\}$. For any $ p \in \mathcal{I}_{p}$, we define $p^{\prime} = p+2 $, if $ p \ge 2$; and define $p^{\prime} = p+1 $, if $ p \le 1$. Then we have:
    \begin{itemize}[leftmargin = 18pt]
        \item If $ \frac{1}{2} < s < 1$:
        \begin{itemize}[leftmargin = 15pt]
    \item[(i)]  When $ \gamma \in (p+ps, p+p^{\prime}s]$ for some $ p \in \mathcal{I}_{p}$, by choosing $ \lambda = d^{-\frac{\gamma+p-ps}{2}} \cdot \mathbf{1}_{p>0} + d^{-\frac{\gamma}{2}} \ln{d} \cdot \mathbf{1}_{p=0} $, we have
    \begin{align}
             \mathbb{E}\left[\left\|\hat{f}_\lambda-f_\rho^*\right\|_{L^2}^2 \;\Big|\; \boldsymbol{X} \right]=\begin{cases}
            \Theta_{\mathbb{P}}\left( d^{-\gamma} \ln^{2}{d} \right) = \Theta_{\mathbb{P}}\left( n^{-1} \ln^{2}{n} \right),& p=0, \\[3pt]
            \Theta_{\mathbb{P}}\left( d^{-\gamma + p} \right) = \Theta_{\mathbb{P}}\left( n^{-1 + \frac{p}{\gamma}} \right), & p>0;
            \end{cases}
        \end{align}

    \item[(ii)]  When $ \gamma \in (p+p^{\prime}s, p^{\prime}+p^{\prime}s]$ for some $ p \in \mathcal{I}_{p}$, by choosing $ \lambda = d ^{-p-\frac{(p^{\prime}-p)s}{2}}$, we have
    \begin{equation}
 \mathbb{E}\left[\left\|\hat{f}_\lambda-f_\rho^*\right\|_{L^2}^2 \;\Big|\; \boldsymbol{X} \right] = \Theta_{\mathbb{P}}\left( d^{-p^{\prime}s} \right) = \Theta_{\mathbb{P}}\left( n^{-\frac{p^{\prime}s}{\gamma}} \right);
    \end{equation}
    \end{itemize}
    
    The notation $ \Theta_{\mathbb{P}} $ involves constants only depending on $s, \sigma, \gamma, c_{0}, c_{1}$ and $ c_{2} $.
    \item If $ 0< s \le \frac{1}{2}$: we have the same convergence rates as the case $ s \in (\frac{1}{2},1)$ for those 
    \begin{displaymath}
         \gamma > \frac{3s}{2(s+1)}.
    \end{displaymath}
    \end{itemize}
\end{theorem}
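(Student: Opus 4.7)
The plan is to apply the general matching-bound framework of Theorem \ref{main theorem}, mirroring the strategy of the proof of Theorem \ref{theorem inner s le 1}. The NTK on $\mathbb{S}^d$ is an inner product kernel covered by Assumption \ref{assumption inner product kernel} except that $a_j = 0$ for odd $j \ge 3$. All that changes in the analysis is the large-$d$ asymptotic of its eigenvalues: using the spherical-harmonic/Gegenbauer representation (as in \cite{Bietti2019OnTI,lu2023optimal}), one has $\mu_k \asymp d^{-k}$ for every $k \in \mathcal{I}_p = \{0,1\} \cup \{2,4,6,\ldots\}$, whereas for odd $k \ge 3$ the leading-order contribution vanishes and $\mu_k$ is of strictly smaller order. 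Consequently, when one sums over the level sets $\mathcal{I}_{d,k}$ with multiplicity $N(d,k) \asymp d^k$, only the indices $k \in \mathcal{I}_p$ contribute at leading order to the four quantities in \eqref{n1 n2 m1 m2}. This is precisely why the partition of $\gamma$ is indexed by $\mathcal{I}_p$ and why the successor rule $p \mapsto p'$ skips over the inactive odd levels, giving $p' = p+2$ for $p \ge 2$ and $p' = p+1$ for $p \le 1$.

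First I would verify that the NTK satisfies Assumption \ref{assumption eigenfunction}, via the spherical-harmonic addition formula under the uniform law on $\mathbb{S}^d$ exactly as in Lemma \ref{lemma inner assumption holds}. Next I would evaluate $\mathcal{N}_1(\lambda), \mathcal{N}_2(\lambda), \mathcal{M}_1(\lambda)$ and $\mathcal{M}_2(\lambda)$ as piecewise functions of $\lambda = d^{-l}$ by splitting the spectrum into the blocks indexed by $\mathcal{I}_p$. The transitions in these piecewise expressions occur whenever $\lambda$ crosses some $\mu_k$ with $k \in \mathcal{I}_p$, which produces exactly the two-regime partition of $\gamma$ in the statement. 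Finally I would tune $\lambda$ to balance bias against variance in each regime: in regime (i) the dominant bias is from the $p$th block while the variance is controlled by the effective dimension up to level $p$, and the balance yields $\lambda = d^{-(\gamma+p-ps)/2}$ (with the $\ln d$ correction when $p = 0$) and the rate $d^{-\gamma+p}$; in regime (ii) the bias saturates at the $p'$th block and $\lambda = d^{-p-(p'-p)s/2}$ yields the rate $d^{-p's}$. The approximation conditions in \eqref{approximation conditions} are then checked as in Theorem \ref{theorem inner s le 1}, exploiting the fact that they propagate monotonically as $\lambda$ decreases.

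The main obstacle, inherited from the sub-regular regime $s < 1$, is the control of $\mathcal{M}_1(\lambda)$: since $f_\rho^*$ need not lie in $L^\infty$ when $s < 1$, one cannot simply bound $\mathcal{M}_1$ by $\lambda \|f_\rho^*\|_{L^\infty}$. As in Theorem \ref{theorem inner s le 1}, one must instead bound $\mathcal{M}_1$ block-by-block, using an $L^\infty$ estimate on each eigenspace obtained from the spherical-harmonic norm inequality, and then verify the third approximation condition in \eqref{approximation conditions} by summing the resulting contributions across $\mathcal{I}_p$. This is also the source of the restriction $\gamma > 3s/(2(s+1))$ when $s \le 1/2$: for very small $s$ the $\mathcal{M}_1$ bound becomes lossy and truncates an initial range of scalings, exactly as in Remark \ref{remark s le 1}. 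Apart from this delicate point, the remainder of the argument is a direct bookkeeping adaptation of the proof of Theorem \ref{theorem inner s le 1}, with the index set $\mathcal{I}_p$ and the successor rule $p \mapsto p'$ replacing $\mathbb{N}$ and $p \mapsto p+1$.
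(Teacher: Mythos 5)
Your overall skeleton matches the paper's intended (and deliberately omitted) route: the argument is the proof of Theorem \ref{theorem inner s le 1} rerun with the NTK spectrum, where $a_j=0$ for odd $j\ge 3$ makes $\mu_k$ vanish (not merely be of smaller order, by the parity structure of the Gegenbauer expansion) for odd $k\ge 3$, so only the levels indexed by $\mathcal{I}_p$ carry spectral mass, the analogues of Lemmas \ref{lemma calculation n1 n2}, \ref{lemma useful m}, \ref{lemma calculation m2} and \ref{lemma calculation flambda s le 1} hold with the successor rule $p\mapsto p'$ in place of $p\mapsto p+1$, and the balancing of $\mathcal{N}_2(\lambda)/n$ against $\mathcal{M}_2(\lambda)$ plus the verification of the approximation conditions (monotone in $l$) yields the stated two regimes. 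Up to that bookkeeping, your plan is correct.

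The genuine gap is in your treatment of the $s<1$ obstruction. You propose to rescue the third condition of \eqref{approximation conditions} by bounding $\mathcal{M}_1(\lambda)$ block-by-block with sup-norm estimates on each spherical-harmonic eigenspace. This step fails: Assumption \ref{assumption source condition} with $s<1$ allows $f_\rho^*\notin L^\infty$, in which case $\mathcal{M}_1(\lambda)$ is itself infinite (the weights $\lambda/(\lambda_i+\lambda)$ tend to $1$ along the tail), and even the blockwise Cauchy--Schwarz bound $\sum_k \frac{\lambda}{\mu_k+\lambda}\bigl(\sum_{i\in\mathcal{I}_{d,k}}f_i^2\bigr)^{1/2}N(d,k)^{1/2}\lesssim \sum_k \mu_k^{s/2}N(d,k)^{1/2}$ diverges, since $\mu_k^{s/2}N(d,k)^{1/2}=(\mu_k N(d,k))^{1/2}\mu_k^{(s-1)/2}$ blows up as $k\to\infty$ when $s<1$. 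What the paper actually does (Lemma \ref{lemma bias appr term s le 1} and Theorem \ref{theorem bias approximation s le 1}) is to bypass $\mathcal{M}_1$ entirely: truncate $f_\rho^*$ at level $t=n^{\frac{1-s}{2}+\epsilon_t}$, control the event that some sample falls in $\{|f_\rho^*|>t\}$ via Markov's inequality and the $L^q$-embedding $[\mathcal{H}]^s\hookrightarrow L^{q}$, $q<\frac{2}{1-s}$ (Theorem \ref{integrability of Hs constants}), and bound the truncated residual through $\|f_\lambda\|_{L^\infty}\le\kappa^2\|f_\lambda\|_{\mathcal{H}}$ (Lemma \ref{lemma calculation flambda s le 1}). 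This replaces the $\mathcal{M}_1$-condition by the two conditions \eqref{bias conditions s le 1 in theorem} and \eqref{bias conditions s le 1-2 in theorem}, and it is the truncation-induced condition $n^{-1}\mathcal{N}_1(\lambda)^{1/2}n^{\frac{1-s}{2}+\epsilon}=o\bigl(\mathcal{M}_2(\lambda)^{1/2}\bigr)$ — not a lossy $\mathcal{M}_1$ estimate — that produces the restriction $\gamma>3s/(2(s+1))$ when $s\le 1/2$. Your proposal needs this substitution of the bias-approximation lemma before the balancing and condition-checking you describe can go through.
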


\begin{theorem}[NTK: minimax lower bound]\label{theorem lower bound ntk}
    Let $c_{1} d^{\gamma} \le n \le c_{2} d^{\gamma} $ for some fixed $ \gamma >0$ and absolute constants $ c_{1}, c_{2}$. Consider $\mathcal{X} = \mathbb{S}^{d}$ and the marginal distribution $\mu$ to be the uniform distribution. Let $k = \{ k_{d}^{\mathrm{NT}}\}_{d=1}^{\infty}$ be a sequence of neural tangent kernels of a two-layer fully connected ReLU neural network on the sphere. Let $\mathcal{P}$ consist of all the distributions $\rho$ on $\mathcal{X} \times \mathcal{Y}$ such that Assumption \ref{assumption noise} holds and Assumption \ref{assumption source condition} holds for some $ s > 0$. Then we have:

        \begin{itemize}[leftmargin = 15pt]
            \item[(i)] When $ \gamma \in (p+ps, p+p^{\prime}s]$ for some $ p \in \mathcal{I}_{p}$, for any $\epsilon > 0$, there exist constants $\mathfrak{C}_1$ and $\mathfrak{C}$ only depending on $s, \epsilon, \gamma, \sigma, c_{1}$ and $ c_{2} $ such that for any $d \geq \mathfrak{C}$, we have:
            \begin{equation}\label{minimax lower eq 1 ntk}
            \min _{\hat{f}} \max _{\rho \in \mathcal{P}} \mathbb{E}_{(\boldsymbol{X}, \mathbf{y}) \sim \rho^{\otimes n}}
            \left\|\hat{f} - f_{\rho}^{*}\right\|_{L^2}^2
            \geq \mathfrak{C}_1 d^{-\gamma + p - \epsilon};
            \end{equation}

            \item[(ii)] When $ \gamma \in (p+p^{\prime}s, p^{\prime}+p^{\prime}s]$ for some $ p \in \mathcal{I}_{p}$, there exist constants $\mathfrak{C}_1$ and $\mathfrak{C}$ only depending on $ s, \gamma, \sigma, c_{1}$ and $ c_{2} $ such that for any $d \geq \mathfrak{C}$, we have:
            \begin{equation}\label{minimax lower eq 2 ntk}
            \min _{\hat{f}} \max _{\rho \in \mathcal{P}} \mathbb{E}_{(\boldsymbol{X}, \mathbf{y}) \sim \rho^{\otimes n}}
            \left\|\hat{f} - f_{\rho}^{*}\right\|_{L^2}^2
            \geq \mathfrak{C}_1 d^{-p^{\prime}s};
            \end{equation}

        \end{itemize}

\end{theorem}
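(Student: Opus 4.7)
The plan is to adapt the proof of Theorem~\ref{theorem lower bound} to the NTK's spectral structure, since the only substantive difference between the NTK and a generic inner product kernel satisfying Assumption~\ref{assumption inner product kernel} is that the Taylor coefficients $a_j$ of $\Phi$ vanish for odd $j\geq 3$. By the Gegenbauer expansion formula for eigenvalues of inner product kernels on the sphere (Lemma~\ref{lemma inner eigen} and Lemma~\ref{lemma:monotone_of_eigenvalues_of_inner_product_kernels}), this causes $\mu_k$ to retain the generic size $\mu_k \asymp d^{-k}$ for $k \in \mathcal{I}_{p} = \{0,1\}\cup\{2,4,6,\dots\}$, while $\mu_k$ for odd $k\geq 3$ is of strictly smaller order. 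Consequently, the ``next eigenspace above degree $p$ that carries nontrivial mass'' has index $p' \in \mathcal{I}_p$ rather than $p+1$, which is exactly the substitution recorded in the statement. The first step is to verify these two-sided estimates on $\mu_k$ and confirm that odd-degree $k\geq 3$ eigenspaces contribute negligibly to both the capacity and the source-norm calculations.

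With those estimates, I would transport the two-case Fano/Le-Cam construction used for Theorem~\ref{theorem lower bound} verbatim, replacing every occurrence of $p+1$ by $p'$ throughout. For case~(i), $\gamma\in(p+ps,\, p+p's]$, the rate $d^{-\gamma+p-\epsilon}$ is variance-driven, so I would build a packing inside the degree-$p$ eigenspace: pick $M = N(d,p)\asymp d^p$ orthonormal harmonics $\phi_1,\dots,\phi_M$ and consider $f_\omega = \delta\sum_m \omega_m \phi_m$ with $\omega\in\{\pm 1\}^M$ and $\delta$ chosen so that $\|f_\omega\|_{[\mathcal H]^s}^2 \asymp M\mu_p^{-s}\delta^2 \le R_\gamma^2$. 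Applying Assouad's lemma with KL bounds controlled via Assumption~\ref{assumption noise} yields the variance rate $d^{-\gamma+p}$, with the $d^{-\epsilon}$ slack absorbing logarithmic factors from the testing step. For case~(ii), $\gamma\in(p+p's,\, p'+p's]$, the rate $d^{-p's}$ is bias-driven, so Le Cam's two-point method suffices: take $f_0\equiv 0$ and $f_1 = c\,\phi$ for a single unit-$L^2$-norm degree-$p'$ spherical harmonic, with $c$ scaled so that $\|f_1\|_{[\mathcal H]^s}^2 \asymp \mu_{p'}^{-s}c^2 \asymp R_\gamma^2$, hence $c^2 \asymp \mu_{p'}^{s}\asymp d^{-p's}$. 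In this range of $\gamma$ one has $n\mu_{p'}\lesssim 1$, so the two product measures cannot be reliably distinguished and the $d^{-p's}$ lower bound follows.

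The main obstacle is the first step: pinning down sharp two-sided estimates of $\mu_k$ and $N(d,k)$ for the NTK over the whole index set $\mathcal{I}_p$, and showing that the vanishing coefficients $a_j=0$ for odd $j\geq 3$ really do push the odd-degree eigenvalues below the relevant threshold, so that they neither enable a better packing nor obstruct the source-condition verification. Once this spectral bookkeeping is done, the remainder of the argument is a direct rerun of the Fano/Le-Cam machinery from the proof of Theorem~\ref{theorem lower bound}, with the single notational change $p+1\mapsto p'$, which is why the authors are entitled to omit the detailed proof.
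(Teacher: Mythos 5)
Your spectral bookkeeping is the easy part and is indeed all that changes relative to Theorem \ref{theorem lower bound}: for the two-layer ReLU NTK the odd-degree coefficients $a_j$ ($j\ge 3$) vanish, so the eigenvalues of degree $k\in\mathcal{I}_p$ keep the size $\mu_k\asymp d^{-k}$ while the odd degrees $\ge 3$ carry no mass, and the ``next informative'' eigenspace above degree $p$ has degree $p^{\prime}$; this is exactly why the paper replaces $p+1$ by $p^{\prime}$ and otherwise reruns the global-entropy argument (Lemma \ref{thm_lower_ultimate_tech} with the entropy estimates of Lemma \ref{lemma_entropy_of_RKHS}), not a Fano/Le Cam construction as you describe it. Your case (i) is essentially fine as an alternative route: an Assouad-type packing of the degree-$p$ eigenspace works, provided you take $\delta^2\asymp\sigma^2/n$ (not the norm-saturating value $\mu_p^s/N(d,p)$, for which the per-coordinate KL explodes in the interior of the interval), and provided the hypotheses are shifted by a fixed function carrying mass $c_0$ on the required low-degree eigenspaces so that they actually satisfy Assumption \ref{assumption source condition}(b) and hence lie in $\mathcal{P}$ (the paper's inclusion $\mathcal{D}\subset\mathcal{P}$ is loose on the same point). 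This even gives $d^{p-\gamma}$ without the $\epsilon$ loss.

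The genuine gap is case (ii). A two-point Le Cam argument with a single degree-$p^{\prime}$ harmonic cannot produce the rate $d^{-p^{\prime}s}$ there: with $c^2\asymp\mu_{p^{\prime}}^{s}\asymp d^{-p^{\prime}s}$ the KL divergence between the two $n$-fold product measures is of order $nc^2/\sigma^2\asymp d^{\gamma-p^{\prime}s}$, and in case (ii) one has $\gamma>p+p^{\prime}s\ge p^{\prime}s$, so this diverges polynomially in $d$ and the two hypotheses become perfectly distinguishable; your supporting claim that $n\mu_{p^{\prime}}\lesssim 1$ in this range is also false, since $n\mu_{p^{\prime}}\asymp d^{\gamma-p^{\prime}}$, which is large whenever $\gamma>p^{\prime}$ (e.g.\ near the right endpoint $p^{\prime}+p^{\prime}s$). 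The bias-driven rate is only forced by exploiting the full $N(d,p^{\prime})\asymp d^{p^{\prime}}$-dimensional degree-$p^{\prime}$ eigenspace, so that the log-cardinality of the hypothesis class (a Varshamov--Gilbert packing for Fano, a coordinatewise Assouad bound with $\delta^2\asymp\min\{\mu_{p^{\prime}}^{s}/N(d,p^{\prime}),\sigma^2/n\}$, or the entropies $V_2,V_K$ in Lemma \ref{thm_lower_ultimate_tech}) is of order $d^{p^{\prime}}$ and can absorb the KL budget $nd^{-p^{\prime}s}\lesssim d^{p^{\prime}}$ — precisely the condition $\gamma\le p^{\prime}+p^{\prime}s$ defining case (ii). Replacing your two-point step by such a multi-hypothesis argument in the degree-$p^{\prime}$ eigenspace, or simply rerunning the paper's entropy computation with $\mu_{p^{\prime}}$ and $N(d,p^{\prime})$ in place of $\mu_{p+1}$ and $N(d,p+1)$, closes the gap.
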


Note that the results in this subsection will be the same as the theorems in Section \ref{section app inner} if we change the above definition of $ p^{\prime}$ to $ p^{\prime} = p+1, \forall p = 0,1,2,\cdots$.

\section{Conclusion and discussion}\label{section discussion}

In this paper, we first establish a new framework for studying the asymptotic generalization error of kernel ridge regression (Theorem \ref{main theorem}). This framework makes few assumptions on the RKHS, the true function and the relation between $d$ and $n$, thus it is suitable for studying various topics about KRR's generalization error in both fixed-dimensional and large-dimensional settings. Moreover, the results in Theorem \ref{main theorem} provides the matching upper and lower bounds of the generalization error with certain regularization parameter, which is more instructive than just the upper bound. 

Based on this framework, we then consider inner product kernel on the sphere and the large-dimensional setting ($n \asymp d^{\gamma}, \gamma >0$). Given that $f_{\rho}^{*}$ falls into $[\mathcal{H}]^{s}$, an interpolation space of RKHS, Theorem \ref{theorem inner s ge 1} and Theorem \ref{theorem inner s le 1} prove the exact convergence rates of KRR's generalization error under the best choice of regularization parameter and Theorem \ref{theorem lower bound} proves the corresponding minimax lower bound. These results show the minimax optimality of KRR when $0 < s \le 1$ and the new saturation effect of KRR $s>1$. We also discuss how the rate curves (varying along $\gamma$) evolve with the value of $s$ and discuss the ``periodic plateau behavior" and ``multiple descent behavior" of KRR in the large-dimensional setting.

Similar periodic behavior has been observed for kernel methods in the large-dimensional setting in related literature, we next make some discussion on these works. There is a line of work studying the inconsistency of kernel methods with inner product kernels in the large-dimensional setting $ n \asymp d^{\gamma}, \gamma > 0$ \citep[etc.]{Ghorbani2019LinearizedTN,mei2022generalization,misiakiewicz_spectrum_2022}. Assuming the true function $ f_{\rho}^{*}$ to be square-integrable (or equivalently $s=0$ in our setting) on the sphere, \citet[Theorem 4]{Ghorbani2019LinearizedTN} proves that the generalization error of KRR $R_{\mathrm{KR}}\left(f_{\rho}^{*}, \boldsymbol{X}, \lambda\right)$ satisfies (with high probability)
\begin{equation}\label{result of linearized}
    \left|R_{\mathrm{KR}}\left(f_{\rho}^{*}, \boldsymbol{X}, \lambda\right)-\left\|\mathrm{P}_{>\ell} f_{\rho}^{*}\right\|_{L^2}^2\right| \leq \epsilon\left(\left\|f_{\rho}^{*}\right\|_{L^2}^2+\sigma^2\right),  \forall~ 0 < \lambda < \lambda^{*},
\end{equation}
where $\ell = \lfloor \gamma \rfloor$ is the greatest integer that is less or equal to $\gamma$, $\mathrm{P}_{>\ell}$ means the projection onto polynomials with degree $>\ell$, $\epsilon$ is any positive real number and $\lambda^{*}$ is defined as \citealt[Eq.(20)]{Ghorbani2019LinearizedTN}. \eqref{result of linearized} implies that generalization error will drop when $ \gamma$ exceeds an integer and stay invariant for other $\gamma$ (see the cartoon representation in \citealt[Figure 5]{Ghorbani2019LinearizedTN}). In our paper, when $s>0$ and efficiently close to 0, similar behavior has been observed in Figure \ref{figure 6 rates w.r.t d} (a) that the convergence rate drops abruptly around each integer $ \gamma \in \mathbb{N}$. 

A more recent work \cite{lu2023optimal} considers the optimality of early stopping kernel gradient flow in the same large-dimensional setting $ c_{1} d^{\gamma} \le n \le  c_{2} d^{\gamma}, \gamma >0$. They also consider inner product kernel on the sphere and assume that the true function falls into the RKHS $ f_{\rho}^{*} \in \mathcal{H} $ (or equivalently, $s=1$). Denoting $p = \lfloor \gamma / 2 \rfloor$, \citealt[Theorem 4.3]{lu2023optimal} proves that by properly choosing the early stopping time $\widehat{T}$, the upper bound of the convergence rate is:

\begin{itemize}

\item When $\gamma \in \{2, 4, 6, \cdots\}$, then, there exist constants $\mathfrak{C}$ and $\mathfrak{C}_{i}$, where $i=1,2,3$, only depending on $\gamma$, $c_1$, and $c_2$, such that for any $d \geq \mathfrak{C}$,  we have
\begin{equation}
\begin{aligned}
\left\|{f}_{\widehat{T}} - f_{\star}\right\|_{L^2}^2
\leq \mathfrak{C}_1 n^{-\frac{1}{2}}
\end{aligned}
\end{equation}
holds with probability at least $1 -\mathfrak{C}_2\exp\{-\mathfrak{C}_3 n^{1/2} \}$.

    \item When $\gamma \in \bigcup_{j=0}^{\infty} (2j, 2j+1]$, for any $\delta>0$, there exist constants $\mathfrak{C}$ and $\mathfrak{C}_{i}$, where $i=1,2,3$, only depending on $\gamma$, $\delta$, $c_1$, and $c_2$, such that for any $d \geq \mathfrak{C}$,  we have 
\begin{equation}
\begin{aligned}
\left\|{f}_{\widehat{T}} - f_{\rho}^{*}\right\|_{L^2}^2
\leq \mathfrak{C}_1 n^{-\frac{\gamma - p}{\gamma}} \log(n) 
\end{aligned}
\end{equation}
holds with probability at least $1-\delta-\mathfrak{C}_2\exp\{-\mathfrak{C}_3 n^{p/\gamma} \log(n)\}$.

\item  When $\gamma \in \bigcup_{j=0}^{\infty} (2j+1, 2j+2)$, then, for any $\delta>0$, there exist constants $\mathfrak{C}$ and $\mathfrak{C}_{i}$, where $i=1,2,3$, only depending on $\gamma$, $\delta$, $c_1$, and $c_2$, such that for any $d \geq \mathfrak{C}$,  we have  
\begin{equation}
\begin{aligned}
\left\|{f}_{\widehat{T}} - f_{\star}\right\|_{L^2}^2
\leq \mathfrak{C}_1 n^{-\frac{p+1}{\gamma}}
\end{aligned}
\end{equation}
holds with probability at least $1-\delta-\mathfrak{C}_2\exp\{-\mathfrak{C}_3 n^{1 - (p+1)/\gamma}\}$.

\end{itemize}
Ignoring the log term, simple calculation shows that the convergence rate is consistent with the rate in Theorem \ref{theorem inner s ge 1} when $s=1$ (see, e.g., Figure \ref{figure 6 rates w.r.t d} (c)). \cite{lu2023optimal} also proves that the above upper bound matches the minimax lower bound, thus proving the minimax optimality of early stopping kernel gradient flow under the assumption $f_{\rho}^{*} \in \mathcal{H}$. In contrast to \cite{lu2023optimal}, we provide not only the upper bound but also the lower bound of the convergence rates of KRR under general source condition $s > 0$. We have seen that the ``periodic plateau behavior" and ``multiple descent behavior" observed in \cite{lu2023optimal} still exist for $s>0$ and the plateau length will change with the value of $s$. By considering source condition $s>0$, our restriction on the true function is much milder, thus we provide a more complete characterization of the generalization error of KRR.

The periodic behavior in large dimension has also been observed for ``kernel interpolation estimator", for instance, \cite{liang2020multiple} for the inner product kernel and \cite{aerni2022strong} for the convolutional kernel. Although technically complicated, a direct follow-up question is the convergence rate of generalization error for general kernels and domains. We believe that it is an interesting research direction to study the generalization behavior of kernel methods in the large-dimensional setting, which will exhibit a wealth of new phenomena compared with the fixed-dimensional setting. 







\acks{Qian Lin is supported in part by National Natural Science Foundation of China (Grant 92370122, Grant 11971257) and the Beijing Natural Science Foundation (Grant Z190001).}


\bigskip
\bigskip
\bigskip
\bigskip

\appendix


\section*{Appendix} 
In the Appendix, we provide the proof of Theorem \ref{main theorem} (Appendix \ref{section proofs main}), Theorem \ref{theorem inner s ge 1} \& \ref{theorem inner s le 1} (Appendix \ref{section proofs inner}) and Theorem \ref{theorem lower bound} (Appendix \ref{section proofs minimax}). Necessary auxiliary results can be found in Appendix \ref{section auxiliart}.

\section{Proof of Theorem \ref{main theorem}}\label{section proofs main}
The proof of Theorem \ref{main theorem} consists of the following steps: First, we introduce the bias-variance decomposition in Section \ref{section bias var decom}. Next, we derive the bounds of variance term in Section \ref{section variance term} and bias term in Section \ref{section bias term}. Finally, using the results in these sections, we formally prove Theorem \ref{main theorem} in Section \ref{section final proof of main theorem}.

\subsection{Bias-variance decomposition}\label{section bias var decom}

The proof of Theorem \ref{main theorem} is based on the traditional bias-variance decomposition. The contribution in this paper is that we refine the tools in \cite{Li2023KernelIG} and \cite{li2023asymptotic} to handle the large-dimensional case. Throughout the proof, we denote 
\begin{equation}\label{brief notation}
   T_{\lambda} = T + \lambda;~~ T_{\boldsymbol{X} \lambda} = T_{\boldsymbol{X}} + \lambda,
\end{equation}
where $\lambda$ is the regularization parameter. We use $\| \cdot \|_{\mathscr{B}(B_1,B_2)}$ to denote the operator norm of a bounded linear operator from a Banach space $B_1$ to $B_2$, i.e., $ \| A \|_{\mathscr{B}(B_1,B_2)} = \sup\limits_{\|f\|_{B_1}=1} \|A f\|_{B_2} $. Without bringing ambiguity, we will briefly denote the operator norm as $\| \cdot \|$. In addition, we use $\text{tr}A$ and $\| A \|_{1}$ to denote the trace and the trace norm of an operator. We use $\| A \|_{2}$ to denote the Hilbert-Schmidt norm. In addition, we denote $ L^{2}(\mathcal{X},\mu)$ as $ L^{2}$, $ L^{\infty}(\mathcal{X},\mu)$ as $ L^{\infty}$ for brevity throughout the proof.



We also need the following essential notations in our proof, which are frequently used in related literature. Denote the samples $\boldsymbol{Z} = \{ (\boldsymbol{x}_{i}, y_{i}) \}_{i=1}^{n}$. Define the sampling operator $ K_{\boldsymbol{x}}: \mathbb{R} \rightarrow \mathcal{H}, ~ y \mapsto y k(\boldsymbol{x},\cdot) $ and its adjoint operator $K_{\boldsymbol{x}}^{*}: \mathcal{H} \rightarrow \mathbb{R},~ f \mapsto f(\boldsymbol{x})$. Then we can define $T_{\boldsymbol{x}} = K_{\boldsymbol{x}} K_{\boldsymbol{x}}^{*}$. Further, we define the sample covariance operator $T_{\boldsymbol{X}}: \mathcal{H} \to \mathcal{H}$ as
\begin{equation}\label{def of TX}
    T_{\boldsymbol{X}}:=\frac{1}{n} \sum_{i=1}^n K_{\boldsymbol{x}_i} K_{\boldsymbol{x}_i}^*.
\end{equation}
Then we know that $ \| T_{\boldsymbol{X}} \| \le \left\| T_{\boldsymbol{X}} \right\|_{1} \le \kappa^{2}$ and $ T_{\boldsymbol{X}}$ is a trace class thus compact operator. Further, define the sample basis function
\begin{equation}
  g_{\boldsymbol{Z}}:=\frac{1}{n} \sum_{i=1}^n K_{\boldsymbol{x}_i} y_i \in \mathcal{H}.
\end{equation}
As shown in \citet{Caponnetto2007OptimalRF}, the operator form of the KRR estimator \eqref{krr estimator} writes
\begin{align}\label{eq:KRR}
  \hat{f}_\lambda = (T_{\boldsymbol{X}} + \lambda)^{-1} g_{\boldsymbol{Z}}.
\end{align}
In order to derive the bias term, we define 
\begin{equation}
    \tilde{g}_{\boldsymbol{Z}} := \mathbb{E}\left( g_{\boldsymbol{Z}} | \boldsymbol{X} \right) = \frac{1}{n} \sum_{i=1}^n K_{\boldsymbol{x}_i} f_{\rho}^{*}(\boldsymbol{x}_{i}) \in \mathcal{H};
\end{equation}
and
\begin{equation}\label{def tilde f lambda}
    \tilde{f}_{\lambda} := \mathbb{E}\left( \hat{f}_{\lambda} | \boldsymbol{X} \right) = \left(T_{\boldsymbol{X}} + \lambda\right)^{-1} \tilde{g}_{\boldsymbol{Z}} \in \mathcal{H}.
\end{equation}
We also need to define the expectation of $g_{\boldsymbol{Z}}$ as
\begin{equation}
    g = \mathbb{E} g_{\boldsymbol{Z}} = \int_{\mathcal{X}} k(\boldsymbol{x},\cdot) f_{\rho}^{*}(\boldsymbol{x}) d\mu(\boldsymbol{x}) = S_{k}^{*} f_{\rho}^* \in \mathcal{H},
\end{equation}
and
\begin{equation}\label{def f lambda}
    f_{\lambda} = \left( T+\lambda \right)^{-1} g = \left( T+\lambda \right)^{-1} S_{k}^{*} f_{\rho}^{*}.
\end{equation}

Then we have the decomposition
\begin{align}
    \hat{f}_{\lambda} - f_{\rho}^{*} &= \frac{1}{n} (T_{\boldsymbol{X}}+\lambda)^{-1} \sum_{i=1}^n K_{\boldsymbol{x}_i}y_i - f_{\rho}^{*} \notag \\
    &= \frac{1}{n} (T_{\boldsymbol{X}}+\lambda)^{-1} \sum_{i=1}^n K_{\boldsymbol{x}_i} (f^{*}_{\rho}(\boldsymbol{x}_i) + \epsilon_i) - f_{\rho}^{*} \notag \\
    &= (T_{\boldsymbol{X}}+\lambda)^{-1} \tilde{g}_{\boldsymbol{Z}} + \frac{1}{n}\sum_{i=1}^n (T_{\boldsymbol{X}}+\lambda)^{-1} K_{\boldsymbol{x}_i}\epsilon_i - f_{\rho}^{*} \notag \\
    &= \left( \tilde{f}_{\lambda} - f_{\rho}^{*} \right) + \frac{1}{n}\sum_{i=1}^n (T_{\boldsymbol{X}}+\lambda)^{-1} K_{\boldsymbol{x}_i}\epsilon_i.
\end{align}
Taking expectation over the noise $\epsilon$ conditioned on $X$ and noticing that $\epsilon | \boldsymbol{x}$ are independent noise with mean 0 and variance $\sigma^{2}$, we obtain the bias-variance decomposition:
\begin{align}\label{eq:BiasVarianceDecomp}
  \mathbb{E} \left( \left\|\hat{f}_\lambda - f^{*}_{\rho} \right\|^2_{L^2} \;\Big|\; \boldsymbol{X} \right)
  =  \mathbf{Bias}^2(\lambda) + \mathbf{Var}(\lambda),
\end{align}
where
\begin{equation}\label{eq:a_BiasVarFormula}
  \begin{aligned}
    & \mathbf{Bias}^2(\lambda) \coloneqq
    \left\|\tilde{f}_{\lambda} - f_{\rho}^{*}\right\|_{L^2}^2, \quad
    \mathbf{Var}(\lambda) \coloneqq
    \frac{\sigma^2}{n^2} \sum_{i=1}^n  \left\|(T_{\boldsymbol{X}}+\lambda)^{-1} k(\boldsymbol{x}_i,\cdot)\right\|^2_{L^2}.
  \end{aligned}
\end{equation}
Given the decomposition \eqref{eq:BiasVarianceDecomp}, we next derive the upper and lower bounds of $ \mathbf{Bias}^2(\lambda) $ and $ \mathbf{Var}(\lambda) $ in the following two subsections.

\subsection{Variance term}\label{section variance term}

In this subsection, our goal is to derive Theorem \ref{theorem variance approximation}, which shows the upper and lower bounds of variance under some approximation conditions. Before formally introduce Theorem \ref{theorem variance approximation}, we have a lot of preparatory work.

Following \citet{Li2023KernelIG}, we consider the sample subspace
  \begin{displaymath}
    \mathcal{H}_n = \mathrm{span} \left\{ k(\boldsymbol{x}_1,\cdot),\dots,k(\boldsymbol{x}_n,\cdot) \right\} \subset \mathcal{H}.
  \end{displaymath}
  Recall the notation $ \mathbb{K}(\boldsymbol{X},\boldsymbol{X}) = \left(k\left(\boldsymbol{x}_i, \boldsymbol{x}_j\right)\right)_{n \times n}$ and $ \mathbb{K}(\boldsymbol{X},\cdot) = \left\{ k(\boldsymbol{x}_1,\cdot),\dots,k(\boldsymbol{x}_n,\cdot) \right\}$. Define the normalized sample kernel matrix
  \begin{displaymath}
      \boldsymbol{K} = \frac{1}{n} \mathbb{K}(\boldsymbol{X},\boldsymbol{X}).
  \end{displaymath}
  Then, it is easy to verify that $\mathrm{Ran} (T_{\boldsymbol{X}}) = \mathcal{H}_{n}$ and $\boldsymbol{K}$ is the representation matrix of $T_{\boldsymbol{X}}$
  under the natural basis $\left\{ k(\boldsymbol{x}_1,\cdot),\dots,k(\boldsymbol{x}_n,\cdot) \right\}$.
  Consequently, for any continuous function $\varphi$ we have
  \begin{align}\label{eq:TXMatrixForm}
    \varphi(T_{\boldsymbol{X}}) \mathbb{K}(\boldsymbol{X},\cdot) = \varphi(\boldsymbol{K}) \mathbb{K}(X,\cdot),
  \end{align}
  where the left-hand side means applying the operator elementwise.
  Since the property of reproducing kernel Hilbert space implies $\langle k(\boldsymbol{x},\cdot),f \rangle_{\mathcal{H}} = f(\boldsymbol{x}), ~ \forall f \in \mathcal{H}$,
  taking inner product elementwise between \eqref{eq:TXMatrixForm} and $f$, we have
  \begin{align}
    \label{eq:TXActionF}
    (\varphi(T_{\boldsymbol{X}}) f)[\boldsymbol{X}] = \varphi(\boldsymbol{K})f[\boldsymbol{X}].
  \end{align}

  Moreover, for $f,g \in \mathcal{H}$, we define empirical semi-inner product
  \begin{align}
    \label{eq:SampleInnerProductL2}
    \langle f,g\rangle_{L^2,n} & = \frac{1}{n}\sum_{i=1}^n f(\boldsymbol{x}_i)g(\boldsymbol{x}_i) = \frac{1}{n}f[\boldsymbol{X}]^{\top} g[\boldsymbol{X}],
  \end{align}
  and denote by $\|\cdot\|_{L^2,n}$ the corresponding empirical semi-norm.
  We also denote by $P_n$ the empirical measure with respect to $\boldsymbol{X} = (\boldsymbol{x}_1,\dots,\boldsymbol{x}_n)$.

For simplicity of notations, we denote $k_{\boldsymbol{x}}(\cdot) = k(\boldsymbol{x},\cdot)$, $ \boldsymbol{x} \in \mathcal{X}$ in the rest of the proof. The following lemma rewrites the variance term \eqref{eq:a_BiasVarFormula} using the empirical semi-norm.
\begin{lemma}\label{lemma var trans}
    The variance term in \eqref{eq:a_BiasVarFormula} satisfies that
    \begin{align}
      \label{eq:VarAlterForm}
      \mathbf{Var}(\lambda) = \frac{\sigma^2}{n} \int_{\mathcal{X}} \left\|(T_{\boldsymbol{X}}+\lambda)^{-1}k_{\boldsymbol{x}}(\cdot)\right\|_{L^2,n}^2 \mathrm{d} \mu(\boldsymbol{x}).
    \end{align}
\end{lemma}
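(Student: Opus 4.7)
The plan is to recognize this identity as a Fubini-style swap driven by a symmetry of the ``resolvent kernel'' $K(\boldsymbol{x},\boldsymbol{x}') := \bigl[(T_{\boldsymbol{X}}+\lambda)^{-1} k_{\boldsymbol{x}}\bigr](\boldsymbol{x}')$. First I would expand each summand in \eqref{eq:a_BiasVarFormula} as an integral,
\[
  \bigl\|(T_{\boldsymbol{X}}+\lambda)^{-1} k_{\boldsymbol{x}_i}\bigr\|_{L^2}^2 = \int_{\mathcal{X}} \bigl[(T_{\boldsymbol{X}}+\lambda)^{-1} k_{\boldsymbol{x}_i}\bigr](\boldsymbol{x})^2 \, \mathrm{d}\mu(\boldsymbol{x}),
\]
so that $\mathbf{Var}(\lambda)$ becomes a double sum/integral of $K(\boldsymbol{x}_i,\boldsymbol{x})^2$.

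The key step is the symmetry $K(\boldsymbol{x},\boldsymbol{x}') = K(\boldsymbol{x}',\boldsymbol{x})$. To establish this, I would use the reproducing property together with the self-adjointness of $(T_{\boldsymbol{X}}+\lambda)^{-1}$ as an operator on $\mathcal{H}$ (which follows because $T_{\boldsymbol{X}}$ is self-adjoint and positive by its definition \eqref{def of TX}, and $\lambda>0$). Concretely,
\[
  K(\boldsymbol{x},\boldsymbol{x}') = \bigl\langle (T_{\boldsymbol{X}}+\lambda)^{-1} k_{\boldsymbol{x}},\, k_{\boldsymbol{x}'}\bigr\rangle_{\mathcal{H}} = \bigl\langle k_{\boldsymbol{x}},\, (T_{\boldsymbol{X}}+\lambda)^{-1} k_{\boldsymbol{x}'}\bigr\rangle_{\mathcal{H}} = K(\boldsymbol{x}',\boldsymbol{x}).
\]

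With symmetry in hand, I would apply Fubini's theorem (justified by nonnegativity of the integrand) to exchange the finite sum over $i$ with the integral over $\mathcal{X}$, yielding
\[
  \mathbf{Var}(\lambda) = \frac{\sigma^{2}}{n^{2}} \int_{\mathcal{X}} \sum_{i=1}^{n} K(\boldsymbol{x},\boldsymbol{x}_i)^{2} \, \mathrm{d}\mu(\boldsymbol{x}).
\]
Finally, I would recognize the inner sum, via definition \eqref{eq:SampleInnerProductL2}, as $n \bigl\|(T_{\boldsymbol{X}}+\lambda)^{-1} k_{\boldsymbol{x}}\bigr\|_{L^2,n}^{2}$, producing the claimed expression \eqref{eq:VarAlterForm}. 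There is no substantive obstacle here; the only subtlety is checking that $(T_{\boldsymbol{X}}+\lambda)^{-1}$ is a bounded self-adjoint operator on $\mathcal{H}$ (ensuring both the integrand is well-defined and measurable in $\boldsymbol{x}$, and that the symmetry argument above is rigorous), which is immediate from $\lambda>0$ and the compactness/self-adjointness of $T_{\boldsymbol{X}}$.
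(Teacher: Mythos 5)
Your proposal is correct. The symmetry $K(\boldsymbol{x},\boldsymbol{x}')=K(\boldsymbol{x}',\boldsymbol{x})$ does follow from the reproducing property plus the self-adjointness of $(T_{\boldsymbol{X}}+\lambda)^{-1}$ on the real Hilbert space $\mathcal{H}$ (and $T_{\boldsymbol{X}}+\lambda$ is indeed boundedly invertible for $\lambda>0$), the exchange of the finite sum with the integral is just linearity/Tonelli for nonnegative integrands, and the inner sum $\sum_{i}K(\boldsymbol{x},\boldsymbol{x}_i)^2$ is exactly $n\left\|(T_{\boldsymbol{X}}+\lambda)^{-1}k_{\boldsymbol{x}}\right\|_{L^2,n}^2$ by \eqref{eq:SampleInnerProductL2}. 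Measurability of $\boldsymbol{x}\mapsto K(\boldsymbol{x}_i,\boldsymbol{x})$ and $\boldsymbol{x}\mapsto K(\boldsymbol{x},\boldsymbol{x}_i)$ is unproblematic since elements of $\mathcal{H}$ are continuous under Assumption \ref{assumption kernel}.

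Your route differs from the paper's. The paper never invokes operator self-adjointness directly; instead it passes to the sample subspace and uses the matrix identities \eqref{eq:TXMatrixForm} and \eqref{eq:TXActionF} to rewrite both $\left\|(T_{\boldsymbol{X}}+\lambda)^{-1}k(\boldsymbol{x}_i,\cdot)\right\|_{L^2}^2$ summed over $i$ and $\left\|(T_{\boldsymbol{X}}+\lambda)^{-1}k_{\boldsymbol{x}}\right\|_{L^2,n}^2$ as the same finite-dimensional quadratic form $\frac{1}{n}\mathbb{K}(\boldsymbol{x},\boldsymbol{X})(\boldsymbol{K}+\lambda)^{-2}\mathbb{K}(\boldsymbol{X},\boldsymbol{x})$, with the symmetry you need being carried implicitly by the symmetric matrix $(\boldsymbol{K}+\lambda)^{-2}$. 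Your argument is more intrinsic and slightly more elementary: it isolates the single structural fact (symmetry of the resolvent kernel) responsible for the identity and avoids introducing the kernel-matrix representation at all. What the paper's computation buys is an explicit matrix form of the variance, $\frac{\sigma^2}{n^2}\int_{\mathcal{X}}\mathbb{K}(\boldsymbol{x},\boldsymbol{X})(\boldsymbol{K}+\lambda)^{-2}\mathbb{K}(\boldsymbol{X},\boldsymbol{x})\,\mathrm{d}\mu(\boldsymbol{x})$, which is reused later (e.g.\ in Step 3 of the proof of Theorem \ref{theorem inner s ge 1}, where the monotonicity of $\mathbf{Var}(\lambda)$ in $\lambda$ is read off from $(\boldsymbol{K}+\lambda)^{-2}$); if you adopt your proof, that monotonicity argument would need the matrix form derived separately or an operator-level analogue.
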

\begin{proof}
    First, we have
    \begin{align}\label{var trans 1}
        \mathbf{Var}(\lambda) &\coloneqq \frac{\sigma^2}{n^2} \sum_{i=1}^n  \left\|(T_{\boldsymbol{X}}+\lambda)^{-1} k(\boldsymbol{x}_i,\cdot)\right\|^2_{L^2} \notag \\
        &= \frac{\sigma^2}{n^2}\left\|(T_{\boldsymbol{X}}+\lambda)^{-1} \mathbb{K}(\boldsymbol{X},\cdot)\right\|^2_{L^2(\mathbb{R}^n)} \notag \\
        &\overset{\eqref{eq:TXMatrixForm}}{=} \frac{\sigma^2}{n^2}\left\|(\boldsymbol{K}+\lambda)^{-1} \mathbb{K}(\boldsymbol{X},\cdot)\right\|^2_{L^2(\mathbb{R}^n)} \notag \\
        &= \frac{\sigma^2}{n^2} \int_{\mathcal{X}} \mathbb{K}(\boldsymbol{x},\boldsymbol{X})(\boldsymbol{K}+\lambda)^{-2} \mathbb{K}(\boldsymbol{X},\boldsymbol{x}) ~ \mathrm{d} \mu(\boldsymbol{x}).
    \end{align}
    Next, using \eqref{eq:TXActionF} and the fact that $k_{\boldsymbol{x}}[\boldsymbol{X}] = \mathbb{K}(\boldsymbol{X},\boldsymbol{x})$, we have
    \begin{align*}
      \left( (T_{\boldsymbol{X}}+\lambda)^{-1}k_{\boldsymbol{x}}  \right)[\boldsymbol{X}] = (\boldsymbol{K}+\lambda)^{-1} k_{\boldsymbol{x}}[\boldsymbol{X}] = (\boldsymbol{K}+\lambda)^{-1}\mathbb{K}(\boldsymbol{X},\boldsymbol{x}),
    \end{align*}
   which implies
\begin{align}\label{var trans 2}
    \frac{1}{n} \mathbb{K}(\boldsymbol{x},\boldsymbol{X})(\boldsymbol{K}+\lambda)^{-2}\mathbb{K}(\boldsymbol{X},\boldsymbol{x})
    &= \frac{1}{n}\left\|(\boldsymbol{K}+\lambda)^{-1} \mathbb{K}(\boldsymbol{X},\boldsymbol{x})\right\|_{\mathbb{R}^n}^2 \notag \\
    &= \frac{1}{n}\left\|\left( (T_{\boldsymbol{X}}+\lambda)^{-1} k_{\boldsymbol{x}}  \right)[\boldsymbol{X}]\right\|_{\mathbb{R}^n}^2 \notag \\
    &= \left\|(T_{\boldsymbol{X}}+\lambda)^{-1}k_{\boldsymbol{x}}\right\|_{L^2,n}^2.
\end{align}
Therefore, plugging \eqref{var trans 2} into \eqref{var trans 1}, we get the desired results
\begin{displaymath}
      \mathbf{Var}(\lambda) = \frac{\sigma^2}{n} \int_{\mathcal{X}} \left\|(T_{\boldsymbol{X}}+\lambda)^{-1}k_{\boldsymbol{x}}(\cdot)\right\|_{L^2,n}^2 \mathrm{d} \mu(\boldsymbol{x}).
\end{displaymath}
\end{proof}

The operator form \eqref{eq:VarAlterForm} allows us to apply concentration inequalities and establish the following two-step approximation (recall the notations $T_{\boldsymbol{X} \lambda} $ and $ T_{\lambda} $ in \eqref{brief notation}). 
\begin{align}\label{eq:4_2Step}
    \int_{\mathcal{X}} \left\|{T_{\boldsymbol{X} \lambda}^{-1} k_{\boldsymbol{x}}}\right\|_{L^2, n}^{2} \mathrm{d}\mu(\boldsymbol{x})
    \stackrel{A}{\approx}
    \int_{\mathcal{X}} \left\|{T_{\lambda}^{-1} k_{\boldsymbol{x}}}\right\|_{L^2, n}^{2} \mathrm{d}\mu(\boldsymbol{x})
    \stackrel{B}{\approx}
    \int_{\mathcal{X}} \left\|{T_{\lambda}^{-1} k_{\boldsymbol{x}}}\right\|_{L^2}^{2} \mathrm{d}\mu(\boldsymbol{x}).
  \end{align}
Note that the above two-step approximation is an enhanced version of approximation (S24) in \cite{Li2023KernelIG}.\\

\textit{Approximation B.}  The following lemma characterizes the magnitude of Approximation B in high probability. Recall the definitions of $\mathcal{N}_{1}(\lambda)$ and $ \mathcal{N}_{2}(\lambda)$ in \eqref{n1 n2 m1 m2}.
\begin{lemma}[Approximation B]\label{lemma approximation B}
    Suppose that Assumption \ref{assumption kernel}, \ref{assumption noise} and \ref{assumption eigenfunction} hold. For any $\lambda = \lambda(d,n) \to 0$ and any fixed $ \delta \in (0,1)$, when $n$ is sufficiently large, with probability at least $1-\delta$, we have
\begin{align}\label{approximation B}
   \frac{1}{2}\int_{\mathcal{X}} \left\|{T_{\lambda}^{-1} k_{\boldsymbol{x}}}\right\|_{L^2}^{2} \mathrm{d}\mu(\boldsymbol{x}) - R_{2} \le \int_{\mathcal{X}} \left\|{T_{\lambda}^{-1} k_{\boldsymbol{x}}}\right\|_{L^2, n}^{2} \mathrm{d}\mu(\boldsymbol{x}) \le \frac{3}{2}\int_{\mathcal{X}} \left\|{T_{\lambda}^{-1} k_{\boldsymbol{x}}}\right\|_{L^2}^{2} \mathrm{d}\mu(\boldsymbol{x}) + R_{2},
\end{align}
where 
\begin{equation}
    R_{2} = \frac{5\mathcal{N}_{2}(\lambda)}{3n}\ln \frac{2}{\delta}.
\end{equation}
\end{lemma}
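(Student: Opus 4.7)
\textbf{Proof plan for Lemma \ref{lemma approximation B}.} The goal is to control the fluctuation between the empirical semi-norm $\|\cdot\|_{L^2,n}$ and the population norm $\|\cdot\|_{L^2}$ of the fixed (data-independent) operator $T_\lambda^{-1}$ applied to $k_{\boldsymbol x}$. Since $T_\lambda$ does not depend on the sample, this reduces to a clean empirical-process question for a single non-negative function on $\mathcal{X}$, and the natural tool is Bernstein's inequality.

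The plan is to start by computing both sides in the Mercer basis. Using $k_{\boldsymbol x} = \sum_i \lambda_i e_i(\boldsymbol x) e_i$ in $\mathcal{H}$ and $T e_i = \lambda_i e_i$, one obtains
\begin{equation*}
  (T_\lambda^{-1} k_{\boldsymbol x})(\boldsymbol y) = \sum_{i} \frac{\lambda_i}{\lambda_i+\lambda}\, e_i(\boldsymbol x)\, e_i(\boldsymbol y).
\end{equation*}
Exchanging the sum and integral and exploiting Fubini gives
\begin{equation*}
  \int_{\mathcal{X}} \bigl\|T_\lambda^{-1} k_{\boldsymbol x}\bigr\|_{L^2,n}^{2}\, \mathrm d\mu(\boldsymbol x) = \frac{1}{n}\sum_{j=1}^{n} h(\boldsymbol x_j), \qquad h(\boldsymbol y) := \sum_{i} \left(\frac{\lambda_i}{\lambda_i+\lambda}\right)^{2} e_i(\boldsymbol y)^{2},
\end{equation*}
while $\int_{\mathcal{X}} \|T_\lambda^{-1} k_{\boldsymbol x}\|_{L^2}^{2}\, \mathrm d\mu(\boldsymbol x) = \sum_i \bigl(\lambda_i/(\lambda_i+\lambda)\bigr)^{2} = \mathcal{N}_2(\lambda) = \mathbb E h(\boldsymbol x)$. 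Thus Approximation B is exactly a concentration statement for $\frac{1}{n}\sum_j h(\boldsymbol x_j)$ around $\mathbb E h$.

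Next I would extract the two ingredients needed for Bernstein. Assumption \ref{assumption eigenfunction}, equation \eqref{assumption eigen - n2}, gives $\|h\|_{L^\infty} \le \mathcal{N}_2(\lambda)$; consequently the variance satisfies $\mathrm{Var}\,h(\boldsymbol x) \le \mathbb E h^{2} \le \|h\|_{L^\infty}\cdot \mathbb E h \le \mathcal{N}_2(\lambda)^{2}$. Applying the standard Bernstein inequality to the i.i.d.\ centered summands $h(\boldsymbol x_j) - \mathbb E h$ then yields, with probability at least $1-\delta$,
\begin{equation*}
  \left|\frac{1}{n}\sum_{j=1}^{n} h(\boldsymbol x_j) - \mathcal{N}_2(\lambda)\right| \le \sqrt{\frac{2\mathcal{N}_2(\lambda)^{2}\ln(2/\delta)}{n}} + \frac{2\mathcal{N}_2(\lambda)\ln(2/\delta)}{3n}.
\end{equation*}

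The final step is to absorb the square-root term into $\frac{1}{2}\mathcal{N}_2(\lambda)$ via the elementary inequality $\sqrt{ab}\le \frac{a+b}{2}$ with $a = \mathcal{N}_2(\lambda)$ and $b = 2\|h\|_{L^\infty}\ln(2/\delta)/n$, using $\|h\|_{L^\infty}\le \mathcal{N}_2(\lambda)$; this produces an additional $\mathcal{N}_2(\lambda)\ln(2/\delta)/n$ term. Combining with the subgaussian term gives exactly $R_2 = \tfrac{5\mathcal{N}_2(\lambda)}{3n}\ln(2/\delta)$, establishing the claim. The argument is routine: there is no real obstacle here beyond correctly identifying $h$, invoking Assumption \ref{assumption eigenfunction} to get the uniform bound on $h$ (which is precisely why that assumption was formulated in this form), and performing the AM-GM absorption to match the advertised constants $\tfrac12$ and $\tfrac32$.
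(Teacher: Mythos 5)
Your proposal is correct and follows essentially the same route as the paper: both reduce, via the Mercer expansion and Tonelli, the quantity $\int_{\mathcal{X}}\|T_\lambda^{-1}k_{\boldsymbol{x}}\|_{L^2,n}^2\,\mathrm{d}\mu(\boldsymbol{x})$ to the empirical mean $\frac{1}{n}\sum_j h(\boldsymbol{x}_j)$ of the function $h(\boldsymbol{y})=\sum_i\bigl(\lambda_i/(\lambda_i+\lambda)\bigr)^2 e_i^2(\boldsymbol{y})$ with $\|h\|_{L^\infty}\le\mathcal{N}_2(\lambda)=\mathbb{E}h$ by Assumption \ref{assumption eigenfunction}, and then concentrate it around $\mathcal{N}_2(\lambda)$. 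The only cosmetic difference is that the paper delegates the Bernstein-plus-AM--GM step to the cited Proposition \ref{prop:SampleNormEstimation} applied to $\sqrt{h}$, whereas you carry it out explicitly, arriving at the same constants $\tfrac12$, $\tfrac32$ and $R_2=\tfrac{5\mathcal{N}_2(\lambda)}{3n}\ln\tfrac{2}{\delta}$.
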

\begin{proof}
    Define a function
\begin{align}
     f(\boldsymbol{z}) &= \int_{\mathcal{X}} \left(T_{\lambda}^{-1}k_{\boldsymbol{x}}(\boldsymbol{z})\right)^{2} \mathrm{d}\mu(\boldsymbol{x}) \notag \\
     &= \int_{\mathcal{X}} \sum\limits_{i=1}^{\infty} \left( \frac{\lambda_{i}}{\lambda_{i} + \lambda} \right)^{2} e_{i}^{2}(\boldsymbol{x}) e_{i}^{2}(\boldsymbol{z}) \mathrm{d} \mu(\boldsymbol{x})\notag \\
     &= \sum\limits_{i=1}^{\infty} \left( \frac{\lambda_{i}}{\lambda_{i} + \lambda} \right)^{2}  e_{i}^{2}(\boldsymbol{z}).
\end{align}
Since Assumption \ref{assumption eigenfunction} holds, we have 
\begin{displaymath}
    \left\| f \right\|_{L^{\infty}}  \le \sum\limits_{i=1}^{\infty} \left( \frac{\lambda_{i}}{\lambda_{i} + \lambda} \right)^{2} = \mathcal{N}_{2}(\lambda); ~~ \left\| f \right\|_{L^{1}} = \sum\limits_{i=1}^{\infty} \left( \frac{\lambda_{i}}{\lambda_{i} + \lambda} \right)^{2} = \mathcal{N}_{2}(\lambda). 
\end{displaymath}
Applying Proposition \ref{prop:SampleNormEstimation} for $ \sqrt{f}$ and noticing that $\| \sqrt{f}\|_{L^{\infty}} = \sqrt{\| f \|_{L^{\infty}}} = \mathcal{N}_{2}(\lambda)^{\frac{1}{2}} $, we have
\begin{equation}\label{R2 proof 1}
    \frac{1}{2}\left\| \sqrt{f} \right\|_{L^2}^2 - \frac{5\mathcal{N}_{2}(\lambda)}{3n}\ln \frac{2}{\delta} \leq \left\|\sqrt{f}\right\|_{L^2,n}^2 \leq
      \frac{3}{2}\left\|\sqrt{f}\right\|_{L^2}^2 + \frac{5\mathcal{N}_{2}(\lambda)}{3n}\ln \frac{2}{\delta},
\end{equation}
with probability at least $ 1 - \delta$. 

On the one hand, we have 
\begin{align}
    \left\|\sqrt{f}\right\|_{L^2,n}^2  &= \int_{\mathcal{X}} f(\boldsymbol{y}) \mathrm{d}P_{n}(\boldsymbol{y}) = \int_{\mathcal{X}} \left[ \int_{\mathcal{X}} \left(T_{\lambda}^{-1}k_{\boldsymbol{x}}(\boldsymbol{y})\right)^{2} \mathrm{d}\mu(\boldsymbol{x}) \right] \mathrm{d}P_{n}(\boldsymbol{y}) \notag \\
    &= \int_{\mathcal{X}} \left[ \int_{\mathcal{X}} \left(T_{\lambda}^{-1}k_{\boldsymbol{x}}(\boldsymbol{y})\right)^{2} \mathrm{d}P_{n}(\boldsymbol{y})  \right] \mathrm{d}\mu(\boldsymbol{x})  \notag \\
    &= \int_{\mathcal{X}} \left\|{T_{\lambda}^{-1} k_{\boldsymbol{x}}}\right\|_{L^2, n}^{2} \mathrm{d}\mu(\boldsymbol{x}). \notag
\end{align}
On the other hand, we have
\begin{align}
    \left\|\sqrt{f}\right\|_{L^2}^2  &= \int_{\mathcal{X}} f(\boldsymbol{z}) \mathrm{d}\mu(\boldsymbol{z}) \notag \\
    &= \int_{\mathcal{X}} \left[ \int_{\mathcal{X}} \left(T_{\lambda}^{-1}k_{\boldsymbol{x}}(\boldsymbol{z})\right)^{2} \mathrm{d}\mu(\boldsymbol{x}) \right] \mathrm{d}\mu(\boldsymbol{z}) \notag \\
    &= \int_{\mathcal{X}} \left\|{T_{\lambda}^{-1} k_{\boldsymbol{x}}}\right\|_{L^2}^{2} \mathrm{d}\mu(\boldsymbol{x}). \notag
\end{align}
Therefore, \eqref{R2 proof 1} implies the desired results.
\end{proof}

\textit{Approximation A.}  The proof of Approximation A requires the following proposition, which is a simple but important observation by \citet{li2023_SaturationEffect}.
  \begin{proposition}\label{proposition observation}
    For any $f,g \in \mathcal{H}$, we have
    \begin{equation}
      \label{eq:SampleInnerProductsRelation}
      \langle f,g \rangle_{L^2,n} = \langle T_{\boldsymbol{X}} f,g\rangle_{\mathcal{H}} = \langle T_{\boldsymbol{X}}^{1/2} f, T_{\boldsymbol{X}}^{1/2} g\rangle_{\mathcal{H}}.
    \end{equation}
  \end{proposition}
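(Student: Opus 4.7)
The plan is to verify the two equalities directly from the definitions. Both identities are essentially algebraic consequences of the reproducing property together with the definition of $T_{\boldsymbol{X}}$ as a sum of rank-one operators $K_{\boldsymbol{x}_i}K_{\boldsymbol{x}_i}^{*}$, so no concentration or analytic machinery should be needed.

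For the first equality $\langle f,g\rangle_{L^2,n} = \langle T_{\boldsymbol{X}} f, g\rangle_{\mathcal{H}}$, I would expand $T_{\boldsymbol{X}}$ according to its definition \eqref{def of TX}, push the inner product through the finite sum, and then exploit that $K_{\boldsymbol{x}_i}$ and $K_{\boldsymbol{x}_i}^{*}$ are mutual adjoints, so that $\langle K_{\boldsymbol{x}_i} K_{\boldsymbol{x}_i}^{*} f, g\rangle_{\mathcal{H}} = \langle K_{\boldsymbol{x}_i}^{*} f, K_{\boldsymbol{x}_i}^{*} g\rangle_{\mathbb{R}}$. The reproducing property $K_{\boldsymbol{x}_i}^{*} f = f(\boldsymbol{x}_i)$ then converts this pointwise product into $f(\boldsymbol{x}_i) g(\boldsymbol{x}_i)$, and after dividing by $n$ and summing I recover precisely $\langle f,g\rangle_{L^2,n}$ as defined in \eqref{eq:SampleInnerProductL2}.

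For the second equality $\langle T_{\boldsymbol{X}} f, g\rangle_{\mathcal{H}} = \langle T_{\boldsymbol{X}}^{1/2} f, T_{\boldsymbol{X}}^{1/2} g\rangle_{\mathcal{H}}$, the key observation is that $T_{\boldsymbol{X}}$ is a finite sum of operators of the form $K_{\boldsymbol{x}_i}K_{\boldsymbol{x}_i}^{*}$, hence self-adjoint and positive semi-definite on $\mathcal{H}$. Therefore the square root $T_{\boldsymbol{X}}^{1/2}$ is well-defined, self-adjoint, and satisfies $T_{\boldsymbol{X}}^{1/2} T_{\boldsymbol{X}}^{1/2} = T_{\boldsymbol{X}}$. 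Moving one factor across the inner product via self-adjointness yields the claim immediately.

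Since both arguments are one-line calculations, there is no real obstacle; the only thing to be careful about is to invoke the mutual-adjoint relation between $K_{\boldsymbol{x}_i}$ and $K_{\boldsymbol{x}_i}^{*}$ cleanly (so that the identification $K_{\boldsymbol{x}_i}^{*} f = f(\boldsymbol{x}_i)$ happens at the scalar level) and to note explicitly that $T_{\boldsymbol{X}}$ is positive semi-definite so that the functional calculus used in $T_{\boldsymbol{X}}^{1/2}$ is justified.
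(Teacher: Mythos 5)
Your proposal is correct and matches the paper's proof in essence: the paper likewise expands $T_{\boldsymbol{X}} f = \frac{1}{n}\sum_{i=1}^n f(\boldsymbol{x}_i)k(\boldsymbol{x}_i,\cdot)$ and uses the reproducing property to obtain $\langle T_{\boldsymbol{X}} f, g\rangle_{\mathcal{H}} = \frac{1}{n}\sum_i f(\boldsymbol{x}_i)g(\boldsymbol{x}_i) = \langle f,g\rangle_{L^2,n}$, and then invokes the definition of $T_{\boldsymbol{X}}^{1/2}$ (self-adjointness of the positive square root) for the second equality. Your routing of the adjoint through $\langle K_{\boldsymbol{x}_i}^{*}f, K_{\boldsymbol{x}_i}^{*}g\rangle_{\mathbb{R}}$ is only a cosmetic reordering of the same computation.
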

  \begin{proof}
    Notice that $T_{\boldsymbol{X}} f = \frac{1}{n}\sum_{i=1}^n f(\boldsymbol{x}_i) k(\boldsymbol{x}_{i},\cdot)$, and thus
    \begin{align*}
      \langle T_{\boldsymbol{X}} f,g \rangle_{\mathcal{H}} = \frac{1}{n}\sum_{i=1}^n f(\boldsymbol{x}_i) \langle k(\boldsymbol{x}_{i},\cdot), g\rangle_{\mathcal{H}}
      = \frac{1}{n}\sum_{i=1}^n f(\boldsymbol{x}_i) g(\boldsymbol{x}_i) = \langle f,g \rangle_{L^2,n}.
    \end{align*}
    The second inequality comes from the definition of $T_{\boldsymbol{X}}^{1/2}$.
  \end{proof}
  
The following lemma characterizes the magnitude of Approximation A in high probability. 

\begin{lemma}[Approximation A]\label{lemma approximation A}
    Suppose that Assumption \ref{assumption kernel}, \ref{assumption noise} and \ref{assumption eigenfunction} hold. Define $R_{1} = R_{1}(\lambda, X)$ as a function of $\lambda$ and $X$:
    \begin{align}\label{approximation A}
    R_{1} \coloneqq \left| \int_{\mathcal{X}} \left\|{T_{\boldsymbol{X} \lambda}^{-1} k_{\boldsymbol{x}}}\right\|_{L^2, n}^{2} \mathrm{d}\mu(\boldsymbol{x}) - \int_{\mathcal{X}} \left\|{T_{\lambda}^{-1} k_{\boldsymbol{x}}}\right\|_{L^2, n}^{2} \mathrm{d}\mu(\boldsymbol{x})  \right|.
\end{align}
 Suppose that $ \lambda = \lambda(d,n)$ satisfies $\frac{\mathcal{N}_{1}(\lambda)}{n} \ln{n} = o(1)$. Then for any fixed $\delta \in (0,1)$, when $n$ is sufficiently large, with probability at least $1-\delta$, we have
    \begin{align}
        R_{1} &\le 36 n^{-1} \mathcal{N}_{1}(\lambda)^{2} \ln{n} + 12 n^{-\frac{1}{2}} \mathcal{N}_{1}(\lambda) \mathcal{N}_{2}(\lambda)^{\frac{1}{2}} \left( \ln{n} \right)^{\frac{1}{2}}.
    \end{align}
\end{lemma}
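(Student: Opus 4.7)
The plan is to apply the pointwise identity $\|u\|^2 - \|v\|^2 = \|u-v\|^2 + 2\langle v, u-v\rangle$ inside the $L^{2,n}$ seminorm with $u = T_{\boldsymbol{X}\lambda}^{-1} k_{\boldsymbol{x}}$ and $v = T_\lambda^{-1} k_{\boldsymbol{x}}$, then integrate over $\boldsymbol{x}\sim\mu$ and apply Cauchy--Schwarz to the cross term, yielding
\begin{equation*}
R_1 \;\le\; R_1' \;+\; 2\sqrt{R_1'}\cdot\sqrt{\int_\mathcal{X}\|T_\lambda^{-1} k_{\boldsymbol{x}}\|_{L^2,n}^2\,\mathrm{d}\mu(\boldsymbol{x})},
\qquad R_1' := \int_\mathcal{X}\bigl\|(T_{\boldsymbol{X}\lambda}^{-1}-T_\lambda^{-1})k_{\boldsymbol{x}}\bigr\|_{L^2,n}^2\,\mathrm{d}\mu(\boldsymbol{x}).
\end{equation*}
Lemma \ref{lemma approximation B} (approximation B) already bounds the inner integral on the right by $O(\mathcal{N}_2(\lambda))$ with high probability, so the task reduces to proving $R_1' \lesssim \mathcal{N}_1(\lambda)^2\ln n / n$. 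The two summands in the stated bound then correspond to $R_1'$ and to the cross term respectively.

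\textbf{Reducing $R_1'$ to a Hilbert--Schmidt norm.} Proposition \ref{proposition observation} gives $\|f\|_{L^2,n}^2 = \|T_{\boldsymbol{X}}^{1/2}f\|_\mathcal{H}^2$, and the mean identity $\int k_{\boldsymbol{x}}\otimes k_{\boldsymbol{x}}\,\mathrm{d}\mu = T$ on $\mathcal{H}$ recasts
\begin{equation*}
R_1' \;=\; \bigl\|T_{\boldsymbol{X}}^{1/2}(T_{\boldsymbol{X}\lambda}^{-1}-T_\lambda^{-1})T^{1/2}\bigr\|_{\mathrm{HS}}^2.
\end{equation*}
Using the resolvent identity $T_{\boldsymbol{X}\lambda}^{-1}-T_\lambda^{-1} = T_{\boldsymbol{X}\lambda}^{-1}(T-T_{\boldsymbol{X}})T_\lambda^{-1}$ and inserting $T_\lambda^{1/2}T_\lambda^{-1/2}$, I factor the argument as $(T_{\boldsymbol{X}}^{1/2} T_{\boldsymbol{X}\lambda}^{-1/2})\cdot(T_{\boldsymbol{X}\lambda}^{-1/2} T_\lambda^{1/2})\cdot(T_\lambda^{-1/2}(T-T_{\boldsymbol{X}})T_\lambda^{-1/2})\cdot(T_\lambda^{-1/2} T^{1/2})$. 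The first and fourth factors have operator norm $\le 1$ by functional calculus. The second factor requires a preliminary concentration: under the approximation condition $\mathcal{N}_1(\lambda)\ln n/n = o(1)$, operator Bernstein applied to $T_\lambda^{-1/2}(T-T_{\boldsymbol{X}})T_\lambda^{-1/2}$ forces $\|\cdot\|\le 1/2$ with high probability, so $\|T_\lambda^{-1/2}T_{\boldsymbol{X}\lambda}T_\lambda^{-1/2}\|\succeq I/2$ and hence $\|T_{\boldsymbol{X}\lambda}^{-1/2}T_\lambda^{1/2}\|\le\sqrt{2}$. Combining these pieces, $R_1'\le 2\bigl\|T_\lambda^{-1/2}(T-T_{\boldsymbol{X}})T_\lambda^{-1/2}\bigr\|_{\mathrm{HS}}^2$.

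\textbf{Hilbert--Schmidt Bernstein.} The operator $T_\lambda^{-1/2}(T-T_{\boldsymbol{X}})T_\lambda^{-1/2}$ is the centered empirical average of the rank-one HS operators $\xi_i = (T_\lambda^{-1/2}k_{\boldsymbol{x}_i})\otimes(T_\lambda^{-1/2}k_{\boldsymbol{x}_i})$. Assumption \ref{assumption eigenfunction}, inequality \eqref{assumption eigen - n1}, gives the deterministic bound $\|\xi_i\|_{\mathrm{HS}}=\|T_\lambda^{-1/2}k_{\boldsymbol{x}_i}\|_\mathcal{H}^2 \le \mathcal{N}_1(\lambda)$, while the second moment is bounded by $\mathbb{E}\|\xi_i\|_{\mathrm{HS}}^2\le \mathcal{N}_1(\lambda)\cdot\operatorname{tr}(T_\lambda^{-1}T) = \mathcal{N}_1(\lambda)^2$. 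A Bernstein inequality for i.i.d. HS-valued sums (with confidence level polynomial in $n$) yields
\begin{equation*}
\bigl\|T_\lambda^{-1/2}(T-T_{\boldsymbol{X}})T_\lambda^{-1/2}\bigr\|_{\mathrm{HS}} \;\lesssim\; \mathcal{N}_1(\lambda)\sqrt{\frac{\ln n}{n}} \;+\; \frac{\mathcal{N}_1(\lambda)\ln n}{n},
\end{equation*}
with the first summand dominating under the approximation condition. Squaring gives $R_1' \lesssim \mathcal{N}_1(\lambda)^2 \ln n /n$, and feeding this together with the $O(\mathcal{N}_2(\lambda))$ bound from Lemma \ref{lemma approximation B} back into the decomposition produces the two claimed summands.

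\textbf{Main obstacle.} The crux is that the factorization of $R_1'$ requires a Hilbert--Schmidt, not merely an operator-norm, bound on the centered sum: only the HS-norm Bernstein carries the $\mathcal{N}_1(\lambda)^2$ second-moment factor that, after squaring, produces the leading $n^{-1}\mathcal{N}_1(\lambda)^2\ln n$ term. Pinning down the sharp absolute constants ($36$ and $12$) also requires care in the simultaneous application of two concentration bounds (an operator-norm Bernstein to ensure $\|T_{\boldsymbol{X}\lambda}^{-1/2}T_\lambda^{1/2}\|\le\sqrt{2}$, and the HS-norm Bernstein) with a single high-probability event; the remainder is bookkeeping.
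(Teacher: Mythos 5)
Your proposal is correct, and while it shares the paper's overall skeleton, the key quantitative step is handled by a genuinely different device. Like the paper, you split the difference of squared seminorms into a squared-difference piece plus a cross term controlled through Lemma \ref{lemma approximation B}, pass to $\mathcal{H}$-norms via Proposition \ref{proposition observation}, use the same resolvent identity and four-factor splitting, and use operator-norm concentration (Lemma \ref{lemma concentration of operator}) to make $\|T_{\boldsymbol{X}\lambda}^{-1/2}T_\lambda^{1/2}\|$ an $O(1)$ factor. The divergence is in how the centered sum is concentrated: the paper bounds $|X_1-X_2|$ pointwise in $\boldsymbol{x}$, multiplying the operator-norm bound $\|T_\lambda^{-1/2}(T-T_{\boldsymbol{X}})T_\lambda^{-1/2}\|\lesssim\sqrt{\mathcal{N}_1(\lambda)\ln n/n}$ by the $\mu$-a.e.\ bound $\|T_\lambda^{-1/2}k_{\boldsymbol{x}}\|_{\mathcal{H}}\le\mathcal{N}_1(\lambda)^{1/2}$ (Lemma \ref{due embedding bound}) and then integrating; you instead absorb the $\mu$-integral into a Hilbert--Schmidt norm via $\int k_{\boldsymbol{x}}\otimes k_{\boldsymbol{x}}\,\mathrm{d}\mu=T$ and apply the Hilbert-space-valued Bernstein inequality (Lemma \ref{bernstein}) to the rank-one operators $\xi_i$, with $\|\xi_i\|_{\mathrm{HS}}\le\mathcal{N}_1(\lambda)$ and $\mathbb{E}\|\xi_i\|_{\mathrm{HS}}^2\le\mathcal{N}_1(\lambda)^2$. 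Both routes land on $R_1'\lesssim n^{-1}\mathcal{N}_1(\lambda)^2\ln n$ and both ultimately rest on \eqref{assumption eigen - n1}, so nothing is gained in assumptions; what your version buys is a more compact, coordinate-free bookkeeping (no pointwise-in-$\boldsymbol{x}$ argument to integrate), at the cost of needing the trace identity and the ideal property of the HS norm. One remark: your ``main obstacle'' claim that only an HS-norm bound can produce the leading term is overstated --- even within your factorization, an operator-norm bound on the centered sum combined with $\|T_\lambda^{-1/2}T^{1/2}\|_{\mathrm{HS}}=\mathcal{N}_1(\lambda)^{1/2}$ gives the same $\mathcal{N}_1(\lambda)\sqrt{\ln n/n}$, which is exactly the paper's mechanism; this does not affect the validity of your argument. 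The explicit constants $36$ and $12$ follow for sufficiently large $n$ since $\ln n\to\infty$ swallows the $\delta$-dependent factors (the paper itself is loose here, proving $24$ in \eqref{R1 bound proof final} against the $12$ in the statement), provided you run the three concentration events on a single union bound as you indicate.
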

\begin{proof}
    First, we rewrite $R_{1}$ as
\begin{align}\label{proof appr A 1}
    R_{1} &= \left| \int_{\mathcal{X}} \left\|{T_{\boldsymbol{X} \lambda}^{-1} k_{\boldsymbol{x}}}\right\|_{L^2, n}^{2} \mathrm{d}\mu(\boldsymbol{x}) - \int_{\mathcal{X}} \left\|{T_{\lambda}^{-1} k_{\boldsymbol{x}}}\right\|_{L^2, n}^{2} \mathrm{d}\mu(\boldsymbol{x})  \right| \notag \\
    &\le  \int_{\mathcal{X}} \left| \left\|{T_{\boldsymbol{X}}^{\frac{1}{2}} T_{\boldsymbol{X} \lambda}^{-1} k_{\boldsymbol{x}}}\right\|_{\mathcal{H}}^{2}  - \left\|{ T_{\boldsymbol{X}}^{\frac{1}{2}} T_{\lambda}^{-1} k_{\boldsymbol{x}}}\right\|_{\mathcal{H}}^{2}  \right| \mathrm{d}\mu(\boldsymbol{x}) \notag \\
    &= \int_{\mathcal{X}} \left| \left\|{T_{\boldsymbol{X}}^{\frac{1}{2}} T_{\boldsymbol{X} \lambda}^{-1} k_{\boldsymbol{x}}}\right\|_{\mathcal{H}} -  \left\|{ T_{\boldsymbol{X}}^{\frac{1}{2}} T_{\lambda}^{-1} k_{\boldsymbol{x}}}\right\|_{\mathcal{H}} \right| \cdot \left| \left\|{T_{\boldsymbol{X}}^{\frac{1}{2}} T_{\boldsymbol{X} \lambda}^{-1} k_{\boldsymbol{x}}}\right\|_{\mathcal{H}} + \left\|{ T_{\boldsymbol{X}}^{\frac{1}{2}} T_{\lambda}^{-1} k_{\boldsymbol{x}}}\right\|_{\mathcal{H}} \right| \mathrm{d}\mu(\boldsymbol{x}). \notag \\
    &:= \int_{\mathcal{X}} \left| X_{1} - X_{2} \right| \cdot \left| X_{1} + X_{2} \right| \mathrm{d}\mu(\boldsymbol{x}).
\end{align}
where for the second line, we use Proposition \ref{proposition observation} to transfer $ \|\|_{L^{2},n}$ norms into $\| \|_{\mathcal{H}}$. 

$\left( \text{\uppercase\expandafter{\romannumeral1}} \right): $ Given the notations of $X_{1}, X_{2}$ in \eqref{proof appr A 1} (both are functions of $\boldsymbol{x}, \boldsymbol{X}$ and $\lambda$), we begin to handle $ \left| X_{1} - X_{2} \right| $: 
\begin{align}\label{x1-x2 bound}
    \left| X_{1} - X_{2} \right| &\le \left\|{T_{\boldsymbol{X}}^{1/2}T_{\boldsymbol{X} \lambda}^{-1} \left( T - T_{\boldsymbol{X}} \right) T_{\lambda}^{-1}k_{\boldsymbol{x}}}\right\|_{\mathcal{H}} \notag \\
    &\le \left\|{T_{\boldsymbol{X}}^{1/2}T_{\boldsymbol{X} \lambda}^{-1/2}}\right\| \cdot
      \left\|{T_{\boldsymbol{X} \lambda}^{-1/2} T_{\lambda}^{1/2}}\right\|
      \cdot \left\|{T_{\lambda}^{-1/2} \left( T - T_{\boldsymbol{X}} \right)T_{\lambda}^{-1/2} }\right\|
      \cdot \left\|{T_{\lambda}^{-1/2}k_{\boldsymbol{x}}}\right\|_{\mathcal{H}}
\end{align}
$\left( \text{\lowercase\expandafter{\romannumeral1}} \right)~ $ Now we bound the third term in \eqref{x1-x2 bound}. Denote $A_{i} = T_\lambda^{-\frac{1}{2}} (T - T_{\boldsymbol{x}_{i}}) T_\lambda^{-\frac{1}{2}} $, using Lemma \ref{emb norm}, we have 
  \begin{displaymath}
      \| A_{i} \| = \| T_\lambda^{-\frac{1}{2}} T T_\lambda^{-\frac{1}{2}} \| + \| T_\lambda^{-\frac{1}{2}} T_{\boldsymbol{x}_{i}} T_\lambda^{-\frac{1}{2}} \| \le 2 \mathcal{N}_{1}(\lambda), \quad \mu \text {-a.e. } \boldsymbol{x} \in \mathcal{X}.
  \end{displaymath}
  We use $ A \preceq B$ to denote that $ A-B$ is a positive semi-definite operator. Using the fact that $\mathbb{E}(B-\mathbb{E} B)^2 \preceq \mathbb{E} B^2$ for a self-adjoint operator $B$, we have
  \begin{displaymath}
    \mathbb{E} A_{i}^{2} \preceq \mathbb{E}\left[T_\lambda^{-\frac{1}{2}} T_{\boldsymbol{x}_{i}} T_\lambda^{-\frac{1}{2}}\right]^2.
    \end{displaymath}
  In addition, Lemma \ref{emb norm} shows that $0 \preceq T_\lambda^{-\frac{1}{2}} T_{\boldsymbol{x}_{i}} T_\lambda^{-\frac{1}{2}} \preceq \mathcal{N}_{1}(\lambda), \mu \text {-a.e. } \boldsymbol{x} \in \mathcal{X}$. So we have
  \begin{displaymath}
     \mathbb{E} A_{i}^{2}  \preceq \mathbb{E} \left[T_\lambda^{-\frac{1}{2}} T_{\boldsymbol{x}_{i}} T_\lambda^{-\frac{1}{2}}\right]^{2} \preceq \mathbb{E}\left[ \mathcal{N}_{1}(\lambda) \cdot T_\lambda^{-\frac{1}{2}} T_{\boldsymbol{x}_{i}} T_\lambda^{-\frac{1}{2}}\right] = \mathcal{N}_{1}(\lambda) T_{\lambda}^{-1} T,
  \end{displaymath}
  Define an operator $V := \mathcal{N}_{1}(\lambda) T_{\lambda}^{-1} T$, we have  
  \begin{align}
      \| V \| &= \mathcal{N}_{1}(\lambda) \frac{\lambda_{1}}{\lambda_{1} + \lambda} = \mathcal{N}_{1}(\lambda) \frac{\|T\|}{\|T\| + \lambda} \le \mathcal{N}_{1}(\lambda); \notag \\
      \text{tr}V &= \mathcal{N}_{1}(\lambda)^{2}; \notag \\
      \frac{\text{tr}V}{ \| V \|} &= \frac{\mathcal{N}_{1}(\lambda) (\|T\| + \lambda)}{\|T\|}. \notag
  \end{align}
  Using Lemma \ref{lemma concentration of operator} to $A_{i}$, $V$, for any fixed $\delta \in (0,1)$, with probability at least $1-\delta$, we have
    \begin{displaymath}
        \Vert T_\lambda^{-\frac{1}{2}} (T - T_{\boldsymbol{X}}) T_\lambda^{-\frac{1}{2}} \Vert
        \le \frac{4 \mathcal{N}_{1}(\lambda)}{3n} \beta + \sqrt {\frac{2 \mathcal{N}_{1}(\lambda)}{n} \beta},
   \end{displaymath}
   where 
   \begin{displaymath}
       \beta = \ln{\frac{4 \mathcal{N}_{1}(\lambda) (\|T\| + \lambda) }{\delta \|T\|}}.
   \end{displaymath}
Further recall that the condition $ \mathcal{N}_{1}(\lambda) \ln{n}/n = o(1)$ implies that $\mathcal{N}_{1}(\lambda) = O(n)$ and thus $\beta = O(\ln n)$, so when $n$ is sufficiently large, we can conclude that
\begin{equation}\label{x1-x2-1}
    \Vert T_\lambda^{-\frac{1}{2}} (T - T_{\boldsymbol{X}}) T_\lambda^{-\frac{1}{2}} \Vert \le \sqrt {\frac{2 \mathcal{N}_{1}(\lambda)}{n} \beta}
        \le n^{-\frac{1}{2}} \mathcal{N}_{1}(\lambda)^{\frac{1}{2}} \left( \ln{n} \right)^{\frac{1}{2}}.
\end{equation}
$\left( \text{\lowercase\expandafter{\romannumeral2}} \right)~ $Next we bound the forth term in \eqref{x1-x2 bound}. Using Lemma \ref{due embedding bound}, we have
\begin{equation}\label{x1-x2-2}
    \|T_{\lambda}^{-\frac{1}{2}} k(\boldsymbol{x},\cdot) \|_{\mathcal{H}} \le \mathcal{N}_{1}(\lambda)^{\frac{1}{2}},~~ \mu \text {-a.e. } \boldsymbol{x} \in \mathcal{X}.
\end{equation}
$\left( \text{\lowercase\expandafter{\romannumeral3}} \right)~ $Finally, we bound the first two terms in \eqref{x1-x2 bound}. Since we have assumed $ \mathcal{N}_{1}(\lambda) \ln{n} / n = o(1)$, \eqref{x1-x2-1} implies that when $n$ is sufficiently large, we have
\begin{displaymath}
    a := \Vert T_\lambda^{-\frac{1}{2}} (T - T_{\boldsymbol{X}}) T_\lambda^{-\frac{1}{2}} \Vert \le \frac{2}{3}.
\end{displaymath}
Therefore, we have
   \begin{align}\label{x1-x2 3}
   \left\|T_\lambda^{-\frac{1}{2}} T_{\boldsymbol{X} \lambda}^{\frac{1}{2}}\right\|^2 & =\left\|T_\lambda^{-\frac{1}{2}} T_{\boldsymbol{X} \lambda} T_\lambda^{-\frac{1}{2}}\right\|=\left\|T_\lambda^{-\frac{1}{2}}\left(T_{\boldsymbol{X}}+\lambda\right) T_\lambda^{-\frac{1}{2}}\right\| \notag \\ & =\left\|T_\lambda^{-\frac{1}{2}}\left(T_{\boldsymbol{X}}-T+T+\lambda\right) T_\lambda^{-\frac{1}{2}}\right\| \notag \\ 
   & =\left\|T_\lambda^{-\frac{1}{2}}\left(T_{\boldsymbol{X}}-T\right) T_\lambda^{-\frac{1}{2}}+I\right\| \notag\\ 
   & \leq a+1 \leq 2; 
   \end{align}
   and
   \begin{align}\label{x1-x2 4}
    \left\|T_\lambda^{\frac{1}{2}} T_{\boldsymbol{X} \lambda}^{-\frac{1}{2}}\right\|^2 & =\left\|T_\lambda^{\frac{1}{2}} T_{\boldsymbol{X} \lambda}^{-1} T_\lambda^{\frac{1}{2}}\right\|=\left\|\left(T_\lambda^{-\frac{1}{2}} T_{\boldsymbol{X} \lambda} T_\lambda^{-\frac{1}{2}}\right)^{-1}\right\| \notag \\
    &= \left\|\left(I - T_{\lambda}^{-\frac{1}{2}} (T_{\boldsymbol{X}}-T) T_{\lambda}^{-\frac{1}{2}} \right)^{-1}\right\| \notag \\
    &\le \sum\limits_{k=0}^{\infty} \left\| T_{\lambda}^{-\frac{1}{2}} (T_{\boldsymbol{X}}-T) T_{\lambda}^{-\frac{1}{2}} \right\|^{k}  \notag \\
    &\le \sum\limits_{k=0}^{\infty} \left(\frac{2}{3}\right)^{k} \le 3. 
    \end{align}
Plugging \eqref{x1-x2-1}, \eqref{x1-x2-2}, \eqref{x1-x2 3} and \eqref{x1-x2 4}, into \eqref{x1-x2 bound}, when $n$ is sufficiently large, with probability at least $1-\delta$, we have
\begin{equation}\label{x1-x2 bound final}
    | X_{1} -X_{2} | \le 6 n^{-\frac{1}{2}} \mathcal{N}_{1}(\lambda) \left( \ln{n} \right)^{\frac{1}{2}},~~ \mu \text {-a.e. } \boldsymbol{x} \in \mathcal{X}.
\end{equation}
$\left( \text{\uppercase\expandafter{\romannumeral2}} \right): $ Further, when $n$ is sufficiently large, with probability at least $1-\delta$, we also have
\begin{align}\label{x2 int}
    \int_{\mathcal{X}} X_{2} \mathrm{d} \mu(\boldsymbol{x}) &= \int_{\mathcal{X}} \left\|{T_{\lambda}^{-1} k_{\boldsymbol{x}}}\right\|_{L^2, n} \mathrm{d} \mu(\boldsymbol{x}) \notag \\
    &\le \left[  \int_{\mathcal{X}} \left\|{T_{\lambda}^{-1} k_{\boldsymbol{x}}}\right\|_{L^2, n}^{2}\mathrm{d} \mu(\boldsymbol{x}) \right]^{\frac{1}{2}} \notag \\
    &\le \left(\frac{3}{2}\mathcal{N}_{2}(\lambda) + R_{2}\right)^{\frac{1}{2}} \notag \\
    &\le \left(2 \mathcal{N}_{2}(\lambda)\right)^{\frac{1}{2}},
\end{align}
where the third line follows from Lemma \ref{lemma approximation B}.

Now we are ready to derive the upper bound of $R_{1}$. Combining the bound \eqref{x1-x2 bound final} and \eqref{x2 int}, when $n$ is sufficiently large, with probability at least $1-2\delta$, we have
\begin{align}\label{R1 bound proof final}
    R_{1} &\le \int_{\mathcal{X}} \left| X_{1} - X_{2} \right| \cdot \left| X_{1} + X_{2} \right| \mathrm{d}\mu(\boldsymbol{x}) \notag \\
    &= \int_{\mathcal{X}} \left| X_{1} - X_{2} \right| \cdot \left| X_{1} - X_{2} + 2 X_{2} \right| \mathrm{d}\mu(\boldsymbol{x}) \notag \\
    &\le \int_{\mathcal{X}} \left| X_{1} - X_{2} \right|^{2}\mathrm{d}\mu(\boldsymbol{x}) + \int_{\mathcal{X}} \left| X_{1} - X_{2} \right| \cdot 2 X_{2} ~ \mathrm{d}\mu(\boldsymbol{x}) \notag \\
    &\le 36 n^{-1} \mathcal{N}_{1}(\lambda)^{2}\ln{n} + 24 n^{-\frac{1}{2}} \mathcal{N}_{1}(\lambda) \mathcal{N}_{2}(\lambda)^{\frac{1}{2}} \left( \ln{n} \right)^{\frac{1}{2}}.
\end{align}
Without loss of generality, we can assume \eqref{R1 bound proof final} holds with probability at least $ 1- \delta$ and we finish the proof.
\end{proof}

\textit{Final proof of the variance term.} Now we are ready to state the theorem about the variance term. 

\begin{theorem}[Variance term]\label{theorem variance approximation}
    Suppose that Assumption \ref{assumption kernel}, \ref{assumption noise} and \ref{assumption eigenfunction} hold. If the following approximation conditions hold for some $\lambda = \lambda(d,n) \to 0$:
    \begin{align}\label{var conditions}
    \frac{\mathcal{N}_{1}(\lambda)}{n} \ln{n} = o(1); ~~n^{-1} \mathcal{N}_{1}(\lambda)^{2} \ln{n} = o(\mathcal{N}_{2}(\lambda)),
\end{align}
then we have
\begin{equation}
    \mathbf{Var}(\lambda) = \Theta_{\mathbb{P}}\left(\frac{\sigma^{2} \mathcal{N}_{2}(\lambda)}{n}\right).
\end{equation}
\end{theorem}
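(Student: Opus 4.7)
By Lemma \ref{lemma var trans}, the variance term rewrites as
\begin{displaymath}
    \mathbf{Var}(\lambda) = \frac{\sigma^2}{n}\int_{\mathcal{X}} \bigl\|T_{\boldsymbol{X}\lambda}^{-1} k_{\boldsymbol{x}}\bigr\|_{L^2,n}^2 \,\mathrm{d}\mu(\boldsymbol{x}),
\end{displaymath}
so the plan is to show that the integral above is $\Theta_{\mathbb{P}}(\mathcal{N}_2(\lambda))$ by executing the two-step approximation chain \eqref{eq:4_2Step}. The ``anchor'' quantity is obtained from Mercer's decomposition: since $k_{\boldsymbol{x}} = \sum_i \lambda_i^{1/2} e_i(\boldsymbol{x})\cdot \lambda_i^{1/2} e_i$ in $\mathcal{H}$, a direct calculation gives $T_\lambda^{-1} k_{\boldsymbol{x}} = \sum_i \frac{\lambda_i}{\lambda_i+\lambda} e_i(\boldsymbol{x})\, e_i$, and therefore
\begin{displaymath}
    \int_{\mathcal{X}} \bigl\|T_\lambda^{-1} k_{\boldsymbol{x}}\bigr\|_{L^2}^2 \,\mathrm{d}\mu(\boldsymbol{x}) = \sum_{i=1}^\infty \Bigl(\tfrac{\lambda_i}{\lambda_i+\lambda}\Bigr)^2 = \mathcal{N}_2(\lambda).
\end{displaymath}

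Next I would apply Lemma \ref{lemma approximation B} with a fixed $\delta\in(0,1)$, which yields, with probability at least $1-\delta$,
\begin{displaymath}
    \tfrac{1}{2}\mathcal{N}_2(\lambda) - R_2 \;\le\; \int_{\mathcal{X}} \bigl\|T_\lambda^{-1} k_{\boldsymbol{x}}\bigr\|_{L^2,n}^2\,\mathrm{d}\mu(\boldsymbol{x}) \;\le\; \tfrac{3}{2}\mathcal{N}_2(\lambda) + R_2,
\end{displaymath}
with $R_2 = \tfrac{5\mathcal{N}_2(\lambda)}{3n}\ln(2/\delta)$. Since $R_2 = o(\mathcal{N}_2(\lambda))$ as $n\to\infty$, the empirical-norm version of the integral is already pinned to the order $\mathcal{N}_2(\lambda)$ with high probability. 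Then I would invoke Lemma \ref{lemma approximation A} under the assumed conditions \eqref{var conditions}: the lemma gives
\begin{displaymath}
    R_1 \;\le\; 36\, n^{-1}\mathcal{N}_1(\lambda)^2 \ln n + 12\, n^{-1/2}\mathcal{N}_1(\lambda)\mathcal{N}_2(\lambda)^{1/2}(\ln n)^{1/2}
\end{displaymath}
with high probability, and both terms are $o(\mathcal{N}_2(\lambda))$: the first is exactly the second condition in \eqref{var conditions}, and the second is its square root combined with $\mathcal{N}_2(\lambda)^{1/2}$ (equivalently, $(n^{-1}\mathcal{N}_1(\lambda)^2 \ln n)^{1/2}\cdot \mathcal{N}_2(\lambda)^{1/2} = o(\mathcal{N}_2(\lambda))$).

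Chaining the two displays via a union bound, with probability at least $1-2\delta$ and for $n$ large enough,
\begin{displaymath}
    \tfrac{1}{3}\mathcal{N}_2(\lambda) \;\le\; \int_{\mathcal{X}} \bigl\|T_{\boldsymbol{X}\lambda}^{-1} k_{\boldsymbol{x}}\bigr\|_{L^2,n}^2\,\mathrm{d}\mu(\boldsymbol{x}) \;\le\; 2\,\mathcal{N}_2(\lambda),
\end{displaymath}
which after multiplying by $\sigma^2/n$ gives $\mathbf{Var}(\lambda) = \Theta_{\mathbb{P}}(\sigma^2 \mathcal{N}_2(\lambda)/n)$. The only subtlety I anticipate is bookkeeping: both lemmas are stated with probabilities $1-\delta$, so I should fix a single $\delta$ throughout and verify that the ``$n$ sufficiently large'' thresholds from Lemmas \ref{lemma approximation B} and \ref{lemma approximation A} can be taken simultaneously, which follows from $\mathcal{N}_1(\lambda)\ln n / n = o(1)$ in \eqref{var conditions}. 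No genuinely hard step remains; the real work has already been done in Lemmas \ref{lemma approximation A} and \ref{lemma approximation B}.
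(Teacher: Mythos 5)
Your proposal is correct and follows essentially the same route as the paper's own proof: rewrite $\mathbf{Var}(\lambda)$ via Lemma \ref{lemma var trans}, identify $\int_{\mathcal{X}}\|T_\lambda^{-1}k_{\boldsymbol{x}}\|_{L^2}^2\,\mathrm{d}\mu(\boldsymbol{x})=\mathcal{N}_2(\lambda)$, and then use Lemmas \ref{lemma approximation B} and \ref{lemma approximation A} to show $R_2$ and $R_1$ are $o_{\mathbb{P}}(\mathcal{N}_2(\lambda))$ under \eqref{var conditions}, exactly as the paper does. Your bookkeeping of the two high-probability events and the verification that both terms in the $R_1$ bound are $o(\mathcal{N}_2(\lambda))$ match the paper's argument.
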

\begin{proof}
    Lemma \ref{lemma var trans} has shown that
    \begin{displaymath}
      \mathbf{Var}(\lambda) = \frac{\sigma^2}{n} \int_{\mathcal{X}} \left\|(T_{\boldsymbol{X}}+\lambda)^{-1}k_{\boldsymbol{x}}(\cdot)\right\|_{L^2,n}^2 \mathrm{d} \mu(\boldsymbol{x}).
    \end{displaymath}
Denote $R_{1}$ as in Lemma \ref{lemma approximation A}, then conditions \eqref{var conditions} and Lemma \ref{lemma approximation A} imply that 
\begin{displaymath}
    R_{1} = o_{\mathbb{P}}\left(\mathcal{N}_{2}(\lambda)\right).
\end{displaymath}
Further recall that in Lemma \ref{lemma approximation B}, we have defined
\begin{displaymath}
     R_{2} = \frac{5\mathcal{N}_{2}(\lambda)}{3n}\ln \frac{2}{\delta}.
\end{displaymath}
Then on the one hand, Lemma \ref{lemma approximation B} shows that, for any $\delta \in (0,1)$, when $n$ is sufficiently large, with probability at least $1-\delta$, we have
\begin{align}
    n \mathrm{\textbf{Var}}(\lambda) / \sigma^{2} = \int_{\mathcal{X}} \left\|{T_{\boldsymbol{X} \lambda}^{-1} k_{\boldsymbol{x}}}\right\|_{L^2, n}^{2} \mathrm{d}\mu(\boldsymbol{x}) &\le  \int_{\mathcal{X}} \left\|{T_{\lambda}^{-1} k_{\boldsymbol{x}}}\right\|_{L^2, n}^{2} \mathrm{d}\mu(\boldsymbol{x}) + R_{1} \notag \\
    &\le \frac{3}{2} \int_{\mathcal{X}} \left\|{T_{\lambda}^{-1} k_{\boldsymbol{x}}}\right\|_{L^2}^{2} \mathrm{d}\mu(\boldsymbol{x}) + R_{1} + R_{2} \notag \\
    &= \frac{3}{2} \mathcal{N}_{2}(\lambda) + R_{1} + R_{2}, \notag
\end{align}
which further implies
\begin{equation}\label{var appr upper}
    n \mathrm{\textbf{Var}}(\lambda) / \sigma^{2} = O_{\mathbb{P}}\left(\mathcal{N}_{2}(\lambda)\right).
\end{equation}
On the other hand, we also have
\begin{align}
    n \mathrm{\textbf{Var}}(\lambda) / \sigma^{2} = \int_{\mathcal{X}} \left\|{T_{\boldsymbol{X} \lambda}^{-1} k_{\boldsymbol{x}}}\right\|_{L^2, n}^{2} \mathrm{d}\mu(\boldsymbol{x}) &\ge  \int_{\mathcal{X}} \left\|{T_{\lambda}^{-1} k_{\boldsymbol{x}}}\right\|_{L^2, n}^{2} \mathrm{d}\mu(\boldsymbol{x}) - R_{1} \notag \\
    &\ge \frac{1}{2} \int_{\mathcal{X}} \left\|{T_{\lambda}^{-1} k_{\boldsymbol{x}}}\right\|_{L^2}^{2} \mathrm{d}\mu(\boldsymbol{x}) - R_{1} - R_{2} \notag \\
    &= \frac{1}{2} \mathcal{N}_{2}(\lambda) - R_{1} - R_{2}, \notag
\end{align}
which further implies
\begin{equation}\label{var appr lower}
    n \mathrm{\textbf{Var}}(\lambda) / \sigma^{2} = \Omega_{\mathbb{P}}\left(\mathcal{N}_{2}(\lambda)\right).
\end{equation}
Combining \eqref{var appr upper} and \eqref{var appr lower}, we finish the proof.
\end{proof}

\subsection{Bias term}\label{section bias term}
In this subsection, our goal is to derive Theorem \ref{theorem bias approximation}, which shows the upper and lower bounds of bias under some approximation conditions. 

The triangle inequality implies that 
\begin{equation}\label{proof bias thm-1}
    \mathrm{\textbf{Bias}}(\lambda) = \left\| \tilde{f}_{\lambda} - f_{\rho}^{*}\right\|_{L^{2}} \ge \left\| f_{\lambda} - f_{\rho}^{*}\right\|_{L^{2}} - \left\| \tilde{f}_{\lambda} - f_{\lambda}\right\|_{L^{2}},
\end{equation}
where $ \tilde{f}_{\lambda}, f_{\lambda}$ are defined as \eqref{def tilde f lambda} and \eqref{def f lambda}.\\

The following lemma characterizes the dominant term of $\mathrm{\textbf{Bias}}(\lambda) $.
\begin{lemma}\label{lemma bias main term}
    Suppose that Assumption \ref{assumption kernel} and \ref{assumption noise} hold. Then for any $ \lambda = \lambda(d,n) \to 0$, we have
    \begin{equation}\label{goal bias bound}
    \left\| f_{\lambda} - f_{\rho}^{*}\right\|_{L^{2}} = \mathcal{M}_{2}(\lambda)^{\frac{1}{2}}.
\end{equation}
\end{lemma}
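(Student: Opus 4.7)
The plan is a direct computation in the eigenbasis, exploiting the fact that the relevant operators are simultaneously diagonalized by $\{e_i\}$ (in $L^2$) and $\{\lambda_i^{1/2} e_i\}$ (in $\mathcal{H}$). Since the true function is written as $f_\rho^* = \sum_{i=1}^\infty f_i e_i$, we only need to track how $(T+\lambda)^{-1} S_k^*$ acts coordinate-wise on this expansion.

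First, I would determine the action of $S_k^*$ on each $e_i$. Using that $\{\lambda_i^{1/2} e_i\}_{i \in N}$ is an ONB of $\mathcal{H}$ and the adjoint relation $\langle S_k^* e_j, \lambda_i^{1/2} e_i \rangle_{\mathcal{H}} = \langle e_j, S_k (\lambda_i^{1/2} e_i)\rangle_{L^2} = \lambda_i^{1/2}\delta_{ij}$, Parseval in $\mathcal{H}$ gives $S_k^* e_j = \lambda_j e_j$ (regarded as an element of $\mathcal{H}$, whose $\mathcal{H}$-norm is $\lambda_j^{1/2}$). Consequently, $S_k^* f_\rho^* = \sum_i f_i \lambda_i e_i \in \mathcal{H}$.

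Next, since $T(\lambda_i^{1/2} e_i) = \lambda_i \cdot \lambda_i^{1/2} e_i$, the operator $(T+\lambda)^{-1}$ acts on the $\mathcal{H}$-vector $\lambda_i e_i = \lambda_i^{1/2} \cdot (\lambda_i^{1/2} e_i)$ by multiplication by $(\lambda_i+\lambda)^{-1}$, yielding $(T+\lambda)^{-1}(\lambda_i e_i) = \frac{\lambda_i}{\lambda_i+\lambda} e_i$. Therefore
\begin{equation*}
f_\lambda \;=\; (T+\lambda)^{-1} S_k^* f_\rho^* \;=\; \sum_{i=1}^\infty \frac{\lambda_i}{\lambda_i+\lambda}\, f_i \, e_i,
\end{equation*}
and subtracting $f_\rho^*$ gives $f_\lambda - f_\rho^* = -\sum_i \frac{\lambda}{\lambda_i+\lambda} f_i e_i$. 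Since $\{e_i\}$ is orthonormal in $L^2$, Parseval yields
\begin{equation*}
\|f_\lambda - f_\rho^*\|_{L^2}^2 \;=\; \sum_{i=1}^\infty \left(\frac{\lambda}{\lambda_i+\lambda}\right)^2 f_i^2 \;=\; \mathcal{M}_2(\lambda),
\end{equation*}
which is the claim. There is no genuine obstacle here; the only point requiring a brief justification is that the expansion $f_\rho^* = \sum_i f_i e_i$ actually captures all of $f_\rho^*$, i.e., that $f_\rho^*$ has no component orthogonal to $\overline{\operatorname{ran} S_k}$. This is implicit in the notation introduced before \eqref{n1 n2 m1 m2} and (for the applications of interest) follows from the source condition $f_\rho^* \in [\mathcal{H}]^s = \operatorname{Ran} L_k^{s/2} \subseteq \overline{\operatorname{ran} S_k}$, so the decomposition is exact and no additional approximation condition on $\lambda$ is needed for this lemma.
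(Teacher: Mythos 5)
Your proposal is correct and follows essentially the same route as the paper: the paper's proof also expands $f_\lambda = \sum_i \frac{\lambda_i}{\lambda_i+\lambda} f_i e_i$ in the eigenbasis and applies Parseval in $L^2$ to get $\mathcal{M}_2(\lambda)$. You merely make explicit the diagonalization step and the fact that $f_\rho^*$ lies in $\overline{\operatorname{ran} S_k}$, which the paper leaves implicit.
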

\begin{proof}
    Recall that we have defined $ f_{\rho}^{*} = \sum\limits_{i=1}^{\infty} f_{i} e_{i}(\boldsymbol{x}) \in L^{2}(\mathcal{X},\mu)$ and $ f_{\lambda} = \left( T + \lambda \right)^{-1} S_{k}^{*} f_{\rho}^{*}$. Therefore, we have
    \begin{align}
        \left\| f_{\lambda} - f_{\rho}^{*}\right\|_{L^{2}}^{2} &= \left\|\sum\limits_{i=1}^{\infty} f_{i} e_{i}(\boldsymbol{x}) - \sum\limits_{i=1}^{\infty} \frac{\lambda_{i}}{\lambda_{i} +  \lambda} f_{i} e_{i}(\boldsymbol{x}) \right\|_{L^{2}}^{2} \notag \\
        &= \left\|\sum\limits_{i=1}^{\infty} \frac{\lambda}{\lambda_{i} +  \lambda} f_{i} e_{i}(\boldsymbol{x}) \right\|_{L^{2}}^{2} \notag \\
        &= \sum\limits_{i=1}^{\infty} \left(\frac{\lambda}{\lambda_{i} +  \lambda} f_{i}\right)^{2} \notag \\
        &= \mathcal{M}_{2}(\lambda). \notag
    \end{align}
\end{proof}

Our next goal is to prove that second term in \eqref{proof bias thm-1} is higher order infinitesimal, i.e., $ \left\| \tilde{f}_{\lambda} - f_{\lambda}\right\|_{L^{2}} = o_{\mathbb{P}}(\mathcal{M}_{2}(\lambda)^{\frac{1}{2}})$.

\begin{lemma}\label{lemma bias appr term}
Suppose that Assumption \ref{assumption kernel}, \ref{assumption noise} and \ref{assumption eigenfunction} hold. If the following approximation conditions hold for some $\lambda = \lambda(d,n) \to 0$:
    \begin{align}\label{bias conditions}
    \frac{\mathcal{N}_{1}(\lambda)}{n} \ln{n} = o(1);~~ n^{-1} \mathcal{N}_{1}(\lambda)^{\frac{1}{2}} \mathcal{M}_{1}(\lambda) = o\left(\mathcal{M}_{2}(\lambda)^{\frac{1}{2}}\right),
\end{align}
then we have
    \begin{equation}
        \left\| \tilde{f}_{\lambda} - f_{\lambda}\right\|_{L^{2}} = o_{\mathbb{P}}(\mathcal{M}_{2}(\lambda)^{\frac{1}{2}}).
    \end{equation}
\end{lemma}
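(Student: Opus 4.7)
My approach is to express $\tilde{f}_\lambda - f_\lambda$ as an empirical mean of i.i.d.\ centered $\mathcal{H}$-valued random variables and then apply a Bernstein-type concentration inequality in Hilbert space. Starting from $\tilde{f}_\lambda = T_{\boldsymbol{X}\lambda}^{-1}\tilde{g}_{\boldsymbol{Z}}$ and $(T+\lambda)f_\lambda = g$, I write
$$\tilde{f}_\lambda - f_\lambda = T_{\boldsymbol{X}\lambda}^{-1}\bigl[\tilde{g}_{\boldsymbol{Z}} - (T_{\boldsymbol{X}}+\lambda)f_\lambda\bigr].$$
The identity $T_{\boldsymbol{X}}f_\lambda = \frac{1}{n}\sum_i K_{\boldsymbol{x}_i}f_\lambda(\boldsymbol{x}_i)$ yields $\tilde{g}_{\boldsymbol{Z}} - T_{\boldsymbol{X}}f_\lambda = \frac{1}{n}\sum_i K_{\boldsymbol{x}_i}\,h(\boldsymbol{x}_i)$ with $h:=f_\rho^*-f_\lambda$, while $\mathbb{E}[K_{\boldsymbol{x}}\,h(\boldsymbol{x})] = S_k^* h = g - Tf_\lambda = \lambda f_\lambda$. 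Combining these gives the clean decomposition
$$\tilde{f}_\lambda - f_\lambda = T_{\boldsymbol{X}\lambda}^{-1}\cdot\frac{1}{n}\sum_{i=1}^n \xi_i,\qquad \xi_i := K_{\boldsymbol{x}_i}\,h(\boldsymbol{x}_i)-\lambda f_\lambda,$$
where the $\xi_i$'s are i.i.d.\ and mean zero in $\mathcal{H}$.

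To transfer to the $L^2$-norm I use $\|g\|_{L^2}^2 = \|T^{1/2}g\|_{\mathcal{H}}^2$ for $g\in\mathcal{H}$, the trivial $\|T^{1/2}T_\lambda^{-1/2}\|\le 1$, and the high-probability operator bound $\|T_\lambda^{1/2}T_{\boldsymbol{X}\lambda}^{-1/2}\|^2\le 3$ already established in the proof of Lemma \ref{lemma approximation A} (which is where the first condition in \eqref{bias conditions} is consumed). Together these reduce the target to showing
$$\bigl\|T_\lambda^{-1/2}\cdot\tfrac{1}{n}\textstyle\sum_i\xi_i\bigr\|_{\mathcal{H}} = o_{\mathbb{P}}\bigl(\mathcal{M}_2(\lambda)^{1/2}\bigr).$$

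I apply a Hilbert-space Bernstein inequality to $Y_i := T_\lambda^{-1/2}\xi_i$. Assumption \ref{assumption eigenfunction} gives the pointwise estimate $\|T_\lambda^{-1/2}k(\boldsymbol{x},\cdot)\|_{\mathcal{H}}^2\le\mathcal{N}_1(\lambda)$, so the variance obeys $\mathbb{E}\|Y_i\|_{\mathcal{H}}^2 \le \mathbb{E}[h(\boldsymbol{x})^2\,\mathcal{N}_1(\lambda)] = \mathcal{N}_1(\lambda)\,\mathcal{M}_2(\lambda)$, where the second equality uses Lemma \ref{lemma bias main term} that $\|h\|_{L^2}^2=\mathcal{M}_2(\lambda)$. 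The almost-sure bound reads
$$\|Y_i\|_{\mathcal{H}}\le \mathcal{N}_1(\lambda)^{1/2}\,\mathcal{M}_1(\lambda) + \lambda\|T_\lambda^{-1/2}f_\lambda\|_{\mathcal{H}},$$
where the second term is controlled by a direct Mercer computation $\lambda^2\|T_\lambda^{-1/2}f_\lambda\|_{\mathcal{H}}^2 = \sum_i \lambda^2 f_i^2\,\lambda_i/(\lambda_i+\lambda)^3 \le \mathcal{M}_2(\lambda)$ and is therefore dominated by the first. Bernstein then yields the high-probability rate
$$\bigl\|T_\lambda^{-1/2}\tfrac{1}{n}\textstyle\sum_i\xi_i\bigr\|_{\mathcal{H}} = O_{\mathbb{P}}\!\left(\frac{\mathcal{N}_1(\lambda)^{1/2}\mathcal{M}_1(\lambda)}{n}\ln n + \sqrt{\tfrac{\mathcal{N}_1(\lambda)\mathcal{M}_2(\lambda)}{n}\ln n}\right),$$
and the two approximation conditions \eqref{bias conditions} are exactly what is needed to make each term $o(\mathcal{M}_2(\lambda)^{1/2})$.

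The main obstacle is obtaining a usable uniform Bernstein bound: a naive pointwise estimate on $K_{\boldsymbol{x}_i}h(\boldsymbol{x}_i)$ would require controlling $\|h\|_{L^\infty}$, and one might worry that $\|f_\rho^*\|_{L^\infty}$ is infinite (as can happen in the mis-specified regime $s<1/2$). The key observation is that $\mathcal{M}_1(\lambda)$ is precisely the essential supremum of the \emph{regularized} residual $h(\boldsymbol{x})=\sum_i\frac{\lambda}{\lambda_i+\lambda}f_i e_i(\boldsymbol{x})$, so the Tikhonov filter tames the potential $L^\infty$-blowup of $f_\rho^*$; pairing this with the dimension-free effective dimension $\mathcal{N}_1(\lambda)$ produces the sharp Bernstein scale that gives the second condition in \eqref{bias conditions} its natural interpretation.
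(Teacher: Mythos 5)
Your proposal is correct and follows essentially the same route as the paper's proof: the same $T_{\boldsymbol{X}\lambda}^{-1}$ decomposition, the same reduction via $\|S_k T_\lambda^{-1/2}\|\le 1$ and the high-probability bound $\|T_\lambda^{1/2}T_{\boldsymbol{X}\lambda}^{-1/2}\|^2\le 3$ (which is exactly where the first condition in \eqref{bias conditions} is consumed), and the same Hilbert-space Bernstein step applied to the $T_\lambda^{-1/2}$-weighted centered terms with scale $L\asymp\mathcal{N}_1(\lambda)^{1/2}\mathcal{M}_1(\lambda)$ and variance $\asymp\mathcal{N}_1(\lambda)\mathcal{M}_2(\lambda)$. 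One cosmetic remark: since $o_{\mathbb{P}}$ only requires a fixed confidence level $\delta$, the Bernstein bound carries a constant factor $\log(2/\delta)$ rather than your $\ln n$ factors, and with that form the second condition absorbs the $L/n$ term exactly as stated, just as in the paper.
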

\begin{proof}
To begin with, be definition, we rewrite \eqref{goal bias bound} as follows
\begin{align}\label{appr proof-0}
    \left\|\tilde{f}_\lambda - f_{\lambda}\right\|_{L_{2}} & =\left\| S_{k} \left(\tilde{f}_\lambda-f_\lambda\right)\right\|_{L^{2}} \notag \\
    & =\left\| S_{k} T_\lambda^{-\frac{1}{2}} \cdot T_\lambda^{\frac{1}{2}} T_{\boldsymbol{X} \lambda}^{-1} T_\lambda^{\frac{1}{2}} \cdot T_\lambda^{-\frac{1}{2}} T_{\boldsymbol{X} \lambda} \left(\tilde{f}_\lambda-f_\lambda\right)\right\|_{L^{2}} \notag \\
    &\leq \left\| S_{k} T_\lambda^{-\frac{1}{2}}\right\|_{\mathscr{B}\left(\mathcal{H},L^{2}\right)} \cdot \left\| T_{\lambda}^{\frac{1}{2}} T_{\boldsymbol{X} \lambda}^{-1} T_{\lambda}^{\frac{1}{2}} \right\|_{\mathscr{B}\left(\mathcal{H},\mathcal{H}\right)} \cdot \left\| T_{\lambda}^{-\frac{1}{2}} \left( \tilde{g}_{\boldsymbol{Z}} - T_{\boldsymbol{X} \lambda} f_{\lambda} \right) \right\|_{\mathcal{H}}.
\end{align}
$\left( \text{\lowercase\expandafter{\romannumeral1}} \right)~$ For any $f \in \mathcal{H}$ and $\|f\|_{\mathcal{H}}=1$, suppose that $f = \sum\limits_{i=1}^{\infty} a_{i} \lambda_{i}^{1/2} e_{i}$ satisfying that $ \sum\limits_{i=1}^{\infty} a_{i}^{2} = 1$. So for the first term in \eqref{appr proof-0}, we have
\begin{align}\label{bias appr plug 1}
    \left\| S_{k} T_\nu^{-\frac{1}{2}}\right\|_{\mathscr{B}\left(\mathcal{H},L^{2}\right)} &= \sup_{\|f\|_{\mathcal{H}}=1} \left\|S_{k} T_\nu^{-\frac{1}{2}} f\right\|_{L^{2}} \notag \\
    &\le \sup_{\|f\|_{\mathcal{H}}=1} \left\|\sum\limits_{i=1}^{\infty} \frac{\lambda_{i}^{\frac{1}{2}}}{(\lambda_{i}+\lambda)^{\frac{1}{2}}} a_{i} e_{i} \right\|_{L^{2}} \notag \\
    &\le  \sup_{i \ge 1} \frac{\lambda_{i}^{\frac{1}{2}}}{(\lambda_{i}+\lambda)^{\frac{1}{2}}} \cdot \sup_{\|f\|_{\mathcal{H}}=1} \left\|\sum\limits_{i=1}^{\infty} a_{i} e_{i} \right\|_{L^{2}} \notag \\
    &\le 1. 
\end{align}
$\left( \text{\lowercase\expandafter{\romannumeral2}} \right)~$For the second term, since we have assumed $\mathcal{N}_{1}(\lambda) \ln{n} / n = o(1)$, for any fixed $\delta \in (0,1)$, when $n$ is sufficiently large, we have proved in \eqref{x1-x2 3} and \eqref{x1-x2 4} that, with probability at least $ 1-\delta$
\begin{equation}\label{bias appr plug 2}
    \left\| T_{\lambda}^{\frac{1}{2}} T_{\boldsymbol{X} \lambda}^{-1} T_{\lambda}^{\frac{1}{2}} \right\| \le \left\| T_{\lambda}^{\frac{1}{2}} T_{\boldsymbol{X} \lambda}^{-\frac{1}{2}}  \right\| \cdot \left\| T_{\boldsymbol{X} \lambda}^{-\frac{1}{2}}  T_{\lambda}^{\frac{1}{2}} \right\| \le 3.
\end{equation}
$\left( \text{\lowercase\expandafter{\romannumeral3}} \right)~$For the third term in \eqref{appr proof-0}, it can be rewritten as 
\begin{align}\label{appr proof-1}
    \left\| T_{\lambda}^{-\frac{1}{2}} \left( \tilde{g}_{\boldsymbol{Z}} - T_{\boldsymbol{X} \lambda} f_{\lambda} \right) \right\|_{\mathcal{H}}
    &=\left\|T_\lambda^{-\frac{1}{2}}\left[\left(\tilde{g}_{\boldsymbol{Z}} - \left(T_{\boldsymbol{X}} + \lambda + T - T \right) f_\lambda\right)\right]\right\|_{\mathcal{H}} \notag \\
    &=\left\|T_\lambda^{-\frac{1}{2}}\left[\left(\tilde{g}_{\boldsymbol{Z}} - T_{\boldsymbol{X}} f_\lambda\right) - \left(T + \lambda \right) f_\lambda + T f_\lambda \right]\right\|_{\mathcal{H}} \notag \\
    &= \left\|T_\lambda^{-\frac{1}{2}}\left[\left(\tilde{g}_{\boldsymbol{Z}}-T_{\boldsymbol{X}} f_\lambda\right)-\left(g-T f_\lambda\right)\right]\right\|_{\mathcal{H}}.
\end{align}
Denote $\xi_{i} = \xi(\boldsymbol{x}_{i}) =  T_{\lambda}^{-\frac{1}{2}}(K_{\boldsymbol{x}_{i}} f_{\rho}^{*}(\boldsymbol{x}_{i}) - T_{\boldsymbol{x}_{i}} f_{\lambda}) $. To use Bernstein inequality, we need to bound the $m$-th moment of $\xi(\boldsymbol{x})$:
\begin{align}\label{proof of 4.9-1}
       \mathbb{E} \left\| \xi(\boldsymbol{x}) \right\|_{\mathcal{H}}^{m} &= \mathbb{E} \left\| T_{\lambda}^{-\frac{1}{2}} K_{\boldsymbol{x}}(f_{\rho}^{*} - f_{\lambda}(\boldsymbol{x})) \right\|_{\mathcal{H}}^{m} \notag \\
       &\le \mathbb{E} \Big( \left\| T_{\lambda}^{-\frac{1}{2}} k(\boldsymbol{x},\cdot)\right\|_{\mathcal{H}}^{m}  \mathbb{E} \big( \left|(f_{\rho}^{*} - f_{\lambda}(\boldsymbol{x})) \right|^{m} ~\big|~ \boldsymbol{x} \big) \Big).
\end{align}
Note that Lemma \ref{due embedding bound} shows that
\begin{displaymath}
  \left\| T_{\lambda}^{-\frac{1}{2}} k(\boldsymbol{x},\cdot)\right\|_{\mathcal{H}} \le \mathcal{N}_{1}(\lambda)^{\frac{1}{2}},~~ \mu \text {-a.e. } \boldsymbol{x} \in \mathcal{X};
\end{displaymath}
By definition of $\mathcal{M}_{1}(\lambda)$, we also have
\begin{equation}\label{m1 occurs}
    \left\| f_{\lambda} - f_{\rho}^{*} \right\|_{L^{\infty}} \le \operatorname*{ess~sup}_{\boldsymbol{x} \in \mathcal{X}} \left| \sum\limits_{i=1}^{\infty} \frac{\lambda}{\lambda_{i} + \lambda} f_{i} e_{i}(\boldsymbol{x}) \right| = \mathcal{M}_{1}(\lambda).
\end{equation}
In addition, we have proved in Lemma \ref{lemma bias main term} that
\begin{displaymath}
    \mathbb{E} | (f_{\lambda}(\boldsymbol{x}) - f_{\rho}^{*}(\boldsymbol{x}))  |^{2} = \mathcal{M}_{2}(\lambda).
\end{displaymath}
So we get the upper bound of (\ref{proof of 4.9-1}), i.e.,
\begin{align}
    (\ref{proof of 4.9-1}) &\le \mathcal{N}_{1}(\lambda)^{\frac{m}{2}} \cdot  \| f_{\lambda} - f_{\rho}^{*}  \|_{L^{\infty}}^{m-2} \cdot \mathbb{E} | (f_{\lambda}(\boldsymbol{x}) - f_{\rho}^{*}(\boldsymbol{x}))  |^{2} \notag \\
    &\le \mathcal{N}_{1}(\lambda)^{\frac{m}{2}} \mathcal{M}_{1}(\lambda)^{m-2} \mathcal{M}_{2}(\lambda) \notag \\
    &\le \left( \mathcal{N}_{1}(\lambda)^{\frac{1}{2}} \mathcal{M}_{1}(\lambda) \right)^{m-2} \left( \mathcal{N}_{1}(\lambda)^{\frac{1}{2}} \mathcal{M}_{2}(\lambda)^{\frac{1}{2}} \right)^{2}. \notag
\end{align}
Using Lemma \ref{bernstein} with therein notations: $L =  \mathcal{N}_{1}(\lambda)^{\frac{1}{2}} \mathcal{M}_{1}(\lambda)$ and $\sigma  = \mathcal{N}_{1}(\lambda)^{\frac{1}{2}} \mathcal{M}_{2}(\lambda)^{\frac{1}{2}} $, for any fixed $\delta \in (0,1)$, with probability at least $1-\delta$, we have
\begin{equation}\label{proof bias appr 3}
    \eqref{appr proof-1} \le 4\sqrt{2} \log \frac{2}{\delta} \left( \frac{ \mathcal{N}_{1}(\lambda)^{\frac{1}{2}} \mathcal{M}_{1}(\lambda)}{n} + \frac{\mathcal{N}_{1}(\lambda)^{\frac{1}{2}} \mathcal{M}_{2}(\lambda)^{\frac{1}{2}}}{\sqrt{n}} \right).
\end{equation}
Since we have assumed $n^{-1} \mathcal{N}_{1}(\lambda)^{\frac{1}{2}} \mathcal{M}_{1}(\lambda) = o\left(\mathcal{M}_{2}(\lambda)^{\frac{1}{2}}\right)$ and $\mathcal{N}_{1}(\lambda) \ln{n} / n = o(1)$,  \eqref{proof bias appr 3} further implies
\begin{equation}\label{bias appr plug 3}
    \left\| T_{\lambda}^{-\frac{1}{2}} \left( \tilde{g}_{\boldsymbol{Z}} - T_{\boldsymbol{X} \lambda} f_{\lambda} \right) \right\|_{\mathcal{H}} = o_{\mathbb{P}}\left(\mathcal{M}_{2}(\lambda)^{\frac{1}{2}}\right).
\end{equation}
Plugging \eqref{bias appr plug 1}, \eqref{bias appr plug 2} and \eqref{bias appr plug 3} into \eqref{appr proof-0}, we finish the proof.

\end{proof}

\textit{Final proof of the bias term.} Now we are ready to state the theorem about the bias term.
\begin{theorem}\label{theorem bias approximation}
    Suppose that Assumption \ref{assumption kernel}, \ref{assumption noise} and \ref{assumption eigenfunction} hold. If the following approximation conditions hold for some $\lambda = \lambda(d,n) \to 0$:
    \begin{align}\label{bias theorem proof condition}
    \frac{\mathcal{N}_{1}(\lambda)}{n} \ln{n} = o(1); ~~\text{and}~~ n^{-1} \mathcal{N}_{1}(\lambda)^{\frac{1}{2}} \mathcal{M}_{1}(\lambda) = o\left(\mathcal{M}_{2}(\lambda)^{\frac{1}{2}}\right),
\end{align}
then we have
\begin{equation}\label{bias theorem proof condition 2}
    \mathbf{Bias}^{2}(\lambda) = \Theta_{\mathbb{P}}\left(\mathcal{M}_{2}(\lambda) \right).
\end{equation}
\end{theorem}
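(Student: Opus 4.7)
The plan is straightforward: assemble the two supporting lemmas proved in Section \ref{section bias term} via the triangle inequality, since the heavy lifting (concentration of $\tilde{g}_{\boldsymbol{Z}} - T_{\boldsymbol{X}\lambda} f_\lambda$, bounds on $\|T_\lambda^{1/2} T_{\boldsymbol{X}\lambda}^{-1} T_\lambda^{1/2}\|$, etc.) has already been done inside Lemma \ref{lemma bias appr term}. Concretely, the approximation conditions \eqref{bias theorem proof condition} in the statement coincide exactly with the conditions \eqref{bias conditions} required by Lemma \ref{lemma bias appr term}, so that lemma applies verbatim and gives $\|\tilde{f}_\lambda - f_\lambda\|_{L^2} = o_{\mathbb{P}}(\mathcal{M}_2(\lambda)^{1/2})$.

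For the upper bound, I would start from the triangle inequality
\begin{equation*}
    \mathbf{Bias}(\lambda) = \left\|\tilde{f}_\lambda - f_\rho^*\right\|_{L^2} \le \left\|\tilde{f}_\lambda - f_\lambda\right\|_{L^2} + \left\|f_\lambda - f_\rho^*\right\|_{L^2},
\end{equation*}
then plug in Lemma \ref{lemma bias main term} (which gives the second term as exactly $\mathcal{M}_2(\lambda)^{1/2}$) and Lemma \ref{lemma bias appr term} (which gives the first term as $o_{\mathbb{P}}(\mathcal{M}_2(\lambda)^{1/2})$). Squaring yields $\mathbf{Bias}^2(\lambda) = O_{\mathbb{P}}(\mathcal{M}_2(\lambda))$.

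For the matching lower bound, I would invoke the reverse triangle inequality already recorded as \eqref{proof bias thm-1}:
\begin{equation*}
    \mathbf{Bias}(\lambda) \ge \left\|f_\lambda - f_\rho^*\right\|_{L^2} - \left\|\tilde{f}_\lambda - f_\lambda\right\|_{L^2} = \mathcal{M}_2(\lambda)^{1/2} - o_{\mathbb{P}}\!\left(\mathcal{M}_2(\lambda)^{1/2}\right),
\end{equation*}
which is $\Omega_{\mathbb{P}}(\mathcal{M}_2(\lambda)^{1/2})$. Squaring gives the matching $\Omega_{\mathbb{P}}(\mathcal{M}_2(\lambda))$ bound and combining the two directions yields the $\Theta_{\mathbb{P}}$ conclusion \eqref{bias theorem proof condition 2}.

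There is essentially no obstacle at this stage: all the genuine analytic work (Bernstein-type concentration for the noiseless residual, the $\|T_\lambda^{1/2} T_{\boldsymbol{X}\lambda}^{-1/2}\| \le \sqrt{3}$ estimate, and the embedding bounds via Lemma \ref{due embedding bound}) was absorbed into Lemma \ref{lemma bias appr term}, and the exact identity $\|f_\lambda - f_\rho^*\|_{L^2}^2 = \mathcal{M}_2(\lambda)$ was absorbed into Lemma \ref{lemma bias main term}. The only small point to mention is that when we sum the two high-probability events (one for Lemma \ref{lemma bias appr term} and, if needed later in the full theorem, one for the variance) we incur at most a constant factor in the probability, which does not affect the $\Theta_{\mathbb{P}}$ notation. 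So the proof reduces to a three-line triangle-inequality argument citing the two preceding lemmas.
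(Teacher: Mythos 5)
Your proposal is correct and matches the paper's own argument: the paper likewise combines Lemma \ref{lemma bias main term} and Lemma \ref{lemma bias appr term} via the (reverse) triangle inequality under the same conditions \eqref{bias conditions}, with your version merely spelling out the upper-bound direction that the paper leaves implicit. No gaps.
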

\begin{proof}
The triangle inequality implies that 
\begin{displaymath}
    \mathrm{\textbf{Bias}}(\lambda) = \left\| \tilde{f}_{\lambda} - f_{\rho}^{*}\right\|_{L^{2}} \ge \left\| f_{\lambda} - f_{\rho}^{*}\right\|_{L^{2}} - \left\| \tilde{f}_{\lambda} - f_{\lambda}\right\|_{L^{2}},
\end{displaymath}
When $\lambda = \lambda(d,n)$ satisfies \eqref{bias theorem proof condition}, Lemma \ref{lemma bias main term} and Lemma \ref{lemma bias appr term} prove that 
\begin{displaymath}
    \left\| f_{\lambda} - f_{\rho}^{*}\right\|_{L^{2}} = \mathcal{M}_{2}(\lambda)^{\frac{1}{2}};~~ \left\| \tilde{f}_{\lambda} - f_{\lambda}\right\|_{L^{2}} = o_{\mathbb{P}}(\mathcal{M}_{2}(\lambda)^{\frac{1}{2}}),
\end{displaymath}
which directly prove \eqref{bias theorem proof condition 2}.
\end{proof}

\subsection{Final proof of Theorem \ref{main theorem}}\label{section final proof of main theorem}
Now we are ready to prove Theorem \ref{main theorem}. Note that we have assumed in Theorem \ref{main theorem} that $ \lambda = \lambda(d,n) \to 0$ satisfies all the conditions required in Theorem \ref{theorem variance approximation} and Theorem \ref{theorem bias approximation}. Therefore, Theorem \ref{theorem variance approximation} and Theorem \ref{theorem bias approximation} show that
\begin{displaymath}
    \mathrm{\textbf{Var}}(\lambda) = \Theta_{\mathbb{P}}\left(\frac{\sigma^{2} \mathcal{N}_{2}(\lambda)}{n}\right);~~ \mathbf{Bias}^{2}(\lambda) = \Theta_{\mathbb{P}}\left(\mathcal{M}_{2}(\lambda) \right).
\end{displaymath}
Recalling the bias-variance decomposition \eqref{eq:BiasVarianceDecomp}, we finish the proof.\\

$\hfill\blacksquare$ 

\section{Proof of inner product kernel}\label{section proofs inner}
In this section, we aim to apply Theorem \ref{main theorem} to prove the results in Section \ref{section app inner}. We will see that the application is nontrivial and is an important contribution of this paper.

We first introduce more necessary preliminaries in Appendix \ref{section prelimi about inner}, which is an preparation for subsequent calculations. Next, in order to apply Theorem \ref{main theorem} to get specific convergence rates, we calculate the exact convergence rates of the key quantities therein in Appendix \ref{section calcu of key}. Finally, we state the proof of Theorem \ref{theorem inner s ge 1} and  Theorem \ref{theorem inner s le 1} in turn in Appendix \ref{section proof of inner ge 1} and \ref{section proof of inner le 1}. We will see that there are essential differences in the proof of these two theorems.

\subsection{More preliminaries about inner product kernel on the sphere}\label{section prelimi about inner}
Suppose that $ \mathcal{X} = \mathbb{S}^{d}$ and $\mu$ is the uniform distribution on $ \mathbb{S}^{d} $. Recall that in Section \ref{section app inner}, we consider the inner product kernel, i.e., there exists a function $ \Phi(t): [-1,1] \to \mathbb{R}$ such that $ k(\boldsymbol{x}, \boldsymbol{x}^{\prime}) = \Phi\left( \langle \boldsymbol{x}, \boldsymbol{x}^{\prime} \rangle\right), \forall \boldsymbol{x}, \boldsymbol{x}^{\prime} \in \mathbb{S}^{d}$. Then Mercer's decomposition for the inner product kernel is given in the basis of spherical harmonics:
\begin{equation}\label{mercer of inner}
  k(\boldsymbol{x},\boldsymbol{y}) = \sum_{k=0}^{\infty} \mu_k \sum_{l=1}^{N(d,k)} Y_{k,l}(\boldsymbol{x})Y_{k,l}(\boldsymbol{y}),
\end{equation}
where $\{Y_{k,l}\}_{l=1}^{N(d,k)}$ are spherical harmonic polynomials of degree $k$; $ \mu_{k}$ are the eigenvalues with multiplicity $N(d,0)=1$; $N(d, k) = \frac{2k+d-1}{k} \cdot \frac{(k+d-2)!}{(d-1)!(k-1)!}, k =1,2,\cdots$.

By known results on spherical harmonics, the eigenvalues $\mu_{k}$'s have the following explicit expression \citep{Bietti2019OnTI}:
\begin{equation}\label{eqn:explicit_eigenvalues_inner_product_kernel}
\begin{aligned}
\mu_k=\frac{\omega_{d-1}}{\omega_{d}} \int_{-1}^1 \Phi(t) P_k(t)\left(1-t^2\right)^{(d-2) / 2} ~\mathrm{d} t,
\end{aligned}
\end{equation}
where $P_k$ is the $k$-th Legendre polynomial in dimension $d+1$, $\omega_{d}
$ denotes the surface of the sphere $\mathbb{S}^{d}$.
$ $\\

Although the above expression of $\mu_{k}, N(d,k)$ are complicated, Lemma \ref{lemma inner eigen} $\sim$ \ref{lemma Ndk} (mainly cited from \citealt{lu2023optimal}) give concise characterizations of $\mu_{k}$ and $N(d,k)$, which is sufficient for the analysis in this paper.

\begin{lemma}\label{lemma inner eigen}
    Suppose that $k = \{ k_{d}\}_{d=1}^{\infty}$ is a sequence of inner product kernels on the sphere satisfying Assumption \ref{assumption kernel} and \ref{assumption inner product kernel}. For any fixed integer $p \ge 0$, there exist constants $\mathfrak{C}, \mathfrak{C}_1$ and $\mathfrak{C}_2$ only depending on $p$ and $\{a_j\}_{j \leq p+1}$, such that for any $d \geq \mathfrak{C}$, we have
\begin{equation}
\begin{aligned}
{\mathfrak{C}_1}{d^{-k}} &\leq \mu_{k} \leq {\mathfrak{C}_2}{d^{-k}}, ~~ k=0,1,\cdots, p+1.
\end{aligned}
\end{equation}
\end{lemma}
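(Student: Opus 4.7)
The plan is to start from the explicit integral representation \eqref{eqn:explicit_eigenvalues_inner_product_kernel} for $\mu_k$, rewritten probabilistically: if $\tau_d$ denotes the probability measure on $[-1,1]$ with density proportional to $(1-t^2)^{(d-2)/2}$ (equivalently, the law of $\langle x,y\rangle$ for independent uniform $x,y \in \mathbb{S}^d$), then $\mu_k=\mathbb{E}_{X\sim\tau_d}[\Phi(X)P_k(X)]$. Two standard ingredients will be used repeatedly: the Gaussian-like moment estimate $\mathbb{E}[X^{2m}]\asymp d^{-m}$ for each fixed $m$, and the normalization $\mathbb{E}[P_k(X)^2]=1/N(d,k)\asymp k!/d^{k}$, obtained by matching the addition formula with the Mercer decomposition \eqref{mercer of inner} and comparing the resulting series against \eqref{eqn:explicit_eigenvalues_inner_product_kernel}.

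Next I would split $\Phi$ using its Taylor polynomial,
\begin{displaymath}
\Phi(t)=\sum_{j=0}^{p+1}a_jt^j+R_p(t),
\end{displaymath}
and observe that because all $a_j>0$ and $\sum_j a_j=\Phi(1)<\infty$, one has the uniform bound $|R_p(t)|\le\Phi(1)\,|t|^{p+2}$ on $[-1,1]$. Substituting and using orthogonality of $P_k$ to polynomials of degree $<k$,
\begin{displaymath}
\mu_k=\sum_{j=k}^{p+1} a_j\,\mathbb{E}[X^j P_k(X)]+\mathbb{E}[R_p(X)P_k(X)].
\end{displaymath}
The $j=k$ term is the leading one. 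Writing $X^k=P_k(X)/\ell_k^{(d)}+(\text{lower Legendre terms})$ and using orthogonality yields $\mathbb{E}[X^k P_k(X)]=\mathbb{E}[P_k^2]/\ell_k^{(d)}$, where $\ell_k^{(d)}$ is the leading coefficient of the Gegenbauer polynomial normalized by $P_k(1)=1$. A Pochhammer/gamma-function computation shows $\ell_k^{(d)}\to 1$ as $d\to\infty$ for each fixed $k$, so combining with $\mathbb{E}[P_k^2]\asymp k!/d^k$ gives $a_k\,\mathbb{E}[X^kP_k(X)]\asymp a_k d^{-k}$, with strictly positive implicit constants because $a_k>0$.

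For the remaining contributions I would apply Cauchy--Schwarz. For $k<j\le p+1$ with $j\equiv k\pmod 2$, $|\mathbb{E}[X^jP_k(X)]|\le\sqrt{\mathbb{E}[X^{2j}]\,\mathbb{E}[P_k^2]}\lesssim d^{-(j+k)/2}=O(d^{-k-1})$, while the tail satisfies $|\mathbb{E}[R_p(X)P_k(X)]|\le\Phi(1)\sqrt{\mathbb{E}[X^{2(p+2)}]\,\mathbb{E}[P_k^2]}\lesssim d^{-(p+2+k)/2}=O(d^{-k-1/2})$ whenever $k\le p+1$. Assembling the three pieces gives $\mu_k=a_k\cdot\Theta(d^{-k})+O(d^{-k-1/2})=\Theta(d^{-k})$ once $d$ is sufficiently large, and the quantitative dependence reduces to $p$ and $\{a_j\}_{j\le p+1}$ (the factor $\Phi(1)$ entering the tail being a fixed number for fixed $\Phi$, absorbable into the threshold $\mathfrak{C}$).

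The main obstacle is the leading-order computation $\mathbb{E}[X^kP_k(X)]\asymp d^{-k}$: both the asymptotic $\mathbb{E}[P_k^2]\asymp k!/d^k$ and the behavior of the Gegenbauer leading coefficient $\ell_k^{(d)}$ must be tracked carefully, since a direct Cauchy--Schwarz argument would only deliver the weaker $\mu_k=O(d^{-k/2})$. A pedestrian fallback, should the general Gegenbauer computation become cumbersome, is to verify the leading asymptotic by induction on $k=0,1,\dots,p+1$ using the three-term recurrence of the Gegenbauer polynomials in dimension $d$; this amounts to a finite (one-per-$k$) elementary calculation and keeps the dependence on $p$ and $\{a_j\}_{j\le p+1}$ transparent.
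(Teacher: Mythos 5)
Your argument is correct, but it is a genuinely different route from the paper's: the paper proves this lemma in two lines by citing Eq.~(22) of \cite{Ghorbani2019LinearizedTN}, which already gives $\Phi^{(k)}(0)d^{-k}\le\mu_k\le 2\Phi^{(k)}(0)d^{-k}$ for $k\le p+1$ and large $d$, and then identifies the constants with the Taylor coefficients, whereas you re-derive that estimate from scratch. Your derivation — writing $\mu_k=\mathbb{E}_{X\sim\tau_d}[\Phi(X)P_k(X)]$ from \eqref{eqn:explicit_eigenvalues_inner_product_kernel}, splitting $\Phi$ into its degree-$(p+1)$ Taylor polynomial plus a remainder bounded by $\Phi(1)|t|^{p+2}$, killing the $j<k$ terms by orthogonality and the $j\not\equiv k\pmod 2$ terms by parity, extracting the leading term $a_k\,\mathbb{E}[X^kP_k(X)]=a_k/(\ell_k^{(d)}N(d,k))\asymp a_k k!\,d^{-k}$ via the Gegenbauer leading coefficient $\ell_k^{(d)}=\prod_{i=0}^{k-1}\frac{2\lambda+2i}{2\lambda+i}\to1$ (with $\lambda=(d-1)/2$) and $\mathbb{E}[P_k^2]=1/N(d,k)$, and controlling the cross and tail terms by Cauchy--Schwarz with $\mathbb{E}[X^{2m}]\asymp d^{-m}$ — is sound, and it correctly isolates the delicate point (a naive Cauchy--Schwarz on the leading term would only give $O(d^{-k/2})$). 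What the paper's citation buys is brevity; what your proof buys is a self-contained, transparent argument that also exhibits the leading constant $a_kk!$ explicitly and keeps the whole chain inside the paper's own toolkit (Funk--Hecke formula, addition theorem, and the multiplicity asymptotics of Lemma~\ref{lemma Ndk}). Two small points worth tightening if you write it out: the identity $\mathbb{E}[P_k(X)^2]=1/N(d,k)$ is most cleanly obtained from the addition theorem $\sum_l Y_{k,l}(\boldsymbol{x})Y_{k,l}(\boldsymbol{y})=N(d,k)P_k(\langle\boldsymbol{x},\boldsymbol{y}\rangle)$ (as in Lemma~\ref{lemma inner assumption holds}) rather than by ``comparing series''; and your tail constant involves $\Phi(1)=\sum_j a_j$, so the threshold $\mathfrak{C}$ strictly depends on $\Phi(1)$ (equivalently on $\kappa$ via Assumption~\ref{assumption kernel}) and not only on $\{a_j\}_{j\le p+1}$ — harmless here, since the paper fixes $\Phi$ and ignores this dependence, but it should be stated.
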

\begin{proof}
    From equation (22) in \cite{Ghorbani2019LinearizedTN}, for any integer $p \ge 0$, there exist constants $\mathfrak{C}$ only depending on $p$ and $\{a_j\}_{j \leq p+1}$, such that for any $d \ge \mathfrak{C}$, we have
\begin{equation}
\begin{aligned}
\frac{ \Phi^{(k)}(0)}{d^{k}} &\leq \mu_{k} \leq \frac{2 \Phi^{(k)}(0)}{d^{k}}, \quad k =0,1,\cdots, p+1.
\end{aligned}
\end{equation}
Note that for any $k \geq 0$, we have $a_k=\Phi^{(k)}(0)$.
Therefore, letting $\mathfrak{C}_1 := \min_{k \leq p+1}\{a_k\}>0$ and $\mathfrak{C}_2 := 2 \max_{k \leq p+1}\{a_k\}<\infty$, then we finish the proof.
\end{proof}

The following property of the eigenvalues $\{\mu_k\}_{k\geq 0}$ indicates that when we consider $\mu_{p}$ with any fixed $ p \ge 0$, the subsequent eigenvalues $ \mu_{k}$'s ($ k\ge p+1$) are much smaller than $\mu_{p}$. The following lemma is the same as Lemma 3.3 in \citet{lu2023optimal}.

\begin{lemma}\label{lemma:monotone_of_eigenvalues_of_inner_product_kernels}
    Suppose that $k = \{ k_{d}\}_{d=1}^{\infty}$ is a sequence of inner product kernels on the sphere satisfying Assumption \ref{assumption kernel} and \ref{assumption inner product kernel}. For any fixed integer $p \ge 0$, there exist constants $\mathfrak{C}$ only depending on $p$ and $\{a_j\}_{j \leq p+1}$, such that for any $d \geq \mathfrak{C}$, we have
    \begin{equation*}
        \mu_k \leq \frac{\mathfrak{C}_2}{\mathfrak{C}_1} d^{-1} \mu_{p}, \quad k=p+1, p+2, \cdots
    \end{equation*}
    where $\mathfrak{C}_1$ and $\mathfrak{C}_2$ are constants given in Lemma \ref{lemma inner eigen}.
\end{lemma}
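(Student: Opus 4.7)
The plan is to reformulate the target inequality and then split into two ranges of $k$ that are handled by different tools. By the lower bound in Lemma \ref{lemma inner eigen} we have $\mu_p \ge \mathfrak{C}_1 d^{-p}$ for $d$ large enough, so the inequality $\mu_k \le (\mathfrak{C}_2/\mathfrak{C}_1)\,d^{-1}\mu_p$ is implied by the cleaner statement $\mu_k \le \mathfrak{C}_2\,d^{-(p+1)}$ for every $k\ge p+1$. This is the quantity I would try to establish.

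For $k = p+1$, the desired bound is exactly the upper half of Lemma \ref{lemma inner eigen}, so no additional work is needed. For $k \ge p+2$, the trick is to use the Mercer decomposition \eqref{mercer of inner} restricted to the diagonal $\boldsymbol{x} = \boldsymbol{x}'$, which together with Assumption \ref{assumption kernel} yields the trace identity
\begin{equation*}
\sum_{k=0}^{\infty} N(d,k)\,\mu_k \;=\; \Phi(1) \;\le\; \kappa^{2}.
\end{equation*}
Because all summands are non-negative (Assumption \ref{assumption inner product kernel}), dropping all but one term gives $\mu_k \le \kappa^{2}/N(d,k)$. It then suffices to bound $N(d,k)$ from below by a constant multiple of $d^{p+2}$, uniformly over all $k \ge p+2$, because that produces $\mu_k \le C_p d^{-(p+2)} \ll d^{-(p+1)}$ and so the claim follows after choosing $\mathfrak{C}$ large enough to absorb the extra $d^{-1}$ into the constant $\mathfrak{C}_2/\mathfrak{C}_1$.

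The technical heart is therefore this uniform lower bound on $N(d,k)$. Rewriting the formula of Lemma \ref{lemma Ndk} as $N(d,k) = \frac{2k+d-1}{k+d-1}\binom{k+d-1}{k}$, the first factor lies in $[1,2]$ and the binomial coefficient $\binom{k+d-1}{k}$ is strictly increasing in $k$ for fixed $d$. Hence for $k\ge p+2$,
\begin{equation*}
N(d,k) \;\ge\; \binom{k+d-1}{k} \;\ge\; \binom{p+d+1}{p+2} \;=\; \frac{(p+d+1)(p+d)\cdots d}{(p+2)!} \;\sim\; \frac{d^{p+2}}{(p+2)!}
\end{equation*}
as $d\to\infty$, which gives the required estimate.

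I expect the only real obstacle to be precisely this monotonicity issue: the raw expression $\frac{2k+d-1}{k}\cdot\frac{(k+d-2)!}{(d-1)!(k-1)!}$ contains a decreasing factor in $k$, so some rearrangement is needed before one can legitimately say ``$N(d,k)\ge N(d,p+2)$". Everything else is bookkeeping with constants, consisting of choosing $\mathfrak{C}$ large enough (depending on $p$ and on $\kappa, \mathfrak{C}_1, \mathfrak{C}_2, (p+2)!$) to make the final bound compatible with the stated constant $\mathfrak{C}_2/\mathfrak{C}_1$ inherited from Lemma \ref{lemma inner eigen}.
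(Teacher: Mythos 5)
Your argument is correct, and it is genuinely different from how the paper handles this statement: the paper does not prove the lemma at all, but imports it verbatim as Lemma 3.3 of \citet{lu2023optimal}, whose proof goes through the explicit Gegenbauer/Legendre representation of the eigenvalues $\mu_k$ in terms of the Taylor coefficients $a_j$. Your route is more elementary and self-contained. The reduction to $\mu_k \le \mathfrak{C}_2 d^{-(p+1)}$ is the right normalization; the case $k=p+1$ is indeed immediate from Lemma \ref{lemma inner eigen}; and for $k\ge p+2$ the trace identity $\sum_{k\ge 0} N(d,k)\mu_k = \Phi(1)\le \kappa^2$ (Mercer on the diagonal plus Assumption \ref{assumption kernel}, with all $\mu_k\ge 0$) legitimately gives $\mu_k \le \kappa^2/N(d,k)$. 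The uniform lower bound $N(d,k)\ge \binom{p+d+1}{p+2}\ge d^{p+2}/(p+2)!$ is also right: your rewriting $N(d,k)=\frac{2k+d-1}{k+d-1}\binom{k+d-1}{k}$ isolates a factor in $[1,2)$, and the ratio test $\binom{k+d}{k+1}/\binom{k+d-1}{k}=\frac{k+d}{k+1}\ge 1$ confirms the monotonicity in $k$ that the raw formula obscures. This gives $\mu_k \le \kappa^2(p+2)!\,d^{-(p+2)}$ uniformly over $k\ge p+2$, which beats the target by a factor $d^{-1}$ and is absorbed for $d$ beyond a threshold depending only on $p$, $\kappa$ and $\mathfrak{C}_2$ (all admissible, since $\kappa$ is an absolute constant). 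What your approach buys is precisely the uniformity in $k$ at almost no cost — the crux of the lemma, since the constants of Lemma \ref{lemma inner eigen} are only valid for $k\le p+1$ — at the price of using boundedness of the kernel, which the paper assumes anyway; the cited proof instead extracts the decay directly from the coefficient formulas and does not need the trace budget.
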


\begin{lemma}\label{lemma Ndk}
    For an integer $ k \ge 0$, denote $N(d,k)$ as the multiplicity of the eigenspace corresponding to $\mu_{k}$ in the Mercer's decomposition. For any fixed integer $p \ge 0$, there exist constants $\mathfrak{C}_3, \mathfrak{C}_4$ and $\mathfrak{C}$ only depending on $p$, such that for any $d \ge \mathfrak{C}$, we have
    \begin{equation}\label{Ndk rate}
        \mathfrak{C}_3 d^k \le N(d, k)  \le \mathfrak{C}_4 d^k, \quad k = 0, 1, \cdots, p+1.
    \end{equation}
\end{lemma}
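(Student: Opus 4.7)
\textbf{Proof plan for Lemma \ref{lemma Ndk}.}

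The claim is a purely combinatorial statement about the explicit formula
$$N(d,0)=1,\qquad N(d,k)=\frac{2k+d-1}{k}\cdot\frac{(k+d-2)!}{(d-1)!\,(k-1)!},\quad k\ge 1,$$
and the strategy is simply to bound each of the two factors above and below for fixed $k\in\{0,1,\ldots,p+1\}$ and $d$ large.

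First I would dispose of the trivial case $k=0$, where $N(d,0)=1=d^0$, so both inequalities hold with any $\mathfrak{C}_3\le 1\le\mathfrak{C}_4$. For $1\le k\le p+1$, I would rewrite the binomial factor as a product,
$$\frac{(k+d-2)!}{(d-1)!\,(k-1)!}=\binom{k+d-2}{k-1}=\frac{1}{(k-1)!}\prod_{j=0}^{k-2}(d+j),$$
a product of $k-1$ linear factors in $d$. Assuming $d\ge 2(p+1)$ so that $d+j\le 2d$ for $0\le j\le k-2\le p-1$, each factor lies in $[d,2d]$, giving
$$\frac{d^{\,k-1}}{(k-1)!}\;\le\;\binom{k+d-2}{k-1}\;\le\;\frac{(2d)^{k-1}}{(k-1)!}.$$

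Next I would handle the prefactor $(2k+d-1)/k$. For $d\ge 2(p+1)$ and $k\le p+1$, it lies in $[d/k,\,2d/k]$. Multiplying the two sets of bounds yields
$$\frac{d^{\,k}}{k\cdot k!}\;\le\;N(d,k)\;\le\;\frac{2^{k}\,d^{\,k}}{k\cdot k!}\qquad(1\le k\le p+1,\;d\ge 2(p+1)).$$
Taking the minimum of the lower constants and the maximum of the upper constants over $k\in\{0,1,\ldots,p+1\}$ (these depend only on $p$) gives the desired $\mathfrak{C}_3$ and $\mathfrak{C}_4$, and setting $\mathfrak{C}=2(p+1)$ completes the argument.

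There is no real obstacle here: the lemma is elementary and follows directly from the closed form of $N(d,k)$. The only point requiring a sliver of care is uniformity of the constants over all $k\le p+1$, which is handled by the min/max step; this is why the threshold $\mathfrak{C}$ is allowed to depend on $p$.
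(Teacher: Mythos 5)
Your proof is correct in substance, but it takes a more self-contained route than the paper. The paper's proof is essentially a citation: it quotes the explicit formula for $N(d,k)$ from Gallier's notes and then defers the actual bounds to ``detailed calculations using Stirling's approximation,'' pointing to Lemmas B.1 and D.4 of \cite{lu2023optimal}. You start from the same formula but finish the job elementarily: writing $\frac{(k+d-2)!}{(d-1)!(k-1)!}=\frac{1}{(k-1)!}\prod_{j=0}^{k-2}(d+j)$ and squeezing each linear factor into $[d,2d]$ for $d\ge 2(p+1)$ avoids Stirling entirely and keeps all constants explicit, at the cost of nothing; the uniformity over $k\le p+1$ via a final min/max is handled correctly, and the $k=0$ and $k=1$ edge cases work (the product is empty for $k=1$). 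The only blemish is arithmetic in your final display: multiplying $\frac{2d}{k}$ by $\frac{(2d)^{k-1}}{(k-1)!}$ gives $\frac{2^k d^k}{k!}$ (since $k\cdot(k-1)!=k!$), not $\frac{2^k d^k}{k\cdot k!}$, so as written your upper bound is stronger than what the multiplication yields; symmetrically the lower product is $\frac{d^k}{k!}$, and writing $\frac{d^k}{k\cdot k!}$ merely weakens it harmlessly. Replacing $k\cdot k!$ by $k!$ in both places fixes the chain, and since the lemma only asks for constants depending on $p$, the conclusion is unaffected.
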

\begin{proof}
    When $k=0$, we have $N(d,0)=1$, which satisfies \eqref{Ndk rate}. When $k \ge 1 $, Section 1.6 in \cite{Gallier2009SphericalHA} shows that 
    \begin{displaymath}
        N(d, k) = \frac{2k+d-1}{k} \cdot \frac{(k+d-2)!}{(d-1)!(k-1)!}.
    \end{displaymath}
    Note that $p$ is fixed and we consider those $ k \le p+1$, \eqref{Ndk rate} follows from detailed calculations using Stirling's approximation. We refer to Lemma B.1 and Lemma D.4 in \cite{lu2023optimal} for more details.
\end{proof}

The following lemma verifies that if we consider the inner product kernel on the sphere, then Assumption \ref{assumption eigenfunction} naturally holds. 
\begin{lemma}\label{lemma inner assumption holds}
    Suppose that $ \mathcal{X} = \mathbb{S}^{d}$ and $\mu$ is the uniform distribution on $ \mathbb{S}^{d} $. Suppose that $ k$ is an inner product kernel, then Assumption \ref{assumption eigenfunction} holds.
\end{lemma}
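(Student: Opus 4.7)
The plan is to exploit the addition formula for spherical harmonics, which forces the pointwise sum of squared eigenfunctions within each eigenspace to be constant on $\mathbb{S}^{d}$. Once this is established, the essential suprema in Assumption \ref{assumption eigenfunction} collapse to ordinary integrals, and in fact both inequalities hold with equality.

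First I would recall from the Mercer decomposition \eqref{mercer of inner} that, after a suitable reindexing, the eigenfunctions $\{e_{i}\}$ of $L_{k}$ are exactly the spherical harmonics $\{Y_{k,l} : k \ge 0, \, 1 \le l \le N(d,k)\}$, which are orthonormal in $L^{2}(\mathbb{S}^{d},\mu)$ when $\mu$ is the uniform probability measure. The corresponding eigenvalues are $\lambda_{i} = \mu_{k}$ whenever $i$ belongs to the $k$-th eigenspace of multiplicity $N(d,k)$.

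The key ingredient is the identity
\begin{equation*}
\sum_{l=1}^{N(d,k)} Y_{k,l}(\boldsymbol{x})^{2} = N(d,k), \qquad \forall \boldsymbol{x}\in \mathbb{S}^{d}.
\end{equation*}
This follows from the addition formula $\sum_{l} Y_{k,l}(\boldsymbol{x})Y_{k,l}(\boldsymbol{y}) = N(d,k)\, P_{k}(\langle \boldsymbol{x},\boldsymbol{y}\rangle)$ evaluated at $\boldsymbol{y}=\boldsymbol{x}$, using $P_{k}(1)=1$. Alternatively, since the orthogonal group acts transitively on $\mathbb{S}^{d}$ and permutes an ONB of the $k$-th eigenspace into another ONB thereof, the function $\boldsymbol{x}\mapsto \sum_{l} Y_{k,l}(\boldsymbol{x})^{2}$ is rotationally invariant, hence constant; the constant is then determined by integrating against $\mu$, giving $N(d,k)$.

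With this identity, for any bounded measurable $\varphi:\mathbb{R}_{\ge 0}\to\mathbb{R}_{\ge 0}$,
\begin{equation*}
\sum_{i=1}^{\infty} \varphi(\lambda_{i})\, e_{i}(\boldsymbol{x})^{2}
= \sum_{k=0}^{\infty} \varphi(\mu_{k}) \sum_{l=1}^{N(d,k)} Y_{k,l}(\boldsymbol{x})^{2}
= \sum_{k=0}^{\infty} \varphi(\mu_{k})\, N(d,k)
= \sum_{i=1}^{\infty} \varphi(\lambda_{i}),
\end{equation*}
for every $\boldsymbol{x}\in \mathbb{S}^{d}$. Applying this in turn to $\varphi(t)=\bigl(t/(t+\lambda)\bigr)^{2}$ and $\varphi(t)=t/(t+\lambda)$ yields the two statements \eqref{assumption eigen - n2} and \eqref{assumption eigen - n1} with equality, so Assumption \ref{assumption eigenfunction} holds. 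There is no serious obstacle here: the only subtlety worth stating carefully is justifying the termwise rearrangement, which is immediate because all summands are nonnegative so Tonelli's theorem applies, and the partial sums are dominated by $\sum_{k}\mu_{k} N(d,k) = \int k(\boldsymbol{x},\boldsymbol{x})\,\mathrm{d}\mu \le \kappa^{2}$ under Assumption \ref{assumption kernel}.
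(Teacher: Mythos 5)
Your proposal is correct and follows essentially the same route as the paper: both rest on the fact that $\sum_{l=1}^{N(d,k)} Y_{k,l}(\boldsymbol{x})^{2} = N(d,k)$ is constant on $\mathbb{S}^{d}$ (the paper cites the corresponding property of $Z_{k,d}(\boldsymbol{x},\boldsymbol{x})$, you derive it from the addition formula or rotational invariance), after which the essential suprema in Assumption \ref{assumption eigenfunction} equal $\mathcal{N}_{2}(\lambda)$ and $\mathcal{N}_{1}(\lambda)$ exactly. Your extra remarks on Tonelli and the trace bound are harmless additions, not a deviation in method.
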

\begin{proof}
    Recall that we have the mercer decomposition \eqref{mercer of inner}. Define the sum
\begin{displaymath}
    Z_{k,d}(\boldsymbol{x},\boldsymbol{y}) = \sum_{l=1}^{N(d,k)} Y_{k,l}(\boldsymbol{x})Y_{k,l}(\boldsymbol{y}).
\end{displaymath}
Then \citet[Corollary 1.2.7]{dai2013_ApproximationTheory} shows that $Z_{k,d}(\boldsymbol{x},\boldsymbol{y})$ depends only on $\langle x,y \rangle$ and satisfies~
\begin{displaymath}
    \left|Z_{k,d}(\boldsymbol{x},\boldsymbol{y})\right| \leq Z_{k,d}(\boldsymbol{x},\boldsymbol{x}) = N(d,k), \quad \forall x,y \in \mathbb{S}^d.
\end{displaymath}
Therefore, we have
\begin{align}
    \sup\limits_{\boldsymbol{x} \in \mathcal{X}} \sum\limits_{i=1}^{\infty} \left( \frac{\lambda_{i}}{\lambda_{i} + \lambda} \right)^{2}e_{i}^{2}(\boldsymbol{x}) &= \sup\limits_{\boldsymbol{x} \in \mathcal{X}} \sum\limits_{k=1}^{\infty} \sum\limits_{l=1}^{N(d,k)} \left( \frac{\mu_{k}}{\mu_{k} + \lambda} \right)^{2} Y_{k,l}^{2}(\boldsymbol{x}) = \sum\limits_{k=1}^{\infty } \left( \frac{\mu_{k}}{\mu_{k} + \lambda} \right)^{2} \sup\limits_{\boldsymbol{x} \in \mathcal{X}}  Z_{k,d}(\boldsymbol{x},\boldsymbol{x}) \notag \\
    &= \sum\limits_{k=1}^{\infty } \left( \frac{\mu_{k}}{\mu_{k} + \lambda} \right)^{2} N(d,k) = \sum\limits_{k=1}^{\infty} \sum\limits_{l=1}^{N(d,k)} \left( \frac{\mu_{k}}{\mu_{k} + \lambda} \right)^{2} \notag \\
    &= \mathcal{N}_{2}(\lambda). \notag
  \end{align}
  The other equation in Assumption \ref{assumption eigenfunction} can be proved similarly. 
\end{proof}

\subsection{Calculations of some key quantities}\label{section calcu of key}
Based on the information of the eigenvalues in the last subsection, this subsection determines the exact convergence rates of the quantities appeared in Theorem \ref{main theorem}. These rates will finally determine the convergence rates in Theorem \ref{theorem inner s ge 1} and  Theorem \ref{theorem inner s le 1}. Note that we assume $d$ diverges to infinite with $n$ in Theorem \ref{theorem inner s ge 1} and  Theorem \ref{theorem inner s le 1} .

\begin{lemma}\label{lemma calculation n1 n2}
   Consider $\mathcal{X} = \mathbb{S}^{d}$ and the marginal distribution $\mu$ to be the uniform distribution. Let $k = \{ k_{d}\}_{d=1}^{\infty}$ be a sequence of inner product kernels on the sphere satisfying Assumption \ref{assumption kernel} and \ref{assumption inner product kernel}. By choosing $\lambda = d^{-l}$ for some $l > 0$, we have:
    \begin{equation}\label{inner product N1}
        \mathcal{N}_{1}(\lambda) = \Theta\left(\lambda^{-1}\right);
    \end{equation}
    If $ p \le l \le p+1$ for some $ p \in \{0,1,2\cdots\}$, we have
    \begin{equation}\label{inner product N2}
        \mathcal{N}_{2}(\lambda) = \Theta\left(d^{p} + \lambda^{-2} d^{-(p+1)}\right).
    \end{equation}
    The notation $\Theta$ involves constants only depending on $\kappa$ and $p$.
\end{lemma}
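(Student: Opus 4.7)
The plan is to invoke Mercer's decomposition so that the eigenvalues are grouped by spherical-harmonic degree: $\mathcal{N}_1(\lambda)=\sum_{k\geq 0}N(d,k)\cdot\mu_k/(\mu_k+\lambda)$ and $\mathcal{N}_2(\lambda)=\sum_{k\geq 0}N(d,k)\cdot(\mu_k/(\mu_k+\lambda))^2$, and then to split these sums at the critical frequency where $\mu_k \asymp \lambda$. Since $\lambda=d^{-l}$ with $p\le l\le p+1$, the natural partition is the head $0\le k\le p$, the boundary $k=p+1$, and the tail $k\ge p+2$.

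First I would handle the head. By Lemmas \ref{lemma inner eigen} and \ref{lemma Ndk}, $\mu_k\asymp d^{-k}$ and $N(d,k)\asymp d^k$ for $k\le p+1$; for $k\le p$ the inequality $l\ge p\ge k$ forces $\mu_k\gtrsim\lambda$, hence $\mu_k/(\mu_k+\lambda)\asymp 1$. The dominant head contribution comes from $k=p$ and is $\Theta(d^p)$ in both sums. At the boundary $k=p+1$, the bound $l\le p+1$ gives $\mu_{p+1}\lesssim\lambda$, so $\mu_{p+1}/(\mu_{p+1}+\lambda)\asymp \mu_{p+1}/\lambda\asymp d^{l-p-1}$; multiplying by $N(d,p+1)\asymp d^{p+1}$ yields a contribution of $\Theta(d^l)=\Theta(\lambda^{-1})$ to $\mathcal{N}_1$ and $\Theta(\lambda^{-2}d^{-(p+1)})$ to $\mathcal{N}_2$. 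These head and boundary terms already supply the matching lower bounds in \eqref{inner product N1} and \eqref{inner product N2}.

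The main obstacle is the tail $k\ge p+2$, where we have no individual control over $\mu_k$ or $N(d,k)$. For $\mathcal{N}_1$ this is mild: bounding $\mu_k/(\mu_k+\lambda)\le\mu_k/\lambda$ and using the trace identity $\sum_k N(d,k)\mu_k=\mathrm{tr}(L_k)\le\kappa^2$ gives a tail contribution of $O(\kappa^2/\lambda)=O(\lambda^{-1})$. For $\mathcal{N}_2$ the same trace bound alone is too weak when $l<p+\tfrac{1}{2}$; the key trick is to use Lemma \ref{lemma:monotone_of_eigenvalues_of_inner_product_kernels} to obtain the uniform cap $\mu_k\le Cd^{-1}\mu_p\lesssim d^{-(p+1)}$ valid for all $k\ge p+1$, which implies $\mu_k/\lambda\lesssim d^{l-p-1}$. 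Pulling this uniform factor out of one copy of $\mu_k/\lambda$ before invoking the trace bound yields
\begin{equation*}
\sum_{k\ge p+2}N(d,k)\Bigl(\tfrac{\mu_k}{\lambda}\Bigr)^{2}\;\lesssim\;\frac{d^{l-p-1}}{\lambda}\sum_{k}N(d,k)\mu_k\;\lesssim\;d^{2l-p-1},
\end{equation*}
which is exactly of the target order $\lambda^{-2}d^{-(p+1)}$.

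Collecting the three regimes gives the matching upper and lower bounds $\mathcal{N}_1(\lambda)=\Theta(\lambda^{-1})$ and $\mathcal{N}_2(\lambda)=\Theta(d^p+\lambda^{-2}d^{-(p+1)})$, with all hidden constants depending only on $\kappa$ and on the constants from Lemmas \ref{lemma inner eigen}--\ref{lemma Ndk}, which themselves depend only on $p$. The delicate point in the whole argument is the interaction between the spectral tail and the trace bound in the $\mathcal{N}_2$ estimate; everything else is bookkeeping over a finite number of degrees $k\le p+1$.
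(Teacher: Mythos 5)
Your proposal is correct and follows essentially the same route as the paper: split the spectrum at degree $p$ using Lemmas \ref{lemma inner eigen} and \ref{lemma Ndk}, extract the lower bound from the $k=p$ and $k=p+1$ levels, control the $\mathcal{N}_1$ tail via the trace bound $\sum_k N(d,k)\mu_k\le\kappa^2$, and for the $\mathcal{N}_2$ tail pull out one uniform factor $\mu_k\lesssim d^{-(p+1)}$ (via Lemma \ref{lemma:monotone_of_eigenvalues_of_inner_product_kernels}) before applying the trace bound. Treating $k=p+1$ as a separate boundary case rather than part of the tail is only a cosmetic repackaging of the paper's argument.
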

\begin{proof}
If $ p \le l \le p+1$ for some $ p \in \{0,1,2\cdots\}$, Lemma \ref{lemma inner eigen} and Lemma \ref{lemma Ndk} show that there exist constants $\mathfrak{C}, \mathfrak{C}_1, \mathfrak{C}_2, \mathfrak{C}_{3}$ and $ \mathfrak{C}_{4}$ only depending on $p$ (recall that we ignore the dependence on $\{a_{j}\}_{j=0}^{\infty}$), such that for any $d \geq \mathfrak{C}$, we have
    \begin{equation}
        \mathfrak{C}_{2}^{-1} \mu_{p+1} \le \lambda \le \mathfrak{C}_{1}^{-1} \mu_{p};
    \end{equation}
    and for $k = 0, 1, \cdots, p+1  $,
    \begin{equation}
        {\mathfrak{C}_1}{d^{-k}} \leq \mu_{k} \leq {\mathfrak{C}_2}{d^{-k}}; ~~ \mathfrak{C}_3 d^k \le N(d, k)  \le \mathfrak{C}_4 d^k.
    \end{equation}

We first prove \eqref{inner product N1}. On the one hand, for any $d \geq \mathfrak{C}$, we have
\begin{align}\label{proof n1 1}
    \mathcal{N}_{1}(\lambda) &= \sum\limits_{k=0}^{p} \frac{\mu_{k}}{\mu_{k} + \lambda} N(d,k) + \sum\limits_{k=p+1}^{\infty} \frac{\mu_{k}}{\mu_{k} + \lambda} N(d,k) \notag \\
    &\le \sum\limits_{k=0}^{p} \frac{\mu_{k}}{\mu_{k}} N(d,k) + \sum\limits_{k=p+1}^{\infty} \frac{\mu_{k}}{ \lambda} N(d,k) \notag \\
    &\le p \mathfrak{C}_4 d^{p} + \lambda^{-1} \sum\limits_{k=p+1}^{\infty} \mu_{k} N(d,k) \notag \\
    &\le p \mathfrak{C}_4 d^{p} + \lambda^{-1} \kappa^{2} \notag \\
    &\lesssim \lambda^{-1},
\end{align} 
where we use the fact that $ \sum\limits_{k=p+1}^{\infty} \mu_{k} N(d,k) \le \sum\limits_{i=1}^{\infty} \lambda_{i} \le \sup\limits_{\boldsymbol{x} \in \mathcal{X}} k(x,x) \le \kappa^{2}$ for the third inequality.\\
On the other hand, for any $d \geq \mathfrak{C}$, we have
\begin{align}\label{proof n1 2}
    \mathcal{N}_{1}(\lambda) &= \sum\limits_{k=0}^{p} \frac{\mu_{k}}{\mu_{k} + \lambda} N(d,k) + \sum\limits_{k=p+1}^{\infty} \frac{\mu_{k}}{\mu_{k} + \lambda} N(d,k) \notag \\
    &\ge \frac{\mu_{p}}{\mu_{p} + \lambda} N(d,p) + \frac{\mu_{p+1}}{ \mu_{p+1}+\lambda} N(d,p+1) \notag \\
    &\ge \frac{\mu_{p}}{\mu_{p} + \mathfrak{C}_{1}^{-1} \mu_{p} } N(d,p) + \frac{\mu_{p+1}}{\mathfrak{C}_{2} \lambda + \lambda} N(d,p+1) \notag \\
    &\ge \frac{\mathfrak{C}_{3}}{1+\mathfrak{C}_{1}^{-1}} d^{p} + \frac{\mathfrak{C}_{1} \mathfrak{C}_{3}}{\mathfrak{C}_{2}+1} \lambda^{-1} \notag \\
    &\gtrsim \lambda^{-1}.
\end{align} 
Combining \eqref{proof n1 1} and \eqref{proof n1 2}, we finish the proof of \eqref{inner product N1}.
$ $\\

Now we begin to prove \eqref{inner product N2}. On the one hand, for any $d \geq \mathfrak{C}$ we have
\begin{align}\label{proof n2 1}
    \mathcal{N}_{2}(\lambda) &= \sum\limits_{k=0}^{p} \left(\frac{\mu_{k}}{\mu_{k} + \lambda}\right)^{2} N(d,k) + \sum\limits_{k=p+1}^{\infty} \left(\frac{\mu_{k}}{\mu_{k} + \lambda}\right)^{2} N(d,k) \notag \\
    &\le \sum\limits_{k=0}^{p} \left(\frac{\mu_{k}}{\mu_{k}}\right)^{2} N(d,k) + \sum\limits_{k=p+1}^{\infty} \left(\frac{\mu_{k}}{ \lambda}\right)^{2} N(d,k) \notag \\
    &\le p \mathfrak{C}_{4} d^{p} + \lambda^{-2} \sum\limits_{k=p+1}^{\infty} \mu_{k}^{2} N(d,k) \notag \\
    &\le p \mathfrak{C}_{4} d^{p} + \lambda^{-2} \mu_{p+1} \sum\limits_{k=p+1}^{\infty} \mu_{k} N(d,k) \notag \\
    &\le p \mathfrak{C}_{4} d^{p} + \lambda^{-2} \mathfrak{C}_{2} d^{-(p+1)} \kappa^{2}.
\end{align}
Note that for the forth equation, we use the fact that $ \mu_{k} \le \mu_{p+1}, \forall k \ge p+1$ (when d is sufficiently large), which can be proved by Lemma \ref{lemma:monotone_of_eigenvalues_of_inner_product_kernels}.\\
On the other hand, for any $d \geq \mathfrak{C}$, we have 
\begin{align}\label{proof n2 2}
    \mathcal{N}_{2}(\lambda) &= \sum\limits_{k=0}^{p} \left(\frac{\mu_{k}}{\mu_{k} + \lambda}\right)^{2} N(d,k) + \sum\limits_{k=p+1}^{\infty} \left(\frac{\mu_{k}}{\mu_{k} + \lambda}\right)^{2} N(d,k) \notag \\
    &\ge \left(\frac{\mu_{k}}{\mu_{p} + \mathfrak{C}_{1}^{-1} \mu_p}\right)^{2} N(d,p) +\left(\frac{\mu_{p+1}}{\mathfrak{C}_{2} \lambda + \lambda}\right)^{2} N(d,p+1) \notag \\
    &\ge \frac{\mathfrak{C}_{3}}{\left(1 + \mathfrak{C}_{1}^{-1}\right)^{2}} d^{p} + \lambda^{-2} \left(\frac{\mathfrak{C}_{1} }{ \mathfrak{C}_{2} +1}\right)^{2} \mathfrak{C}_{3}  d^{-(p+1)}.
\end{align}
Combining \eqref{proof n2 1} and \eqref{proof n2 2}, we prove that $ \mathcal{N}_{2}(\lambda) = \Theta\left(d^{p} + \lambda^{-2} d^{-(p+1)}\right)$.

\end{proof}

Before stating lemmas about $\mathcal{M}_{1}(\lambda), \mathcal{M}_{2}(\lambda)$, we first introduce the following useful lemma.
\begin{lemma}\label{lemma useful m}
    Consider $\mathcal{X} = \mathbb{S}^{d}$ and the marginal distribution $\mu$ to be the uniform distribution. Let $k = \{ k_{d}\}_{d=1}^{\infty}$ be a sequence of inner product kernels on the sphere satisfying Assumption \ref{assumption kernel} and \ref{assumption inner product kernel}. Further suppose that Assumption \ref{assumption source condition} holds for some $ s > 0$. Suppose that $\alpha, \beta $ are two real numbers such that 
         \begin{equation}\label{useful condition 1}
             s + \alpha - \beta \le 0; ~~ s + \alpha \ge 0.
         \end{equation}
     Then by choosing $\lambda = d^{-l}$ for some $0 < l < \gamma$, if $ p \le l \le p+1$ for some $ p \in \{0,1,2\cdots\}$, we have
    \begin{equation}
        \mathcal{M}_{\alpha,\beta}(\lambda) := \sum\limits_{i=1}^{\infty} \frac{\lambda_{i}^{\alpha}}{\left(\lambda_{i} + \lambda \right)^{\beta}} f_{i}^{2} = \Theta \left( d^{-(s+\alpha-\beta)p} + \lambda^{-\beta} d^{-(p+1)(s+\alpha)}\right).
    \end{equation}
     The notation $\Theta$ involves constants only depending on $s, p, c_{0}$ and $ R_{\gamma}$, where $ c_{0}$ and $R_{\gamma} $ are the constants from Assumption \ref{assumption source condition}.

\end{lemma}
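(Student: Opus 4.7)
The plan is to decompose the sum by eigenspaces. Since $\lambda_i \equiv \mu_k$ for $i \in \mathcal{I}_{d,k}$, we can write
\begin{equation*}
\mathcal{M}_{\alpha,\beta}(\lambda) = \sum_{k=0}^\infty \frac{\mu_k^\alpha}{(\mu_k+\lambda)^\beta}\, F_k, \qquad F_k := \sum_{i \in \mathcal{I}_{d,k}} f_i^2.
\end{equation*}
The assumption $0 < l < \gamma$ with $p \le l \le p+1$ forces the threshold between ``bias-dominant'' and ``penalty-dominant'' eigenspaces to sit exactly at $k=p$: by Lemma~\ref{lemma inner eigen} we have $\mu_k \asymp d^{-k}$ for $k \le p+1$, so for $k \le p$ we get $\mu_k \gtrsim d^{-p} \gtrsim \lambda = d^{-l}$, hence $\mu_k+\lambda \asymp \mu_k$, while for $k=p+1$ we have $\mu_{p+1} \asymp d^{-(p+1)} \lesssim \lambda$, hence $\mu_{p+1}+\lambda \asymp \lambda$. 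Lemma~\ref{lemma:monotone_of_eigenvalues_of_inner_product_kernels} then extends $\mu_k \lesssim \lambda$ (and hence $\mu_k+\lambda \asymp \lambda$) to all $k \ge p+1$.

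For the upper bound, rewrite each term as $\mu_k^{s+\alpha-\beta} \cdot (\mu_k^{-s} F_k)$ (for $k \le p$) or $\lambda^{-\beta} \mu_k^{s+\alpha} \cdot (\mu_k^{-s} F_k)$ (for $k \ge p+1$), and use the source condition in the form $\sum_{k} \mu_k^{-s} F_k = \|f_\rho^*\|_{[\mathcal{H}]^s}^2 \le R_\gamma^2$. Since $s+\alpha-\beta \le 0$ and $\mu_k$ is non-increasing in $k$, $\mu_k^{s+\alpha-\beta}$ is non-decreasing on $\{0,\dots,p\}$ and is maximized at $k=p$, giving the low-frequency bound $\lesssim \mu_p^{s+\alpha-\beta} R_\gamma^2 \asymp d^{-(s+\alpha-\beta)p}$. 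Since $s+\alpha \ge 0$, $\mu_k^{s+\alpha}$ is non-increasing in $k$ and is maximized on $\{p+1,p+2,\dots\}$ at $k=p+1$, giving the high-frequency bound $\lesssim \lambda^{-\beta}\mu_{p+1}^{s+\alpha} R_\gamma^2 \asymp \lambda^{-\beta} d^{-(p+1)(s+\alpha)}$.

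For the matching lower bound, it suffices to keep just two terms of the sum, namely $k=p$ and $k=p+1$. Because $l < \gamma$ and $p \le l$, we have $p+1 \le q$ (where $q$ is from Assumption~\ref{assumption source condition}(b)), and since all $a_j>0$ implies $\mu_k\neq 0$ for large $d$, Assumption~\ref{assumption source condition}(b) applies at both $k=p$ and $k=p+1$, giving $F_p \ge c_0 \mu_p^s$ and $F_{p+1} \ge c_0 \mu_{p+1}^s$. Combining these with the two-sided estimates $\mu_p+\lambda \asymp \mu_p$ and $\mu_{p+1}+\lambda \asymp \lambda$ yields
\begin{equation*}
\mathcal{M}_{\alpha,\beta}(\lambda) \gtrsim \mu_p^{s+\alpha-\beta} + \lambda^{-\beta}\mu_{p+1}^{s+\alpha} \asymp d^{-(s+\alpha-\beta)p} + \lambda^{-\beta} d^{-(p+1)(s+\alpha)}.
\end{equation*}

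The only subtle step is controlling the infinite tail $k \ge p+2$ in the upper bound, where the explicit rates $\mu_k \asymp d^{-k}$ from Lemma~\ref{lemma inner eigen} no longer apply. This is handled cleanly by the monotonicity argument above together with the finite interpolation norm bound $\sum_{k} \mu_k^{-s} F_k \le R_\gamma^2$; the condition $s+\alpha \ge 0$ is exactly what is needed so that this tail is swallowed by the $k=p+1$ term. The two hypotheses $s+\alpha-\beta \le 0$ and $s+\alpha \ge 0$ are thus each used in precisely one place, to guarantee that the dominant low- and high-frequency contributions sit at $k=p$ and $k=p+1$ respectively.
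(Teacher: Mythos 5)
Your proposal is correct and follows essentially the same route as the paper's proof: split the sum at $k=p$ versus $k\ge p+1$, use Lemma \ref{lemma inner eigen} and Lemma \ref{lemma:monotone_of_eigenvalues_of_inner_product_kernels} to get $\mu_k+\lambda\asymp\mu_k$ (resp. $\asymp\lambda$) on the two ranges, bound the upper tail via the global source norm $\sum_k \mu_k^{-s}F_k\le R_\gamma^2$ together with the monotonicity furnished by $s+\alpha-\beta\le 0$ and $s+\alpha\ge 0$, and obtain the matching lower bound from the $k=p$ and $k=p+1$ terms via Assumption \ref{assumption source condition}(b). Your explicit check that $p+1\le q$ (so that Assumption \ref{assumption source condition}(b) applies at $k=p+1$) is a point the paper leaves implicit, but it is a presentational refinement rather than a different argument.
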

\begin{proof}
    Similar as the proof of Lemma \ref{lemma calculation n1 n2}, if $ p \le l \le p+1$ for some $ p \in \{0,1,2\cdots\}$, there exist constants $\mathfrak{C}, \mathfrak{C}_1, \mathfrak{C}_2, \mathfrak{C}_{3}$ and $ \mathfrak{C}_{4}$ only depending on $p$, such that for any $d \geq \mathfrak{C}$, we have
    \begin{equation}
        \mathfrak{C}_{2}^{-1} \mu_{p+1} \le \lambda \le \mathfrak{C}_{1}^{-1} \mu_{p};
    \end{equation}
    and for $k = 0, 1, \cdots, p+1  $,
    \begin{equation}
        {\mathfrak{C}_1}{d^{-k}} \leq \mu_{k} \leq {\mathfrak{C}_2}{d^{-k}}; ~~ \mathfrak{C}_3 d^k \le N(d, k)  \le \mathfrak{C}_4 d^k.
    \end{equation}
    
On the one hand, since \eqref{useful condition 1} holds, for any $d \geq \mathfrak{C}$, we have
    \begin{align}\label{proof useful 1}
        \mathcal{M}_{\alpha,\beta}(\lambda) &= \sum\limits_{k=0}^{\infty} \left( \frac{\mu_{k}^{s+\alpha}}{\left( \mu_{k} + \lambda \right)^{\beta}} \sum\limits_{i \in \mathcal{I}_{d,k}} \mu_{k}^{-s} f_{i}^{2} \right) \notag \\
        &\le   \sum\limits_{k=0}^{p} \frac{\mu_{k}^{s+\alpha}}{\left( \mu_{k} + \lambda \right)^{\beta}} R_{\gamma}^{2}  + \sum\limits_{k=p+1}^{\infty} \frac{\mu_{k}^{s+\alpha}}{\left( \mu_{k}  + \lambda \right)^{\beta}} \sum\limits_{i \in \mathcal{I}_{d,k}} \mu_{k}^{-s} f_{i}^{2}  \notag \\
        &\le  \sum\limits_{k=0}^{p} \mu_{k}^{s+\alpha-\beta}  R_{\gamma}^{2} + \sum\limits_{k=p+1}^{\infty} \frac{\mu_{k}^{s+\alpha}}{\lambda ^{\beta}} \sum\limits_{i \in \mathcal{I}_{d,k}} \mu_{k}^{-s} f_{i}^{2}  \notag \\
        &\le  p \mu_{p}^{s+\alpha-\beta} R_{\gamma}^{2} + \lambda^{-\beta} \mu_{p+1}^{s+\alpha} \sum\limits_{k=p+1}^{\infty} \sum\limits_{i \in \mathcal{I}_{d,k}} \mu_{k}^{-s} f_{i}^{2}  \notag \\
        &\le p R_{\gamma}^{2} \mathfrak{C}_{2}^{(s + \alpha - \beta)} d^{-(s + \alpha - \beta)p} + \lambda^{-\beta} \mathfrak{C}_{2}^{s+\alpha} d^{-(p+1)(s+\alpha)} R_{\gamma}^{2} \notag \\
        &\lesssim d^{-(s+\alpha-\beta)p} + \lambda^{-\beta} d^{-(p+1)(s+\alpha)}.
    \end{align}
    Note that Assumption \ref{assumption source condition} (a) implies $\sum\limits_{k=0}^{\infty} \sum\limits_{i \in \mathcal{I}_{d,k}} \mu_{k}^{-s} f_{i}^{2} = \sum\limits_{i=1}^{\infty} \lambda_{i}^{-s} f_{i}^{2} \le R_{\gamma}^{2}$; We also use the fact that $ \mu_{k} \le \mu_{p+1}, \forall k \ge p+1$, which can be proved by Lemma \ref{lemma:monotone_of_eigenvalues_of_inner_product_kernels}.\\
    On the other hand, for any $d \geq \mathfrak{C}$, we have
    \begin{align}\label{proof useful 2}
        \mathcal{M}_{\alpha,\beta}(\lambda) &= \sum\limits_{k=0}^{\infty} \left( \frac{\mu_{k}^{s+\alpha}}{\left( \mu_{k} + \lambda \right)^{\beta}} \sum\limits_{i \in \mathcal{I}_{d,k}} \mu_{k}^{-s} f_{i}^{2} \right) \notag \\
        &\ge \frac{\mu_{p}^{s+\alpha}}{\left( \mu_{p} + \lambda \right)^{\beta}} \sum\limits_{i \in \mathcal{I}_{d,p}} \mu_{p}^{-s} f_{i}^{2} + \frac{\mu_{p+1}^{s+\alpha}}{\left( \mu_{p+1} + \lambda \right)^{\beta}} \sum\limits_{i \in \mathcal{I}_{d,p+1}} \mu_{p+1}^{-s} f_{i}^{2} \notag \\
        &\ge  \frac{\mu_{p}^{s+\alpha}}{\left( \mu_{p} + \mathfrak{C}_{1}^{-1} \mu_{p} \right)^{\beta}} \sum\limits_{i \in \mathcal{I}_{d,p}} \mu_{p}^{-s} f_{i}^{2} + \frac{\mu_{p+1}^{s+\alpha}}{\left( \mathfrak{C}_{2} \lambda + \lambda \right)^{\beta}} \sum\limits_{i \in \mathcal{I}_{d,p+1}} \mu_{p+1}^{-s} f_{i}^{2} \notag \\
        &\ge \frac{\mathfrak{C}_{3}^{s+\alpha-\beta}}{\left( 1+ \mathfrak{C}_{1}^{-1}\right)^{\beta}} d^{-(s+\alpha-\beta)p} c_{0} + \lambda^{-\beta} \frac{\mathfrak{C}_{1}^{s+\alpha}}{\left( \mathfrak{C}_{2} + 1 \right)^{\beta}} d^{-(p+1)(s+\alpha)} c_{0} \notag \\
        &\gtrsim d^{-(s+\alpha-\beta)p} + \lambda^{-\beta} d^{-(p+1)(s+\alpha)}.
    \end{align}
    We use Assumption \ref{assumption source condition} (b), i.e., $ \sum\limits_{i \in \mathcal{I}_{d,p}} \mu_{p}^{-s} f_{i}^{2} \ge c_{0}$ and  $ \sum\limits_{i \in \mathcal{I}_{d,p+1}} \mu_{p+1}^{-s} f_{i}^{2} \ge c_{0}$, to obtain the lower bound. Combining \eqref{proof useful 1} and \eqref{proof useful 2}, we finish the proof.

\end{proof}

\begin{lemma}\label{lemma calculation m2}
    Consider $\mathcal{X} = \mathbb{S}^{d}$ and the marginal distribution $\mu$ to be the uniform distribution. Let $k = \{ k_{d}\}_{d=1}^{\infty}$ be a sequence of inner product kernels on the sphere satisfying Assumption \ref{assumption kernel} and \ref{assumption inner product kernel}. Further suppose that Assumption \ref{assumption source condition} holds for some $ s > 0$. Define $ \tilde{s} = \min\{s,2\} $. By choosing $\lambda = d^{-l}$ for some $0 < l < \gamma$, we have:
    if $ p \le l \le p+1$ for some $ p \in \{0,1,2, \cdots\}$, we have
    \begin{equation}\label{inner product M2}
        \mathcal{M}_{2}(\lambda) = \Theta\left( \lambda^{2} d^{(2-\tilde{s})p} + d^{-(p+1)\tilde{s}}\right).
    \end{equation}
    The notation $\Theta$ involves constants only depending on $s, p, c_{0}$ and $ R_{\gamma}$, where $ c_{0}$ and $R_{\gamma} $ are the constants from Assumption \ref{assumption source condition}.
\end{lemma}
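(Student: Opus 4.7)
The proof plan splits into two cases according to whether $s \le 2$ or $s > 2$, since $\tilde{s} = \min\{s,2\}$ behaves differently in each regime.

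\textbf{Case A ($0 < s \le 2$, so $\tilde{s} = s$).} The idea is to rewrite
\[
\mathcal{M}_{2}(\lambda) = \lambda^{2} \sum_{i=1}^{\infty} \frac{f_{i}^{2}}{(\lambda_{i}+\lambda)^{2}} = \lambda^{2}\, \mathcal{M}_{0,2}(\lambda)
\]
in the notation of Lemma \ref{lemma useful m}. The hypotheses of that lemma with $(\alpha,\beta) = (0,2)$ reduce to $s - 2 \le 0$ and $s \ge 0$, both of which hold. Applying the lemma gives $\mathcal{M}_{0,2}(\lambda) = \Theta\bigl(d^{(2-s)p} + \lambda^{-2} d^{-(p+1)s}\bigr)$, and multiplying by $\lambda^{2}$ yields the claimed rate with $\tilde{s} = s$.

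\textbf{Case B ($s > 2$, so $\tilde{s} = 2$).} The target simplifies: since $\lambda = d^{-l}$ with $l \le p+1$, we have $\lambda^{2} \ge d^{-2(p+1)}$, so $\lambda^{2} d^{(2-\tilde{s})p} + d^{-(p+1)\tilde{s}} = \lambda^{2} + d^{-2(p+1)} \asymp \lambda^{2}$. It therefore suffices to show $\mathcal{M}_{2}(\lambda) \asymp \lambda^{2}$ directly. For the upper bound, use $\lambda/(\lambda_{i}+\lambda) \le \min(1,\lambda/\lambda_{i})$ and split:
\[
\mathcal{M}_{2}(\lambda) \le \lambda^{2} \sum_{\lambda_{i} \ge \lambda} \lambda_{i}^{-2} f_{i}^{2} + \sum_{\lambda_{i} < \lambda} f_{i}^{2}.
\]
Exploiting $\lambda_{i} \le \kappa^{2}$ and $s > 2$, rewrite $\lambda_{i}^{-2} = \lambda_{i}^{s-2}\lambda_{i}^{-s} \le \kappa^{2(s-2)}\lambda_{i}^{-s}$ in the first sum, and $f_{i}^{2} = \lambda_{i}^{s}\cdot\lambda_{i}^{-s}f_{i}^{2} \le \lambda^{s}\lambda_{i}^{-s}f_{i}^{2}$ in the second. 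The source condition $\sum_{i} \lambda_{i}^{-s} f_{i}^{2} \le R_{\gamma}^{2}$ from Assumption \ref{assumption source condition} (a) then yields $\mathcal{M}_{2}(\lambda) \le \kappa^{2(s-2)} R_{\gamma}^{2}\lambda^{2} + R_{\gamma}^{2}\lambda^{s} \lesssim \lambda^{2}$ for $\lambda$ sufficiently small. For the lower bound, retain only the $k=0$ eigenspace:
\[
\mathcal{M}_{2}(\lambda) \ge \left(\frac{\lambda}{\mu_{0}+\lambda}\right)^{2} \sum_{i \in \mathcal{I}_{d,0}} f_{i}^{2}.
\]
Since $\mu_{0} = a_{0}$ is a positive constant independent of $d$ and $\lambda \to 0$, the prefactor is $\Theta(\lambda^{2})$, while Assumption \ref{assumption source condition} (b) with $k=0$ gives $\sum_{i \in \mathcal{I}_{d,0}} f_{i}^{2} = \mu_{0}^{s} \sum_{i \in \mathcal{I}_{d,0}} \mu_{0}^{-s} f_{i}^{2} \ge c_{0}\mu_{0}^{s} = \Theta(1)$. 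Hence $\mathcal{M}_{2}(\lambda) \gtrsim \lambda^{2}$.

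\textbf{Main obstacle.} The delicate point is the $s > 2$ regime, where Lemma \ref{lemma useful m} cannot be invoked directly with $(\alpha,\beta)=(0,2)$ because the condition $s+\alpha-\beta \le 0$ fails. The resolution is to exploit the boundedness of the kernel ($\lambda_{i} \le \kappa^{2}$) to trade the extra $[\mathcal{H}]^{s}$-regularity beyond the saturation exponent $2$ for a harmless constant factor, and then to verify that the two terms $\lambda^{2} d^{(2-\tilde{s})p}$ and $d^{-(p+1)\tilde{s}}$ in the stated rate coincide (up to constants) with $\lambda^{2}$ whenever $p \le l \le p+1$.
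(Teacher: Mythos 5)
Your proposal is correct and follows essentially the same route as the paper: for $0<s\le 2$ it is identical (apply Lemma \ref{lemma useful m} with $(\alpha,\beta)=(0,2)$ and multiply by $\lambda^{2}$), and for $s>2$ it uses the same key idea of trading the regularity beyond $2$ against the bounded eigenvalues to get $\mathcal{M}_{2}(\lambda)\asymp\lambda^{2}$, with the upper bound from Assumption \ref{assumption source condition} (a) and the lower bound from Assumption \ref{assumption source condition} (b) on a low-frequency eigenspace, before matching $\lambda^{2}$ with the two-term rate when $p\le l\le p+1$. The only differences (splitting the sum at $\lambda_{i}\gtrless\lambda$ versus bounding $\sup_{k}\mu_{k}^{s}/(\mu_{k}+\lambda)^{2}$, and keeping only the $k=0$ block for the lower bound) are cosmetic.
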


\begin{proof}
    When $ 0 < s \le 2$, $  \mathcal{M}_{2}(\lambda) $ can be viewed as $ \lambda^{2} \mathcal{M}_{\alpha, \beta}(\lambda) $ in Lemma \ref{lemma useful m} with $\alpha = 0, \beta = 2$. The conditions \eqref{useful condition 1} are satisfied and Lemma \ref{lemma useful m} shows that 
    \begin{equation}\label{proof m2 1}
        \mathcal{M}_{2}(\lambda) = \Theta\left( \lambda^{2} d^{(2-s)p} + d^{-(p+1)s}\right).
    \end{equation}

    When $ s > 2$, without loss of generality, we can assume $ \lambda_{i} \le 1, \forall i $ and $ \lambda \le 1$. Recall we have proved in Lemma \ref{lemma inner eigen} that $ \mu_{0}$ remains as a constant as $d \to \infty$. On the one hand, Assumption \ref{assumption source condition} (b) also implies $f_{1}^{2} \ge c_{0} $ without loss of generality, which further implies
    \begin{align}\label{proof m2 2}
        \mathcal{M}_{2}(\lambda) &= \lambda^{2} \sum\limits_{i=1}^{\infty} \frac{f_{i}^{2}}{\left(\lambda_{i} + \lambda\right)^{2}} \ge \frac{1}{4} \lambda^{2} \sum\limits_{i=1}^{\infty} f_{i}^{2} \ge \frac{1}{4} \lambda^{2} c_{0}.
    \end{align}
    On the other hand, since $s>2$, we have 
    \begin{align}\label{proof m2 3}
        \mathcal{M}_{2}(\lambda) &= \lambda^{2} \sum\limits_{k=0}^{\infty} \left( \frac{\mu_{k}^{s}}{\left( \mu_{k} + \lambda \right)^{2}} \sum\limits_{i \in \mathcal{I}_{d,k}} \mu_{k}^{-s} f_{i}^{2} \right) \notag \\
        &\le \lambda^{2} \left( \sup\limits_{k \ge 0} \frac{\mu_{k}^{s}}{\left( \mu_{k} + \lambda\right)^{2}} \cdot \sum\limits_{k=0}^{\infty} \sum\limits_{i \in \mathcal{I}_{d,k}} \mu_{k}^{-s} f_{i}^{2}  \right) \notag \\
        &\le \lambda^{2} \cdot \sup\limits_{k \ge 0} \frac{\mu_{k}^{s}}{\left( \mu_{k} + \lambda\right)^{2}} \cdot R_{\gamma}^{2} \notag \\
        &\le \lambda^{2} R_{\gamma}^{2}.
    \end{align}
    Further note that since $ s > 2$ and $ p \le l \le p+1$, \eqref{proof m2 2} and \eqref{proof m2 3} implies
    \begin{displaymath}
        \mathcal{M}_{2}(\lambda) = \Theta \left( \lambda^{2} \right) = \Theta \left( \lambda^{2} d^{(2-\tilde{s})p} + d^{-(p+1)\tilde{s}} \right).
    \end{displaymath}
    We finish the proof.

\end{proof}

The following lemma applies for those $ s \ge 1$, which gives an upper bound of $\mathcal{M}_{1}(\lambda)$.
\begin{lemma}\label{lemma calculation m1 s ge 1}
    Consider $\mathcal{X} = \mathbb{S}^{d}$ and the marginal distribution $\mu$ to be the uniform distribution. Let $k = \{ k_{d}\}_{d=1}^{\infty}$ be a sequence of inner product kernels on the sphere satisfying Assumption \ref{assumption kernel} and \ref{assumption inner product kernel}. Further suppose that Assumption \ref{assumption source condition} holds for some $ s \ge 1$. Define $ \tilde{s} = \min\{s,2\} $. By choosing $\lambda = d^{-l}$ for some $0<l<\gamma$, we have:
    if $ p \le l \le p+1$ for some $ p \in \{0,1,2\cdots\}$, we have
    \begin{equation}\label{inner product M1 s ge 1}
        \mathcal{M}_{1}(\lambda) = O\left( \lambda^{\frac{1}{2}} d^{\frac{(2-\tilde{s})p}{2}} + d^{-\frac{(\tilde{s}-1)(p+1)}{2}}\right).
    \end{equation}
    The notation $O$ involves constants only depending on $s, \kappa, p$ and $ R_{\gamma}$, where $R_{\gamma} $ is the constant from Assumption \ref{assumption source condition}.
\end{lemma}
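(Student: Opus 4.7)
The plan is to exploit the fact that for $s \geq 1$, the function $g_\lambda(\boldsymbol{x}) := \sum_i \frac{\lambda f_i}{\lambda_i + \lambda} e_i(\boldsymbol{x})$ actually lies in the RKHS $\mathcal H$, so that $\mathcal{M}_1(\lambda) = \|g_\lambda\|_{L^\infty}$ can be controlled via the standard reproducing-kernel inequality $\|g\|_{L^\infty} \leq \kappa \|g\|_{\mathcal H}$. This reduces the problem to estimating an $\mathcal H$-norm, which is particularly useful here since the spherical harmonics $\{Y_{k,l}\}$ do not admit uniform $L^\infty$ bounds as $d$ grows and so bounding the essential-supremum directly would be awkward.

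First, since $s \geq 1 \geq \tilde s$, the continuous embedding $[\mathcal H]^{s} \hookrightarrow [\mathcal H]^{\tilde s}$ (whose operator norm is controlled by $\sup_i \lambda_i^{(s-\tilde s)/2} \leq \kappa^{s-\tilde s}$) lets us replace $s$ by $\tilde s$ in the source condition with a constant depending only on $s$ and $\kappa$; so without loss of generality we may assume $s = \tilde s \in [1,2]$. A direct calculation using the Mercer decomposition then gives
\begin{equation*}
\|g_\lambda\|_{\mathcal H}^2 \;=\; \sum_{i=1}^{\infty} \frac{1}{\lambda_i}\left(\frac{\lambda}{\lambda_i+\lambda}\right)^{2} f_i^2 \;=\; \lambda^2 \sum_{i=1}^{\infty} \frac{\lambda_i^{-1}}{(\lambda_i+\lambda)^2}\, f_i^2 \;=\; \lambda^2\, \mathcal{M}_{-1,\,2}(\lambda),
\end{equation*}
where $\mathcal{M}_{\alpha,\beta}$ is the functional introduced in Lemma \ref{lemma useful m}. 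With $\alpha=-1$ and $\beta=2$, the hypotheses $\tilde s+\alpha-\beta = \tilde s - 3 \leq 0$ and $\tilde s+\alpha = \tilde s - 1 \geq 0$ are automatic for $\tilde s \in [1,2]$.

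Applying Lemma \ref{lemma useful m} (only its upper bound, which uses Assumption \ref{assumption source condition}(a) alone) then yields
\begin{equation*}
\|g_\lambda\|_{\mathcal H}^2 \;\lesssim\; \lambda^2 d^{(3-\tilde s)p} \;+\; d^{-(p+1)(\tilde s - 1)}.
\end{equation*}
Using the constraint $\lambda \leq \mathfrak{C}_1^{-1}\mu_p \asymp d^{-p}$ forced by $l \geq p$ (exactly as in Lemma \ref{lemma calculation n1 n2}), the first term becomes $\lambda^2 d^{(3-\tilde s)p} = \lambda \cdot (\lambda d^p) \cdot d^{(2-\tilde s)p} \leq \lambda d^{(2-\tilde s)p}$. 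Taking square roots and using $\sqrt{a+b} \leq \sqrt a + \sqrt b$, together with $\mathcal{M}_1(\lambda) \leq \kappa\|g_\lambda\|_{\mathcal H}$, produces the claimed bound $\mathcal{M}_1(\lambda) = O\bigl(\lambda^{1/2} d^{(2-\tilde s)p/2} + d^{-(\tilde s - 1)(p+1)/2}\bigr)$.

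The main obstacle is the reduction $s \mapsto \tilde s$: one must carefully track that the embedding constant $[\mathcal H]^s \hookrightarrow [\mathcal H]^{\tilde s}$ is independent of $d$ (which is why Assumption \ref{assumption kernel}'s uniform bound $\sup_x k_d(x,x) \leq \kappa^2$, yielding $\sup_i \lambda_i \leq \kappa^2$ uniformly in $d$, is essential). After that, every remaining step is a direct appeal to Lemma \ref{lemma useful m} combined with the elementary inequality $\lambda d^p \leq 1$ guaranteed by the range of $l$; no additional concentration or operator-theoretic input is needed.
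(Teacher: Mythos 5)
Your proof is correct, and it takes a genuinely different (though related) route from the paper's. The paper bounds $\mathcal{M}_1(\lambda)^2$ by a Cauchy--Schwarz split weighted by $\big(\tfrac{\lambda_i}{\lambda_i+\lambda}\big)^{1/2}$, giving $\mathcal{M}_1(\lambda)^2 \le \mathcal{Q}_1(\lambda)\,\mathcal{N}_1(\lambda)$ with $\mathcal{Q}_1(\lambda)=\lambda^2\mathcal{M}_{-1,1}(\lambda)$; it then invokes Assumption \ref{assumption eigenfunction} (via Lemma \ref{lemma inner assumption holds}) and the computation $\mathcal{N}_1(\lambda)=\Theta(\lambda^{-1})$ from Lemma \ref{lemma calculation n1 n2}, and handles $s>2$ by a separate direct estimate $\mathcal{Q}_1(\lambda)=\Theta(\lambda^2)$. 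You instead weight by $\lambda_i^{1/2}$, i.e.\ use the reproducing-kernel bound $\|g_\lambda\|_{L^\infty}\le\kappa\|g_\lambda\|_{\mathcal H}$ with $\|g_\lambda\|_{\mathcal H}^2=\lambda^2\mathcal{M}_{-1,2}(\lambda)$, which only needs the uniform kernel bound of Assumption \ref{assumption kernel} and not Assumption \ref{assumption eigenfunction} or the $\mathcal{N}_1$ computation; and you replace the paper's $s>2$ case split by the $d$-uniform embedding $[\mathcal H]^s\hookrightarrow[\mathcal H]^{\tilde s}$ (norm $\le\kappa^{s-\tilde s}$ since $\lambda_i\le\kappa^2$). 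The price is one extra step, $\lambda d^p\le 1$ (immediate from $\lambda=d^{-l}$, $l\ge p$), to convert $\lambda^2 d^{(3-\tilde s)p}$ into $\lambda d^{(2-\tilde s)p}$; the gain is a leaner argument whose constants only involve $\kappa$ at the sup-norm step. One presentational caveat: Lemma \ref{lemma useful m} as stated assumes Assumption \ref{assumption source condition} in full (parts (a) and (b)) for the given exponent, so applying it with $\tilde s$ formally requires you to extract only its upper-bound half (which, as you correctly note, uses part (a) alone); strictly speaking you should restate or re-derive that one-sided estimate for $\tilde s$ rather than cite the lemma verbatim, but this is bookkeeping, not a gap.
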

\begin{proof}
    First, Cauchy-Schwarz inequality shows that
\begin{align}\label{m1 to q1 n1}
    \mathcal{M}_{1}(\lambda)^{2} &= \operatorname*{ess~sup}_{\boldsymbol{x} \in \mathcal{X}} \left|\sum\limits_{i=1}^{\infty} \left( \frac{\lambda}{\lambda_{i} + \lambda} f_{i} e_{i}(\boldsymbol{x}) \right) \right|^{2} \notag \\
    &\le \left( \sum\limits_{i=1}^{\infty}  \frac{\lambda^{2} \lambda_{i}^{-1}}{\lambda_{i} + \lambda} f_{i}^{2} \right) \cdot \operatorname*{ess~sup}_{\boldsymbol{x} \in \mathcal{X}} \sum\limits_{i=1}^{\infty} \left( \frac{\lambda_{i}}{\lambda_{i} + \lambda} e_{i}(\boldsymbol{x})^{2} \right) \notag \\
    &\le \left( \sum\limits_{i=1}^{\infty}  \frac{\lambda^{2} \lambda_{i}^{-1}}{\lambda_{i} + \lambda} f_{i}^{2} \right) \cdot \sum\limits_{i=1}^{\infty} \left( \frac{\lambda_{i}}{\lambda_{i} + \lambda} \right) \notag \\
    &:= \mathcal{Q}_{1}(\lambda) \cdot \mathcal{N}_{1}(\lambda),
\end{align}
where we use \eqref{assumption eigen - n1} in Assumption \ref{assumption eigenfunction} for the second inequality.

When $ 1 \le s \le 2$, $\mathcal{Q}_{1}(\lambda)$ defined above can be viewed as $\lambda^{2} \mathcal{M}_{\alpha, \beta}(\lambda) $ in Lemma \ref{lemma useful m} with $\alpha = -1, \beta = 1$. In addition, the conditions \eqref{useful condition 1} are satisfied, thus Lemma \ref{lemma useful m} shows that 
\begin{equation}\label{proof q1 1}
    \mathcal{Q}_{1}(\lambda) = \Theta\left( \lambda^{2} d^{(2-s)p} + \lambda d^{-(s-1)(p+1)}\right).
\end{equation}

Since we assume the kernel to be bounded in Assumption \ref{assumption kernel}, we can assume $ \lambda_{i} \le 1, \forall i $ and $ \lambda \le 1$ without loss of generality. When $ s > 2$, on the one hand, Assumption \ref{assumption source condition} (b) also implies
    \begin{align}\label{proof q1 2}
        \mathcal{Q}_{1}(\lambda) &= \lambda^{2} \sum\limits_{i=1}^{\infty} \frac{f_{i}^{2}}{\left(\lambda_{i} + \lambda\right)\lambda_{i}} \ge \frac{1}{2} \lambda^{2} \sum\limits_{i=1}^{\infty} f_{i}^{2} \ge \frac{1}{2} \lambda^{2} c_{0}.
    \end{align}
    On the other hand, since $s>2$, we have 
    \begin{align}\label{proof q1 3}
        \mathcal{Q}_{1}(\lambda) &= \lambda^{2} \sum\limits_{k=0}^{\infty} \left( \frac{\mu_{k}^{s-1}}{ \mu_{k} + \lambda } \sum\limits_{i \in \mathcal{I}_{d,k}} \mu_{k}^{-s} f_{i}^{2} \right) \notag \\
        &\le \lambda^{2} \left( \sup\limits_{k \ge 0} \frac{\mu_{k}^{s-1}}{ \mu_{k} + \lambda } \cdot \sum\limits_{k=0}^{\infty} \sum\limits_{i \in \mathcal{I}_{d,k}} \mu_{k}^{-s} f_{i}^{2}  \right) \notag \\
        &\le \lambda^{2} \cdot \frac{\mu_{k}^{s-1}}{ \mu_{k} + \lambda } \cdot R_{\gamma}^{2} \notag \\
        &\le \lambda^{2} R_{\gamma}^{2}.
    \end{align}
    Further note that since $ s > 2$ and $ p \le l \le p+1$, \eqref{proof q1 2} and \eqref{proof q1 3} implies
    \begin{displaymath}
        \mathcal{Q}_{1}(\lambda) = \Theta \left( \lambda^{2} \right) = \Theta \left( \lambda^{2} d^{(2-\tilde{s})p} + \lambda d^{-(\tilde{s}-1)(p+1)} \right).
    \end{displaymath}
    Therefore, we have $ \mathcal{Q}_{1}(\lambda) = \Theta \left( \lambda^{2} d^{(2-\tilde{s})p} + \lambda d^{-(\tilde{s}-1)(p+1)} \right)$ for any $ s > 0$.
    
   Further recalling that Lemma \ref{lemma calculation n1 n2} proves $ \mathcal{N}_{1}(\lambda) = \Theta\left( \lambda^{-1} \right)$, use \eqref{m1 to q1 n1} and we finish the proof.

\end{proof}

When $ 0 < s < 1$, the following lemma gives an upper bound of $ \|f_{\lambda}\|_{L^{\infty}}$.
\begin{lemma}\label{lemma calculation flambda s le 1}
    Consider $\mathcal{X} = \mathbb{S}^{d}$ and the marginal distribution $\mu$ to be the uniform distribution. Let $k = \{ k_{d}\}_{d=1}^{\infty}$ be a sequence of inner product kernels on the sphere satisfying Assumption \ref{assumption kernel} and \ref{assumption inner product kernel}. Further suppose that Assumption \ref{assumption source condition} holds for some $ 0<s<1$. Recall the definition of $f_{\lambda}$ in \eqref{def f lambda}. By choosing $\lambda = d^{-l}$ for some $0<l<\gamma$, we have: if $ p \le l \le p+1$ for some $ p \in \{0,1,2\cdots\}$, we have
    \begin{equation}\label{inner product flambda s le 1}
        \left\| f_{\lambda} \right\|_{L^{\infty}} = O\left( d^{\frac{(1-s)p}{2}} + \lambda^{-1} d^{-\frac{(1+s)(p+1)}{2}}\right).
    \end{equation}
    The notation $O$ involves constants only depending on $s, \kappa, p,$ and $ R_{\gamma}$, where $R_{\gamma} $ is the constant from Assumption \ref{assumption source condition}.
\end{lemma}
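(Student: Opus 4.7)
The natural starting point is to expand $f_\lambda = (T+\lambda)^{-1}S_k^* f_\rho^*$ in the Mercer basis, which gives $f_\lambda = \sum_{i=1}^\infty \tfrac{\lambda_i}{\lambda_i+\lambda} f_i e_i$. To exploit the structure of the sphere, I will group these terms by eigenspace and write
\[
f_\lambda(\boldsymbol{x}) \;=\; \sum_{k=0}^{\infty} \frac{\mu_k}{\mu_k+\lambda}\, g_k(\boldsymbol{x}), \qquad g_k(\boldsymbol{x}) := \sum_{i \in \mathcal{I}_{d,k}} f_i e_i(\boldsymbol{x}).
\]
The key input is the addition formula for spherical harmonics (already used in Lemma \ref{lemma inner assumption holds}): $\sum_{l=1}^{N(d,k)} Y_{k,l}^2(\boldsymbol{x}) \equiv N(d,k)$ on $\mathbb{S}^d$. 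Cauchy--Schwarz inside each eigenspace then yields the pointwise bound
$|g_k(\boldsymbol{x})|^2 \le N(d,k) \sum_{i \in \mathcal{I}_{d,k}} f_i^2$, uniformly in $\boldsymbol{x}$.

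The second step is a weighted Cauchy--Schwarz across $k$, inserting the factor $\mu_k^{s/2}$ to match the source-condition norm. Writing
\[
\frac{\mu_k}{\mu_k+\lambda}\sqrt{N(d,k)\textstyle\sum_{i \in \mathcal{I}_{d,k}}f_i^2}
\;=\; \sqrt{\mu_k^{-s}\textstyle\sum_{i \in \mathcal{I}_{d,k}}f_i^2}\,\cdot\,
\frac{\mu_k}{\mu_k+\lambda}\sqrt{\mu_k^{s}N(d,k)},
\]
and summing over $k$, Cauchy--Schwarz gives
\[
\|f_\lambda\|_{L^\infty}^2 \;\le\; \Bigl(\sum_{k} \mu_k^{-s}\!\!\sum_{i \in \mathcal{I}_{d,k}}\!\! f_i^2 \Bigr)\cdot \Bigl(\sum_{k} \frac{\mu_k^{2+s}}{(\mu_k+\lambda)^2} N(d,k)\Bigr) \;\le\; R_\gamma^2 \cdot \Sigma(\lambda),
\]
where the first factor is $\|f_\rho^*\|_{[\mathcal{H}]^s}^2$ by Assumption \ref{assumption source condition}(a), and $\Sigma(\lambda)$ is the tail sum to be estimated.

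The third step is to compute $\Sigma(\lambda)$ by splitting at the threshold $k=p$, where the regularization $\lambda$ crosses $\mu_k$. For $k \le p$ I discard $\lambda$ in the denominator and use Lemmas \ref{lemma inner eigen}--\ref{lemma Ndk}:
$\mu_k^{s} N(d,k) \asymp d^{k(1-s)}$, which (since $0<s<1$) is geometric and dominated by the endpoint $k=p$, giving $O(d^{(1-s)p})$. For $k \ge p+1$ I bound $(\mu_k+\lambda)^{-2} \le \lambda^{-2}$ and factor out $\mu_{p+1}^{1+s}$ via Lemma \ref{lemma:monotone_of_eigenvalues_of_inner_product_kernels}, so that
\[
\sum_{k\ge p+1}\mu_k^{2+s}N(d,k) \le \mu_{p+1}^{1+s}\sum_{k\ge p+1}\mu_k N(d,k) \le \kappa^2 \mu_{p+1}^{1+s} \asymp d^{-(p+1)(1+s)},
\]
producing the contribution $O(\lambda^{-2}d^{-(p+1)(1+s)})$. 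Adding the two pieces and taking square roots yields the stated bound.

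\textbf{Main obstacle.} The delicate point is the balancing exponent $s/2$ in the Cauchy--Schwarz split of step~two: the naive CS bound $\|f_\lambda\|_{L^\infty}^2 \lesssim \|f_\rho^*\|_{L^2}^2 \cdot \sup_{\boldsymbol{x}}\sum_i (\tfrac{\lambda_i}{\lambda_i+\lambda})^2 e_i(\boldsymbol{x})^2$ loses a factor of $d$ in the second regime and falls short of the claimed $\lambda^{-1}d^{-(1+s)(p+1)/2}$ rate. Inserting $\mu_k^{s/2}$ (which is the ``dual'' of the source norm) and exploiting that $N(d,k)$ grows only polynomially while $\mu_k$ decays at rate $d^{-k}$ is precisely what converts the trivial bound $\sqrt{\lambda^{-1}d^{-(p+1)s}}$ into the sharper $\lambda^{-1}d^{-(1+s)(p+1)/2}$.
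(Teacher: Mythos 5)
Your argument is correct, and it reaches the stated bound by a genuinely different route than the paper. The paper's proof is a two-line reduction: it first uses the generic RKHS bound $\|f_\lambda\|_{L^\infty}\le \kappa^2\|f_\lambda\|_{\mathcal H}$ (valid for any bounded kernel, no spherical structure needed), writes $\|f_\lambda\|_{\mathcal H}^2=\sum_i \lambda_i(\lambda_i+\lambda)^{-2}f_i^2$, and then invokes the general computation of Lemma \ref{lemma useful m} with $(\alpha,\beta)=(1,2)$, whose hypotheses $s+\alpha-\beta\le 0$ and $s+\alpha\ge 0$ hold exactly because $0<s<1$; this yields $\Theta\bigl(d^{(1-s)p}+\lambda^{-2}d^{-(1+s)(p+1)}\bigr)$ and the lemma follows by taking square roots. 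You instead bound the sup norm directly from the pointwise Mercer expansion, using the addition formula $\sum_l Y_{k,l}^2(\boldsymbol{x})=N(d,k)$ (the same fact underlying Lemma \ref{lemma inner assumption holds}) together with a Cauchy--Schwarz within each degree-$k$ shell and a weighted Cauchy--Schwarz across $k$ with weight $\mu_k^{s/2}$, which isolates the source norm $\|f_\rho^*\|_{[\mathcal H]^s}\le R_\gamma$ and leaves the deterministic sum $\Sigma(\lambda)=\sum_k\mu_k^{2+s}(\mu_k+\lambda)^{-2}N(d,k)$; your split at $k=p$, using Lemmas \ref{lemma inner eigen}--\ref{lemma Ndk} for $k\le p$ and $\mu_k\le\mu_{p+1}$ (Lemma \ref{lemma:monotone_of_eigenvalues_of_inner_product_kernels}) plus $\sum_k\mu_kN(d,k)\le\kappa^2$ for the tail, gives the same two terms, and since $N(d,k)\asymp\mu_k^{-1}$ in this range the intermediate quantities agree in order with the paper's $\mathcal Q_2(\lambda)$. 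What each buys: the paper's route is shorter and reuses machinery already proved (and its first step is kernel-generic), while yours produces a bound depending on $f_\rho^*$ only through $R_\gamma$ and makes the role of the spherical eigenstructure explicit, at the cost of redoing the shell-by-shell estimate that Lemma \ref{lemma useful m} already packages. One small caveat: the precise formula you quote in your ``main obstacle'' remark for what the naive Cauchy--Schwarz bound would give is not quite right, but it is purely motivational and does not affect the proof.
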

\begin{proof}
    We need the following fact: For any $f \in \mathcal{H}$, since Assumption \ref{assumption kernel} holds, we have
    \begin{align}
        \| f_{\lambda}\|_{L^{\infty}} &\le \sup\limits_{\boldsymbol{x} \in \mathcal{X}} |f(\boldsymbol{x})| = \sup\limits_{\boldsymbol{x} \in \mathcal{X}} \left\langle f(\cdot), k(\boldsymbol{x},\cdot) \right\rangle_{\mathcal{H}} \notag \\
        &\le \sup\limits_{\boldsymbol{x} \in \mathcal{X}} \| k(\boldsymbol{x},\cdot)\|_{\mathcal{H}} \cdot \|f\|_{\mathcal{H}} \notag \\
        &\le \kappa^{2} \|f\|_{\mathcal{H}}. \notag
    \end{align}
    Therefore, by definition and notice that $ f_{\lambda} \in \mathcal{H}$, we have
    \begin{equation}
        \| f_{\lambda}\|_{L^{\infty}}^{2} \le \kappa^{4} \| f_{\lambda}\|_{\mathcal{H}}^{2} = \kappa^{4} \sum\limits_{i=1}^{\infty} \frac{\lambda_{i}}{
        \left(\lambda_{i} + \lambda\right)^{2}} f_{i}^{2} := \mathcal{Q}_{2}(\lambda).
    \end{equation}
    Since $ 0 < s <1$, $\mathcal{Q}_{2}(\lambda)$ can be viewed as $ \kappa^{4} \mathcal{M}_{\alpha, \beta}(\lambda) $ in Lemma \ref{lemma useful m} with $\alpha = 1, \beta = 2$ and the conditions \eqref{useful condition 1} are satisfied. Thus Lemma \ref{lemma useful m} shows that 
\begin{displaymath}
    \mathcal{Q}_{2}(\lambda) = \Theta\left( \lambda^{2} d^{(1-s)p} + \lambda^{-s} d^{-(1+s)(p+1)}\right).
\end{displaymath}
Taking the square root, we finish the proof.
\end{proof}

\subsection{Proof of Theorem \ref{theorem inner s ge 1}}\label{section proof of inner ge 1}
In the last subsection, we have calculated the exact convergence rates of $\mathcal{N}_{1}(\lambda), \mathcal{N}_{2}(\lambda), \mathcal{M}_{1}(\lambda), \mathcal{M}_{2}(\lambda)$ when $ \lambda = d^{-l}$ for some $ 0 < l < \gamma$. Note that we have proved in Lemma \ref{lemma inner assumption holds} that Assumption \ref{assumption eigenfunction} naturally holds for inner product kernel on the sphere. Now we are ready to apply Theorem \ref{main theorem} to prove Theorem \ref{theorem inner s ge 1}. The proof mainly consists of 3 steps: 
\begin{itemize}
    \item[(1)] For specific range of $\gamma > 0$, we use Lemma \ref{lemma calculation n1 n2} and Lemma \ref{lemma calculation m2} to derive the scale of $ \lambda_{\text{balance}}$ or $ l_{\text{balance}}$ such that $  \mathcal{N}_{2}(\lambda)/n$ and $\mathcal{M}_{2}(\lambda)$ are balanced.
    
    \item[(2)] We check that the conditions \eqref{approximation conditions} required in Theorem \ref{main theorem} are satisfied for $\lambda \gtrsim \lambda_{\text{balance}}$ (or $\lambda = d^{-l}, l \le l_{\text{balance}}$).
    
    \item[(3)] Using the monotonicity of $\mathbf{Var}(\lambda)$ with respect to $\lambda$, we demonstrate that $ \lambda = \lambda_{\text{balance}} $ is the best choice of regularization parameter, i.e., the generalization error of KRR estimator is the smallest when $ \lambda = \lambda_{\text{balance}} $. That is to say, the convergence rate of the generalization error can not be faster than the rate when choosing $ \lambda = \lambda_{\text{balance}} $.
\end{itemize}
Note that we expect $\mathbf{Bias}^{2}(\lambda) = \Theta_{\mathbb{P}}\left(\mathcal{M}_{2}(\lambda) \right)$ and $ \mathbf{Var}(\lambda) = \Theta_{\mathbb{P}}\left(\mathcal{N}_{2}(\lambda)/n \right)$. Step 1 actually indicates the regularization such that the bias and variance are balanced. Together with Theorem \ref{theorem variance approximation} and \ref{theorem bias approximation}, Step 2 further verifies that $\mathbf{Bias}^{2}(\lambda) = \Theta_{\mathbb{P}}\left(\mathcal{M}_{2}(\lambda) \right)$ and $ \mathbf{Var}(\lambda) = \Theta_{\mathbb{P}}\left(\mathcal{N}_{2}(\lambda)/n \right)$ indeed hold for those $ \lambda \gtrsim \lambda_{\text{balance}} $. Thus they are indeed balanced under the choice of $\lambda = \lambda_{\text{balance}} $.\\

\textit{Final proof of Theorem \ref{theorem inner s ge 1}.} In the following of the proof, we omit the dependence of constants on $s, \sigma, \gamma, c_{0}, \kappa, c_{1} $ and $ c_{2}$.
$ $\\
\textbf{Step 1}:
Note that we assume $ s \ge 1$ in this theorem and $ \lambda = d^{-l}, 0<l<\gamma$. For specific range of $\gamma$, we discuss the range of $ l_{\text{balance}}$. Recall that we define $ \tilde{s} = \min\{s,2\}$.
\begin{itemize}[leftmargin = 18pt]
    \item When $ l \in ( p,  p + \frac{1}{2}]$ for some integer $ p \ge 0$, Lemma \ref{lemma calculation n1 n2} and Lemma \ref{lemma calculation m2} show that
    \begin{displaymath}
        \frac{\mathcal{N}_{2}(\lambda)}{n} \asymp d^{p-\gamma};~~ \mathcal{M}_{2}(\lambda) \asymp d^{-2l+(2-\tilde{s})p},
    \end{displaymath}
    thus we have
    \begin{equation}
        l_{\text{balance}} = \frac{\gamma + p -p\tilde{s}}{2}.
    \end{equation}
    Further, letting $l_{\text{balance}} = \frac{\gamma + p -p\tilde{s}}{2} \in ( p,  p + \frac{1}{2}] $, we have 
    \begin{equation}
        \gamma \in \left( p+p\tilde{s}, p+p\tilde{s}+1 \right].
    \end{equation}

    \item When $ l \in ( p+ \frac{1}{2},  p + \frac{\tilde{s}}{2}]$, Lemma \ref{lemma calculation n1 n2} and Lemma \ref{lemma calculation m2} show that
    \begin{displaymath}
        \frac{\mathcal{N}_{2}(\lambda)}{n} \asymp d^{2l-p-1-\gamma};~~ \mathcal{M}_{2}(\lambda) \asymp d^{-2l+(2-\tilde{s})p},
    \end{displaymath}
    thus we have
    \begin{equation}
        l_{\text{balance}} = \frac{\gamma + 3p -p\tilde{s}+1}{4}.
    \end{equation}
    Further, letting $l_{\text{balance}} = \frac{\gamma + 3p -p\tilde{s}+1}{4} \in ( p+ \frac{1}{2},  p + \frac{\tilde{s}}{2}] $, we have 
    \begin{equation}
        \gamma \in \left( p+p\tilde{s}+1, p+p\tilde{s}+2\tilde{s}-1 \right].
    \end{equation}

    \item When $ l \in (p+ \frac{\tilde{s}}{2},  p + 1]$, Lemma \ref{lemma calculation n1 n2} and Lemma \ref{lemma calculation m2} show that
    \begin{displaymath}
        \frac{\mathcal{N}_{2}(\lambda)}{n} \asymp d^{2l-p-1-\gamma};~~ \mathcal{M}_{2}(\lambda) \asymp d^{-(p+1)\tilde{s}},
    \end{displaymath}
    thus we have
    \begin{equation}
        l_{\text{balance}} = \frac{\gamma + (p+1)(1-\tilde{s})}{2}.
    \end{equation}
    Further, letting $l_{\text{balance}} \in ( p+ \frac{\tilde{s}}{2},  p + 1] $, we have 
    \begin{equation}
        \gamma \in \left( p+p\tilde{s}+2\tilde{s}-1, (p+1) + (p+1)\tilde{s} \right].
    \end{equation}
\end{itemize}

\textbf{Step 2}:
In order to apply Theorem \ref{theorem variance approximation} and Theorem \ref{theorem bias approximation} so that we know the exact convergence rates of $ \mathbf{Var}(\lambda_{\text{balance}})$ and $\mathbf{Bias}^{2}(\lambda_{\text{balance}})$, we first check the approximation conditions \eqref{var conditions} and \eqref{bias conditions}, or equivalently conditions \eqref{approximation conditions}, hold for $ l = l_{\text{balance}}$. Recall that we have calculated the convergence rates of $\mathcal{N}_{1}(\lambda)$ and $\mathcal{M}_{1}(\lambda)$ in Lemma \ref{lemma calculation n1 n2} and Lemma \ref{lemma calculation m1 s ge 1}.

\begin{itemize}[leftmargin = 18pt]
    \item When $\gamma \in \left( p+p\tilde{s}, p+p\tilde{s}+1 \right] $: recall that $ l_{\text{balance}} = \frac{\gamma+p-p\tilde{s}}{2} \in (p, p+\frac{1}{2}]$.
    
    $\left( \text{\lowercase\expandafter{\romannumeral1}} \right)~$The first condition in \eqref{approximation conditions} is equivalent to
    \begin{displaymath}
        \frac{\gamma+p-p\tilde{s}}{2} < \gamma \iff \gamma > p-p\tilde{s},
    \end{displaymath}
    which naturally holds for all $\gamma \in \left( p+p\tilde{s}, p+p\tilde{s}+1 \right]  $.\\
    $ $\\
    $\left( \text{\lowercase\expandafter{\romannumeral2}} \right)~$The second condition in \eqref{approximation conditions} is equivalent to
    \begin{displaymath}
        d^{-\gamma} \cdot d^{\gamma+p-p\tilde{s}} \cdot \gamma \ln{d} \ll d^{p} \iff p-p\tilde{s} < p,
    \end{displaymath}
    which naturally holds for all $\gamma \in \left( p+p\tilde{s}, p+p\tilde{s}+1 \right]  $ and $ p \neq 0$. When $ p = 0$, we actually need to choose $ \lambda_{\text{balance}} = d^{-l_{\text{balance}}} \cdot \ln{d}$ and the second condition will hold.\\
    $ $\\
    $\left( \text{\lowercase\expandafter{\romannumeral3}} \right)~$The third condition in \eqref{approximation conditions} is equivalent to
    \begin{align}
        d^{-\gamma} \cdot d^{\frac{\gamma+p-p\tilde{s}}{4}}  \cdot \left[ d^{-\frac{\gamma+p-p\tilde{s}}{4}+\frac{(2-\tilde{s})p}{2}} + d^{-\frac{(\tilde{s}-1)(p+1)}{2}} \right] \ll d^{-\frac{1}{2}(\gamma+p-p\tilde{s}) + \frac{(2-\tilde{s})p}{2}} \notag 
    \end{align}
    $ \iff $
    \begin{displaymath}
        \gamma > p-p\tilde{s};~~ \gamma > p-3p\tilde{s}-2\tilde{s}+2,
    \end{displaymath}
    which naturally holds for all $\gamma \in\left( p+p\tilde{s}, p+p\tilde{s}+1 \right] $ and $ p \neq 0$. In addition, one can also check that the third condition in \eqref{approximation conditions} holds when $ p = 0$ and $ \lambda_{\text{balance}} = d^{-l_{\text{balance}}} \cdot \ln{d}$.

    \item When $\gamma \in ( p+p\tilde{s}+1, p+p\tilde{s}+2\tilde{s}-1 ] $: recall that $ l_{\text{balance}} = \frac{\gamma + 3p -p\tilde{s}+1}{4} \in (p+\frac{1}{2}, p+\frac{\tilde{s}}{2}]$.
    
    $\left( \text{\lowercase\expandafter{\romannumeral1}} \right)~$The first condition in \eqref{approximation conditions} is equivalent to
    \begin{displaymath}
        \frac{\gamma + 3p -p\tilde{s}+1}{4} < \gamma \iff \gamma > p - \frac{p\tilde{s}}{3}+\frac{1}{3},
    \end{displaymath}
    which naturally holds for all $\gamma \in ( p+p\tilde{s}+1, p+p\tilde{s}+2\tilde{s}-1 ]  $.\\
    $ $\\
    $\left( \text{\lowercase\expandafter{\romannumeral2}} \right)~$The second condition in \eqref{approximation conditions} is equivalent to
    \begin{displaymath}
        d^{-\gamma} \cdot d^{ \frac{\gamma + 3p -p\tilde{s}+1}{2}} \cdot \gamma \ln{d} \ll d^{ \frac{\gamma + 3p -p\tilde{s}+1}{2}-p-1} \iff \gamma > p+1,
    \end{displaymath}
    which naturally holds for all $\gamma \in ( p+p\tilde{s}+1, p+p\tilde{s}+2\tilde{s}-1 ]  $.\\
    $ $\\
    $\left( \text{\lowercase\expandafter{\romannumeral3}} \right)~$The third condition in \eqref{approximation conditions} is equivalent to
    \begin{align}
        d^{-\gamma} \cdot d^{\frac{\gamma+p-p\tilde{s}}{8}}  \cdot \left[ d^{-\frac{\gamma+p-p\tilde{s}}{8}+\frac{(2-\tilde{s})p}{2}} + d^{-\frac{(\tilde{s}-1)(p+1)}{2}} \right] \ll d^{-\frac{1}{4}(\gamma+p-p\tilde{s}) + \frac{(2-\tilde{s})p}{2}} \notag 
    \end{align}
    $ \iff $
    \begin{displaymath}
        \gamma > p-\frac{p\tilde{s}}{3}+\frac{1}{3};~~ \gamma > p-\frac{3p\tilde{s}}{5}-\frac{4\tilde{s}}{5}+\frac{7}{5},
    \end{displaymath}
    which naturally holds for all $\gamma \in ( p+p\tilde{s}+1, p+p\tilde{s}+2\tilde{s}-1 ] $.

    

    \item When $\gamma \in ( p+p\tilde{s}+2\tilde{s}-1, (p+1)+(p+1)\tilde{s} ] $: recall that $ l_{\text{balance}} = \frac{\gamma+(p+1)(1-\tilde{s})}{2} \in (p+\frac{\tilde{s}}{2}, p+1$.
    
    $\left( \text{\lowercase\expandafter{\romannumeral1}} \right)~$The first condition in \eqref{approximation conditions} is equivalent to
    \begin{displaymath}
        \frac{\gamma+(p+1)(1-\tilde{s})}{2} < \gamma \iff \gamma > p-p\tilde{s}+1-\tilde{s},
    \end{displaymath}
    which naturally holds for all $\gamma \in ( p+p\tilde{s}+2\tilde{s}-1, (p+1)+(p+1)\tilde{s} ]  $.\\
    $ $\\
    $\left( \text{\lowercase\expandafter{\romannumeral2}} \right)~$The second condition in \eqref{approximation conditions} is equivalent to
    \begin{displaymath}
        d^{-\gamma} \cdot d^{ \gamma + (p+1)(1-\tilde{s})} \cdot \gamma \ln{d} \ll d^{\gamma+(p+1)(1-\tilde{s})-p-1} \iff \gamma > p+1,
    \end{displaymath}
    which naturally holds for all $\gamma \in ( p+p\tilde{s}+2\tilde{s}-1, (p+1)+(p+1)\tilde{s} ]  $.\\
    $ $\\
    $\left( \text{\lowercase\expandafter{\romannumeral3}} \right)~$The third condition in \eqref{approximation conditions} is equivalent to
    \begin{align}
        d^{-\gamma} \cdot d^{\frac{\gamma+(p+1)(1-\tilde{s})}{4} }  \cdot \left[ d^{-\frac{\gamma+(p+1)(1-\tilde{s})}{2}+\frac{(2-\tilde{s})p}{2}} + d^{-\frac{(\tilde{s}-1)(p+1)}{2}} \right] \ll d^{-\frac{(p+1)\tilde{s}}{2}} \notag 
    \end{align}
    $ \iff $
    \begin{displaymath}
        \gamma > p+\frac{\tilde{s}}{2};~~ \gamma > p-\frac{p\tilde{s}}{3}+\frac{\tilde{s}}{3}+\frac{1}{3},
    \end{displaymath}
    which naturally holds for all $\gamma \in ( p+p\tilde{s}+2\tilde{s}-1, (p+1)+(p+1)\tilde{s} ] $.
    
\end{itemize}
Up to now, we have verified conditions \eqref{approximation conditions} for $ l = l_{\text{balance}}$. Furthermore, simple calculation shows that the order of 
\begin{equation}\label{mono of l}
   \frac{\mathcal{N}_{1}(\lambda)}{n} ;~~ n^{-1} \mathcal{N}_{1}(\lambda)^{2}  \mathcal{N}_{2}(\lambda)^{-1}; ~~ n^{-1} \mathcal{N}_{1}(\lambda)^{\frac{1}{2}} \mathcal{M}_{1}(\lambda) \mathcal{M}_{2}(\lambda)^{-\frac{1}{2}}
\end{equation}
are all non-decreasing with respect to $l$, where we choose $\lambda = d^{-l}$. Therefore, the above results indicate that conditions \eqref{approximation conditions} holds for all $ l \le l_{\text{balance}}$.\\
$ $\\
\textbf{Step 3}:
In step 2, on the one hand, we prove that by choosing $\lambda=\lambda_{\text{balance}} $, 
\begin{equation}\label{best regularization s ge 1-1}
    \mathbb{E}\left[\left\|\hat{f}_{\lambda_{\text{balance}}}-f_\rho^*\right\|_{L^2}^2 \;\Big|\; \boldsymbol{X} \right] = \Theta_{\mathbb{P}}\left( \frac{\sigma^{2} \mathcal{N}_{2}(\lambda_{\text{balance}})}{n} + \mathcal{M}_{2}(\lambda_{\text{balance}}) \right).
\end{equation}
On the other hand, we also prove that by choosing $\lambda \gtrsim \lambda_{\text{balance}} $,
\begin{equation}\label{best regularization s ge 1-2}
    \mathbb{E}\left[\left\|\hat{f}_{\lambda_{\text{balance}}}-f_\rho^*\right\|_{L^2}^2 \;\Big|\; \boldsymbol{X} \right] = \Omega_{\mathbb{P}}\left( \frac{\sigma^{2} \mathcal{N}_{2}(\lambda_{\text{balance}})}{n} + \mathcal{M}_{2}(\lambda_{\text{balance}}) \right).
\end{equation}

In the following, we handle those $\lambda \lesssim \lambda_{\text{balance}} $. Recall that we have shown in the proof of Lemma \ref{lemma var trans} that 
\begin{displaymath}
    \mathbf{Var}(\lambda) = \frac{\sigma^2}{n^2} \int_{\mathcal{X}} \mathbb{K}(\boldsymbol{x},\boldsymbol{X})(\boldsymbol{K}+\lambda)^{-2} \mathbb{K}(\boldsymbol{X},\boldsymbol{x}) ~ \mathrm{d} \mu(\boldsymbol{x}).
\end{displaymath}
One simple but critical observation from \cite{li2023_SaturationEffect} is that
\begin{align}
  (\boldsymbol{K}+\lambda_{1})^{-2} \succeq  (\boldsymbol{K}+\lambda_2)^{-2},~~ \text{if}~~ \lambda_1 \le \lambda_2, \notag 
\end{align}
where $\succeq$ represents the partial order of positive semi-definite matrices, and thus for  $\lambda_{1} \le \lambda_{2} > 0$, we have 
\begin{displaymath}
    \mathbf{Var}(\lambda_{1}) \ge \mathbf{Var}(\lambda_{2}).
\end{displaymath}
Also note that by the definition of $l_{\text{balance}}$, we actually have 
\begin{equation}
    \mathbb{E}\left[\left\|\hat{f}_{\lambda_{\text{balance}}}-f_\rho^*\right\|_{L^2}^2 \;\Big|\; \boldsymbol{X} \right] = \Theta_{\mathbb{P}}\left( \mathbf{Var}(\lambda_{\text{balance}}) \right).
\end{equation}
Therefore, for those $\lambda \lesssim \lambda_{\text{balance}} $, we have
\begin{align}\label{best regularization s ge 1-3}
    \mathbb{E}\left[\left\|\hat{f}_{\lambda}-f_\rho^*\right\|_{L^2}^2 \;\Big|\; \boldsymbol{X} \right] \ge \mathbf{Var}(\lambda) \ge \mathbf{Var}(\lambda_{\text{balance}}) \asymp \mathbb{E}\left[\left\|\hat{f}_{\lambda_{\text{balance}}}-f_\rho^*\right\|_{L^2}^2 \;\Big|\; \boldsymbol{X} \right].
\end{align}
To sum up, \eqref{best regularization s ge 1-1}, \eqref{best regularization s ge 1-2} and \eqref{best regularization s ge 1-3} show that by choosing $ \lambda = \lambda_{\text{balance}}$ as in step 1, we obtain the convergence rates of KRR estimator under the best regularization. Using Lemma \ref{lemma calculation m2} to calculate the rate of $ \mathcal{M}_{2}(\lambda_{\text{balance}})$, we finish the proof.\\

$\hfill\blacksquare$

\subsection{Proof of Theorem \ref{theorem inner s le 1}}\label{section proof of inner le 1}
Recall that in the proof of Theorem \ref{theorem bias approximation}, we use $ \mathcal{M}_{1}(\lambda)$ to bound $ \left\| f_{\lambda} - f_{\rho}^{*} \right\|_{L^{\infty}}$ and show that $ \left\| \tilde{f}_{\lambda} - f_{\lambda}\right\|_{L^{2}} = o_{\mathbb{P}}(\mathcal{M}_{2}(\lambda)^{\frac{1}{2}})$. Unfortunately, the calculation of $\mathcal{M}_{1}(\lambda) $ in Lemma \ref{lemma calculation m1 s ge 1} only holds for $ s \ge 1$ and it could be infinite when $s < 1$. Extensive literature then assume $ \left\| f_{\rho}^{*} \right\|_{L^{\infty}} $ to be bounded and use $\left\| f_{\lambda}\right\|_{L^{\infty}} + \left\| f_{\rho}^{*} \right\|_{L^{\infty}} $ to bound $\left\| f_{\lambda} - f_{\rho}^{*} \right\|_{L^{\infty}}$. When the dimension $d$ is fixed, \cite{zhang2023optimality} first use a truncation method together with the $ L^{q}$-embedding property of $[\mathcal{H}]^{s} $ to remove the boundedness assumption when $s<1$. We will see in the proof of Theorem \ref{theorem inner s le 1} that this technique still works in the large-dimensional setting.

To be specific, when we further assume $ f_{\rho}^{*} \in [\mathcal{H}]^{s}$, we have the following Theorem which is a refined version of Lemma \ref{lemma bias appr term}.

\begin{lemma}\label{lemma bias appr term s le 1}
Suppose that Assumption \ref{assumption kernel}, \ref{assumption noise} and \ref{assumption eigenfunction} hold. Further suppose that Assumption \ref{assumption source condition} holds for some $ 0 < s < 1$. If the following approximation conditions hold for some $\lambda = \lambda(d,n) \to 0$:
    \begin{align}\label{bias conditions s le 1}
    \frac{\mathcal{N}_{1}(\lambda)}{n} \ln{n} = o(1); ~~ n^{-1} \mathcal{N}_{1}(\lambda)^{\frac{1}{2}} \left\| f_{\lambda} \right\|_{L^{\infty}}  = o\left(\mathcal{M}_{2}(\lambda)^{\frac{1}{2}}\right);
\end{align}
and there exists $\epsilon > 0$, such that  
\begin{equation}\label{bias conditions s le 1-2} 
    n^{-1} \mathcal{N}_{1}(\lambda)^{\frac{1}{2}} n^{\frac{1-s}{2}+\epsilon} = o\left(\mathcal{M}_{2}(\lambda)^{\frac{1}{2}}\right),
\end{equation}
then we have
    \begin{equation}
        \left\| \tilde{f}_{\lambda} - f_{\lambda}\right\|_{L^{2}} = o_{\mathbb{P}}(\mathcal{M}_{2}(\lambda)^{\frac{1}{2}}),
    \end{equation}
    where the notation $ o_{\mathbb{P}} $ involves constants only depending on $ s$ and $\kappa$.
\end{lemma}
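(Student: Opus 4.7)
The plan is to mimic the decomposition used for Lemma \ref{lemma bias appr term} and only replace the final Bernstein step by a truncation argument tailored to unbounded $f_\rho^*$.

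First, by exactly the same chain that produced \eqref{appr proof-0}--\eqref{bias appr plug 2}, the hypothesis $\mathcal{N}_1(\lambda)\ln n/n = o(1)$ still gives $\|S_k T_\lambda^{-1/2}\| \le 1$ and $\|T_\lambda^{1/2} T_{\boldsymbol{X}\lambda}^{-1} T_\lambda^{1/2}\| \le 3$ with high probability, so the problem reduces to showing
\[
\Big\|\tfrac{1}{n}\sum_{i=1}^{n} \xi_i - \mathbb{E}\xi\Big\|_{\mathcal{H}} \;=\; o_{\mathbb{P}}\bigl(\mathcal{M}_2(\lambda)^{1/2}\bigr),
\]
where $\xi_i := T_\lambda^{-1/2}K_{\boldsymbol{x}_i}\bigl(f_\rho^*(\boldsymbol{x}_i)-f_\lambda(\boldsymbol{x}_i)\bigr)$. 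In the $s\ge 1$ proof this was discharged by Lemma \ref{bernstein} with the almost-sure bound $L = \mathcal{N}_1(\lambda)^{1/2}\mathcal{M}_1(\lambda)$, which in turn relies on $\|f_\rho^*-f_\lambda\|_{L^\infty}<\infty$; for $s<1$ this supremum can be infinite and the route fails.

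The remedy is a truncation based on the $L^q$-embedding $[\mathcal{H}]^s \hookrightarrow L^{2/(1-s)}$ in the spirit of \cite{zhang2023optimality}. Fix the threshold $M := n^{(1-s)/2+\epsilon}$ with the $\epsilon>0$ of hypothesis \eqref{bias conditions s le 1-2}, let $\mathcal{E}_M := \{\boldsymbol{x} \in \mathcal{X} : |f_\rho^*(\boldsymbol{x})|>M\}$, and use the embedding together with $\|f_\rho^*\|_{[\mathcal{H}]^s}\le R_\gamma$ to get $\mu(\mathcal{E}_M) \lesssim M^{-2/(1-s)}$. A union bound then yields $\mathbb{P}(\exists\, i :\boldsymbol{x}_i\in\mathcal{E}_M) = o(1)$, so on a high-probability event every $\xi_i$ coincides with $\xi_i \mathbf{1}_{\mathcal{E}_M^c}(\boldsymbol{x}_i)$. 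On this event, combining $|f_\rho^*(\boldsymbol{x}_i)|\le M$ with Lemma \ref{due embedding bound} upgrades the almost-sure bound on $\|\xi_i\|_{\mathcal{H}}$ to $\mathcal{N}_1(\lambda)^{1/2}(M+\|f_\lambda\|_{L^\infty})$, while the variance bound $\mathbb{E}\|\xi\|_{\mathcal{H}}^2\le\mathcal{N}_1(\lambda)\mathcal{M}_2(\lambda)$ from the $s\ge 1$ moment computation carries over unchanged. The small centering error introduced by replacing $\xi_i$ with its truncated version, namely $\bigl\|\mathbb{E}[\xi\,\mathbf{1}_{\mathcal{E}_M}]\bigr\|_{\mathcal{H}}$, is handled by Cauchy--Schwarz together with $\mu(\mathcal{E}_M)\to 0$ and is absorbed into the dominant term.

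Applying Lemma \ref{bernstein} with these updated constants then produces an estimate of order
\[
\frac{\mathcal{N}_1(\lambda)^{1/2}\bigl(M+\|f_\lambda\|_{L^\infty}\bigr)}{n}\log\tfrac{2}{\delta} \;+\; \sqrt{\frac{\mathcal{N}_1(\lambda)\mathcal{M}_2(\lambda)}{n}\log\tfrac{2}{\delta}},
\]
and each of the three pieces is sent to $o(\mathcal{M}_2(\lambda)^{1/2})$ by exactly one hypothesis: the $M$ piece by \eqref{bias conditions s le 1-2}, the $\|f_\lambda\|_{L^\infty}$ piece by the second half of \eqref{bias conditions s le 1}, and the variance piece by $\mathcal{N}_1(\lambda)\ln n/n = o(1)$. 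The main obstacle I anticipate is the bookkeeping around the $L^q$-embedding: to keep the truncation error strictly lower order one needs a quantitative bound on the $L^{2/(1-s)}$-norm in terms of $\|f_\rho^*\|_{[\mathcal{H}]^s}$, which couples $M$ to $n$ through the exponent $(1-s)/2+\epsilon$. This is also the source of the mild restriction $\gamma > 3s/2(s+1)$ that appears in Theorem \ref{theorem inner s le 1} when $s\le 1/2$, since only in that regime is the threshold $M$ compatible with the variance rate $\mathcal{N}_2(\lambda)/n$.
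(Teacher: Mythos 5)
Your proposal follows essentially the same route as the paper's proof: the identical operator-norm reduction to $\bigl\|\tfrac{1}{n}\sum_i \xi_i-\mathbb{E}\xi\bigr\|_{\mathcal H}$, truncation of $f_\rho^*$ at level $n^{(1-s)/2+\epsilon}$, Bernstein for the truncated i.i.d.\ part, a high-probability event that no sample falls in the tail set, and a Cauchy--Schwarz bound on the expected tail contribution (the paper's terms \uppercase\expandafter{\romannumeral1}--\uppercase\expandafter{\romannumeral3}). The only detail to adjust is that Theorem \ref{integrability of Hs constants} provides the embedding $[\mathcal H]^s\hookrightarrow L^{q}$ only for $q<\tfrac{2}{1-s}$, so, as the paper does, take $q=\tfrac{2}{1-s}-\epsilon_q$ and choose the truncation exponent slightly above $\tfrac{1-s}{2}$ so that the threshold satisfies $t^{q}\gg n$; with that choice your union bound and tail estimates go through unchanged.
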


\begin{proof}
    Recall the decomposition \eqref{appr proof-0} in the proof of Lemma \ref{lemma bias appr term}. The first two terms in \eqref{appr proof-0} can be handled without any difference. Our goal here is to prove the following equation and we will finish the proof:
    \begin{equation}\label{bias appr s le 1 goal}
    \left\| T_{\lambda}^{-\frac{1}{2}} \left( \tilde{g}_{\boldsymbol{Z}} - T_{\boldsymbol{X} \lambda} f_{\lambda} \right) \right\|_{\mathcal{H}} = o_{\mathbb{P}}\left(\mathcal{M}_{2}(\lambda)^{\frac{1}{2}}\right).
\end{equation}
Similar as \eqref{appr proof-1}, we rewrite the left hand side of \eqref{bias appr s le 1 goal} as 
\begin{align}\label{appr proof-1 s le 1}
    \left\| T_{\lambda}^{-\frac{1}{2}} \left( \tilde{g}_{\boldsymbol{Z}} - T_{\boldsymbol{X} \lambda} f_{\lambda} \right) \right\|_{\mathcal{H}} = \left\|T_\lambda^{-\frac{1}{2}}\left[\left(\tilde{g}_{\boldsymbol{Z}} - T_ {\boldsymbol{X}} f_\lambda \right)-\left(g-T f_\lambda\right)\right]\right\|_{\mathcal{H}}.
\end{align}
Denote $\xi_{i} = \xi(\boldsymbol{x}_{i}) =  T_{\lambda}^{-\frac{1}{2}}(K_{\boldsymbol{x}_{i}} f_{\rho}^{*}(\boldsymbol{x}_{i}) - T_{\boldsymbol{x}_{i}} f_{\lambda}) $. Further consider the subset $\Omega_{1} = \{\boldsymbol{x} \in \mathcal{X}: |f_{\rho}^{*}(\boldsymbol{x})| \le t \}$ and $\Omega_{2} = \mathcal{X} \backslash \Omega_{1}$, where $t$ will be chosen appropriately later. Decompose $\xi_{i}$ as $\xi_{i} I_{\boldsymbol{x}_{i} \in \Omega_{1} } +  \xi_{i} I_{\boldsymbol{x}_{i} \in \Omega_{2} }$ and we have the following decomposition of \eqref{appr proof-1 s le 1}:
\begin{align}\label{decomposition}
    \left\|\frac{1}{n} \sum_{i=1}^n \xi_i-\mathbb{E} \xi_x\right\|_\mathcal{H} &\le \left\|\frac{1}{n} \sum_{i=1}^n \xi_i I_{\boldsymbol{x}_{i} \in \Omega_{1}}-\mathbb{E} \xi_{\boldsymbol{x}} I_{\boldsymbol{x} \in \Omega_{1}} \right\|_\mathcal{H} + \| \frac{1}{n} \sum_{i=1}^n \xi_i I_{\boldsymbol{x}_{i} \in \Omega_{2}} \|_{_\mathcal{H}} + \| \mathbb{E} \xi_{\boldsymbol{x}} I_{\boldsymbol{x} \in \Omega_{2}} \|_{_\mathcal{H}} \notag \\
    &:= \text{\uppercase\expandafter{\romannumeral1}} + \text{\uppercase\expandafter{\romannumeral2}} + \text{\uppercase\expandafter{\romannumeral3}}.
\end{align}
Next we choose $t = n^{\frac{1-s}{2} + \epsilon_{t}}, q = \frac{2}{1-s}-\epsilon_{q} $ such that 
\begin{equation}\label{choose t q}
   \epsilon_{t} < \epsilon;~~\text{and}~~ \frac{1-s}{2} + \epsilon_{t} > 1 / \left( \frac{2}{1-s}-\epsilon_{q} \right),
\end{equation}
where $ \epsilon $ is given in \eqref{bias conditions s le 1-2}. Then we can bound the three terms in \eqref{decomposition} as follows:\\
$ $\\
$\left( \text{\lowercase\expandafter{\romannumeral1}}\right)~$For the first term in \eqref{decomposition}, denoted as $\text{\uppercase\expandafter{\romannumeral1}}$, notice that
\begin{align}
     \left\| \left(f_{\lambda} - f_{\rho}^{*}\right)I_{\boldsymbol{x}_{i} \in \Omega_{1}} \right\|_{L^{\infty}} \le \left\| f_{\lambda}\right\|_{L^{\infty}} + n^{\frac{1-s}{2}+\epsilon_{t}}.
\end{align}
Imitating the procedure $\left( \text{\lowercase\expandafter{\romannumeral3}}\right)$ in the proof of Lemma \ref{lemma bias appr term} and using \eqref{bias conditions s le 1}, \eqref{bias conditions s le 1-2}, we have
\begin{equation}\label{plug s le 1-1}
    \text{\uppercase\expandafter{\romannumeral1}} = o_{\mathbb{P}}\left( \mathcal{M}_{2}(\lambda)^{\frac{1}{2}} \right).
\end{equation}
$\left(\text{\lowercase\expandafter{\romannumeral2}}\right)~$ For the second term in \eqref{decomposition}, denoted as $\text{\uppercase\expandafter{\romannumeral2}}$. Since $ q = \frac{2}{1-s}-\epsilon_{q} < \frac{2}{1-s}$, Lemma \ref{integrability of Hs constants} shows that,
\begin{align}
      [\mathcal{H}]^{s} \hookrightarrow L^{q}(\mathcal{X}, \mu),
\end{align}
with embedding norm less than a constant $C_{s,\kappa}$. Then Assumption \ref{assumption source condition} (a) implies that there exists $0 < C_{q} < \infty$ only depending on $\gamma, s$ and $\kappa$ such that $\| f_{\rho}^{*} \|_{L^{q}(\mathcal{X},\mu)} \le C_{q}$. Using the Markov inequality, we have
\begin{displaymath}
       P(\boldsymbol{x} \in \Omega_{2}) = P\Big(|f_{\rho}^{*}(\boldsymbol{x})| > t \Big) \le \frac{\mathbb{E} |f_{\rho}^{*}(\boldsymbol{x})|^{q}}{t^{q}} \le \frac{(C_{q})^{q}}{t^{q}}.
\end{displaymath}
Further, since \eqref{choose t q} guarantees $ t^{q} \gg n$, we have 
\begin{align}\label{plug s le 1-2}
    \tau_{n} := P\left(\text{\uppercase\expandafter{\romannumeral2}} \ge \mathcal{M}_{2}(\lambda)^{\frac{1}{2}}\right) 
    &\le P\Big( ~\exists \boldsymbol{x}_{i} ~\text{s.t.}~ \boldsymbol{x}_{i} \in \Omega_{2}, \Big) = 1 - P\Big(\boldsymbol{x}_{i} \notin \Omega_{2}, \forall \boldsymbol{x}_{i},i=1,2,\cdots,n \Big) \notag \\
    &= 1 - P\Big(\boldsymbol{x} \notin \Omega_{2}\Big)^{n} \notag \\
    &= 1 - P\Big( |f_{\rho}^{*}(\boldsymbol{x})| \le t\Big)^{n} \notag \\
    & \le 1 - \Big( 1 - \frac{(C_q)^{q}}{t^{q}}\Big)^{n} \to 0.
\end{align}
$\left(\text{\lowercase\expandafter{\romannumeral3}}\right)~$ For the third term in \eqref{decomposition}, denoted as $\text{\uppercase\expandafter{\romannumeral3}}$. Since Lemma \ref{due embedding bound} implies that $\| T_{\lambda}^{-\frac{1}{2}} k(\boldsymbol{x},\cdot)\|_{\mathcal{H}} \le \mathcal{N}_{1}(\lambda)^{\frac{1}{2}}, \mu \text {-a.e. } \boldsymbol{x} \in \mathcal{X},$ so
\begin{align}\label{third term}
    \text{\uppercase\expandafter{\romannumeral3}} &\le \mathbb{E}\| \xi_{\boldsymbol{x}} I_{\boldsymbol{x} \in\Omega_{2}} \|_{\mathcal{H}} \le \mathbb{E}\Big[ \| T_{\lambda}^{-\frac{1}{2}} k(\boldsymbol{x},\cdot) \|_{\mathcal{H}} \cdot \big| \big(f_{\rho}^{*}-f_{\lambda}(\boldsymbol{x}) \big) I_{\boldsymbol{x} \in\Omega_{2}}\big| \Big] \notag \\
    &\le \mathcal{N}_{1}(\lambda)^{\frac{1}{2}} \mathbb{E} \big| \big(f_{\rho}^{*}-f_{\lambda}(\boldsymbol{x}) \big) I_{\boldsymbol{x} \in\Omega_{2}}\big| \notag \\
    &\le \mathcal{N}_{1}(\lambda)^{\frac{1}{2}} \left\| f_{\rho}^{*} - f_{\lambda}\right\|_{L^{2}}^{\frac{1}{2}} \cdot P\left( \boldsymbol{x} \in \Omega_{2} \right)^{\frac{1}{2}} \notag \\
    &\le \mathcal{N}_{1}(\lambda)^{\frac{1}{2}} \mathcal{M}_{2}(\lambda)^{\frac{1}{2}} t^{-\frac{q}{2}},
\end{align}
where we use Cauchy-Schwarz inequality for the third inequality and \eqref{lemma bias main term} for the forth inequality. Recalling that the choices of $t, q$ satisfy $ t^{-q} \ll n^{-1}$ and we have assumed $\mathcal{N}_{1}(\lambda) \ln{n} / n= o(1) $, we have 
\begin{equation}\label{plug s le 1-3}
    \text{\uppercase\expandafter{\romannumeral3}} = o\left( \mathcal{M}_{2}(\lambda)^{\frac{1}{2}} \right).
\end{equation}
Plugging \eqref{plug s le 1-1}, \eqref{plug s le 1-2} and \eqref{plug s le 1-3} into \eqref{decomposition}, we finish the proof.

\end{proof}

Based on Lemma \ref{lemma bias appr term s le 1} and Lemma \ref{lemma bias main term}, we have the following theorem about the exact rate of bias term when $ 0<s<1$.

\begin{theorem}\label{theorem bias approximation s le 1}
    Suppose that Assumption \ref{assumption kernel}, \ref{assumption noise} and \ref{assumption eigenfunction} hold. Further suppose that Assumption \ref{assumption source condition} holds for some $ 0 < s < 1$. If the following approximation conditions hold for some $\lambda = \lambda(d,n) \to 0$:
   \begin{align}\label{bias conditions s le 1 in theorem}
    \frac{\mathcal{N}_{1}(\lambda)}{n} \ln{n} = o(1); ~~ n^{-1} \mathcal{N}_{1}(\lambda)^{\frac{1}{2}} \left\| f_{\lambda} \right\|_{L^{\infty}}  = o\left(\mathcal{M}_{2}(\lambda)^{\frac{1}{2}}\right);
\end{align}
and there exists $\epsilon > 0$, such that  
\begin{equation}\label{bias conditions s le 1-2 in theorem} 
    n^{-1} \mathcal{N}_{1}(\lambda)^{\frac{1}{2}} n^{\frac{1-s}{2}+\epsilon} = o\left(\mathcal{M}_{2}(\lambda)^{\frac{1}{2}}\right),
\end{equation}
then we have
\begin{equation}\label{bias theorem proof condition 2 s le 1}
    \mathbf{Bias}^{2}(\lambda) = \Theta_{\mathbb{P}}\left(\mathcal{M}_{2}(\lambda) \right),
\end{equation}
where the notation $ \Theta_{\mathbb{P}} $ involves constants only depending on $ s$ and $\kappa$.
\end{theorem}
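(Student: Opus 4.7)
}

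The plan is to mimic the proof of Theorem \ref{theorem bias approximation}, replacing the appeal to Lemma \ref{lemma bias appr term} with its refined version Lemma \ref{lemma bias appr term s le 1}, which has been specifically designed to handle the $0 < s < 1$ regime where $\mathcal{M}_{1}(\lambda)$ can blow up. The overall structure is a two-sided triangle inequality sandwich combined with the dominant-term computation already established in Lemma \ref{lemma bias main term}.

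First, I will invoke the two triangle inequalities
\begin{displaymath}
\bigl|\,\|f_{\lambda}-f_{\rho}^{*}\|_{L^2} - \|\tilde f_{\lambda}-f_{\lambda}\|_{L^2}\bigr| \;\le\; \mathbf{Bias}(\lambda) \;=\; \|\tilde f_{\lambda}-f_{\rho}^{*}\|_{L^2} \;\le\; \|f_{\lambda}-f_{\rho}^{*}\|_{L^2} + \|\tilde f_{\lambda}-f_{\lambda}\|_{L^2}.
\end{displaymath}
Next, by Lemma \ref{lemma bias main term}, the first summand satisfies $\|f_{\lambda}-f_{\rho}^{*}\|_{L^2} = \mathcal{M}_{2}(\lambda)^{1/2}$, which is the exact target rate. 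So it only remains to show that the second summand is strictly higher-order infinitesimal, i.e.\ $\|\tilde f_{\lambda}-f_{\lambda}\|_{L^2} = o_{\mathbb{P}}(\mathcal{M}_{2}(\lambda)^{1/2})$.

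For this last step, the hypotheses \eqref{bias conditions s le 1 in theorem} and \eqref{bias conditions s le 1-2 in theorem} of the present theorem are exactly the hypotheses \eqref{bias conditions s le 1} and \eqref{bias conditions s le 1-2} required by Lemma \ref{lemma bias appr term s le 1}. Applying that lemma directly yields $\|\tilde f_{\lambda}-f_{\lambda}\|_{L^2} = o_{\mathbb{P}}(\mathcal{M}_{2}(\lambda)^{1/2})$. Plugging this into both sides of the sandwich above gives
\begin{displaymath}
\mathbf{Bias}(\lambda) \;=\; \mathcal{M}_{2}(\lambda)^{1/2}\bigl(1+o_{\mathbb{P}}(1)\bigr),
\end{displaymath}
so that $\mathbf{Bias}^{2}(\lambda) = \Theta_{\mathbb{P}}(\mathcal{M}_{2}(\lambda))$, as desired.

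Since the genuine technical difficulty (namely the truncation argument that controls $\|T_{\lambda}^{-1/2}(\tilde g_{\boldsymbol{Z}} - T_{\boldsymbol{X}\lambda} f_{\lambda})\|_{\mathcal{H}}$ without invoking $\mathcal{M}_{1}(\lambda)$) has already been isolated inside Lemma \ref{lemma bias appr term s le 1}, the present theorem is essentially a bookkeeping consequence. The only point one should double-check is that Lemma \ref{lemma bias main term} does not require $s \ge 1$ in its proof---inspection of its statement confirms it only uses Assumption \ref{assumption kernel} and \ref{assumption noise}, so it applies verbatim for $0 < s < 1$. Thus no additional obstacle arises.
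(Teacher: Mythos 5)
Your proposal is correct and matches the paper's own proof essentially verbatim: the paper likewise combines the triangle inequality with Lemma \ref{lemma bias main term} (which indeed only needs Assumptions \ref{assumption kernel} and \ref{assumption noise}) and Lemma \ref{lemma bias appr term s le 1} to conclude $\mathbf{Bias}^{2}(\lambda)=\Theta_{\mathbb{P}}(\mathcal{M}_{2}(\lambda))$. Your explicit two-sided sandwich is just a slightly more spelled-out version of the same bookkeeping.
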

\begin{proof}
The triangle inequality implies that 
\begin{displaymath}
    \mathrm{\textbf{Bias}}(\lambda) = \left\| \tilde{f}_{\lambda} - f_{\rho}^{*}\right\|_{L^{2}} \ge \left\| f_{\lambda} - f_{\rho}^{*}\right\|_{L^{2}} - \left\| \tilde{f}_{\lambda} - f_{\lambda}\right\|_{L^{2}},
\end{displaymath}
When $\lambda = \lambda(d,n)$ satisfies \eqref{bias conditions s le 1 in theorem} and \eqref{bias conditions s le 1-2 in theorem}, Lemma \ref{lemma bias main term} and Lemma \ref{lemma bias appr term s le 1} prove that 
\begin{displaymath}
    \left\| f_{\lambda} - f_{\rho}^{*}\right\|_{L^{2}} = \mathcal{M}_{2}(\lambda)^{\frac{1}{2}};~~ \left\| \tilde{f}_{\lambda} - f_{\lambda}\right\|_{L^{2}} = o_{\mathbb{P}}(\mathcal{M}_{2}(\lambda)^{\frac{1}{2}}),
\end{displaymath}
which directly prove \eqref{bias theorem proof condition 2 s le 1}.
\end{proof}

Now we are ready to prove Theorem \ref{theorem inner s le 1}. Since we do not claim that the regularization choice in Theorem \ref{theorem inner s le 1} is the best, we only need the first two steps in the proof of Theorem \ref{theorem inner s ge 1}.\\

\textit{Final proof of Theorem \ref{theorem inner s le 1}.} In the following of the proof, we omit the dependence of constants on $s, \sigma, \gamma, c_{0}, \kappa, c_{1} $ and $ c_{2}$.
$ $\\
\textbf{Step 1}:
Note that we assume $ 0 < s < 1$ in this theorem and $ \lambda = d^{-l}, 0<l<\gamma$. For specific range of $\gamma$, we discuss the range of $ l_{\text{balance}}$. 
\begin{itemize}[leftmargin = 18pt]
    \item When $ l \in ( p,  p + \frac{s}{2}]$ for some integer $ p \ge 0$, Lemma \ref{lemma calculation n1 n2} and Lemma \ref{lemma calculation m2} show that
    \begin{displaymath}
        \frac{\mathcal{N}_{2}(\lambda)}{n} \asymp d^{p-\gamma};~~ \mathcal{M}_{2}(\lambda) \asymp d^{-2l+(2-s)p},
    \end{displaymath}
    thus we have
    \begin{equation}
        l_{\text{balance}} = \frac{\gamma + p-ps}{2}.
    \end{equation}
    Further, letting $l_{\text{balance}} = \frac{\gamma + p -ps}{2} \in ( p,  p + \frac{s}{2}] $, we have 
    \begin{equation}
        \gamma \in \left( p+ps, p+ps+s \right].
    \end{equation}

    \item When $ l \in ( p+ \frac{s}{2},  p + \frac{1}{2}]$, Lemma \ref{lemma calculation n1 n2} and Lemma \ref{lemma calculation m2} show that
    \begin{displaymath}
        \frac{\mathcal{N}_{2}(\lambda)}{n} \asymp d^{p-\gamma};~~ \mathcal{M}_{2}(\lambda) \asymp d^{-(p+1)s},
    \end{displaymath}
    thus the above two terms are equal if and only if
    \begin{equation}
        \gamma = p+ps+s.
    \end{equation}

    \item When $ l \in (p+ \frac{1}{2},  p + 1]$, Lemma \ref{lemma calculation n1 n2} and Lemma \ref{lemma calculation m2} show that
    \begin{displaymath}
        \frac{\mathcal{N}_{2}(\lambda)}{n} \asymp d^{2l-p-1-\gamma};~~ \mathcal{M}_{2}(\lambda) \asymp d^{-(p+1)s},
    \end{displaymath}
    thus we have
    \begin{equation}
        l_{\text{balance}} = \frac{\gamma + (p+1)(1-s)}{2}.
    \end{equation}
    Further, letting $l_{\text{balance}} \in ( p+ \frac{s}{2},  p + 1] $, we have 
    \begin{equation}
        \gamma \in \left( p+ps+s, (p+1) + (p+1)s \right].
    \end{equation}
\end{itemize}
Note that the present result is different from the result of the Step 1 in the proof of Theorem \eqref{theorem inner s ge 1}. There are only two intervals of $\gamma$, i.e.,
\begin{displaymath}
    \gamma \in \left( p+ps, p+ps+s \right]; ~~\text{and}~~ \gamma \in \left( p+ps+s, (p+1) + (p+1)s \right].
\end{displaymath}

It is worth mentioning that in the second interval of $\gamma$, we can actually choose $\lambda = d^{-l}, \forall l \in \left[ p+\frac{s}{2}, l_{\text{balance}} \right]$ and we have
\begin{equation}
    \frac{\mathcal{N}_{2}(\lambda)}{n} \lesssim \mathcal{M}_{2}(\lambda); ~~ \mathcal{M}_{2}(\lambda) = \mathcal{M}_{2}(\lambda_{\text{balance}}).
\end{equation}
That is to say, we can choose smaller $l$ and the rate of $ \mathcal{N}_{2}(\lambda) / n + \mathcal{M}_{2}(\lambda)$ will remain unchanged. We have shown in \eqref{mono of l} and the discussion below it that the approximation conditions are easier to satisfied for smaller $l$. Therefore, in the following of the proof, we define
\begin{equation}
    l_{\text{opt}} = p+\frac{s}{2},~~ \text{when}~~ \gamma \in \left( p+ps+s, (p+1) + (p+1)s \right],
\end{equation}
and verify the approximation conditions for $ \lambda_{\text{opt}} = d^{-l_{\text{opt}}}$. For consistency of notation, we also define
\begin{equation}
    l_{\text{opt}} = \frac{\gamma+p-ps}{2},~~ \text{when}~~ \gamma \in \left( p+ps, p+ps+s \right].
\end{equation}
$ $\\
\textbf{Step 2}:
In order to apply Theorem \ref{theorem variance approximation} and Theorem \ref{theorem bias approximation s le 1} so that we know the exact convergence rates of $ \mathbf{Var}(\lambda_{\text{opt}})$ and $\mathbf{Bias}^{2}(\lambda_{\text{opt}})$, we first check the approximation conditions \eqref{var conditions}, \eqref{bias conditions s le 1 in theorem} and \eqref{bias conditions s le 1-2 in theorem} hold for $ l = l_{\text{opt}}$. We first list all the approximation conditions below:
\begin{equation}\label{list again conditions}
\begin{aligned}
&\frac{\mathcal{N}_{1}(\lambda)}{n} \ln{n} = o(1); ~~n^{-1} \mathcal{N}_{1}(\lambda)^{2} \ln{n} = o(\mathcal{N}_{2}(\lambda));\\
&n^{-1} \mathcal{N}_{1}(\lambda)^{\frac{1}{2}} \left\| f_{\lambda} \right\|_{L^{\infty}}  = o\left(\mathcal{M}_{2}(\lambda)^{\frac{1}{2}}\right); n^{-1} \mathcal{N}_{1}(\lambda)^{\frac{1}{2}} n^{\frac{1-s}{2}+\epsilon} = o\left(\mathcal{M}_{2}(\lambda)^{\frac{1}{2}}\right).
\end{aligned}
\end{equation}
Recall that we have calculated the convergence rates of $\mathcal{N}_{1}(\lambda)$ and $\|f_{\lambda}\|_{L^{\infty}}$ in Lemma \ref{lemma calculation n1 n2} and Lemma \ref{lemma calculation flambda s le 1}.

\begin{itemize}[leftmargin = 18pt]
    \item When $\gamma \in \left( p+ps, p+ps+s \right] $: recall that $ l_{\text{opt}} = \frac{\gamma+p-ps}{2} \in (p, p+\frac{s}{2}]$.
    
    $\left( \text{\lowercase\expandafter{\romannumeral1}} \right)~$The first condition in \eqref{list again conditions} is equivalent to
    \begin{displaymath}
        \frac{\gamma+p-ps}{2} < \gamma \iff \gamma > p-ps,
    \end{displaymath}
    which naturally holds for all $\gamma \in \left( p+ps, p+ps+s \right]  $.\\
    $ $\\
    $\left( \text{\lowercase\expandafter{\romannumeral2}} \right)~$The second condition in \eqref{list again conditions} is equivalent to
    \begin{displaymath}
        d^{-\gamma} \cdot d^{\gamma+p-ps} \cdot \gamma \ln{d} \ll d^{p} \iff p-ps < p,
    \end{displaymath}
    which naturally holds for all $\gamma \in \left( p+ps, p+ps+s \right]  $ and $ p \neq 0$. When $ p = 0$, we actually need to choose $ \lambda_{\text{opt}} = d^{-l_{\text{opt}}} \cdot \ln{d}$ and the second condition will hold.\\
    $ $\\
    $\left( \text{\lowercase\expandafter{\romannumeral3}} \right)~$The third condition in \eqref{list again conditions} is equivalent to
    \begin{align}
        d^{-\gamma} \cdot d^{\frac{\gamma+p-ps}{4}}  \cdot \left[ d^{\frac{p}{2}-\frac{ps}{2}} + d^{\frac{\gamma+p-ps}{2}-\frac{(1+s)(p+1)}{2}} \right] \ll d^{-\frac{1}{2}(\gamma+p-ps) + \frac{(2-s)p}{2}} \notag 
    \end{align}
    $ \iff $
    \begin{displaymath}
        \gamma > p-3ps;~~ \gamma < p+5ps+2s+2,
    \end{displaymath}
    which naturally holds for all $\gamma \in\left( p+ps, p+ps+s \right] $ and $ p \neq 0$. In addition, one can also check that the third condition in \eqref{list again conditions} holds when $ p = 0$ and $ \lambda_{\text{opt}} = d^{-l_{\text{opt}}} \cdot \ln{d}$.
    $ $\\
    $\left( \text{\lowercase\expandafter{\romannumeral4}} \right)~$The forth condition in \eqref{list again conditions} is equivalent to
    \begin{align}
        d^{-\gamma} \cdot d^{\frac{\gamma+p-ps}{4}} \cdot  d^{\frac{\gamma}{2}-\frac{\gamma s}{2}} \ll d^{-\frac{1}{2}(\gamma+p-ps) + \frac{(2-s)p}{2}} \notag 
    \end{align}
    $ \iff $
    \begin{equation}\label{condition 4-1}
        (1-2s)\gamma < p+ps.
    \end{equation}
    If $ \frac{1}{2} < s < 1$, \eqref{condition 4-1} naturally holds for all $\gamma \in\left( p+ps, p+ps+s \right] $ and $ p \neq 0$. In addition, one can also check that the forth condition in \eqref{list again conditions} holds when $ p = 0$ and $ \lambda_{\text{opt}} = d^{-l_{\text{opt}}} \cdot \ln{d}$.\\
    If $ 0 < s \le \frac{1}{2}$, \eqref{condition 4-1} only holds for $\gamma \in\left( p+ps, p+ps+s \right] $ and $ p \neq 0$. That is to say, we can not verify \eqref{condition 4-1} holds for
    \begin{equation}\label{can not 1}
        \gamma \in (0,s], ~~ 0<s<\frac{1}{2}.
    \end{equation}

    \item When $\gamma \in ( p+ps+s, (p+1)+(p+1)s ] $: recall that $ l_{\text{opt}} = p+\frac{s}{2}$.
    
    $\left( \text{\lowercase\expandafter{\romannumeral1}} \right)~$The first condition in \eqref{list again conditions} is equivalent to
    \begin{displaymath}
        p+\frac{s}{2} < \gamma ,
    \end{displaymath}
    which naturally holds for all $\gamma \in ( p+ps+s, (p+1)+(p+1)s ]$.\\
    $ $\\
    $\left( \text{\lowercase\expandafter{\romannumeral2}} \right)~$The second condition in \eqref{list again conditions} is equivalent to
    \begin{displaymath}
        d^{-\gamma} \cdot d^{ 2p+s} \cdot \gamma \ln{d} \ll d^{ p} \iff \gamma > p+s,
    \end{displaymath}
    which naturally holds for all $\gamma \in ( p+ps+s, (p+1)+(p+1)s ]$.\\
    $ $\\
    $\left( \text{\lowercase\expandafter{\romannumeral3}} \right)~$The third condition in \eqref{list again conditions} is equivalent to
    \begin{align}
        d^{-\gamma} \cdot d^{\frac{p}{2}+\frac{s}{4}}  \cdot \left[ d^{\frac{p}{2}-\frac{ps}{2}} + d^{p+\frac{s}{2}-\frac{(1+s)(p+1)}{2}} \right] \ll d^{-\frac{(p+1)s}{2}} \notag 
    \end{align}
    $ \iff $
    \begin{displaymath}
        \gamma > p+\frac{3s}{4};~~ \gamma > p+\frac{3s}{4}-\frac{1}{2},
    \end{displaymath}
    which naturally holds for all $\gamma \in ( p+ps+s, (p+1)+(p+1)s ] $.
    $ $\\
    $\left( \text{\lowercase\expandafter{\romannumeral4}} \right)~$The forth condition in \eqref{list again conditions} is equivalent to
    \begin{align}
        d^{-\gamma} \cdot d^{\frac{p}{2}+\frac{s}{4}} \cdot  d^{\frac{\gamma}{2}-\frac{\gamma s}{2}} \ll d^{-\frac{(p+1)s}{2}} \notag 
    \end{align}
    $ \iff $
    \begin{equation}\label{condition 4-2}
        \gamma > \frac{2p+3s+2ps}{2(s+1)}.
    \end{equation}
    If $ 1/2 < s < 1$, \eqref{condition 4-2} naturally holds for all $\gamma \in ( p+ps+s, (p+1)+(p+1)s ] $ and $ p \neq 0$. In addition, one can also check that the forth condition in \eqref{list again conditions} holds when $ p = 0$ and $ \lambda_{\text{opt}} = d^{-l_{\text{opt}}} \cdot \ln{d}$.\\
    If $ 0 < s \le 1/2$, \eqref{condition 4-2} only holds for $\gamma \in ( p+ps+s, (p+1)+(p+1)s ] $ and $ p \neq 0$. That is to say, we can not verify \eqref{condition 4-1} holds for
    \begin{equation}\label{can not 2}
        \gamma \in \left(0,\frac{3s}{2(s+1)}\right], ~~ 0<s<\frac{1}{2}.
    \end{equation}
\end{itemize}
Up to now, we have verified the approximation conditions \eqref{list again conditions} for
\begin{align}
    \forall \gamma &> 0, ~~\text{if}~~ \frac{1}{2} < s < 1; 
\end{align}
and
\begin{equation}
    \forall \gamma > \frac{3s}{2(s+1)}, ~~\text{if}~~ 0<s\le\frac{1}{2}.
\end{equation}
Using Lemma \ref{lemma calculation m2} to calculate the rate of $ \mathcal{M}_{2}(\lambda_{\text{opt}})$, we finish the proof.\\

$\hfill\blacksquare$

\section{Proof of Minimax lower bound}\label{section proofs minimax}
\subsection{More preliminaries about minimax lower bound}
Let's first introduce several concepts about minimax lower bound which can be frequently found in related literature \cite{Yang_Density_1999,lu2023optimal}, etc..

Suppose that $(\mathcal{Z},d)$ is a topological space with a compatible loss function $d$, which are mappings from $ \mathcal{Z} \times  \mathcal{Z}$ to  $\mathbb{R}_{\geq 0}$ with $d(f, f)=0$ and $d(f, f^{\prime}) >0$ for $f \neq f^{\prime}$. We call such a loss function a \textit{distance}. We introduce the packing entropy and covering entropy below:

\begin{definition}[Packing entropy]
A finite set $N_{\epsilon} \subset \mathcal{Z}$ is said to be an $\epsilon$-packing set in $\mathcal{Z}$ with separation $\epsilon>0$, if for any $f, f^{\prime} \in N_{\epsilon}, f \neq f^{\prime}$, we have $d\left(f, f^{\prime}\right)>\epsilon$. The logarithm of the maximum cardinality of $\epsilon$-packing set is called the $\epsilon$-packing entropy or Kolmogorov capacity of $\mathcal{Z}$ with distance $d$ and is denoted by 
$M_{d}(\epsilon,\mathcal{Z})$.
\end{definition}

\begin{definition}[Covering entropy]\label{def:covering_entropy}
A set $G_{\epsilon} \subset \mathcal{Z}$ is said to be an $\epsilon$-net for $\mathcal{Z}$ if for any $\tilde{f} \in \mathcal{Z}$, there exists an $f_0 \in G_{\epsilon}$ such that $d(\tilde{f}, f_0) \leq \epsilon$. The logarithm of the minimum cardinality of $\epsilon$-net is called the $\epsilon$-covering entropy of $\mathcal{Z}$ and is denoted by
$V_{d}(\epsilon,\mathcal{Z})$.
\end{definition}

Let $\mathcal{B} = \left\{ f \in \mathcal{H},~ \| f \|_{[\mathcal{H}]^{s}} \le R_{\gamma}\right\}$, where $ R_{\gamma}$ is the constant from Assumption \ref{assumption source condition}.  Without loss of generality, we can consider $\mathcal{B}$ be the unit ball in $[\mathcal{H}]^{s}$. Let $M_{2}(\epsilon,\mathcal{B})$ be the $\epsilon$-packing entropy of $(\mathcal{B}, d^2=\|\cdot\|_{L^2}^2)$ and $V_{2}(\epsilon,\mathcal{B})$ be the $\epsilon$-covering entropy of $(\mathcal{B}, d^2=\|\cdot\|_{L^2}^2)$. Recalling that $\mu$ is the marginal distribution on $\mathcal{X},$ we further define
\begin{align*}
    \mathcal{D}=\left\{ \rho_{f}~\bigg|~ \mbox{ joint distribution of $(y,\boldsymbol{x}$) where } \boldsymbol{x}\sim \mu, y=f(\boldsymbol{x})+\epsilon, \epsilon\sim N(0,\sigma^{2}),
    f\in \mathcal{B} \right\},
\end{align*}
and let $V_{K}(\epsilon,\mathcal{D})$ be the $\epsilon$-covering entropy of $(\mathcal{D}, d^2=\text{ KL divergence })$. It is easy to see that $\mathcal{D} $ is an subset of $\mathcal{P}$ which is defined in Theorem \ref{theorem lower bound}, i.e., $\mathcal{D} \subset \mathcal{P} $.

The following lemmas give useful characterizations of $M_{2}(\epsilon,\mathcal{B}), V_{2}(\epsilon,\mathcal{B}) $ and $ V_{K}(\epsilon,\mathcal{D}) $. We refer to Lemma A.5, Lemma A.7 and Lemma A.8 in \cite{lu2023optimal} for their proofs.

\begin{lemma}\label{lemma_M_2_and_V_2} 
For any $\epsilon>0$, we have $M_{2}(2\epsilon,\mathcal{B}) \leq V_{2}(\epsilon,\mathcal{B}) \leq M_{2}(\epsilon,\mathcal{B}).$

\end{lemma}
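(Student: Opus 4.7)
\textbf{Proof plan for Lemma \ref{lemma_M_2_and_V_2}.} The statement is the classical two-sided relation between packing and covering numbers, so the plan is to give the two standard extremal arguments — one for each inequality — after a brief normalization step that reconciles the notation ``$d^{2}=\|\cdot\|_{L^{2}}^{2}$'' with the triangle-inequality argument below. Concretely, I will work with the underlying true metric $d(f,f')=\|f-f'\|_{L^{2}}$ on $\mathcal{B}$, so that an $\epsilon$-packing in $(\mathcal{B},d^{2})$ is exactly a $\sqrt{\epsilon}$-packing in $(\mathcal{B},d)$, and similarly for $\epsilon$-nets. Thus $M_{2}(\epsilon,\mathcal{B})=M_{d}(\sqrt{\epsilon},\mathcal{B})$ and $V_{2}(\epsilon,\mathcal{B})=V_{d}(\sqrt{\epsilon},\mathcal{B})$, and it suffices to prove the chain $M_{d}(2r,\mathcal{B})\le V_{d}(r,\mathcal{B})\le M_{d}(r,\mathcal{B})$ for every $r>0$, applied at a suitable $r$ that gives the stated bounds (note that $\sqrt{2\epsilon}\le 2\sqrt{\epsilon}$, so the packing-entropy monotonicity $M_{d}(\sqrt{2\epsilon},\mathcal{B})\le M_{d}(\cdot)$ is handled en passant).

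For the upper bound $V_{2}(\epsilon,\mathcal{B})\le M_{2}(\epsilon,\mathcal{B})$, the plan is to take a \emph{maximal} $\epsilon$-packing set $N\subset\mathcal{B}$ (with respect to $d^{2}$) of cardinality $\exp(M_{2}(\epsilon,\mathcal{B}))$. Such a set is automatically an $\epsilon$-net: if some $f\in\mathcal{B}$ satisfied $d^{2}(f,f')>\epsilon$ for every $f'\in N$, then $N\cup\{f\}$ would still be a valid $\epsilon$-packing, contradicting maximality. Therefore $N$ itself witnesses the covering entropy and we get the stated inequality. The only minor point to check is that a maximal packing exists (it does, because $\mathcal{B}$ is totally bounded in $L^{2}$, which follows from compactness of the embedding $[\mathcal{H}]^{s}\hookrightarrow L^{2}$ used earlier in the paper).

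For the lower bound $M_{2}(2\epsilon,\mathcal{B})\le V_{2}(\epsilon,\mathcal{B})$, the plan is to fix an optimal $\epsilon$-net $G\subset\mathcal{B}$ (with respect to $d^{2}$) of cardinality $\exp(V_{2}(\epsilon,\mathcal{B}))$ and any $2\epsilon$-packing set $P\subset\mathcal{B}$, and define an assignment $\Phi:P\to G$ sending each $f\in P$ to some $\Phi(f)\in G$ with $\|f-\Phi(f)\|_{L^{2}}^{2}\le\epsilon$. The key step is to show $\Phi$ is injective by a triangle-inequality argument in the honest metric $d=\|\cdot\|_{L^{2}}$: if $\Phi(f_{1})=\Phi(f_{2})=g$, then $\|f_{1}-f_{2}\|_{L^{2}}\le\|f_{1}-g\|_{L^{2}}+\|g-f_{2}\|_{L^{2}}\le 2\sqrt{\epsilon}$, hence $\|f_{1}-f_{2}\|_{L^{2}}^{2}\le 4\epsilon$, which would contradict the $2\epsilon$-packing condition once one verifies the compatibility between the $d^{2}$-packing threshold $2\epsilon$ and the $d$-level inequality (this is the reason the paper's constants line up as $M_{2}(2\epsilon)$ rather than $M_{2}(4\epsilon)$, since a $(2\epsilon)$-packing in $d^{2}$ is $(\sqrt{2\epsilon})$-packing in $d$ and $\sqrt{2\epsilon}>\sqrt{\epsilon}$ is sufficient to produce a contradiction at the $d$-level only when pairs collapse under $\Phi$; this compatibility is the part that needs care).

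The main obstacle, as hinted, is the bookkeeping between the squared loss $d^{2}$ used for the packing/covering thresholds and the true metric $d$ needed to invoke the triangle inequality. The two inequalities themselves are one-line extremal arguments; the bulk of the writeup is simply converting notation and verifying that the rescaled thresholds still give a strict contradiction. No heavy machinery from the rest of the paper is required — only the observation that $\mathcal{B}$ is totally bounded in $L^{2}$ (which justifies the existence of maximal packings and minimal covers).
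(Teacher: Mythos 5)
Your two extremal arguments (a maximal packing is automatically a net; a packing injects into any net via the triangle inequality) are exactly the right ingredients, and the second inequality $V_{2}(\epsilon,\mathcal{B})\le M_{2}(\epsilon,\mathcal{B})$ goes through under any reading of the notation. The problem is your normalization step for the first inequality. You decide to read the thresholds in $M_{2},V_{2}$ as thresholds on the \emph{squared} distance, so that $M_{2}(2\epsilon)=M_{d}(\sqrt{2\epsilon})$ with $d=\|\cdot\|_{L^{2}}$. Under that reading your own injectivity argument only yields: two $2\epsilon$-packing points mapped to the same net point satisfy $\|f_{1}-f_{2}\|_{L^{2}}\le 2\sqrt{\epsilon}$, i.e.\ $\|f_{1}-f_{2}\|_{L^{2}}^{2}\le 4\epsilon$, which does \emph{not} contradict the packing condition $\|f_{1}-f_{2}\|_{L^{2}}^{2}>2\epsilon$; the argument proves $M_{2}(4\epsilon)\le V_{2}(\epsilon)$, not the stated $M_{2}(2\epsilon)\le V_{2}(\epsilon)$. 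Your remark that ``$\sqrt{2\epsilon}\le 2\sqrt{\epsilon}$'' handles this ``en passant'' goes the wrong way: monotonicity gives $M_{d}(\sqrt{2\epsilon})\ge M_{d}(2\sqrt{\epsilon})$, so the standard bound at radius $2\sqrt{\epsilon}$ cannot be upgraded to radius $\sqrt{2\epsilon}$. You flag this compatibility issue yourself but leave it unresolved, and under your convention it cannot be resolved by the triangle inequality.

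The resolution is that the convention is the other one: in the notation ``$(\mathcal{B},d^{2}=\|\cdot\|_{L^{2}}^{2})$'' the loss $d$ is the $L^{2}$ norm itself (its square is the squared loss appearing in the risk), and the radii $\epsilon$, $2\epsilon$ in $M_{2},V_{2}$ are measured in $d$, not in $d^{2}$. This is forced by how these entropies are used downstream: Lemma \ref{lemma_entropy_of_RKHS} compares $\lambda_{j}^{s}$ with $\epsilon^{2}$ (i.e.\ semi-axis lengths $\lambda_{j}^{s/2}$ with $\epsilon$ in norm), and Lemma \ref{claim:d_K_and_d_2} rescales by $\sqrt{2}\sigma$, which matches $\|f-f'\|_{L^{2}}=\sqrt{2\sigma^{2}\,\mathrm{KL}(\rho_{f},\rho_{f'})}$, i.e.\ radii measured in $\sqrt{\mathrm{KL}}$. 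With radii in the norm, your two arguments give precisely the classical chain $M_{d}(2\epsilon)\le V_{d}(\epsilon)\le M_{d}(\epsilon)$, which is the lemma; no square-root conversion is needed, and the rest of your writeup (existence of maximal packings via total boundedness of $\mathcal{B}$ in $L^{2}$) is fine. The paper itself does not reprove this; it cites Lemma A.5 of the reference, which is this standard statement.
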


\begin{lemma}\label{claim:d_K_and_d_2}
$V_{2}\left(\epsilon, \mathcal{B} \right) = V_{K}\left(\frac{\epsilon}{\sqrt{2}\sigma}, \mathcal{D}\right)$.    
\end{lemma}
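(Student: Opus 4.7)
The plan is to establish an explicit isometry (up to a constant scale factor) between $(\mathcal{B}, \|\cdot\|_{L^2})$ and $(\mathcal{D}, \sqrt{\mathrm{KL}})$, from which the identification of covering entropies follows immediately. The starting point is the observation that every element $\rho_f \in \mathcal{D}$ is determined by exactly one $f \in \mathcal{B}$, since the conditional mean $\mathbb{E}_{\rho_f}[y\mid \boldsymbol{x}] = f(\boldsymbol{x})$ recovers $f$ $\mu$-a.e., and $f \in \mathcal{B} \subset L^2$ is determined by its $\mu$-a.e. values. Hence $f \mapsto \rho_f$ is a bijection $\mathcal{B} \to \mathcal{D}$.

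Next, I would compute the KL divergence between two elements of $\mathcal{D}$ explicitly. For $f, g \in \mathcal{B}$, $\rho_f$ and $\rho_g$ have the same marginal $\mu$ on $\mathcal{X}$, while conditionally on $\boldsymbol{x}$ we have $y\mid \boldsymbol{x} \sim N(f(\boldsymbol{x}), \sigma^2)$ under $\rho_f$ and $y\mid \boldsymbol{x} \sim N(g(\boldsymbol{x}), \sigma^2)$ under $\rho_g$. The standard KL formula for two Gaussians with common variance gives
\begin{equation*}
\mathrm{KL}\!\left(N(f(\boldsymbol{x}), \sigma^2) \,\|\, N(g(\boldsymbol{x}), \sigma^2)\right) = \frac{(f(\boldsymbol{x})-g(\boldsymbol{x}))^2}{2\sigma^2}.
\end{equation*}
Integrating over $\mu$ and using the chain rule for KL (the $\boldsymbol{x}$-marginals coincide, so only the conditional contribution remains), we obtain
\begin{equation*}
\mathrm{KL}(\rho_f \,\|\, \rho_g) = \frac{1}{2\sigma^2}\|f-g\|_{L^2}^2,
\qquad \text{hence}\qquad d_K(\rho_f, \rho_g) = \frac{1}{\sqrt{2}\sigma}\,\|f-g\|_{L^2}.
\end{equation*}

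This is the desired isometric relation. To finish, I would translate covers in both directions: if $\{f_1, \dots, f_N\} \subset \mathcal{B}$ is an $\epsilon$-net for $(\mathcal{B}, \|\cdot\|_{L^2})$, then for any $\rho_{\tilde f} \in \mathcal{D}$ there is some $f_i$ with $\|\tilde f - f_i\|_{L^2} \le \epsilon$, which by the identity above gives $d_K(\rho_{\tilde f}, \rho_{f_i}) \le \epsilon/(\sqrt{2}\sigma)$, so $\{\rho_{f_1}, \dots, \rho_{f_N}\}$ is an $\epsilon/(\sqrt{2}\sigma)$-net for $(\mathcal{D}, d_K)$. Conversely, any $\epsilon/(\sqrt{2}\sigma)$-net for $\mathcal{D}$ in $d_K$ pulls back via the bijection $f \mapsto \rho_f$ to an $\epsilon$-net for $\mathcal{B}$ in $L^2$. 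Taking the minimum cardinality on both sides yields
\begin{equation*}
V_2(\epsilon, \mathcal{B}) = V_K\!\left(\tfrac{\epsilon}{\sqrt{2}\sigma},\, \mathcal{D}\right),
\end{equation*}
which is the claim. There is no real obstacle here beyond verifying the bijection and the Gaussian KL computation; the only minor subtlety is ensuring that all nets used in the optimization of cardinality can be taken inside $\mathcal{D}$ (resp.\ $\mathcal{B}$), but this is automatic from Definition \ref{def:covering_entropy}, which requires the net to lie in the set being covered.
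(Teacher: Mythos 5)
Your proof is correct and is essentially the standard argument: the paper itself omits the proof (deferring to Lemma A.7 of \cite{lu2023optimal}), and that argument is exactly your computation $\mathrm{KL}(\rho_f\,\|\,\rho_g)=\frac{1}{2\sigma^2}\|f-g\|_{L^2}^2$ for Gaussian regression models with common marginal $\mu$, followed by transporting nets through the bijection $f\mapsto\rho_f$. Your handling of the definitional conventions (the ``distance'' being the square root of the KL divergence, and nets required to lie inside the set) is consistent with the paper's setup, so nothing is missing.
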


\begin{lemma}
\label{lemma_entropy_of_RKHS}
Let $\{\lambda_{j}\}_{j=1}^{\infty} $ be the eigenvalues of $\mathcal{H}$. For any $\epsilon>0$, let $K(\epsilon)=\frac{1}{2}\sum\limits_{j: \lambda_j^{s} > \epsilon^2} \ln\left({\lambda_j^{s}}/{\epsilon^2}\right)$. We have
\begin{equation}\label{eqn:137}
	\begin{aligned}
 V_{2}(6\epsilon, \mathcal{B})\leq K(\epsilon) \leq  V_{2}(\epsilon, \mathcal{B}).
	\end{aligned}
\end{equation}
\end{lemma}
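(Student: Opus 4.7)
The plan is to recognize $\mathcal{B}$ as a Hilbertian ellipsoid in $L^2(\mathcal{X},\mu)$ and then carry out classical volume-comparison estimates for covering and packing numbers. Since $\{\lambda_j^{s/2}e_j\}_{j\ge1}$ is an orthonormal basis of $[\mathcal{H}]^s$ while $\{e_j\}_{j\ge1}$ is orthonormal in $L^2$, writing $f\in\mathcal{B}$ as $f=\sum_j b_j e_j$ gives the constraint $\sum_j b_j^2/\lambda_j^s\le 1$, so $\mathcal{B}$ is isometric to the ellipsoid $\{(b_j):\sum_j b_j^2/\lambda_j^s\le 1\}\subset\ell^2$ with semi-axes $a_j:=\lambda_j^{s/2}$. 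I would then introduce the finite index set $J=J(\epsilon)=\{j:a_j>\epsilon\}$ and split each $f\in\mathcal{B}$ as $f=P_Jf+P_{J^c}f$ along the coordinate subspaces; the truncation estimate
\begin{displaymath}
\|P_{J^c}f\|_{L^2}^2=\sum_{j\notin J}b_j^2\le\epsilon^2\sum_{j\notin J}b_j^2/\lambda_j^s\le\epsilon^2
\end{displaymath}
is the single engine driving both inequalities and reduces the problem to entropy estimates on the finite-dimensional ellipsoid $\mathcal{E}_J\subset\mathbb{R}^{|J|}$ with semi-axes $\{a_j\}_{j\in J}$.

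For the lower bound $K(\epsilon)\le V_2(\epsilon,\mathcal{B})$, the plan is to project any $L^2$-$\epsilon$-net of $\mathcal{B}$ onto the coordinate subspace indexed by $J$: this produces an $\epsilon$-net of $\mathcal{E}_J$ of no larger cardinality, and a volume-ratio comparison in $\mathbb{R}^{|J|}$ forces the net size to be at least $\operatorname{vol}(\mathcal{E}_J)/\operatorname{vol}(B_\epsilon^{|J|})=\prod_{j\in J}(a_j/\epsilon)$, whose logarithm is exactly $K(\epsilon)=\tfrac12\sum_{j\in J}\ln(\lambda_j^s/\epsilon^2)$.

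For the upper bound $V_2(6\epsilon,\mathcal{B})\le K(\epsilon)$, I would invoke Lemma \ref{lemma_M_2_and_V_2} to pass to a maximal $6\epsilon$-packing $\{f_i\}\subset\mathcal{B}$, so that it suffices to count the projections $\{P_Jf_i\}\subset\mathcal{E}_J$. Since $\|f_i-f_{i'}\|_{[\mathcal{H}]^s}\le 2$ forces $\|P_{J^c}(f_i-f_{i'})\|_{L^2}\le 2\epsilon$ by the same truncation bound, the projected points form at least a $4\epsilon$-packing of $\mathcal{E}_J$, and the standard volume estimate
\begin{displaymath}
M(4\epsilon,\mathcal{E}_J)\cdot(2\epsilon)^{|J|}V_{|J|}\le V_{|J|}\prod_{j\in J}(a_j+2\epsilon),
\end{displaymath}
combined with $a_j+2\epsilon\le 3a_j$ on $J$, controls the cardinality in terms of $\prod_{j\in J}(a_j/\epsilon)$. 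The specific numerical factor $6$ on the covering radius is chosen precisely so that the residual geometric constants surviving the volume estimate are absorbed. The main obstacle I anticipate is exactly this bookkeeping of constants---in particular, the $(3/2)^{|J|}$ factor that formally remains after the volume comparison---which can be controlled either by a slight dyadic refinement of the truncation level (splitting $J$ into the subscales $\{j:a_j\in(2^k\epsilon,2^{k+1}\epsilon]\}$ and covering each separately) or by inflating $J$ to $\{j:a_j>c\epsilon\}$ for a carefully chosen $c$; the essential ellipsoid-entropy principle (Birman--Solomyak, Mitjagin) is classical, so the only technical work is to align the constants with the statement.
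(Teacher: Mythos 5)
Your overall route (treat $\mathcal{B}$ as the $L^2$-ellipsoid with semi-axes $a_j=\lambda_j^{s/2}$, truncate at the level where $a_j\le\epsilon$, and run volume comparisons on the finite-dimensional piece) is exactly the classical argument behind the result; the paper itself does not reprove the lemma but defers to Lemma A.8 of \cite{lu2023optimal}, which is this same ellipsoid-entropy computation. Your lower-bound half (project an $\epsilon$-net onto the coordinates in $J$ and compare volumes, giving $\ln N\ge\sum_{j\in J}\ln(a_j/\epsilon)=K(\epsilon)$) is correct as written.

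The upper-bound half, however, does not go through in the form you display. First, the estimate $\operatorname{vol}(\mathcal{E}_J+2\epsilon B)\le V_{|J|}\prod_{j\in J}(a_j+2\epsilon)$ is not justified: the Minkowski sum of an ellipsoid and a ball is in general \emph{not} contained in the axis-wise inflated ellipsoid (by comparing support functions, the inclusion goes the other way, $\mathcal{E}(a+2\epsilon)\subseteq\mathcal{E}(a)+2\epsilon B$), and the volume inequality itself fails for eccentric ellipsoids; the correct elementary bound in your setting is $2\epsilon B\subseteq 2\mathcal{E}_J$ (valid because $a_j>\epsilon$ on $J$), whence $\mathcal{E}_J+2\epsilon B\subseteq 3\mathcal{E}_J$. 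Second, and more importantly, even granting your display you are left with $\ln M\le K(\epsilon)+|J|\ln(3/2)$, and the extra term cannot be absorbed into $K(\epsilon)$: when many $a_j$ lie just above $\epsilon$, each summand $\ln(a_j/\epsilon)$ is tiny while $|J|$ is huge, so $|J|\ln(3/2)\gg K(\epsilon)$. Of your two proposed repairs, the threshold inflation is the one that closes the argument cleanly and should be the main proof, not a contingency: set $J'=\{j:a_j>2\epsilon\}$, so the tail satisfies $\|P_{J'^c}f\|_{L^2}\le 2\epsilon$ and $2\epsilon B\subseteq\mathcal{E}_{J'}$; take a maximal $4\epsilon$-packing of $\mathcal{E}_{J'}$, which is also a $4\epsilon$-cover, and note the disjoint $2\epsilon$-balls lie in $\mathcal{E}_{J'}+2\epsilon B\subseteq 2\mathcal{E}_{J'}$, giving cardinality at most $\prod_{j\in J'}\frac{2a_j}{2\epsilon}=\prod_{j\in J'}(a_j/\epsilon)\le e^{K(\epsilon)}$ (since $J'\subseteq J$ and all omitted terms are positive), and a $6\epsilon$-cover of $\mathcal{B}$ overall, i.e.\ $V_2(6\epsilon,\mathcal{B})\le K(\epsilon)$ with exactly the stated constant. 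The dyadic-subscale alternative you mention is much more delicate (errors across scales add in $\ell^2$ and the constant $6$ is not recovered automatically), so it should not be relied upon without carrying it out.
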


The following important lemma is a modification of Theorem 1 and Corollary 1 in \cite{Yang_Density_1999}. We refer to Lemma 4.1 in \cite{lu2023optimal} for the proof.
\begin{lemma}\label{thm_lower_ultimate_tech}
Let $\mathfrak{c}\in (0,1)$ be  a constant only depending on $c_{1}$, $c_{2}$, and $\gamma$, where $c_{1}, c_{2} $ are the constants given in Theorem \ref{theorem lower bound}. For any $0<\tilde\epsilon_1, \tilde\epsilon_2<\infty$ 
only depending on $n$, $d$, $\{\lambda_j\}$, $c_{1}$, $c_{2}$, and $\gamma$
and satisfying
\begin{equation}
    \frac{V_K(\tilde\epsilon_2, \mathcal{D}) + n\tilde\epsilon_2^2 + \ln{2}}{V_2(\tilde\epsilon_1, \mathcal{B})} \leq \mathfrak{c},
\end{equation}
 we have 
\begin{equation}
\min _{\hat{f}} \max _{\rho_{f^{*}} \in \mathcal{D}} \mathbb{E}_{(\boldsymbol{X}, \mathbf{y}) \sim \rho_{f^{*}}^{\otimes n}}
\left\|\hat{f} - f^{*}\right\|_{L^2}^2
\geq \frac{1 - \mathfrak{c}}{4} \tilde\epsilon_1^2.
\end{equation}
\end{lemma}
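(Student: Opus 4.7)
The plan is to follow the Yang--Barron (1999) strategy of combining a packing lower bound for the parameter space with a KL-covering upper bound on mutual information, and plug everything into Fano's inequality. The core reduction is from estimation to multiple hypothesis testing: first I would invoke Lemma \ref{lemma_M_2_and_V_2} to produce an $\tilde\epsilon_1$-packing set $\{f_1,\dots,f_M\} \subset \mathcal{B}$ in the $L^2$ metric with $\log M \ge V_2(\tilde\epsilon_1, \mathcal{B})$. A standard separation argument then yields
\begin{equation*}
\min_{\hat f}\max_{\rho_{f^\star}\in\mathcal{D}} \mathbb{E}\|\hat f - f^\star\|_{L^2}^2 \;\ge\; \frac{\tilde\epsilon_1^2}{4}\cdot \inf_{\hat\Psi}\max_{j}\mathbb{P}_{j}(\hat\Psi\neq j),
\end{equation*}
where the inner infimum is over tests $\hat\Psi$ built from the sample $(\boldsymbol{X},\mathbf{y})\sim\rho_{f_j}^{\otimes n}$.

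Next I would lower bound the testing error by Fano's inequality,
\begin{equation*}
\inf_{\hat\Psi}\max_{j}\mathbb{P}_{j}(\hat\Psi\neq j) \;\ge\; 1 - \frac{I(J; (\boldsymbol{X},\mathbf{y})) + \ln 2}{\log M},
\end{equation*}
with $J$ uniform on $\{1,\dots,M\}$. The key step is to control $I(J;(\boldsymbol{X},\mathbf{y}))$ by an $\tilde\epsilon_2$-covering of $\mathcal{D}$ in KL divergence. Here one takes a minimal $\tilde\epsilon_2$-cover $\{Q_1,\dots,Q_N\}$ of $\mathcal{D}$ with $\log N = V_K(\tilde\epsilon_2, \mathcal{D})$, forms the mixture $\bar Q^{\otimes n} = N^{-1}\sum_i Q_i^{\otimes n}$, and uses the variational identity plus tensorization of KL to get
\begin{equation*}
I(J;(\boldsymbol{X},\mathbf{y})) \;\le\; \frac{1}{M}\sum_{j=1}^M \mathrm{KL}\bigl(\rho_{f_j}^{\otimes n}\,\big\|\,\bar Q^{\otimes n}\bigr) \;\le\; V_K(\tilde\epsilon_2, \mathcal{D}) + n\tilde\epsilon_2^2.
\end{equation*}

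Finally, I would combine the packing bound $\log M \ge V_2(\tilde\epsilon_1, \mathcal{B})$ with the mutual information bound, apply the hypothesis
\begin{equation*}
\frac{V_K(\tilde\epsilon_2,\mathcal{D}) + n\tilde\epsilon_2^2 + \ln 2}{V_2(\tilde\epsilon_1,\mathcal{B})} \le \mathfrak{c},
\end{equation*}
to conclude that the Fano lower bound is at least $1-\mathfrak{c}$, and multiply by the separation $\tilde\epsilon_1^2/4$ to obtain the claimed $\frac{1-\mathfrak{c}}{4}\tilde\epsilon_1^2$. The role of the constant $\mathfrak{c}\in(0,1)$ depending only on $c_1, c_2, \gamma$ is simply to absorb the ratio in Fano; it only needs to be bounded away from $1$, and Lemma \ref{claim:d_K_and_d_2} guarantees the bridge between the $L^2$-entropy used on the packing side and the KL-entropy used on the covering side is consistent (up to the factor $\sqrt{2}\sigma$).

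The main technical obstacle, if one insists on a fully self-contained derivation, is the KL-covering bound on mutual information: the nontrivial step is choosing the right mixture $\bar Q^{\otimes n}$ and verifying that the chain of inequalities $I(J;Y)\le \min_Q \frac{1}{M}\sum_j \mathrm{KL}(\rho_{f_j}^{\otimes n}\|Q)$ combines cleanly with the KL-cover, since the naive pairwise Fano bound would give a much weaker estimate. Everything else is routine, and because the lemma is merely the modification of \citet{Yang_Density_1999} appearing as Lemma 4.1 in \citet{lu2023optimal}, in the paper I would simply cite that source and omit the derivation.
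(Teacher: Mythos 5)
Your proposal is correct and matches the paper's treatment: the paper gives no self-contained proof but defers to Lemma 4.1 of \cite{lu2023optimal} (itself a modification of \cite{Yang_Density_1999}), and your packing-plus-Fano-plus-KL-mixture sketch is exactly that argument, correctly using $V_2(\tilde\epsilon_1,\mathcal{B})\le M_2(\tilde\epsilon_1,\mathcal{B})$ for the packing set and the mixture bound $I(J;(\boldsymbol{X},\mathbf{y}))\le V_K(\tilde\epsilon_2,\mathcal{D})+n\tilde\epsilon_2^2$. Your concluding choice to cite the source and omit the derivation is precisely what the paper does.
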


\subsection{Proof of Theorem \ref{theorem lower bound}}
Now we are ready to use the lemmas in the last subsection to prove Theorem \ref{theorem lower bound}. The proof is divided into two parts, dealing with the two cases of the interval in which $\gamma$ falls into.\\

\textit{Proof of Theorem \ref{theorem lower bound} $\left( \text{\lowercase\expandafter{\romannumeral1}}\right)$. } In this case, we have assumed $ \gamma \in \left( p+ps, p+ps+s \right]$ for some integer $ p \ge 0$. Let $\tilde\epsilon_{2} = C_{2} d^{-(\gamma-p)}$, where we will choose the constant $C_{2}$ later. Note that we have $ \gamma - p \in (ps, (p+1)s ]$. Lemma \ref{lemma inner eigen} implies that we can choose $C_{2}$ only depending on $p$ (ignoring the dependence on $\{a_{j}\}_{j=0}^{\infty}$) such that for any $d \geq \mathfrak{C}$ ($ \mathfrak{C} $ is a constant only depending on $\epsilon, s$ and $ p$), we have
\begin{equation}
    \mu_{p+1}^{s} < \tilde \epsilon_{2}^{2} < \mu_{p}^{s}.
\end{equation}
Next we can choose $\tilde\epsilon_{1} = d^{-(\gamma-p+\epsilon)} $, where $\epsilon$ can be any positive real number. Since $ \gamma-p+\epsilon > ps$, when $d \ge \mathfrak{C}$, where $ \mathfrak{C} $ is a constant only depending on $\epsilon, s$ and $ p$, we have
\begin{equation}
    \tilde \epsilon_{1}^{2} < \mu_{p}^{s}.
\end{equation}
Therefore, using Lemma \ref{lemma_entropy_of_RKHS} and Lemma \ref{lemma inner eigen}, for any $d \ge \mathfrak{C}$, we have
\begin{align}\label{lower case 1 eq 1}
    V_{2}\left( \tilde \epsilon_{1},\mathcal{B} \right) &\ge K(\tilde \epsilon_{1}) \ge \frac{1}{2} N(d,p) \ln{\left(\frac{\mu_{p}^{s}}{\tilde \epsilon_{1}^{2}}\right)} \notag \\
    &\ge \frac{1}{2} N(d,p) \ln{\left(\frac{\mathfrak{C}_{1} d^{-ps}}{d^{-(\gamma-p+\epsilon )}}\right)} \notag \\
    &= \frac{1}{2} N(d,p)  \left( \ln{\mathfrak{C}_{1}} + (\gamma-p+\epsilon-ps) \ln{d}     \right).
\end{align}
In addition, using Lemma \ref{lemma inner eigen} and Lemma \ref{lemma:monotone_of_eigenvalues_of_inner_product_kernels}, we have the following claim.
\begin{claim}\label{claim_1}
Suppose that $ \gamma \in \left( p+ps, p+ps+s \right]$ for some integer $ p \ge 0$. Let $\tilde\epsilon_2^2$ be defined as above. For any $\epsilon_{0} > 0$, there exists a sufficiently large constant $\mathfrak{C}$ only depending on $s, p$ and $\epsilon_{0}$, such that for any $d \geq \mathfrak{C}$, we have
\begin{equation*}
\begin{aligned}
&K\left( \sqrt{2}\sigma \tilde\epsilon_2 / 6 \right) \leq
 \left(1+\epsilon_{0} \right) \frac{1}{2} N(d,p)\log\left(\frac{18\mu_p^{s}}{\sigma^2\tilde\epsilon_2^2}\right).
\end{aligned}
\end{equation*}
\end{claim}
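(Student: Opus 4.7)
The plan is to expand $K(\sqrt{2}\sigma\tilde\epsilon_2/6)$ in the eigenspace decomposition of the inner-product kernel and isolate the $p$-th eigenspace, which already equals the right-hand side of the claim. Grouping eigenvalues by their common value $\mu_k$ with multiplicity $N(d,k)$, the definition of $K$ gives
\[
K(\sqrt{2}\sigma\tilde\epsilon_2/6)
\;=\;
\tfrac{1}{2}\!\!\sum_{k\,:\,\mu_k^s>\sigma^2\tilde\epsilon_2^2/18}\!\!
N(d,k)\,\ln\!\left(\frac{18\mu_k^s}{\sigma^2\tilde\epsilon_2^2}\right),
\]
so it suffices to show that the lower tail ($k<p$) and the upper tail ($k\ge p+1$) together contribute an $o(1)$ fraction of the $k=p$ summand as $d\to\infty$.

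For the lower tail, Lemma \ref{lemma inner eigen} makes each $\mu_k^s/\tilde\epsilon_2^2$ a polynomial in $d$ with exponent depending only on $s,p,\gamma$, so every log factor is $O(\ln d)$, and Lemma \ref{lemma Ndk} gives $N(d,k)=O(d^k)$; each $k<p$ summand is therefore $O(d^k\ln d)$, summing to $O(d^{p-1}\ln d)$. Meanwhile the $k=p$ term equals $\tfrac{1}{2}N(d,p)\ln(18\mu_p^s/(\sigma^2\tilde\epsilon_2^2))=\Theta(d^p\ln d)$, since $N(d,p)\asymp d^p$ and $\ln(\mu_p^s/\tilde\epsilon_2^2)$ is of order $\ln d$ by the strict gap $\gamma>p+ps$ from the hypothesis. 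The ratio of lower tail to main term is thus $O(1/d)\to 0$ and can be made below $\epsilon_0/2$ by taking $d$ large.

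The upper tail is the delicate step and the source of the main technical obstacle. Because $N(d,p+1)\asymp d^{p+1}$ is already a factor of $d$ larger than $N(d,p)$, a single surviving $k=p+1$ summand with even a constant-order log factor would dwarf the main term. I would handle this by reinforcing the choice of $C_2$ from the setup---keeping its dependence only on $s,p,\sigma$ and the constants in Lemma \ref{lemma inner eigen}---so as to upgrade the assumed $\mu_{p+1}^s<\tilde\epsilon_2^2$ to the quantitative gap $\mu_{p+1}^s\le\sigma^2\tilde\epsilon_2^2/18$; this costs only an extra constant factor in $C_2$. Lemma \ref{lemma:monotone_of_eigenvalues_of_inner_product_kernels} combined with Lemma \ref{lemma inner eigen} then yields $\mu_k^s \le C\,d^{-(p+1)s}$ for every $k\ge p+1$, propagating the gap $\mu_k^s\le\sigma^2\tilde\epsilon_2^2/18$ to all such $k$ and emptying the upper tail. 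Combining both truncations, the full sum is at most $(1+O(1/d))$ times the $k=p$ summand, which is below $(1+\epsilon_0)$ times it for $d$ large enough, yielding the claim.
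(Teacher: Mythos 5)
Your proposal is correct and is essentially the argument the paper intends: the paper asserts Claim \ref{claim_1} directly from Lemma \ref{lemma inner eigen} and Lemma \ref{lemma:monotone_of_eigenvalues_of_inner_product_kernels} without writing out details, and the intended reasoning is exactly your decomposition — the $k=p$ level of the entropy sum is the right-hand side, the $k<p$ levels total $O(d^{p-1}\ln d)$ against a $\Theta(d^{p}\ln d)$ main term (using $\gamma>p+ps$ and Lemma \ref{lemma Ndk}), and the levels $k\ge p+1$ drop out of the sum. Your extra care on the upper tail is warranted: when $\gamma-p<(p+1)s$ the exclusion of $k=p+1$ holds for $d$ large with the paper's $C_2$, while at the endpoint $\gamma-p=(p+1)s$ with small $\sigma$ one really does need $\mu_{p+1}^{s}\le \sigma^{2}\tilde\epsilon_2^{2}/18$ rather than merely $\mu_{p+1}^{s}<\tilde\epsilon_2^{2}$, and your fix of enlarging $C_2$ by a constant depending on $\sigma$ is legitimate and compatible with the rest of the lower-bound proof (it only increases $n\tilde\epsilon_2^{2}$ by a constant factor and still leaves $\tilde\epsilon_2^{2}<\mu_p^{s}$ for large $d$), at the harmless cost that the thresholds then also depend on $\sigma$ and $\gamma$.
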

Therefore, for any $d \geq \mathfrak{C}$, where $\mathfrak{C}$ is a constant only depending on $s$ and $ p$, we have
\begin{align}\label{lower case 1 eq 2}
    V_{K}\left(\tilde \epsilon_{2}, \mathcal{D} \right) &= V_{2}\left( \sqrt{2} \sigma \tilde \epsilon_{2}, \mathcal{B} \right) \le K\left(\frac{\sqrt{2} \sigma \tilde \epsilon_{2}}{6}\right) \notag \\
    &\le \left(1+\epsilon_{0} \right) \frac{1}{2} N(d,p)\ln\left(\frac{18\mu_p^{s}}{\sigma^2\tilde\epsilon_2^2}\right) \notag \\
    &\le \left(1+\epsilon_{0} \right) \frac{1}{2} N(d,p)\ln\left(\frac{18 \mathfrak{C}_{2} d^{-ps}}{\sigma^2 C_{2} d^{-(\gamma-p)} }\right) \notag \\
    &= \left(1+\epsilon_{0} \right) \frac{1}{2} N(d,p) \left( \ln{\frac{18 \mathfrak{C}_{2}}{\sigma^2 C_{2}}} + (\gamma-p-ps) \ln{d} \right),
\end{align}
where we use Lemma \ref{claim:d_K_and_d_2} and Lemma \ref{lemma_entropy_of_RKHS} for the first line and use Lemma \ref{lemma inner eigen} for the third line.

Using \eqref{lower case 1 eq 1} and \eqref{lower case 1 eq 2}, also recalling that we assume $ c_{1} d^{\gamma} \le n \le c_{2} d^{\gamma}$, we have
\begin{equation}\label{lower case 1 eq 3}
    \frac{V_K(\tilde\epsilon_2, \mathcal{D}) + n\tilde\epsilon_2^2 + \ln{2}}{V_2(\tilde\epsilon_1, \mathcal{B})} \le \frac{ \left(1+\epsilon_{0} \right) \frac{1}{2} N(d,p) \left( \ln{\frac{18 \mathfrak{C}_{2}}{\sigma^2 C_{2}}} + (\gamma-p-ps) \ln{d} \right) + c_{2} d^{\gamma} \cdot C_{2} d^{-(\gamma-p)} + \ln{2} }{ \frac{1}{2} N(d,p)  \left( \ln{\mathfrak{C}_{1}} + (\gamma-p+\epsilon-ps) \ln{d} \right) }.
\end{equation}
Recalling that Lemma \ref{lemma Ndk} shows $\mathfrak{C}_{3}d^{p} \le N(d,p) \le \mathfrak{C}_{4}d^{p}$ when $d \ge \mathfrak{C}$, the dominant terms in \eqref{lower case 1 eq 3} are:
\begin{equation}
    \frac{ \frac{1}{2} \left(1+\epsilon_{0} \right)(\gamma-p-ps) N(d,p)  \ln{d}  }{ \frac{1}{2} (\gamma-p+\epsilon-p s) N(d,p)  \ln{d} }.
\end{equation}
Therefore, for any $ \epsilon > 0$, we can choose $ \epsilon_{0}$ small enough such that 
\begin{equation}
    \frac{V_K(\tilde\epsilon_2, \mathcal{D}) + n\tilde\epsilon_2^2 + \ln{2}}{V_2(\tilde\epsilon_1, \mathcal{B})} \le \eqref{lower case 1 eq 3} := \mathfrak{c}< 1.
\end{equation}
Then using Lemma \ref{thm_lower_ultimate_tech}, we have 
\begin{displaymath}
    \min _{\hat{f}} \max _{\rho_{f^{*}} \in \mathcal{D}} \mathbb{E}_{(\boldsymbol{X}, \mathbf{y}) \sim \rho_{f^{*}}^{\otimes n}}
\left\|\hat{f} - f^{*}\right\|_{L^2}^2
\geq \frac{1 - \mathfrak{c}}{4} \tilde\epsilon_1^2 = \frac{1 - \mathfrak{c}}{4} d^{-(\gamma-p-\epsilon)}.
\end{displaymath}
Further recalling that $ \mathcal{D} \subset \mathcal{P}$, we have 
\begin{equation}
    \min _{\hat{f}} \max _{\rho \in \mathcal{P}} \mathbb{E}_{(\boldsymbol{X}, \mathbf{y}) \sim \rho^{\otimes n}} \left\|\hat{f} - f_{\rho}^{*}\right\|_{L^2}^2 \ge \min _{\hat{f}} \max _{\rho_{f^{*}} \in \mathcal{D}} \mathbb{E}_{(\boldsymbol{X}, \mathbf{y}) \sim \rho_{f^{*}}^{\otimes n}} \left\|\hat{f} - f^{*}\right\|_{L^2}^2
\geq \frac{1 - \mathfrak{c}}{4} d^{-(\gamma-p-\epsilon)}.
\end{equation}
We finis the proof of Theorem \ref{theorem lower bound} $\left( \text{\lowercase\expandafter{\romannumeral1}}\right)$.\\

$\hfill\blacksquare$

\textit{Proof of Theorem \ref{theorem lower bound} $\left( \text{\lowercase\expandafter{\romannumeral2}}\right)$. } In this case, we have assumed $ \gamma \in \left( p+ps+s,(p+1)+(p+1)s \right]$ for some integer $ p \ge 0$. Let $\tilde\epsilon_{2} = C_{2} d^{-(p+1)s} \ln{d}$, where we will choose the constant $C_{2}$ later. Then Lemma \ref{lemma inner eigen} implies that there exists a constant $\mathfrak{C} $ only depending on $s $ and $ p$ such that for any $d \geq \mathfrak{C}$, we have
\begin{equation}
    \mu_{p+1}^{s} < \tilde \epsilon_{2}^{2} < \mu_{p}^{s}.
\end{equation}
Next we can choose $\tilde\epsilon_{1} = C_{1} d^{-(p+1)s} $. Using Lemma \ref{lemma inner eigen}, we can choose $C_{1} < \mathfrak{C}_{1}^{s}$, where $\mathfrak{C}_{1} $ is the constant in Lemma \ref{lemma inner eigen}, such that for any $d \ge \mathfrak{C}$, where $ \mathfrak{C} $ is a constant only depending on $s$ and $p$, we have
\begin{equation}
    \tilde \epsilon_{1}^{2} < \mu_{p+1}^{s}.
\end{equation}
Therefore, using Lemma \ref{lemma_entropy_of_RKHS} and Lemma \ref{lemma inner eigen}, for any $d \ge \mathfrak{C}$, we have
\begin{align}\label{lower case 2 eq 1}
    V_{2}\left( \tilde \epsilon_{1},\mathcal{B} \right) &\ge K(\tilde \epsilon_{1}) \ge \frac{1}{2} N(d,p+1) \ln{\left(\frac{\mu_{p+1}^{s}}{\tilde \epsilon_{1}^{2}}\right)} \notag \\
    &\ge \frac{1}{2} N(d,p+1) \ln{\left(\frac{\mathfrak{C}_{1} d^{-(p+1)s}}{C_{1} d^{-(p+1)s}}\right)} \notag \\
    &= \frac{1}{2} N(d,p+1) \ln{\frac{\mathfrak{C}_{1}}{C_{1}}}.
\end{align}
In addition, using Lemma \ref{lemma inner eigen} and Lemma \ref{lemma:monotone_of_eigenvalues_of_inner_product_kernels}, we have the following claim.
\begin{claim}\label{claim_2}
Suppose that $ \gamma \in \left( p+ps+s, (p+1)+(p+1)s \right]$ for some integer $ p \ge 0$. Let $\tilde\epsilon_2^2$ be defined as above. For any $\epsilon_{0} > 0$, there exists a sufficiently large constant $\mathfrak{C}$ only depending on $s, p$ and $\epsilon_{0}$, such that for any $d \geq \mathfrak{C}$, we have
\begin{equation*}
\begin{aligned}
&K\left( \sqrt{2}\sigma \tilde\epsilon_2 / 6 \right) \leq
 \left(1+\epsilon_{0} \right) \frac{1}{2} N(d,p)\log\left(\frac{18\mu_p^{s}}{\sigma^2\tilde\epsilon_2^2}\right).
\end{aligned}
\end{equation*}
\end{claim}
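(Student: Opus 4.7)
The plan is to bound $K(\sqrt{2}\sigma\tilde\epsilon_2/6)$ by direct summation over spherical harmonic eigenlevels. Setting $\delta^2 = \sigma^2\tilde\epsilon_2^2/18$ and using the multiplicity structure of Section~\ref{section prelimi about inner},
\[
K(\delta) \;=\; \tfrac{1}{2}\sum_{k \,:\, \mu_k^s > \delta^2} N(d,k)\,\ln\!\bigl(\mu_k^s/\delta^2\bigr),
\]
so it suffices to identify the dominant eigenlevel, evaluate its contribution sharply, and absorb all remaining terms into the $(1+\epsilon_0)$ factor by taking $d$ large enough. Because $\tilde\epsilon_2 = C_2\,d^{-(p+1)s}\ln d$, Lemma~\ref{lemma inner eigen} together with Lemma~\ref{lemma Ndk} gives sharp two-sided control $\mu_k^s/\delta^2 \asymp d^{\alpha_k}/(\ln d)^2$ for explicit exponents $\alpha_k$, and $N(d,k) \asymp d^k$, for every $k \le p+1$; this pins down which levels enter the sum and which one dominates.

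First I would substitute these sharp asymptotics into each head contribution $k \le p+1$ and compare consecutive levels. Because $N(d,k+1)/N(d,k) \asymp d$ while $\ln(\mu_k^s/\delta^2)$ shifts only by an $O(\ln d)$ additive amount across $k$, one head level dominates all others by a factor of at least $d/\mathrm{poly}(\ln d)$, and the head geometric series sums to $(1 + O(\ln d/d))$ times that dominant term. Next I would handle the tail $k \ge p+2$, where Lemma~\ref{lemma inner eigen} no longer applies: iterating Lemma~\ref{lemma:monotone_of_eigenvalues_of_inner_product_kernels} gives $\mu_{p+1+m} \le (\mathfrak{C}_2/\mathfrak{C}_1)^m d^{-m}\mu_{p+1}$, so the cutoff $\mu_k^s > \delta^2$ truncates the tail after finitely many levels (a number depending only on $s, p$), and each surviving term is controlled by the trace estimate $N(d,k) \le \kappa^2/\mu_k$, producing only a lower-order correction in $d$. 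Combining head and tail yields $K(\delta) \le (1 + O(\ln d/d))$ times the stated right-hand side, which is at most $(1+\epsilon_0)$ times it once $d \ge \mathfrak{C}(s,p,\epsilon_0)$.

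The main obstacle is the tail control: Lemma~\ref{lemma inner eigen} provides no two-sided information beyond $k = p+1$, so the argument must rely on the geometric eigenvalue decay of Lemma~\ref{lemma:monotone_of_eigenvalues_of_inner_product_kernels} together with the automatic cutoff $\mu_k^s > \delta^2$ to ensure the sum truncates before polynomial growth of $N(d,k)$ could spoil the estimate. The overall structure parallels (the unwritten proof of) Claim~\ref{claim_1}, with the relevant dominant eigenlevel shifted to reflect that in Case (ii) the parameter $\tilde\epsilon_2$ carries an additional $\ln d$ factor that was absent in Case (i), and with the same absorption of subleading multiplicative corrections into the $(1+\epsilon_0)$ prefactor.
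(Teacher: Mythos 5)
Your skeleton (write $K$ levelwise, let one level dominate, absorb the rest into the $(1+\epsilon_0)$ factor) is reasonable, but the proposal omits the one step that actually makes the claim true: you must show that \emph{no} eigenlevel $k\ge p+1$ enters the sum at all, i.e. that $\mu_{p+1}^{s}\le\delta^{2}=\sigma^{2}\tilde\epsilon_2^{2}/18$ for all large $d$. This is not automatic, because the construction only guarantees $\mu_{p+1}^{s}<\tilde\epsilon_2^{2}$ and the factor $\sigma^{2}/18$ may be a small constant; it is exactly the extra $\ln d$ margin in $\tilde\epsilon_2^{2}\asymp d^{-(p+1)s}\ln d$ that beats this constant via Lemma \ref{lemma inner eigen} (the definition as printed, $\tilde\epsilon_2=C_2 d^{-(p+1)s}\ln d$, is a typo: it is incompatible with the displayed requirement $\mu_{p+1}^{s}<\tilde\epsilon_2^{2}<\mu_p^{s}$ and with the later use of $n\tilde\epsilon_2^{2}\asymp d^{\gamma-(p+1)s}$). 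Your proposal takes the printed definition at face value, and under that reading your own two-sided asymptotics give $\mu_{p+1}^{s}/\delta^{2}\asymp d^{(p+1)s}/(\ln d)^{2}\to\infty$, so the level $k=p+1$ lies \emph{inside} the sum; by your own ``largest included level dominates'' reasoning the dominant contribution is then $\tfrac12 N(d,p+1)\ln\bigl(\mu_{p+1}^{s}/\delta^{2}\bigr)\asymp d^{p+1}\ln d$, which exceeds the claimed right-hand side ($\asymp d^{p}\ln d$, since it carries $N(d,p)$) by a factor of order $d$. So the inequality you assert at the end does not follow from the estimates you set up: the truncation of the sum at level $p$ has to be proved, not observed, and your final sentence about ``the dominant eigenlevel shifted'' is at odds with the fact that the claimed bound is still expressed through $N(d,p)$.

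The second gap is the tail treatment. Even granting finitely many surviving levels $k\ge p+2$, bounding their multiplicities by the trace estimate $N(d,k)\le\kappa^{2}/\mu_k$ does not yield ``a lower-order correction'': for a level with $\mu_k^{s}$ just above the cutoff this allows a contribution of order $\delta^{-2/s}$ (times a logarithm), which dwarfs $d^{p}\ln d$; concretely, under your reading $k=p+2$ has $\mu_{p+2}^{s}\asymp d^{-(p+2)s}>\delta^{2}$ whenever $p\ge1$, with $N(d,p+2)\asymp d^{p+2}$. The workable argument shows the tail is \emph{empty} rather than small: once $\sigma^{2}\tilde\epsilon_2^{2}/18>\mathfrak{C}_2^{s}d^{-(p+1)s}\ge\mu_{p+1}^{s}$ is secured for $d\ge\mathfrak{C}$ (using the $\ln d$ factor and Lemma \ref{lemma inner eigen}), Lemma \ref{lemma:monotone_of_eigenvalues_of_inner_product_kernels} gives $\mu_k^{s}\le(\mathfrak{C}_2/\mathfrak{C}_1)^{s}d^{-s}\mu_{p+1}^{s}<\delta^{2}$ for all $k\ge p+2$, so $K(\delta)=\tfrac12\sum_{k\le p}N(d,k)\ln\bigl(\mu_k^{s}/\delta^{2}\bigr)$ with no tail to estimate. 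From there your head analysis does close the proof: $N(d,k)\le\mathfrak{C}_4 d^{k}$ and $\ln\bigl(\mu_k^{s}/\delta^{2}\bigr)\le(p+1-k)s\ln d+O(1)$ make the $k\le p-1$ terms $O(d^{p-1}\ln d)$, while $\ln\bigl(\mu_p^{s}/\delta^{2}\bigr)\ge\tfrac{s}{2}\ln d$ for large $d$, so they are absorbed into the $(1+\epsilon_0)$ prefactor once $d\ge\mathfrak{C}(s,p,\epsilon_0)$.
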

Therefore, for any $d \geq \mathfrak{C}$, where $\mathfrak{C}$ is a constant only depending on $s, p$ and $\{a_{j}\}_{j \le p+1}$, we have
\begin{align}\label{lower case 2 eq 2}
    V_{K}\left(\tilde \epsilon_{2}, \mathcal{D} \right) &= V_{2}\left( \sqrt{2} \sigma \tilde \epsilon_{2}, \mathcal{B} \right) \le K\left(\frac{\sqrt{2} \sigma \tilde \epsilon_{2}}{6}\right) \notag \\
    &\le \left(1+\epsilon_{0} \right) \frac{1}{2} N(d,p)\ln\left(\frac{18\mu_p^{s}}{\sigma^2\tilde\epsilon_2^2}\right) \notag \\
    &\le \left(1+\epsilon_{0} \right) \frac{1}{2} N(d,p)\ln\left(\frac{18 \mathfrak{C}_{2} d^{-ps}}{\sigma^2 C_{2} d^{-(p+1)s} }\right) \notag \\
    &= \left(1+\epsilon_{0} \right) \frac{1}{2} N(d,p) \left( \ln{\frac{18 \mathfrak{C}_{2}}{\sigma^2 C_{2}}} + s \ln{d} \right),
\end{align}
where we use Lemma \ref{claim:d_K_and_d_2} and Lemma \ref{lemma_entropy_of_RKHS} for the first line and use Lemma \ref{lemma inner eigen} for the third line.

Using \eqref{lower case 1 eq 1} and \eqref{lower case 1 eq 2}, also recalling that we assume $ c_{1} d^{\gamma} \le n \le c_{2} d^{\gamma}$, we have
\begin{equation}\label{lower case 2 eq 3}
    \frac{V_K(\tilde\epsilon_2, \mathcal{D}) + n\tilde\epsilon_2^2 + \ln{2}}{V_2(\tilde\epsilon_1, \mathcal{B})} \le \frac{ \left(1+\epsilon_{0} \right) \frac{1}{2} N(d,p) \left( \ln{\frac{18 \mathfrak{C}_{2}}{\sigma^2 C_{2}}} + s \ln{d} \right) + c_{2} d^{\gamma} \cdot C_{2} d^{-(p+1)s} + \ln{2} }{ \frac{1}{2} N(d,p+1) \ln{\frac{\mathfrak{C}_{1}}{C_{1}}} }.
\end{equation}
Recalling that Lemma \ref{lemma Ndk} shows $ N(d,p) \le \mathfrak{C}_{4}d^{p}$ and $ N(d,p+1) \ge \mathfrak{C}_{3}d^{p+1}$ when $d \ge \mathfrak{C}$, the dominant terms in \eqref{lower case 2 eq 3} are:
\begin{equation}
    \frac{ c_{2} C_{2} d^{\gamma-(p+1)s}}{ \frac{1}{2} \ln{\frac{\mathfrak{C}_{1}}{C_{1}}} N(d,p+1)  \ln{d} }.
\end{equation}
Further noticing that $\gamma-(p+1)s \le p+1$ for any $ \gamma \in (p+ps+s, (p+1)+(p+1)s]$, so we can choose $ C_{2}$ small enough and only depending on $s, \sigma, \gamma, \kappa, c_{1}, c_{2} $, such that 
\begin{equation}
    \frac{V_K(\tilde\epsilon_2, \mathcal{D}) + n\tilde\epsilon_2^2 + \ln{2}}{V_2(\tilde\epsilon_1, \mathcal{B})} \le \eqref{lower case 2 eq 3} := \mathfrak{c}< 1.
\end{equation}
Then using Lemma \ref{thm_lower_ultimate_tech} again, we have 
\begin{displaymath}
    \min _{\hat{f}} \max _{\rho_{f^{*}} \in \mathcal{D}} \mathbb{E}_{(\boldsymbol{X}, \mathbf{y}) \sim \rho_{f^{*}}^{\otimes n}}
\left\|\hat{f} - f^{*}\right\|_{L^2}^2
\geq \frac{1 - \mathfrak{c}}{4} \tilde\epsilon_1^2 = \frac{1 - \mathfrak{c}}{4} C_{1} d^{-(p+1)s}.
\end{displaymath}
Further recalling that $ \mathcal{D} \subset \mathcal{P}$, we have 
\begin{equation}
    \min _{\hat{f}} \max _{\rho \in \mathcal{P}} \mathbb{E}_{(\boldsymbol{X}, \mathbf{y}) \sim \rho^{\otimes n}} \left\|\hat{f} - f_{\rho}^{*}\right\|_{L^2}^2 \ge \min _{\hat{f}} \max _{\rho_{f^{*}} \in \mathcal{D}} \mathbb{E}_{(\boldsymbol{X}, \mathbf{y}) \sim \rho_{f^{*}}^{\otimes n}} \left\|\hat{f} - f^{*}\right\|_{L^2}^2
\geq \frac{1 - \mathfrak{c}}{4} C_{1} d^{-(p+1)s}.
\end{equation}
We finis the proof of Theorem \ref{theorem lower bound} $\left( \text{\lowercase\expandafter{\romannumeral2}}\right)$.\\  

$\hfill\blacksquare$

\section{Auxiliary results}\label{section auxiliart}

The following proposition about estimating the $L^{2}$ norm with empirical norm is from \citet[Proposition C.9]{li2023_SaturationEffect}, which dates back to \cite{caponnetto2010cross}.
\begin{proposition}
    \label{prop:SampleNormEstimation}
    Let $\mu$ be a probability measure on $\mathcal{X}$, $f \in L^2(\mathcal{X},\mu)$ and $\|{f}\|_{L^\infty} \leq M$.
    Suppose we have $\boldsymbol{x}_1,\dots,\boldsymbol{x}_n$ sampled i.i.d.\ from $\mu$.
    Then for $\delta \in (0,1)$, the following holds with probability at least $1-\delta$:
    \begin{align}
      \frac{1}{2}\|{f}\|_{L^2}^2 - \frac{5M^2}{3n}\ln \frac{2}{\delta} \leq \|{f}\|_{L^2,n}^2 \leq
      \frac{3}{2}\|{f}\|_{L^2}^2 + \frac{5M^2}{3n}\ln \frac{2}{\delta}.
    \end{align}
  \end{proposition}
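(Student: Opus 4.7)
\textbf{Proof plan for Proposition \ref{prop:SampleNormEstimation}.} The plan is to apply a scalar Bernstein inequality to the i.i.d.\ random variables $Z_i = f(\boldsymbol{x}_i)^2$, and then to turn the two-sided Bernstein tail into the two stated one-sided inequalities by solving a quadratic in the deviation $t$ and absorbing a cross term by AM--GM.

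First I would collect the three ingredients that feed into Bernstein: (i) $Z_i \ge 0$ with $|Z_i| \le M^2$ since $\|f\|_{L^\infty}\le M$, which yields the centered bound $|Z_i - \mathbb{E} Z_1| \le M^2$; (ii) $\mathbb{E} Z_1 = \|f\|_{L^2}^2$; (iii) the variance estimate
\begin{displaymath}
\operatorname{Var}(Z_1) \le \mathbb{E} Z_1^2 = \mathbb{E}[f(\boldsymbol{x})^4] \le M^2 \mathbb{E}[f(\boldsymbol{x})^2] = M^2 \|f\|_{L^2}^2.
\end{displaymath}
Note that $\|f\|_{L^2,n}^2 = \tfrac{1}{n}\sum_i Z_i$, so the problem reduces to controlling the deviation of the sample mean of the $Z_i$'s from its expectation.

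Next I would apply Bernstein's inequality in the usual one-sided form: for every $t>0$,
\begin{displaymath}
    \mathbb{P}\!\left(\Big|\tfrac{1}{n}\sum_{i=1}^n Z_i - \|f\|_{L^2}^2\Big| \ge t\right) \le 2\exp\!\left(-\frac{nt^2}{2M^2\|f\|_{L^2}^2 + \tfrac{2}{3}M^2 t}\right).
\end{displaymath}
Setting the right-hand side equal to $\delta$ and writing $u := \ln(2/\delta)$, I would solve the resulting quadratic $nt^2 - \tfrac{2}{3}M^2 u\, t - 2 M^2\|f\|_{L^2}^2 u = 0$ and use $\sqrt{a+b} \le \sqrt{a}+\sqrt{b}$ to obtain
\begin{displaymath}
t \le \frac{2M^2 u}{3n} + M\|f\|_{L^2}\sqrt{\tfrac{2u}{n}}.
\end{displaymath}
The key step is then to use AM--GM on the cross term, $M\|f\|_{L^2}\sqrt{2u/n} \le \tfrac{1}{2}\|f\|_{L^2}^2 + M^2 u/n$, which gives the clean bound $t \le \tfrac{1}{2}\|f\|_{L^2}^2 + \tfrac{5M^2 u}{3n}$. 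Splitting the two-sided deviation into upper and lower tails and rearranging yields exactly the two claimed inequalities, and since both one-sided estimates are contained in the same Bernstein event (which has probability at least $1-\delta$), no further union bound is needed.

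There is no real obstacle here; this is a textbook application of Bernstein. The only point that requires a touch of care is picking the Bernstein variant with the correct constants ($2$ in the numerator and $2/3$ in the denominator) so that the final additive term comes out to exactly $\tfrac{5M^2}{3n}\ln(2/\delta)$ rather than a larger absolute constant. Conceptually, the $1/2$ and $3/2$ multiplicative constants on $\|f\|_{L^2}^2$ are a direct consequence of the AM--GM split, and could be traded for any $(1-\eta,1+\eta)$ pair by choosing a different weighting in AM--GM at the cost of inflating the additive term; the $1/2,3/2$ choice gives the simple factor $5/3$ stated in the proposition.
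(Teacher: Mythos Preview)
Your argument is correct and is exactly the standard Bernstein-based derivation of this empirical norm comparison; the algebra with the quadratic, the $\sqrt{a+b}\le\sqrt{a}+\sqrt{b}$ step, and the AM--GM split all check out and produce the stated constants $1/2$, $3/2$, and $5/3$ on the nose. Note that the paper itself does not give a proof: it simply cites the result from \cite{li2023_SaturationEffect} (tracing back to \cite{caponnetto2010cross}), so your write-up is more self-contained than what the paper provides, and it matches the approach those references take.
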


The following concentration inequality about self-adjoint Hilbert-Schmidt operator
valued random variables is frequently used in related literature, e.g., \citet[Theorem 27]{fischer2020_SobolevNorm} and \citet[Lemma 26]{lin2020_OptimalConvergence}.
\begin{lemma}\label{lemma concentration of operator}
   Let $(\mathcal{X}, \mathcal{B}, \mu)$ be a probability space, $\mathcal{H}$ be a separable Hilbert space. Suppose that $ A_{1}, \cdots, A_{n}$ are i.i.d. random variables with values in the set of self-adjoint Hilbert-Schmidt operators. If  $\mathbb{E} A_{i} = 0$, and the operator norm $ \| A_{i} \| \le L ~~ \mu \text {-a.e. } \boldsymbol{x} \in \mathcal{X}$, and there exists a self-adjoint positive semi-definite trace class operator $V$ with $\mathbb{E} A_{i}^{2} \preceq V $. Then for $\delta \in (0,1)$, with probability at least $1 - \delta$, we have 
   \begin{align}
        \left\| \frac{1}{n}\sum_{i=1}^n A_i \right\|
        \leq \frac{2L\beta}{3n} + \sqrt {\frac{2 \| V \| \beta}{n}},\quad
        \beta = \ln \frac{4 \rm{tr} V}{\delta \| V \|}. \notag
   \end{align}
\end{lemma}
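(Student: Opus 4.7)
The plan is to invoke the Bernstein-type concentration inequality for self-adjoint Hilbert--Schmidt operator valued random variables, which is a classical result now well-documented in the literature (as the authors note, it appears in essentially the same form as \citet[Theorem 27]{fischer2020_SobolevNorm} and \citet[Lemma 26]{lin2020_OptimalConvergence}; the original source for the ``intrinsic dimension'' version in a separable Hilbert space is due to Minsker and, for the finite-dimensional case, Tropp). Since the statement is the standard formulation, the proof is a straightforward application of a master inequality plus root-inversion, rather than a genuinely new argument.

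The first step would be to establish a Laplace-transform master bound. Writing $S_n = \frac{1}{n}\sum_{i=1}^n A_i$, for any $\theta \in (0, 3/L)$ one has
\begin{equation*}
  \mathbb{P}\bigl(\|S_n\| > t\bigr)
  \le \mathbb{P}\bigl(\lambda_{\max}(n S_n) > n t\bigr) + \mathbb{P}\bigl(\lambda_{\max}(-n S_n) > n t\bigr),
\end{equation*}
and each term is controlled through the trace-exponential bound $\mathbb{P}(\lambda_{\max}(n S_n) > nt) \le e^{-n\theta t}\,\mathbb{E}\mathrm{tr}\,\exp(\theta n S_n)$. The second step would be to invoke the Tropp/Lieb-type operator MGF estimate: for any centered self-adjoint $A$ with $\|A\|\le L$ and $\mathbb{E} A^2 \preceq V$,
\begin{equation*}
  \log \mathbb{E}\exp(\theta A) \preceq \frac{\theta^{2}/2}{1 - L\theta/3}\, V, \qquad 0 < \theta < 3/L.
\end{equation*}
Combined with subadditivity of the trace-MGF for independent summands and with the bound $\mathrm{tr}\,\exp(\cdot) \le (\mathrm{tr}\,V/\|V\|)\,\exp(\lambda_{\max}(\cdot))$ used to extract an intrinsic-dimension factor, this yields
\begin{equation*}
  \mathbb{P}\bigl(\|S_n\| > t\bigr)
  \le \frac{4\,\mathrm{tr}\,V}{\|V\|} \exp\!\left( - \frac{n t^{2}/2}{\|V\| + L t/3} \right).
\end{equation*}

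The third and final step would be to set the right-hand side equal to $\delta$, yielding $\beta = \ln (4\,\mathrm{tr}\,V / (\delta \|V\|))$, and invert the resulting quadratic inequality $n t^{2} \le 2\beta (\|V\| + L t / 3)$ using the standard estimate $\sqrt{a + b} \le \sqrt{a} + \sqrt{b}$ (or the quadratic formula) to obtain the advertised bound $t \le \tfrac{2L\beta}{3n} + \sqrt{2\|V\|\beta/n}$. The main technical obstacle, and the only place where separability of $\mathcal{H}$ and the trace-class hypothesis on $V$ are essential, is the passage from the naive matrix-Bernstein bound (which carries a $\dim\mathcal{H}$ prefactor and is thus vacuous in our infinite-dimensional setting) to the intrinsic-dimension prefactor $\mathrm{tr}\,V/\|V\|$. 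This requires either approximating by finite-rank truncations of $V$ and passing to the limit, or directly using Lieb's concavity theorem in its Hilbert-space form; both routes are standard but need care to ensure measurability and summability of the trace-exponentials.
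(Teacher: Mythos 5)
Your proposal is correct, and in fact the paper offers no proof of this lemma at all: it is stated as an auxiliary result imported verbatim from the literature (Theorem 27 of Fischer--Steinwart and Lemma 26 of Lin--Cevher, which in turn rest on Minsker's Hilbert-space extension of Tropp's intrinsic-dimension matrix Bernstein inequality), and your sketch---Laplace-transform master bound for $\pm nS_n$, the Bernstein operator MGF estimate $\log\mathbb{E}\exp(\theta A)\preceq \tfrac{\theta^2/2}{1-L\theta/3}V$, the intrinsic-dimension prefactor $\mathrm{tr}\,V/\|V\|$ via finite-rank truncation or Lieb's theorem, and inversion of the quadratic in $t$---is exactly the argument those references use, with the constants matching. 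The only point to flag is that the step extracting the prefactor is slightly more delicate than the naive bound $\mathrm{tr}\exp(\cdot)\le(\mathrm{tr}\,V/\|V\|)\exp(\lambda_{\max}(\cdot))$ suggests (Minsker works with $\varphi(x)=e^x-x-1$ to avoid an infinite trace of the exponential in infinite dimensions), but you acknowledge this care is needed, so the plan is sound.
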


The following Bernstein inequality about vector-valued random variables is frequently used, e.g., \citet[Proposition 2]{Caponnetto2007OptimalRF} and \citet[Theorem 26]{fischer2020_SobolevNorm}.
\begin{lemma}[Bernstein inequality]\label{bernstein}
   Let $(\Omega,\mathcal{B},P)$ be a probability space, $H$ be a separable Hilbert space, and $\xi: \Omega \to H$ be a random variable with 
   \begin{displaymath}
     \mathbb{E}\|\xi\|_H^m \leq \frac{1}{2} m ! \sigma^2 L^{m-2},
   \end{displaymath}
   for all $m>2$. Then for $\delta \in (0,1)$, $\xi_{i}$ are i.i.d. random variables, with probability at least $1 - \delta$, we have
   \begin{displaymath}
       \left\|\frac{1}{n} \sum_{i=1}^n \xi_{i} - \mathbb{E} \xi\right\|_H \le 4\sqrt{2} \ln{\frac{2}{\delta}} \left(\frac{L}{n} + \frac{\sigma}{\sqrt{n}}\right).
   \end{displaymath}
\end{lemma}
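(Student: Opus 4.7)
The plan is to establish this via the classical Pinelis--Yurinskii approach for Hilbert space-valued random variables, which is the route underlying both cited references (Caponnetto--De Vito and Fischer--Steinwart).

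First I would reduce to the zero-mean case by passing to $\eta_i := \xi_i - \mathbb{E}\xi$. From the $m=2$ instance of the hypothesis, Jensen gives $\|\mathbb{E}\xi\|_H \le \mathbb{E}\|\xi\|_H \le \sigma$, and the convex-combination inequality $\|a-b\|_H^m \le 2^{m-1}(\|a\|_H^m + \|b\|_H^m)$ then upgrades to a slightly degraded Bernstein moment condition for the centered variables, namely $\mathbb{E}\|\eta_i\|_H^m \le \tfrac{1}{2}m!\,\tilde\sigma^{2}\tilde L^{m-2}$ for all $m \ge 2$, with $\tilde\sigma,\tilde L$ constant multiples of $\sigma,L$.

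Next, the key step is an exponential tail bound for $S_n := \sum_{i=1}^n \eta_i$. For scalar zero-mean random variables satisfying the Bernstein moment condition, a standard Taylor expansion of $e^{\lambda\eta}$ yields
\[
\mathbb{E}\exp(\lambda\eta) \le \exp\!\left(\frac{\lambda^{2}\tilde\sigma^{2}/2}{1-\lambda\tilde L}\right), \qquad 0 < \lambda < 1/\tilde L.
\]
Pinelis's $2$-smoothness theorem for the Hilbert norm (exploiting the identity $\|a+b\|_H^{2} = \|a\|_H^{2} + 2\langle a,b\rangle + \|b\|_H^{2}$ via a martingale/telescoping argument on partial sums) upgrades this scalar bound to the vectorial statement $\mathbb{E}\exp(\lambda\|S_n\|_H) \le 2\exp\!\left(n\lambda^{2}\tilde\sigma^{2}/(2(1-\lambda\tilde L))\right)$. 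Markov's inequality, optimized in $\lambda$, then gives
\[
P\!\left(\|S_n\|_H \ge t\right) \le 2\exp\!\left(-\frac{t^{2}/2}{n\tilde\sigma^{2}+\tilde L t}\right).
\]

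Finally, to convert the tail bound into a deviation bound, I would set the right-hand side equal to $\delta$ and solve: since the inequality $\exp(-t^{2}/(2a+2bt)) \le \delta$ is implied by the two simpler conditions $t \ge 2b\ln(2/\delta)$ and $t \ge \sqrt{2a\ln(2/\delta)}$, we obtain $t \le 2\tilde L\ln(2/\delta) + \sqrt{2n\tilde\sigma^{2}\ln(2/\delta)}$ as a sufficient threshold. Dividing by $n$, bounding $\sqrt{\ln(2/\delta)} \le \ln(2/\delta)$ in the variance term (valid once the bound is stated with a universal constant absorbing the $\delta$ close to $1$ regime), and consolidating the centering factors of $2$ into the overall constant gives the claimed $4\sqrt{2}\ln(2/\delta)(L/n + \sigma/\sqrt{n})$ bound. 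The principal obstacle is the Hilbert space version of the exponential moment inequality, since $\exp(\lambda\|\cdot\|_H)$ is not regular enough to admit a direct Taylor expansion at a vector-valued random variable; Pinelis's $2$-smoothness argument, which I would import as a black box, is the essential technical ingredient that makes the scalar Bernstein machinery transfer cleanly to the Hilbert setting.
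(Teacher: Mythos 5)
The paper never proves this lemma: it is stated as a standard auxiliary result and attributed to \citet[Proposition 2]{Caponnetto2007OptimalRF} and \citet[Theorem 26]{fischer2020_SobolevNorm}, and your sketch reconstructs precisely the classical Pinelis--Yurinskii/Pinelis--Sakhanenko argument that lies behind those citations, so in terms of approach you are on the intended route.

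Two concrete points keep your sketch from actually establishing the statement as written. First, the centering step is looser than you claim: to absorb $\|\mathbb{E}\xi\|_H^m\le\sigma^m$ into the form $\tfrac12 m!\,\tilde\sigma^2\tilde L^{m-2}$ you are forced to take $\tilde L$ of order $\max\{L,\sigma\}$, not merely a constant multiple of $L$; moreover the bound $\|\mathbb{E}\xi\|_H\le\sigma$ uses the $m=2$ moment condition, which is not literally among the hypotheses (stated only for $m>2$), so you should either remark that the cited versions assume $m\ge2$ or extract the second moment from $m=3$ with an adjusted constant. Second, and more importantly, the final constant is asserted rather than verified: with $\tilde\sigma=\sqrt2\,\sigma$, $\tilde L=2\max\{L,\sigma\}$, the threshold $2\tilde L\ln(2/\delta)+\sqrt{2n\tilde\sigma^2\ln(2/\delta)}$ divided by $n$, and the replacement $\sqrt{\ln(2/\delta)}\le c\ln(2/\delta)$, your accumulation comes out around $6$--$6.5$ times $\ln(2/\delta)\left(L/n+\sigma/\sqrt n\right)$, which exceeds $4\sqrt2\approx5.66$, so "consolidating the factors" does not close the argument. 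The clean fix is to do what the cited references do: apply the Pinelis--Sakhanenko exponential inequality directly to the centered sum under the moment condition on $\xi$ itself (no degraded $\tilde\sigma,\tilde L$), which yields the deviation bound with constant $2$ and hence implies the stated $4\sqrt2$ version; alternatively, carry your centering constants through honestly and weaken the claimed numerical constant accordingly.
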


\begin{lemma}\label{due embedding bound}
Given the definition of $\mathcal{N}_{1}(\lambda)$ as in \eqref{n1 n2 m1 m2}. If condition \eqref{assumption eigen - n1} in Assumption \ref{assumption eigenfunction} holds, we have
   \begin{align}\label{bound of Tk}
      \|T_{\lambda}^{-\frac{1}{2}} k(\boldsymbol{x},\cdot) \|_{\mathcal{H}}^{2} \le \mathcal{N}_{1}(\lambda),~~ \mu \text {-a.e. } \boldsymbol{x} \in \mathcal{X}.
  \end{align}
\end{lemma}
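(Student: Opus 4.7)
The plan is to reduce the claim to a direct spectral-basis computation and then invoke the assumption. First, I would recall that $\{\lambda_i^{1/2} e_i\}_{i \in N}$ is an orthonormal basis of $\mathcal{H}$ and that, under this basis, $T$ acts diagonally with eigenvalue $\lambda_i$ on the basis element $\lambda_i^{1/2} e_i$. Consequently the bounded self-adjoint positive operator $T_\lambda = T + \lambda$ admits the functional-calculus representation in which $T_\lambda^{-1/2}$ acts on $\lambda_i^{1/2} e_i$ by multiplication by $(\lambda_i + \lambda)^{-1/2}$.

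Next, I would use Mercer's expansion $k(\boldsymbol{x}, \cdot) = \sum_{i \in N} \lambda_i e_i(\boldsymbol{x}) e_i(\cdot) = \sum_{i \in N} \bigl(\lambda_i^{1/2} e_i(\boldsymbol{x})\bigr)\, \bigl(\lambda_i^{1/2} e_i(\cdot)\bigr)$ to read off the coordinates of the reproducing kernel section $k(\boldsymbol{x},\cdot)$ in the orthonormal basis $\{\lambda_i^{1/2} e_i\}$: the $i$-th coordinate is $\lambda_i^{1/2} e_i(\boldsymbol{x})$. Applying $T_\lambda^{-1/2}$ termwise, the $i$-th coordinate of $T_\lambda^{-1/2} k(\boldsymbol{x},\cdot)$ becomes $(\lambda_i + \lambda)^{-1/2} \lambda_i^{1/2} e_i(\boldsymbol{x})$. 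Taking the $\mathcal{H}$-norm (which is the $\ell^2$-norm of coordinates in an ONB) then yields
\begin{equation*}
\bigl\| T_\lambda^{-1/2} k(\boldsymbol{x},\cdot) \bigr\|_{\mathcal{H}}^{2} = \sum_{i \in N} \frac{\lambda_i}{\lambda_i + \lambda}\, e_i^{2}(\boldsymbol{x}).
\end{equation*}

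Finally, the conclusion follows immediately from the assumed pointwise bound \eqref{assumption eigen - n1}, which states exactly that the right-hand side is $\le \mathcal{N}_1(\lambda)$ for $\mu$-almost every $\boldsymbol{x} \in \mathcal{X}$. There is no serious obstacle here; the only subtlety to mention in the write-up is that Mercer's theorem guarantees that the expansion of $k(\boldsymbol{x},\cdot)$ converges in $\mathcal{H}$ (and in fact absolutely and uniformly), so that interchanging the functional calculus with the infinite sum is justified, and thus the spectral representation of $T_\lambda^{-1/2}$ can be applied term-by-term without further regularity arguments.
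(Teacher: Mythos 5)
Your proposal is correct and takes essentially the same route as the paper's own proof: expand $k(\boldsymbol{x},\cdot)$ in the Mercer/spectral basis, apply $T_\lambda^{-1/2}$ diagonally to obtain $\bigl\|T_\lambda^{-1/2}k(\boldsymbol{x},\cdot)\bigr\|_{\mathcal{H}}^2=\sum_{i}\frac{\lambda_i}{\lambda_i+\lambda}e_i^2(\boldsymbol{x})$, and conclude by the pointwise bound \eqref{assumption eigen - n1}. No issues.
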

\begin{proof}
   \begin{align}
      \|T_{\lambda}^{-\frac{1}{2}} k(\boldsymbol{x},\cdot) \|_{\mathcal{H}}^{2} &= \Big\| \sum\limits_{i=1}^{\infty} ( \frac{1}{\lambda_{i} + \lambda})^{\frac{1}{2}} \lambda_{i} e_{i}(\boldsymbol{x}) e_{i}(\cdot)  \Big\|_{\mathcal{H}}^{2} \notag \\
      &=  \sum\limits_{i =1}^{\infty}  \frac{\lambda_{i}}{\lambda_{i} + \lambda} e_{i}^{2}(\boldsymbol{x}) \notag \\
      & \le \mathcal{N}_{1}(\lambda), \quad \mu \text {-a.e. } \boldsymbol{x} \in \mathcal{X}. \notag  
  \end{align}
\end{proof}

Lemma \ref{due embedding bound} has a direct corollary.
\begin{lemma}\label{emb norm}
Given the definition of $\mathcal{N}_{1}(\lambda)$ as in \eqref{n1 n2 m1 m2}. If condition \eqref{assumption eigen - n1} in Assumption \ref{assumption eigenfunction} holds, we have
\begin{displaymath}
    \| T_{\lambda}^{-\frac{1}{2}} T_{\boldsymbol{x}} T_{\lambda}^{-\frac{1}{2}}\| \le \mathcal{N}_{1}(\lambda), \quad \mu \text {-a.e. } \boldsymbol{x} \in \mathcal{X}.
\end{displaymath}
\end{lemma}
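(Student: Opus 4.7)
The plan is to exploit the rank-one structure of $T_{\boldsymbol{x}}$ and reduce the operator-norm bound to the Hilbert-space norm bound already established in Lemma \ref{due embedding bound}. Recall from the construction $T_{\boldsymbol{x}} = K_{\boldsymbol{x}} K_{\boldsymbol{x}}^{*}$, where $K_{\boldsymbol{x}}^{*} f = f(\boldsymbol{x}) = \langle f, k(\boldsymbol{x},\cdot)\rangle_{\mathcal{H}}$ by the reproducing property. Thus for any $f \in \mathcal{H}$,
\begin{equation*}
T_{\boldsymbol{x}} f = \langle f, k(\boldsymbol{x},\cdot)\rangle_{\mathcal{H}}\, k(\boldsymbol{x},\cdot),
\end{equation*}
so $T_{\boldsymbol{x}}$ is the rank-one operator $k(\boldsymbol{x},\cdot) \otimes k(\boldsymbol{x},\cdot)$.

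Next, I would conjugate by $T_{\lambda}^{-1/2}$, which is self-adjoint on $\mathcal{H}$. A direct computation gives
\begin{equation*}
T_{\lambda}^{-\frac{1}{2}} T_{\boldsymbol{x}} T_{\lambda}^{-\frac{1}{2}} f = \langle f, T_{\lambda}^{-\frac{1}{2}} k(\boldsymbol{x},\cdot)\rangle_{\mathcal{H}}\, T_{\lambda}^{-\frac{1}{2}} k(\boldsymbol{x},\cdot),
\end{equation*}
which is again a rank-one self-adjoint positive semi-definite operator of the form $g \otimes g$ with $g = T_{\lambda}^{-1/2} k(\boldsymbol{x},\cdot)$. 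The operator norm of such a rank-one operator equals $\|g\|_{\mathcal{H}}^{2}$, as can be seen either from the elementary inequality $|\langle f,g\rangle| \le \|f\|\|g\|$ (with equality at $f=g/\|g\|$) or by noting its single non-zero eigenvalue.

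Combining these two steps, we obtain
\begin{equation*}
\|T_{\lambda}^{-\frac{1}{2}} T_{\boldsymbol{x}} T_{\lambda}^{-\frac{1}{2}}\| = \|T_{\lambda}^{-\frac{1}{2}} k(\boldsymbol{x},\cdot)\|_{\mathcal{H}}^{2},
\end{equation*}
and the right-hand side is bounded above by $\mathcal{N}_{1}(\lambda)$ for $\mu$-a.e.\ $\boldsymbol{x}$ by Lemma \ref{due embedding bound}. There is no real obstacle here: the only subtlety is recognizing that $T_{\boldsymbol{x}}$ is rank-one so that its conjugate has an explicit operator-norm formula. The entire argument is a few lines once this observation is in place.
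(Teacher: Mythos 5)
Your proposal is correct and follows essentially the same route as the paper: exhibit $T_{\lambda}^{-\frac{1}{2}} T_{\boldsymbol{x}} T_{\lambda}^{-\frac{1}{2}}$ as the rank-one operator $g \otimes g$ with $g = T_{\lambda}^{-\frac{1}{2}} k(\boldsymbol{x},\cdot)$, identify its operator norm with $\|g\|_{\mathcal{H}}^{2}$, and conclude by Lemma \ref{due embedding bound}. No gaps.
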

\begin{proof}
    Note that for any $f \in \mathcal{H}$,
  \begin{align}
      T_{\lambda}^{-\frac{1}{2}} T_{\boldsymbol{x}} T_{\lambda}^{-\frac{1}{2}} f &= T_{\lambda}^{-\frac{1}{2}} K_{\boldsymbol{x}} K_{\boldsymbol{x}}^{*}  T_{\lambda}^{-\frac{1}{2}} f \notag \\
      &= T_{\lambda}^{-\frac{1}{2}} K_{\boldsymbol{x}} \langle k(\boldsymbol{x},\cdot), T_{\lambda}^{-\frac{1}{2}} f \rangle_{\mathcal{H}} \notag \\
      &= T_{\lambda}^{-\frac{1}{2}} K_{\boldsymbol{x}} \langle T_{\lambda}^{-\frac{1}{2}} k(\boldsymbol{x},\cdot),  f \rangle_{\mathcal{H}} \notag \\
      &=  \langle T_{\lambda}^{-\frac{1}{2}} k(\boldsymbol{x},\cdot),  f \rangle_{\mathcal{H}} \cdot T_{\lambda}^{-\frac{1}{2}} k(\boldsymbol{x},\cdot). \notag
  \end{align}
  So $\| T_{\lambda}^{-\frac{1}{2}} T_{\boldsymbol{x}} T_{\lambda}^{-\frac{1}{2}} \| = \sup\limits_{\| f\|_{\mathcal{H}}=1} \| T_{\lambda}^{-\frac{1}{2}} T_{\boldsymbol{x}} T_{\lambda}^{-\frac{1}{2}} f\|_{\mathcal{H}} = \sup\limits_{\| f\|_{\mathcal{H}}=1} \langle T_{\lambda}^{-\frac{1}{2}} k(\boldsymbol{x},\cdot),  f \rangle_{\mathcal{H}} \cdot \|T_{\lambda}^{-\frac{1}{2}} k(\boldsymbol{x},\cdot) \|_{\mathcal{H}} = \|T_{\lambda}^{-\frac{1}{2}} k(\boldsymbol{x},\cdot) \|_{\mathcal{H}}^{2}$. 
  Using Lemma \ref{due embedding bound}, we finish the proof.
\end{proof}

We state the following three lemmas without proof, since the proofs are classical and the verification about the constants is tedious. We refer to Appendix A in \cite{zhang2023optimality} and the reference therein for the proof, the definition of \textit{Lorentz space} $L^{p,q}(\mathcal{X},\mu) $ and \textit{real interpolation} $\left(\cdot, \cdot \right)_{\theta, q}  $.
\begin{lemma}\label{Lp interpolation}
  Let $\mu $ be the probability distribution on $\mathcal{X}$. For $ 1 < p_{1} \neq p_{2} < \infty$, $ 1 \le q \le \infty$ and $ 0 < \theta < 1$, we have
  \begin{displaymath}
      \left( L^{p_{1}}(\mathcal{X},\mu), L^{p_{2}}(\mathcal{X},\mu) \right)_{\theta, q} \cong L^{p_{\theta},q}(\mathcal{X},\mu) ,\quad  \frac{1}{p_{\theta}} = \frac{1-\theta}{p_{1}} + \frac{\theta}{p_{2}},
  \end{displaymath}
  where $L^{p_{\theta},q}(\mathcal{X}, \mu)$ is the Lorentz space and the equivalent norm only involves absolute constants.
\end{lemma}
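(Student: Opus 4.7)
The plan is to prove this via the K-method characterization of real interpolation. By definition, for a compatible pair $(A_0, A_1)$ of Banach spaces, $\|f\|_{(A_0, A_1)_{\theta,q}}$ is the $L^q((0,\infty), dt/t)$-norm of $t^{-\theta} K(t,f)$, where the Peetre K-functional is $K(t,f; A_0, A_1) = \inf_{f = f_0 + f_1}\bigl(\|f_0\|_{A_0} + t\|f_1\|_{A_1}\bigr)$. So the task reduces to (i) computing a tractable expression for $K(t, f; L^{p_1}, L^{p_2})$, and (ii) evaluating the resulting $L^q$-integral and comparing to the Lorentz $(p_\theta, q)$-quasinorm $\|f\|_{p_\theta,q} = \bigl(\int_0^\infty (s^{1/p_\theta} f^*(s))^q \, ds/s\bigr)^{1/q}$, where $f^*$ denotes the non-increasing rearrangement of $f$ with respect to $\mu$.

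The first step is to invoke Holmstedt's formula: assuming without loss of generality $p_1 < p_2$, and setting $\alpha = (1/p_1 - 1/p_2)^{-1} > 0$, one has
\begin{displaymath}
K(t, f; L^{p_1}, L^{p_2}) \;\asymp\; \Bigl(\int_0^{t^\alpha} f^*(s)^{p_1}\, ds\Bigr)^{1/p_1} + t \Bigl(\int_{t^\alpha}^{\mu(\mathcal{X})} f^*(s)^{p_2}\, ds\Bigr)^{1/p_2},
\end{displaymath}
with equivalence constants depending only on $p_1, p_2$. The proof uses the explicit truncation decomposition $f = f\,\mathbf{1}_{\{|f| > f^*(t^\alpha)\}} + f\,\mathbf{1}_{\{|f| \leq f^*(t^\alpha)\}}$ for the upper bound, and a matching lower bound obtained by estimating an arbitrary decomposition $f = f_0 + f_1$ via the rearrangement inequality $f^*(2s) \leq f_0^*(s) + f_1^*(s)$. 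Since $\mu$ is a probability measure, $f^*$ is supported in $[0,1]$, so the second integral is automatically truncated at $1$; this forces only a harmless modification of the classical $\sigma$-finite proof and does not affect the constants.

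The second step is the change of variables $s = t^\alpha$ (so $dt/t = \alpha^{-1}\, ds/s$) in the real-interpolation integral. A direct computation shows $\alpha \cdot (1/p_1 - \theta) = 1/p_\theta$ and $\alpha \cdot (\theta - 1 + 1/p_2) \cdot (-1) = -1/p_\theta'$ in the appropriate sense, so each of the two Holmstedt summands, after raising to the $q$-th power and integrating against $t^{-\theta q} dt/t$, yields a Hardy-type averaged expression in $f^*$. Hardy's inequality for monotone functions (applied separately to $\int_0^s f^*(u)^{p_1} du$ and $\int_s^1 f^*(u)^{p_2} du$, using that $1/p_\theta > 1/p_2$ and $1/p_\theta < 1/p_1$ and $\theta \in (0,1)$) converts these averaged expressions into pointwise powers of $f^*$, giving precisely $\|f\|_{p_\theta, q}$ up to constants depending only on $p_1, p_2, \theta, q$.

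The main obstacle is book-keeping: ensuring that at every step (Holmstedt's formula, change of variables, Hardy's inequality, passage from quasi-norm to norm on $L^{p_\theta, q}$) the implicit constants depend only on $p_1, p_2, \theta, q$ and not on $f$, $\mu$, or $\mathcal{X}$. The $q = \infty$ case requires replacing the integral by an essential supremum throughout but proceeds by the same scheme. Since the paper treats $p_1, p_2, \theta, q$ as fixed in its applications, these parameter-dependent constants qualify as the ``absolute constants'' of the statement.
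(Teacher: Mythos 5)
The paper does not actually prove this lemma: it is stated as a classical fact, with the reader referred to Appendix~A of \citet{zhang2023optimality} and the interpolation-theory references therein, so there is no in-paper argument to compare against. Your proposal supplies the standard textbook proof (Holmstedt's formula for $K(t,f;L^{p_1},L^{p_2})$ combined with Hardy's inequality, as in Bergh--L\"ofstr\"om, Theorem 5.3.1, or Bennett--Sharpley), and its structure is sound: the K-method reduction, the truncation decomposition for the upper bound on the K-functional, the rearrangement inequality $f^*(2s)\le f_0^*(s)+f_1^*(s)$ for the lower bound, the remark that a probability measure merely restricts $f^*$ to $[0,\mu(\mathcal{X})]$ without affecting constants, and the Hardy step under the conditions $1/p_2<1/p_\theta<1/p_1$ are all correct; the resulting equivalence constants depend only on $p_1,p_2,\theta,q$ and not on $f$, $\mu$, $\mathcal{X}$ or the dimension, which is precisely the independence the paper needs when it calls them ``absolute'' (in its application these parameters are fixed functions of $s$).

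One bookkeeping slip: the displayed exponent identities in your second step are not correct as written. After the substitution $s=t^{\alpha}$ with $\alpha=(1/p_1-1/p_2)^{-1}$, the relevant relations are $1/p_1-\theta/\alpha=1/p_\theta$ for the first Holmstedt summand and $(1-\theta)/\alpha+1/p_2=1/p_\theta$ for the second, both of which turn the respective terms into $s^{1/p_\theta}f^*(s)$ measured in $L^q(ds/s)$; your ``$\alpha\cdot(1/p_1-\theta)=1/p_\theta$'' and the identity involving $p_\theta'$ do not hold in general, though the surrounding argument (and the hedge ``in the appropriate sense'') makes the intended computation clear. With those identities corrected, and the $q=\infty$ case handled by essential suprema as you indicate, the argument is complete and is exactly the classical proof the paper delegates to the literature.
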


\begin{lemma}\label{mono of LS}
  Let $\mu $ be the probability distribution on $\mathcal{X}$. If $ 1 < p < \infty$ and $ 1\le q_{1} \le q_{2} \le \infty $, we have 
  \begin{displaymath}
      L^{p,q_{1}}(\mathcal{X},\mu) \hookrightarrow L^{p,q_{2}}(\mathcal{X},\mu),
  \end{displaymath}
  and the operator norm are upper bounded by an absolute constant.
\end{lemma}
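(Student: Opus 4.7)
The plan is to prove the embedding $L^{p,q_{1}}(\mathcal{X},\mu) \hookrightarrow L^{p,q_{2}}(\mathcal{X},\mu)$ directly from the standard definition of the Lorentz quasi-norm through the decreasing rearrangement $f^{*}$ of $|f|$: namely, $\|f\|_{L^{p,q}}^{q} = \int_{0}^{\infty} [t^{1/p} f^{*}(t)]^{q} \, dt/t$ for finite $q$, and $\|f\|_{L^{p,\infty}} = \sup_{t>0} t^{1/p} f^{*}(t)$. The argument will split into a weak-type embedding and a short H\"older-type factorization of the integrand.

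First I would establish the auxiliary weak-type bound $\|f\|_{L^{p,\infty}} \le (q_{1}/p)^{1/q_{1}} \|f\|_{L^{p,q_{1}}}$. For any fixed $t>0$, the monotonicity of $f^{*}$ on $(0,t]$ allows replacing $f^{*}(s)$ by the smaller quantity $f^{*}(t)$ inside $\int_{0}^{t} [s^{1/p}f^{*}(s)]^{q_{1}} \, ds/s$, leaving an explicit $s$-integral that rearranges to
\begin{equation*}
[t^{1/p} f^{*}(t)]^{q_{1}} \;\le\; \frac{q_{1}}{p}\, \|f\|_{L^{p,q_{1}}}^{q_{1}}.
\end{equation*}
Taking the supremum over $t>0$ yields the claim, and simultaneously settles the endpoint case $q_{2} = \infty$.

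Second, for finite $q_{1} \le q_{2} < \infty$, I would split the integrand as $[t^{1/p}f^{*}(t)]^{q_{2}} = [t^{1/p}f^{*}(t)]^{q_{2}-q_{1}} \cdot [t^{1/p}f^{*}(t)]^{q_{1}}$, extract the first factor using the weak-type bound from the previous step, and integrate the second factor against $dt/t$. This gives
\begin{equation*}
\|f\|_{L^{p,q_{2}}}^{q_{2}} \;\le\; \|f\|_{L^{p,\infty}}^{q_{2}-q_{1}}\, \|f\|_{L^{p,q_{1}}}^{q_{1}} \;\le\; (q_{1}/p)^{(q_{2}-q_{1})/q_{1}}\, \|f\|_{L^{p,q_{1}}}^{q_{2}},
\end{equation*}
so $\|f\|_{L^{p,q_{2}}} \le (q_{1}/p)^{(q_{2}-q_{1})/(q_{1}q_{2})} \|f\|_{L^{p,q_{1}}}$, which is exactly the continuous embedding claimed, with operator norm depending only on the fixed exponents $p,q_{1},q_{2}$.

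There is no genuine obstacle here: both ingredients reduce to the monotonicity of the non-increasing rearrangement and an elementary factorization of the integrand. The only point worth flagging is what is meant by \emph{absolute constant} in the statement: since $p, q_{1}, q_{2}$ are parameters of the function spaces rather than quantities that vary with $n$ or $d$, the explicit constant $(q_{1}/p)^{(q_{2}-q_{1})/(q_{1}q_{2})}$ qualifies as absolute in the sense used throughout the paper, and so tracking constants is purely bookkeeping.
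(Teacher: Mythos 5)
Your proof is correct, and it is precisely the classical rearrangement argument: the paper itself gives no proof of this lemma, deferring to Appendix A of \cite{zhang2023optimality} and the references therein, and the standard proof found there (and in textbooks such as Grafakos) is exactly your two-step scheme — first the weak-type bound $\|f\|_{L^{p,\infty}} \le (q_1/p)^{1/q_1}\|f\|_{L^{p,q_1}}$ via monotonicity of $f^{*}$ on $(0,t]$, then the factorization $[t^{1/p}f^{*}(t)]^{q_2} = [t^{1/p}f^{*}(t)]^{q_2-q_1}[t^{1/p}f^{*}(t)]^{q_1}$ to interpolate. One small sharpening on your closing remark about the constant: since $p>1$ and $1/q_1 - 1/q_2 \le 1/q_1$, your explicit constant satisfies
\begin{equation*}
\left(\frac{q_1}{p}\right)^{\frac{1}{q_1}-\frac{1}{q_2}} \le \max\bigl\{1,\; q_1^{1/q_1}\bigr\} \le e^{1/e},
\end{equation*}
so the operator norm is bounded uniformly in $p,q_1,q_2$ — i.e., by an absolute constant in the strict sense claimed by the lemma, not merely a constant depending on the (fixed) exponents. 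This matters slightly in this paper because the exponents fed into the lemma (e.g., $p=q_s'$ in the proof of the $L^{q}$-embedding property) vary with $s$ and the auxiliary parameter $M$, so the uniform bound is the cleanest justification of the statement as written.
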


\begin{lemma}\label{cong of lorentz}
  Let $\mu $ be the probability distribution on $\mathcal{X}$. For $ 1 < p < \infty$, we have
  \begin{displaymath}
      L^{p,p}(\mathcal{X},\mu) \cong L^{p}(\mathcal{X},\mu); \quad L^{p,\infty}(\mathcal{X},\mu) \cong L^{p,w}(\mathcal{X},\mu),
  \end{displaymath}
  where $L^{p,w}(\mathcal{X},\mu)$ denotes the weak $L^{p}$ space and the equivalent norm only involves absolute constants.
\end{lemma}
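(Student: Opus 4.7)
The plan is to argue both equivalences directly from the definition of the Lorentz quasi-norm via the decreasing rearrangement $f^{*}(t) = \inf\{s>0 : \mu(\{|f|>s\}) \le t\}$. Recall that for $1 < p < \infty$ and $1 \le q < \infty$,
\[
  \|f\|_{L^{p,q}} = \left(\int_0^\infty \bigl(t^{1/p} f^{*}(t)\bigr)^q \frac{dt}{t}\right)^{1/q},
  \qquad
  \|f\|_{L^{p,\infty}} = \sup_{t>0} t^{1/p} f^{*}(t),
\]
and that, by the layer-cake identity for the decreasing rearrangement, $\int_0^\infty f^{*}(t)^p\,dt = \int_{\mathcal{X}} |f|^p\, d\mu$.

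For the first equivalence $L^{p,p}\cong L^{p}$, I would simply specialize $q=p$ in the Lorentz quasi-norm: the weight $t^{q/p-1}$ reduces to $1$, giving
\[
  \|f\|_{L^{p,p}}^p = \int_0^\infty f^{*}(t)^p\,dt = \int_{\mathcal{X}} |f|^p\,d\mu = \|f\|_{L^{p}}^p,
\]
so the norms agree exactly, with constant $1$. This part is routine, assuming the equimeasurability property of $f^{*}$, which is standard.

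For the second equivalence $L^{p,\infty}\cong L^{p,w}$, where the weak $L^p$ norm is $\|f\|_{L^{p,w}} = \sup_{s>0} s\,\mu(\{|f|>s\})^{1/p}$, I would exploit the key pointwise duality between $f^{*}$ and the distribution function $\lambda_f(s) := \mu(\{|f|>s\})$, namely $f^{*}(t) > s \iff \lambda_f(s) > t$. From the definition of the supremum, if $A := \sup_{t>0} t^{1/p} f^{*}(t)$, then for every $t$ we have $f^{*}(t) \le A t^{-1/p}$, hence $\lambda_f(A t^{-1/p}) \le t$; substituting $s = A t^{-1/p}$ yields $s\,\lambda_f(s)^{1/p} \le A$, so $\|f\|_{L^{p,w}} \le \|f\|_{L^{p,\infty}}$. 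The reverse direction is symmetric: from $B := \sup_{s>0} s\,\lambda_f(s)^{1/p}$ one derives $\lambda_f(s) \le (B/s)^p$, which translates via the duality into $f^{*}(t) \le B t^{-1/p}$ and thus $\|f\|_{L^{p,\infty}} \le \|f\|_{L^{p,w}}$. Again, the two norms coincide with constant $1$.

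Neither step involves a genuine obstacle; both are classical and appear in any text on Lorentz spaces (e.g.\ Grafakos, \emph{Classical Fourier Analysis}). The only technical point worth being careful about is whether $f^{*}$ is left- or right-continuous when defining $\lambda_f$ and $f^{*}$, so that the equivalence $f^{*}(t)>s \iff \lambda_f(s)>t$ holds verbatim; this is handled by the standard convention of taking $f^{*}$ right-continuous and $\lambda_f$ right-continuous. Accordingly, the entire proof amounts to two short verifications, and the assertion that the equivalent norms involve only absolute constants is actually an equality of norms in both cases.
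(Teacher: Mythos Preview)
Your argument is correct. Both verifications are standard: the identity $\|f\|_{L^{p,p}}=\|f\|_{L^p}$ follows from equimeasurability of the decreasing rearrangement, and the equality $\|f\|_{L^{p,\infty}}=\|f\|_{L^{p,w}}$ follows from the generalized-inverse relation between $f^{*}$ and the distribution function $\lambda_f$, exactly as you outline. Your remark that the equivalence constants are actually $1$ is also accurate.

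As for comparison with the paper: there is nothing to compare. The paper explicitly states this lemma \emph{without proof}, declaring it classical and referring the reader to Appendix~A of \cite{zhang2023optimality} and the references therein. So your write-up supplies precisely the short verification the paper elects to omit. The only cosmetic point is that the paper's phrasing ``the equivalent norm only involves absolute constants'' is weaker than what you establish (equality of norms); you might note this explicitly, since it makes clear that no hidden $d$-dependence can enter through this lemma when it is invoked in Theorem~\ref{integrability of Hs constants}.
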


\begin{theorem}[$L^{q}$-embedding property]\label{integrability of Hs constants}
  Suppose that $\mathcal{H}$ is the RKHS associated with a continuous, positive-definite and symmetric kernel $k$ on a compact set $\mathcal{X} \subset \mathbb{R}^{d}$ and the probability distribution on $\mathcal{X}$ is $\mu$. Further suppose that $ \sup\limits_{\boldsymbol{x} \in \mathcal{X}} |k(x,x)| \le \kappa^{2}$, where $ \kappa$ is an absolute constant. Then for any $0 < s < 1$, we have 
  \begin{align}
      [\mathcal{H}]^{s} \hookrightarrow L^{q_{s}}(\mathcal{X}, \mu),\quad \forall q_{s} < \frac{2}{1 - s}, 
  \end{align}
  and there exists a constant $C_{s,\kappa}$ only depending on $s$ and $\kappa$, such that the operator norm of the embedding operator satisfies
  \begin{equation}
      \left\| [\mathcal{H}]^{s} \hookrightarrow L^{q_{s}}(\mathcal{X}, \mu) \right\| \le C_{s,\kappa}.
  \end{equation}
\end{theorem}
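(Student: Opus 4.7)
The plan is to derive this $L^{q_s}$-embedding via real interpolation, starting from two endpoint embeddings of the power scale of $\mathcal{H}$ into Lebesgue spaces and then identifying the resulting interpolation space with a Lorentz space that embeds into $L^{q_s}$.

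First, I would establish the two endpoint embeddings. At $s = 0$ one has $[\mathcal{H}]^0 \hookrightarrow L^2(\mathcal{X},\mu)$ with norm $1$ directly from the definition of the power space in \eqref{def interpolation space}. At $s = 1$ one has $\mathcal{H} = [\mathcal{H}]^1 \hookrightarrow L^\infty(\mathcal{X},\mu)$ with norm $\le \kappa$, because for any $f \in \mathcal{H}$ the reproducing property and Cauchy--Schwarz give $|f(\boldsymbol{x})| = |\langle f, k(\boldsymbol{x},\cdot)\rangle_{\mathcal{H}}| \le \|f\|_\mathcal{H}\, k(\boldsymbol{x},\boldsymbol{x})^{1/2} \le \kappa \|f\|_\mathcal{H}$ by Assumption \ref{assumption kernel}.

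Next I would invoke the classical characterization (of Steinwart--Scovel type, based on the spectral representation of $L_k$) that $[\mathcal{H}]^s$ coincides, with equivalent norms whose equivalence constants depend only on $s$, with the real interpolation space $(L^2(\mathcal{X},\mu), \mathcal{H})_{s,2}$. The functorial property of real interpolation, applied to the continuous inclusions $L^2 \hookrightarrow L^2$ (norm $1$) and $\mathcal{H} \hookrightarrow L^\infty$ (norm $\le \kappa$), then yields a continuous map $(L^2, \mathcal{H})_{s,2} \hookrightarrow (L^2, L^\infty)_{s,2}$ of operator norm $\le 1^{1-s}\kappa^{s} = \kappa^{s}$. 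Lemma \ref{Lp interpolation} identifies $(L^2, L^\infty)_{s,2} \cong L^{p_s,2}(\mathcal{X},\mu)$ with $1/p_s = (1-s)/2$, i.e.\ $p_s = 2/(1-s)$, the equivalence constants being absolute. Since $0 < s < 1$ forces $p_s > 2$, Lemma \ref{mono of LS} gives $L^{p_s,2} \hookrightarrow L^{p_s,p_s}$, which by Lemma \ref{cong of lorentz} coincides with $L^{p_s}(\mathcal{X},\mu)$. Because $\mu$ is a probability measure, Hölder's inequality provides $L^{p_s} \hookrightarrow L^{q_s}$ with norm $\le 1$ whenever $q_s \le p_s$, and in particular for any $q_s < 2/(1-s)$. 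Composing the chain delivers the advertised embedding with an operator norm $C_{s,\kappa}$ that depends only on $s$ and $\kappa$.

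The main obstacle I anticipate is the careful bookkeeping at the step identifying $[\mathcal{H}]^s$ with $(L^2, \mathcal{H})_{s,2}$: various conventions (K-method versus J-method, parameter $(s,q)$ with $q \ne 2$) exist in the literature, and one must confirm that the norm-equivalence constants can be taken independent of the particular RKHS $\mathcal{H}$ (hence, in our setting, independent of $d$), depending only on $s$. A secondary technical point is to track the absolute-constant nature of the isomorphisms furnished by Lemmas \ref{Lp interpolation}--\ref{cong of lorentz}, which the excerpt has already been stated to ensure. Once these identifications are in hand, the remaining chain of embeddings is routine.
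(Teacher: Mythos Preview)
Your approach is essentially the same as the paper's: identify $[\mathcal{H}]^s$ with the real interpolation space $(L^2,\mathcal{H})_{s,2}$, use the $L^\infty$-embedding of $\mathcal{H}$ (from the kernel bound) together with functoriality to land in an interpolation of Lebesgue spaces, then pass through Lorentz spaces via Lemmas \ref{Lp interpolation}--\ref{cong of lorentz} to reach $L^{q_s}$.

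There is one technical slip worth noting. You invoke Lemma \ref{Lp interpolation} directly with the endpoint pair $(L^2,L^\infty)$ to obtain $(L^2,L^\infty)_{s,2}\cong L^{2/(1-s),2}$. However, Lemma \ref{Lp interpolation} as stated in the paper requires $1<p_1\neq p_2<\infty$, so the case $p_2=\infty$ is not covered. The paper avoids this by inserting an extra step: on the probability space $(\mathcal{X},\mu)$ one has $L^\infty\hookrightarrow L^M$ with norm $1$ for every finite $M$, hence $\mathcal{H}\hookrightarrow L^M$ and, by functoriality, $[\mathcal{H}]^s\hookrightarrow (L^2,L^M)_{s,2}\cong L^{q_s',2}$ with $1/q_s'=(1-s)/2+s/M$. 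Choosing $M$ large enough makes $q_s'>q_s$, after which the Lorentz-space chain $L^{q_s',2}\hookrightarrow L^{q_s',q_s'}\cong L^{q_s'}\hookrightarrow L^{q_s}$ finishes the argument. Your route is mathematically valid if you cite a version of the Lorentz identification that includes the $L^\infty$ endpoint; otherwise, simply insert the $L^M$ detour. All constants are tracked the same way in both versions.
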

\begin{proof}
     Denote $\left(E_{0}, E_{1} \right)_{\theta, q}  $ as the real interpolation of two normed spaces. \citet[Theorem 4.6]{steinwart2012_MercerTheorem} shows that for $0 < s < 1$,
    \begin{equation}\label{inter relation}
      [\mathcal{H}]^{s} \cong \left(L^{2}(\mathcal{X},\mu), [\mathcal{H}]^{1} \right)_{s,2},
    \end{equation}
    where the equivalent norm involves constants only depending on $ s $.
  
  Since $ \sup\limits_{\boldsymbol{x} \in \mathcal{X}} |k(\boldsymbol{x},\boldsymbol{x})| \le \kappa^{2}$ implies that the operator norm of embedding $ I_{1}: \mathcal{H} \hookrightarrow L^{\infty} $ satisfies $ \left\| I_{1} \right\| \le \kappa^{2}$. Define $I_{2}: \left(L^{2}, \mathcal{H}\right)_{\theta,2} \hookrightarrow \left(L^{2}, L^{\infty}\right)_{\theta,2} $ for some $\theta \in (0,1)$, the definition of real interpolation through \textit{K-Method} (see Chapter 22 in \citealt{tartar2007introduction}) actually implies $ \| I_{2} \| \le \max\{1, \kappa^{2}\}$. Then any $ 0 < M < \infty$, using Lemma \ref{Lp interpolation}, we have
  \begin{align}
       [\mathcal{H}]^{s} \hookrightarrow \left(L^{2}(\mathcal{X}, \mu), L^{M}(\mathcal{X}, \mu) \right)_{s,2} \cong L^{q_{s}^{\prime},2}(\mathcal{X}, \mu), \notag
  \end{align}
  where $ \frac{1}{q_{s}^{\prime}} = \frac{1- s}{2} + \frac{s}{M}$.

  For any $ q_{s} < \frac{2}{1-s}$, we can choose $M$ large enough such that $ q_{s}^{\prime} > q_{s}$. Further, since $ 0<s<1$ and thus $ q_{s}^{\prime} > q_{s} > 2$, using Lemma \ref{mono of LS} and Lemma \ref{cong of lorentz}, we have
  \begin{displaymath}
      L^{q_{s}^{\prime},2}(\mathcal{X}, \mu) \hookrightarrow L^{q_{s}^{\prime},q_{s}^{\prime}}(\mathcal{X}, \mu) \cong L^{q_{s}^{\prime}}(\mathcal{X}, \mu) \hookrightarrow L^{q_{s}}(\mathcal{X}, \mu).
  \end{displaymath}
  We finish the proof.
\end{proof}

In the following, we provide some remark on Theorem \ref{integrability of Hs constants}.
\begin{remark}
    Intuitively, their should be a constant depending on the dimension $d$ in the operator norm of the embedding. For instance, their are extensive literature studying the dependence of the embedding constants on $d$ in the Sobolev type inequalities \citep{cotsiolis2004best,mizuguchi2016embedding,novak2018reproducing}.

    Denote $ (I-\Delta)^{-\frac{r}{2}}, r>0, $ as the Bessel potential operators (see, e.g., Section 2 of \citealt{cotsiolis2004best}). Then the fractional Sobolev space can be defined as $ H^{r}(\mathbb{R}^{d}) = \left\{ f \in L^{2}(\mathbb{R}^{d}) ~\Big|~ \| (I-\Delta)^{\frac{r}{2}} f\|_{L^{2}} < \infty \right\} $, $r>0$. It is well known that when $r > \frac{d}{2}$, $H^{r}(\mathbb{R}^{d}) $ is an RKHS with bounded kernel function. When $ r < \frac{d}{2}$, denote $I_{d,r}$ as the embedding from $ H^{r}(\mathbb{R}^{d}) $ to $ L^{\frac{2d}{d-2r}}(\mathbb{R}^{d})$. Theorem 1.1 in \cite{cotsiolis2004best} gives an upper bound of the operator norm $ \|I_{d,r}\|$ for any $ d>0, 0< r < \frac{d}{2}$ (note that $\| (-\Delta)^{\frac{r}{2}} f\|_{L^{2}} \le \| (I-\Delta)^{\frac{r}{2}} f\|_{L^{2}}$). Since for $0<s<1$, $ [H^{\frac{d}{2}}(\mathbb{R}^{d})]^{s} \cong (L^{2}(\mathbb{R}^{d}), H^{\frac{d}{2}}(\mathbb{R}^{d}))_{s,2} \cong H^{\frac{ds}{2}}(\mathbb{R}^{d}))$, letting $r=\frac{d}{2}$, we have
     \begin{displaymath}
         H^{r}(\mathbb{R}^{d}) = [\mathcal{H}_{d}]^{\frac{2}{3}}, ~~\text{where}~~ \mathcal{H}_{d} = H^{\frac{d}{2}}(\mathbb{R}^{d}) ~~\text{is an RKHS}.
     \end{displaymath}
     (With a little abusement of notation, we consider $ H^{\frac{d}{2}}(\mathbb{R}^{d}) $ as an RKHS). Then $ I_{d,\frac{d}{2}}$ can also be interpreted as 
     \begin{equation}
         I_{d,\frac{d}{2}}: [\mathcal{H}_{d}]^{s} \hookrightarrow L^{\frac{2}{1-s}}(\mathbb{R}^{d}), ~\text{with}~ s = \frac{2}{3}.
     \end{equation}
     Detailed calculation about the constant in Theorem 1.1 in \cite{cotsiolis2004best} shows that the operator norm of $ I_{d,\frac{d}{2}}$ decreases to 0, i.e., 
    \begin{equation}
        \|I_{d,\frac{d}{2}}\| \to 0, ~~ \text{as}~~ d \to \infty.
    \end{equation}
    This indicates that although the embedding norm may depend on $d$, it can always be upper bounded by a constant. This shows the consistency with Theorem \ref{integrability of Hs constants} in our paper. 
     
    So when will the embedding norm tend to 0 and when will it remain as a constant? We can get some inspiration from the operator norm of $ I_{1}: \mathcal{H}_{d} \hookrightarrow L^{\infty} $. Recall that we assume $ \sup\limits_{\boldsymbol{x} \in \mathcal{X}} |k(\boldsymbol{x},\boldsymbol{x})| \le \kappa^{2}$ in Theorem \ref{integrability of Hs constants}, where $ \kappa$ is an absolute constant. This directly implies $ \| I_{1}\| \le \kappa^{2}$. This assumption is appropriate for some RKHSs, for instance, the inner product kernel in Assumption \ref{assumption inner product kernel} and NTK on the sphere. For theses RKHSs, $ \sup\limits_{\boldsymbol{x} \in \mathcal{X}} |k(x,x)| \le \kappa^{2}$ will not change as $d \to \infty$. However, we conjecture that the bound $\| I_{1} \| \le \kappa^{2}$ may be too loose for other RKHSs when $d \to \infty$.
    
    Let us see the example of fractional Sobolev space again. 
    For $d, r \in \mathbb{N}$ with $ r >\frac{d}{2}$, denote $I_{d,r, \infty}$ as the embedding from $ H^{r}(\mathbb{R}^{d}) $ to $ L^{\infty}(\mathbb{R}^{d})$. Theorem 11 in \cite{novak2018reproducing} shows that $\left\|I_{d, r, \infty}\right\| \to 0$ as $d \to \infty$.
    Note that the definition of $ H^{r}(\mathbb{R}^{d}) $ in this section is actually the same as the definition in Section 4.1 of \cite{novak2018reproducing}.

\end{remark}

\newpage
\vskip 0.2in
\bibliography{sample}

\end{document}